\documentclass{article}

\usepackage{etoolbox}
\newtoggle{iclr}
\togglefalse{iclr}

\newcommand{\arxiv}[1]{\iftoggle{iclr}{}{#1}}
\newcommand{\loose}{\looseness=-1}

\usepackage[utf8]{inputenc} %
\usepackage[T1]{fontenc}    %
\usepackage{url}            %
\usepackage{booktabs}       %
\usepackage{amsfonts}       %
\usepackage{nicefrac}       %
\usepackage{microtype}      %
\usepackage{multirow}
\usepackage{tocloft}            %

\usepackage[shortlabels]{enumitem}

\usepackage{breakcites}
\usepackage[normalem]{ulem}

\usepackage{mathrsfs}

\usepackage{algorithm}
\usepackage{verbatim}
\usepackage[noend]{algpseudocode}
\newcommand{\multiline}[1]{\parbox[t]{\dimexpr\linewidth-\algorithmicindent}{#1}}

\usepackage{multicol}

\usepackage{colortbl}

\usepackage{setspace}

\usepackage{transparent}

\usepackage{inconsolata}
\usepackage[scaled=.90]{helvet}
\usepackage{xspace}

\usepackage{pifont}
\usepackage{bm}

\hbadness = 10000

\DeclareFontFamily{U}{jkpmia}{}
\DeclareFontShape{U}{jkpmia}{m}{it}{<->s*jkpmia}{}
\DeclareFontShape{U}{jkpmia}{bx}{it}{<->s*jkpbmia}{}
\DeclareMathAlphabet{\mathfrak}{U}{jkpmia}{m}{it}
\SetMathAlphabet{\mathfrak}{bold}{U}{jkpmia}{bx}{it}

\usepackage{./txie}

\arxiv{
\usepackage[letterpaper, left=1in, right=1in, top=1in,
bottom=1in]{geometry}
  \usepackage{parskip}
}

\PassOptionsToPackage{hypertexnames=false}{hyperref}  %

\arxiv{
  \usepackage[dvipsnames]{xcolor}
  }
\usepackage[colorlinks=true, linkcolor=blue!70!black, citecolor=blue!70!black,urlcolor=black,breaklinks=true]{hyperref}
\colorlet{txblue}{RoyalBlue!70!NavyBlue}
\hypersetup{linkcolor=txblue,
            citecolor=txblue}
\usepackage{microtype}
\usepackage{hhline}

\usepackage{amsthm}
\usepackage{mathtools}
\usepackage{amsmath}
\usepackage{bbm}
\usepackage{amsfonts}
\usepackage{amssymb}
\usepackage[nameinlink,capitalize]{cleveref}

\makeatletter
\newcommand{\neutralize}[1]{\expandafter\let\csname c@#1\endcsname\count@}
\makeatother

\usepackage{algorithm}

\arxiv{
\usepackage{natbib}
\bibliographystyle{plainnat}
\bibpunct{(}{)}{;}{a}{,}{,}
}

\usepackage{xpatch}

\usepackage{thmtools}
\usepackage{thm-restate}
\declaretheorem[name=Theorem,parent=section]{theorem}
\declaretheorem[name=Lemma,parent=section]{lemma}
\declaretheorem[name=Assumption, parent=section]{assumption}
\declaretheorem[name=Condition, parent=section]{condition}
\declaretheorem[qed=$\triangleleft$,name=Example,style=definition, parent=section]{example}
\declaretheorem[name=Remark, parent=section]{remark}
\declaretheorem[name=Proposition, parent=section]{proposition}

\usepackage{crossreftools}
\pdfstringdefDisableCommands{%
    \let\Cref\crtCref
    \let\cref\crtcref
}
\newcommand{\creftitle}[1]{\crtcref{#1}}

\makeatletter
  \renewenvironment{proof}[1][Proof]%
  {%
   \par\noindent{\bfseries\upshape {#1.}\ }%
  }%
  {\qed\newline}
  \makeatother

\theoremstyle{definition}  %

\newtheorem{corollary}{Corollary}[section]

\theoremstyle{plain}
\newtheorem{definition}{Definition}[section]

\xpatchcmd{\proof}{\itshape}{\normalfont\proofnameformat}{}{}
\newcommand{\proofnameformat}{\bfseries}

\newcommand{\pfref}[1]{Proof of \cref{#1}}

\renewcommand{\eqref}[1]{\texorpdfstring{\hyperref[#1]{(\ref*{#1})}}{(\ref*{#1})}}
\crefformat{equation}{#2Eq.\,(#1)#3}
\Crefformat{equation}{#2Eq.\,(#1)#3}

\Crefformat{figure}{#2Figure~#1#3}
\Crefformat{assumption}{#2Assumption~#1#3}

\Crefname{assumption}{Assumption}{Assumptions}

\crefname{fact}{Fact}{Facts}

\Crefformat{figure}{#2Figure #1#3}
\Crefformat{assumption}{#2Assumption #1#3}

\usepackage{crossreftools}
\pdfstringdefDisableCommands{%
    \let\Cref\crtCref
    \let\cref\crtcref
}

\usepackage{xparse}

\ExplSyntaxOn
\DeclareDocumentCommand{\XDeclarePairedDelimiter}{mm}
 {
  \__egreg_delimiter_clear_keys: %
  \keys_set:nn { egreg/delimiters } { #2 }
  \use:x %
   {
    \exp_not:n {\NewDocumentCommand{#1}{sO{}m} }
     {
      \exp_not:n { \IfBooleanTF{##1} }
       {
        \exp_not:N \egreg_paired_delimiter_expand:nnnn
         { \exp_not:V \l_egreg_delimiter_left_tl }
         { \exp_not:V \l_egreg_delimiter_right_tl }
         { \exp_not:n { ##3 } }
         { \exp_not:V \l_egreg_delimiter_subscript_tl }
       }
       {
        \exp_not:N \egreg_paired_delimiter_fixed:nnnnn 
         { \exp_not:n { ##2 } }
         { \exp_not:V \l_egreg_delimiter_left_tl }
         { \exp_not:V \l_egreg_delimiter_right_tl }
         { \exp_not:n { ##3 } }
         { \exp_not:V \l_egreg_delimiter_subscript_tl }
       }
     }
   }
 }

\keys_define:nn { egreg/delimiters }
 {
  left      .tl_set:N = \l_egreg_delimiter_left_tl,
  right     .tl_set:N = \l_egreg_delimiter_right_tl,
  subscript .tl_set:N = \l_egreg_delimiter_subscript_tl,
 }

\cs_new_protected:Npn \__egreg_delimiter_clear_keys:
 {
  \keys_set:nn { egreg/delimiters } { left=.,right=.,subscript={} }
 }

\cs_new_protected:Npn \egreg_paired_delimiter_expand:nnnn #1 #2 #3 #4
 {%
  \mathopen{}
  \mathclose\c_group_begin_token
   \left#1
   #3
   \group_insert_after:N \c_group_end_token
   \right#2
   \tl_if_empty:nF {#4} { \c_math_subscript_token {#4} }
 }
\cs_new_protected:Npn \egreg_paired_delimiter_fixed:nnnnn #1 #2 #3 #4 #5
 {
  \mathopen{#1#2}#4\mathclose{#1#3}
  \tl_if_empty:nF {#5} { \c_math_subscript_token {#5} }
 }
\ExplSyntaxOff

\XDeclarePairedDelimiter{\supnorm}{
  left=\lVert,
  right=\rVert,
  subscript=\infty
  }

\DeclarePairedDelimiter{\abs}{\lvert}{\rvert} %
\DeclarePairedDelimiter{\brk}{[}{]}
\DeclarePairedDelimiter{\crl}{\{}{\}}
\DeclarePairedDelimiter{\prn}{(}{)}
\DeclarePairedDelimiter{\nrm}{\|}{\|}
\DeclarePairedDelimiter{\tri}{\langle}{\rangle}

\DeclareMathOperator{\En}{\mathbb{E}}

\newcommand{\mb}[1]{\boldsymbol{#1}}
\newcommand{\wb}[1]{\widebar{#1}}

\def\ddefloop#1{\ifx\ddefloop#1\else\ddef{#1}\expandafter\ddefloop\fi}
\def\ddef#1{\expandafter\def\csname bb#1\endcsname{\ensuremath{\mathbb{#1}}}}
\ddefloop ABCDEFGHIJKLMNOPQRSTUVWXYZ\ddefloop
\def\ddefloop#1{\ifx\ddefloop#1\else\ddef{#1}\expandafter\ddefloop\fi}
\def\ddef#1{\expandafter\def\csname b#1\endcsname{\ensuremath{\mathbf{#1}}}}
\ddefloop ABCDEFGHIJKLMNOPQRSTUVWXYZ\ddefloop
\def\ddef#1{\expandafter\def\csname sf#1\endcsname{\ensuremath{\mathsf{#1}}}}
\ddefloop ABCDEFGHIJKLMNOPQRSTUVWXYZ\ddefloop
\def\ddef#1{\expandafter\def\csname c#1\endcsname{\ensuremath{\mathcal{#1}}}}
\ddefloop ABCDEFGHIJKLMNOPQRSTUVWXYZ\ddefloop
\def\ddef#1{\expandafter\def\csname h#1\endcsname{\ensuremath{\widehat{#1}}}}
\ddefloop ABCDEFGHIJKLMNOPQRSTUVWXYZ\ddefloop
\def\ddef#1{\expandafter\def\csname hc#1\endcsname{\ensuremath{\widehat{\mathcal{#1}}}}}
\ddefloop ABCDEFGHIJKLMNOPQRSTUVWXYZ\ddefloop
\def\ddef#1{\expandafter\def\csname t#1\endcsname{\ensuremath{\widetilde{#1}}}}
\ddefloop ABCDEFGHIJKLMNOPQRSTUVWXYZ\ddefloop
\def\ddef#1{\expandafter\def\csname tc#1\endcsname{\ensuremath{\widetilde{\mathcal{#1}}}}}
\ddefloop ABCDEFGHIJKLMNOPQRSTUVWXYZ\ddefloop
\def\ddefloop#1{\ifx\ddefloop#1\else\ddef{#1}\expandafter\ddefloop\fi}
\def\ddef#1{\expandafter\def\csname scr#1\endcsname{\ensuremath{\mathscr{#1}}}}
\ddefloop ABCDEFGHIJKLMNOPQRSTUVWXYZ\ddefloop

\newcommand{\ls}{\ell}

\newcommand{\eps}{\epsilon}
\newcommand{\veps}{\varepsilon}

\newcommand{\ldef}{\vcentcolon=}
\newcommand{\rdef}{=\vcentcolon}

\newcommand{\pidpo}{\pihat_{\normalfont\textsf{DPO}}}
\newcommand{\pichipo}{\pihat_{\normalfont\textsf{\algshort}}}

\newcommand{\Zklr}[1][r]{Z_{\beta,#1;\mathsf{KL}}}
\newcommand{\Zklrstar}{\Zklr[\rstar]}
\newcommand{\Zfr}[1][r]{Z_{\beta,#1;f}}
\newcommand{\Zfrstar}{\Zfr[\rstar]}
\newcommand{\Zfmixr}[1][r]{Z_{\beta,#1;\fmix}}
\newcommand{\Zr}[1][r]{Z_{\beta,#1}}
\newcommand{\Zrstar}{\Zr[\rstar]}

\newcommand{\Pidpo}{\Pi_{\textsf{DPO},\beta}}
\newcommand{\Pichipo}{\Pi_{\textsf{\algshort},\beta}}

\newcommand{\Jr}[1][r]{J_{#1}}

\newcommand{\pistarkl}{\pistar_{\beta;\mathsf{KL}}}

\newcommand{\betastar}{\beta^{\star}}

\newcommand{\dom}{\mathrm{dom}}
\newcommand{\chimix}{\chi_{\textsf{mix}}}
\newcommand{\fmix}{f_{\chimix}}
\newcommand{\Jmix}{J_{\beta}^{\chimix}}
\newcommand{\Jmixr}[1][r]{J_{\beta,#1}^{\chimix}}
\newcommand{\Jmixgr}[1][r]{J_{\beta,\gamma,#1}^{\chimix}}
\newcommand{\Jmixg}[1][r]{J_{\beta,\gamma}^{\chimix}}
\newcommand{\Dfmix}[2]{D_{\fmix}\prn*{#1\,\|\,#2}}

\newcommand{\pistarb}[1][r]{\pistar_{\sss{\beta}}}

\newcommand{\rstar}{r^\star}

\newcommand{\Dsmthchis}[2]{D_{\chi^2;\eta}\prn*{#1\dmid{}#2}}

\newcommand{\vepsopt}{\veps_{\mathrm{opt}}}
\newcommand{\Jhatsmth}{\wh{J}_{\beta,\eta}}

\newcommand{\fmixg}{f_{\chimix,\gamma}}
\newcommand{\Dfmixg}[2]{D_{\fmixg}\prn*{#1\,\|\,#2}}

\newcommand{\pistarg}{\pistarbg}
\newcommand{\linkg}{\link_\gamma}
\newcommand{\rhatdiff}{\wh\Delta}
\newcommand{\rstardiff}{\Delta^\star}
\newcommand{\lambdag}{\lambda^\star_{\beta,\gamma}}
\newcommand{\alphag}{\alpha^\star_{\beta,\gamma}}

\newcommand{\pistarbg}{\pistar_{\beta,\gamma}}

\newcommand{\pistarsmth}{\pi^\star_{\beta,\eta}}
\newcommand{\Jsmth}[1][{}]{J_{\beta,\eta}^{{#1}}}
\newcommand{\Jsmthr}[1][{}]{J_{\beta,\eta}^{{#1}}}
\newcommand{\Jhatsmthr}[1][{}]{\wh{J}_{\beta,\eta}^{{#1}}}
\newcommand{\rdiff}[1][r]{\Delta^{{#1}}}
\newcommand{\Csmth}[1][\pi]{\cC_\eta^{#1}}
\newcommand{\Chatsmth}[1][\pi]{\wh{\cC}_\eta^{#1}}

\newcommand{\vepsx}{\veps_{\mathrm{x}}}

\newcommand{\nopref}{{\mathsf{x}}}
\newcommand{\cDnopref}{\cD_\nopref}

\newcommand{\link}{\phi}
\newcommand{\linkmix}{\link_{\gamma}}
\newcommand{\clip}{\mathsf{clip}}

\newcommand{\DG}{\mathsf{DG}}

\newcommand{\cDpref}{\cD_{\pref}}

\newcommand{\pimw}{\pi_{\mathsf{MW}}}

\newcommand{\setcomp}{\mathrm{c}}

\newcommand{\ap}{a_{+}}
\newcommand{\am}{a_{-}}
\newcommand{\atil}{\wt{a}}

\newcommand{\Cone}[1][\pi]{\cC^{#1}}
\newcommand{\Cinf}[1][\pi]{\cC_{\infty}^{#1}}

\newcommand{\Jchi}{J_{\beta}^{\chi}}

\newcommand{\chis}{$\chi^2$}
\newcommand{\chisb}{$\mb{\chi^2}$}

\newcommand{\tautil}{\wt{\tau}}%

\newcommand{\mainalg}{$\chi^2$-Preference Optimization\xspace}
\newcommand{\alglong}{\mainalg}
\newcommand{\mainalgb}{$\mb{\chi^2}$-Preference Optimization\xspace}
\newcommand{\alglongb}{\mainalgb}
\newcommand{\algshort}{\texttt{$\chi$PO}\xspace} %
\newcommand{\algshortb}{\texttt{$\mb{\chi}$PO}\xspace} %
\newcommand{\rlhfalg}{\texttt{$\chi^2$-RLHF}\xspace}

\newcommand{\dpo}{\texttt{DPO}\xspace}
\newcommand{\sft}{\texttt{SFT}\xspace}
\newcommand{\ppo}{\texttt{PPO}\xspace}

\newcommand{\taup}{\tau_{+}}
\newcommand{\taum}{\tau_{-}}

\newcommand{\pitil}{\wt{\pi}}%

\newcommand{\pibar}{\wb{\pi}}

\newcommand{\vepsstat}{\veps_{\mathrm{stat}}}

\renewcommand{\emptyset}{\varnothing}

\newcommand{\M}[1]{^{{\scriptscriptstyle M}}}  %

\newcommand{\sss}[1]{{\scriptscriptstyle#1}}

\newcommand{\pistar}{\pi^{\star}}

\newcommand{\pihat}{\wh{\pi}}

\newcommand{\psdgt}{\succ}
\newcommand{\approxleq}{\lesssim}
\newcommand{\approxgeq}{\gtrsim}

\newcommand{\ind}[1]{^{#1}}

\newcommand{\bigoh}{O}
\newcommand{\bigoht}{\wt{O}}
\newcommand{\bigom}{\Omega}

\newcommand{\indic}{\mathbb{I}}

\newcommand{\poly}{\mathrm{poly}}

\newcommand{\Df}[2]{D_{f}\prn*{#1\,\|\,#2}}
\newcommand{\Dkl}[2]{D_{\mathsf{KL}}\prn*{#1\,\|\,#2}}

\newcommand{\Dhel}[2]{D_{\mathsf{H}}\prn*{#1,#2}}

\newcommand{\Dhels}[2]{D^{2}_{\mathsf{H}}\prn*{#1,#2}}

\newcommand{\Dchis}[2]{D_{\chi^2}\prn*{#1\dmid{}#2}}

\newcommand{\Ber}{\mathrm{Ber}}

\newcommand{\dmid}{\;\|\;}

\newcommand{\mathand}{\quad\text{and}\quad}

\def\multiset#1#2{\ensuremath{\left(\kern-.3em\left(\genfrac{}{}{0pt}{}{#1}{#2}\right)\kern-.3em\right)}}

\renewcommand{\ls}{\ell}

\renewcommand{\emptyset}{\varnothing}

\input{widebar}

\usepackage[suppress]{color-edits}
\addauthor[Tengyang]{tx}{violet}
\addauthor{tx}{violet}
 
 \addauthor{df}{ForestGreen}
\newcommand{\dfc}[1]{}
\addauthor{ak}{orange}

\addauthor{ws}{red}

\addauthor{ah}{brown}

\makeatletter
\let\OldStatex\Statex
\renewcommand{\Statex}[1][3]{%
  \setlength\@tempdima{\algorithmicindent}%
  \OldStatex\hskip\dimexpr#1\@tempdima\relax}
\makeatother

 \usepackage{accents}

\let\oldparagraph\paragraph

\renewcommand{\paragraph}[1]{\oldparagraph{#1.}}

\allowdisplaybreaks

\arxiv{                      
\title{Correcting the Mythos of KL-Regularization: \\ {\Large Direct Alignment without Overoptimization via \chis-Preference Optimization}}
}

\author{\and
  Audrey Huang
\\
  \footnotesize\href{mailto:audreyh5@illinois.edu}{\texttt{audreyh5@illinois.edu}}
\and
Wenhao Zhan
\\
  \footnotesize\href{mailto:wenhao.zhan@princeton.edu}{\texttt{wenhao.zhan@princeton.edu}}
\and
Tengyang Xie
\\
\footnotesize
\href{mailto:tx@cs.wisc.edu}{\texttt{tx@cs.wisc.edu}}
\and\and
\and
Jason D. Lee
\\
\footnotesize
\href{mailto:jasondlee88@gmail.com}{\texttt{jasondlee88@gmail.com}}
\and
Wen Sun
\\
\footnotesize
\href{mailto:ws455@cornell.edu}{\texttt{ws455@cornell.edu}}
\and
Akshay Krishnamurthy
\\
\footnotesize
\href{mailto:akshaykr@microsoft.com}{\texttt{akshaykr@microsoft.com}}
\and
Dylan J. Foster
\\
\footnotesize
\href{mailto:dylanfoster@microsoft.com}{\texttt{dylanfoster@microsoft.com}}
}

\date{\today}

  \addtocontents{toc}{\protect\setcounter{tocdepth}{0}}

\begin{document}
\maketitle

\begin{abstract}
  
  Language model alignment methods such as reinforcement learning from human feedback (RLHF)
have led to impressive advances in language model capabilities, 
but are limited by a widely observed phenomenon known as \emph{overoptimization}, 
where the quality of the language model degrades over the course of the alignment process. 
As the model optimizes performance with respect to an offline reward model, 
it overfits to inaccuracies and drifts away from preferred responses 
covered by the data. 
To discourage such distribution shift, KL-regularization is widely employed in existing offline alignment methods, 
but overoptimization continues to harm performance. 
Lending theoretical insight into the source of these empirical observations, 
we first show that the KL-regularization is too weak to prevent overfitting,
then raise the following question: 
is it possible to design an efficient algorithm that is provably robust to overoptimization? 
\loose

\arxiv{
We address this question with a new algorithm for offline
alignment, \emph{\alglong} (\algshort). \arxiv{
\noindent }\algshort is a one-line change to Direct Preference
Optimization (\dpo; \citet{rafailov2024direct}), which only involves
modifying the logarithmic link function in the \dpo{} objective.
Despite this minimal change, \algshort implicitly implements the
principle of \emph{pessimism in the face of uncertainty} via
regularization with the \chis-divergence---which quantifies uncertainty
more effectively than KL-regularization---and provably alleviates
overoptimization, achieving sample-complexity guarantees based on
\emph{single-policy concentrability}---the gold standard in offline
reinforcement learning. \algshort's simplicity and strong
  guarantees make it the first practical and general-purpose offline
  alignment algorithm that is provably robust to overoptimization.
  }

\end{abstract}

\section{Introduction}
\label{sec:intro}
Large language models (LLMs) trained on unsupervised text data exhibit impressive and
surprising capabilities
\citep{brown2020language,ouyang2022training,touvron2023llama,achiam2023gpt,anil2023palm},
but can be difficult to control without further guidance. \emph{Reinforcement learning from human feedback (RLHF)} and other alignment methods have emerged as a
central tool to align these models to human values and elicit desired 
behavior
\citep{christiano2017deep,bai2022training,ouyang2022training,rafailov2024direct}. This
is achieved by treating the language model as a  \emph{policy}, and using techniques from
reinforcement learning to optimize for desirable outcomes under a
(explicit or implicit) reward model learned from a dataset of
human-labeled responses. \loose

Alignment methods like RLHF have led to significant advances in language model capabilities,
but existing techniques are limited by a
widely observed phenomenon known as \emph{reward overoptimization} or
\emph{reward hacking}
\citep{michaud2020understanding,tien2022causal,gao2023scaling,rafailov2024scaling}. Since
the reward model 
is an imperfect proxy for human
preferences, the true quality of the language model can degrade as
training proceeds, even as its performance under the reward model
continues to improve. Intuitively, this occurs because the language
model may drift away from the manifold covered by the human-labeled
data used to train the reward model and end up in a region where the
reward model is inaccurate.\loose

Overoptimization is distinct from the classical concept of overfitting
  because it is a causal or counterfactual
  phenomenon:
  When the human-labeled dataset does not cover all possible alternatives, the decision maker---in this case, a language model policy---cannot
directly evaluate the effect of their actions. This perspective is
supported by the fact that overoptimization can be mitigated by
\emph{online} alignment techniques
\citep{guo2024direct,gao2024rebel,dong2024rlhf}, which exploit
interactive access to human or AI feedback to iteratively improve the
reward model; unfortunately, gathering such
feedback is costly and impractical in many settings. This raises natural  questions regarding the role of overoptimization in
\emph{offline alignment}:
\begin{itemize}
\item Is overoptimization in %
    offline alignment an \emph{information-theoretic phenomenon}? This would
  mean that there is simply not enough information in the
  human-labeled (offline) preference dataset due to
  partial coverage, and no algorithmic intervention 
  can avoid the overoptimization issue.
\item Alternatively, is overoptimization an \emph{algorithmic
    phenomenon}? This would mean that existing algorithms are not making the most of the
data they have (e.g., due to optimizing the wrong objective
and converging toward suboptimal solutions) and would suggest that their
\emph{sample-efficiency} can be improved, perhaps by
taking more aggressive measures to avoid overfitting to the reward model.
\end{itemize}
Previous developments in the theory of offline
  reinforcement learning suggest that the answer may be the latter. Indeed, this literature has
addressed the challenge of overoptimization---typically
referred to as \emph{distribution shift}---through the
principle of \emph{pessimism in the face of uncertainty}, which asserts that, given an offline dataset with
partial coverage, a decision maker should choose their response according to the
  most pessimistic view of the world supported by the data. Pessimism
  encourages the model to avoid overfitting to the offline
  dataset and is supported by a rich theory \arxiv{showing that it offers}%
  provable robustness to overoptimization in stylized settings \citep{liu2020provably,jin2021pessimism\arxiv{,rashidinejad2021bridging}}.\loose

  Perhaps the greatest barrier to implementing pessimism in
  language models is the efficient quantification of uncertainty in the offline reward, and the distillation of this information into actionable form. 
Most existing offline alignment methods employ KL-regularization, which penalizes the learned policy for drifting from the reference policy, but this form of
uncertainty quantification is insufficient to induce pessimism \citep{gao2023scaling} and is provably suboptimal in theory
\citep[][see also~\cref{sec:rpo_lower}]{zhu2023principled,song2024understanding}.
On the other hand,
offline reinforcement learning theory offers abstract pessimistic algorithms that are suitable---at least statistically---for large
models
\citep{xie2021bellman,uehara2021pessimistic,zhan2022offline,chen2022offline},
but cannot be
implemented directly without losing theoretical fidelity or making
unrealistic modeling assumptions \citep{zhu2023principled,zhan2023provable,li2023reinforcement,xiong2023gibbs,liu2024provably,cen2024value, fisch2024robust,ji2024selfplay}.
Notably, the so-called ``\texttt{DPO}+\sft'' approach developed by~\citet{liu2024provably,cen2024value,fisch2024robust} is provably suboptimal unless the language model satisfies an unrealistic convexity property (\cref{sec:rpo_lower}).
\arxiv{Can we develop practical offline alignment methods with provable robustness
to overoptimization by exploiting the unique structure of the language modeling problem?\loose}

  \subsection{Contributions}

We introduce a new algorithm for offline alignment, \emph{\alglong}
(\algshort). \algshort is simple and straightforward to implement,
requiring only a single-line change to Direct Preference Optimization
(\citet{rafailov2024direct}), yet it is provably robust to
overoptimization.  Algorithmically, \algshort only differs from \dpo{}
in that we replace the usual logarithmic link function in the \dpo{}
objective with a new link function that implicitly implements
pessimism via regularization with the \chis-divergence---a divergence that
(i) plays a fundamental role in statistics due to its ability to
quantify uncertainty \citep{tsybakov2008introduction}; and (ii)
penalizes off-manifold behavior more effectively than
KL-regularization.  Statistically, we formalize robustness to
overoptimization via a sample complexity guarantee based on
\emph{single-policy concentrability}---the gold standard in offline
reinforcement learning---which we establish under minimal statistical
and function approximation assumptions. This result implies that, in
contrast to most prior work, \algshort enjoys meaningful guarantees
even when the reference policy has poor coverage.
Summarizing:\loose
\begin{center}
\arxiv{  \emph{\algshort is the first practical, general-purpose algorithm for offline alignment \\with
        provable robustness to overoptimization.}}
    \end{center}

The result above concerns the classical language model alignment
formulation, which assumes the Bradley-Terry preference model
\citep{christiano2017deep,bai2022training,ouyang2022training,rafailov2024direct}. Turning
our attention to general preference models
\citep{munos2023nash,swamy2024minimaximalist,rosset2024direct} where the goal is to find an approximate Nash equilibrium, we
show that \emph{achieving guarantees based on single-policy
concentrability is impossible}. Nonetheless, we show that an iterative
variant of \algshort based on self-play 
achieves a sample complexity guarantee that scales with a new local coverage condition
---a condition that is
stronger than single policy concentrability,
 but much weaker than
global concentrability and the notion of unilateral concentrability introduced by \cite{cui2022offline}.
This result provides additional evidence for the value of regularization with \chis-divergence for obtaining sharp sample complexity guarantees\arxiv{ in language model alignment}.\loose

\paragraph{Technical highlights}
Our analysis of \algshort leverages several new techniques. 
First, we show that RLHF with \chis-regularization is sufficient to achieve guarantees based on single-policy concentrability (\cref{sec:framework} and \cref{sec:rlhf}). 
Next, we show that a variant of the \dpo
reparameterization trick that combines \chis-regularization with
KL-regularization (``mixed'' \chis-regularization) can be used to reformulate our objective
into a purely policy-based objective, in spite of the fact that
\chis-regularization fails to satisfy certain regularity conditions
found in prior work \citep{wang2023beyond}. Finally, and perhaps most
importantly, we use a novel analysis to show that pessimism is preserved after
reparameterization.
\arxiv{

}
Compared to prior approaches to pessimism in offline RL 
\citep{xie2021bellman,uehara2021pessimistic,zhan2022offline,chen2022offline},
\chis-regularization strikes a useful balance between generality and
tractability.\arxiv{ We expect our techniques---particularly the
use of mixed \chis-regularization---to find broader use.\loose}

\arxiv{
\subsection{Paper Organization}
  \cref{sec:background} provides background on offline
  alignment and the suboptimality of existing algorithms. \cref{sec:main}
    presents our main algorithm, \algshort, and accompanying theoretical
    guarantees. \cref{sec:understanding} then presents detailed intuition
    into how \algshort modulates the bias-overoptimization tradeoff
    and implements pessimism, and \cref{sec:proof_sketch} sketches the
    proof for its main statistical guarantee.
    We perform experimental evaluations of \algshort against \dpo
    in the TL;DR summarization task \citep{stiennon2020learning}, 
    which is included in \cref{sec:experiments}.

    \cref{sec:general_preference} contains results for general
    preference models, including an impossibility result for
obtaining guarantees under single-policy concentrability in this setting. We conclude with discussion in
    \cref{sec:discussion}. Proofs and additional results are deferred
    to the appendix, with highlights including (i) detailed discussion
    on suboptimality of existing pessimistic approaches
    (\cref{sec:related}), and (ii) additional algorithms and
    guarantees based on the \chis-regularization framework
    (\cref{sec:rlhf}).\loose
}

\arxiv{
\paragraph{Notation}
  For an integer $n\in\bbN$, we let $[n]$ denote the set
  $\{1,\dots,n\}$. For a set $\cX$, we let $\Delta(\cX)$ denote the
  set of all probability distributions over $\cX$. We adopt
    standard big-oh notation, and write $f=\bigoht(g)$ to denote that
    $f = \bigoh(g\cdot{}\max\crl*{1,\mathrm{polylog}(g)})$ and
    $a\approxleq{}b$ as shorthand for $a=\bigoh(b)$. \loose
}

\section{Background}
\label{sec:background}

In this section, we provide necessary background\arxiv{. We formally introduce the problem of language model alignment from human feedback (offline alignment), review
standard algorithms (\ppo and \dpo), and highlight that in general, these algorithms
suffer from provably suboptimal sample complexity arising from overoptimization, necessitating
algorithmic interventions.} %

\subsection{Alignment from Human Feedback}
\label{sec:rlhf}

Following prior work (e.g.,
\citet{rafailov2024direct,ye2024theoretical}), we adopt a contextual
bandit formulation of the alignment problem. We formalize the language model as
a \emph{policy} $\pi:\cX\to\Delta(\cA)$ which maps a context (prompt)
$x\in\cX$ to an action (response) $a\in\cA$ via
$a\sim{}\pi(\cdot\mid{}x)$, and let $\rho\in\Delta(\cX)$ denote the distribution over contexts/prompts.

\paragraph{Offline alignment}
In the offline alignment problem
\citep{christiano2017deep,bai2022training,ouyang2022training}, we
assume access to a dataset $\cD_\pref=\crl*{(x,\ap,\am)}$ of $n$ prompts
and labeled response pairs generated from a reference policy (language
model) $\piref$, which is typically obtained through
\arxiv{supervised fine tuning}. Here, $\ap$ is a positive action/response and $\am$ is a
negative action/response. Given the context/prompt $x\sim\rho$, the pair $(\ap,\am)$ is generated by sampling
a pair $(a,b)$ as $a\sim{}\piref(\cdot\mid{}x)$ and
$b\sim{}\piref(\cdot\mid{}x)$, and then ordering them
as $(\ap,\am)$ based on a binary preference
$y\sim{}\bbP(a\psdgt{}b\mid{}x)$. 
We assume
that preferences follow the \emph{Bradley-Terry} model
\citep{bradley1952rank}\arxiv{, in which}\loose
\begin{align}
\label{eq:bt}
\bbP(a\psdgt{}b\mid{}x) = \frac{\exp\prn*{\rstar(x,a)}}{\exp\prn*{\rstar(x,a)} + \exp\prn*{\rstar(x,b)}},
\end{align}
for an unknown reward function $\rstar:\cX\times\cA\to\brk*{0,\Rmax}$ for some $\Rmax \ge 1$. From the preference dataset $\cD_\pref$, we aim to learn a
policy $\pihat$ that has high reward in the sense that 
\arxiv{
  \begin{align*}
  J(\pistar) - J(\pihat) \leq \veps,
  \end{align*}
}
  for a small $\veps>0$, 
  where $J(\pi)\ldef{}\En_{x\sim\rho,a\sim\pi(\cdot\mid{}x)}\brk*{\rstar(x,a)}$ is the true expected reward, 
  and $\pistar$ is any comparator policy of interest. 
We abbreviate
$\En_{\pi}\brk*{\cdot}\ldef{}\En_{x\sim\rho,a\sim\pi(\cdot\mid{}x)}\brk{\cdot}$, 
and assume that $\rho(x)>0$ for all $x$ and 
$\piref(a \mid x) > 0$ for all $x,a$ without loss of generality. \loose

\paragraph{Offline RLHF with KL-regularization}
Classical algorithms for offline alignment \citep{christiano2017deep,ouyang2022training} are based on reinforcement learning with a
 \emph{KL-regularized} reward objective, defined for a regularization parameter $\beta>0$, via
\begin{align}
    \label{eq:kl_reward}
    J^{\mathsf{KL}}_\beta(\pi) \coloneqq &~ J(\pi) -
                             \beta\cdot{}\Dkl{\pi}{\piref}
                             = \E_{\pi} \left[\rstar(x,a) - \beta \log \frac{\pi(a\mid{}x)}{\piref(a\mid{}x)}\right],
\end{align}
where we adopt the shorthand $\Dkl{\pi}{\pi'}=\En_{x\sim\rho}\brk*{\Dkl{\pi(\cdot\mid{}x)}{\pi'(\cdot\mid{}x)}}$.
These methods first estimate a reward function $\wh{r}$ from $\cD_\pref$ using maximum
likelihood under the Bradley-Terry model:
\begin{equation}
  \label{eq:mle}
  \rhat=\argmax_{r\in\cR}\sum_{(x,\ap,\am)\in\cDpref}\log\sigma\prn*{r(\ap\mid{}x)-r(\am\mid{}x)},
  \end{equation}
where $\sigma(x) \ldef{}\frac{\exp(x)}{1+\exp(x)}$ is the sigmoid
function and $\cR$ is a class of reward functions, which is typically parameterized by a neural network. Then, they apply standard policy
optimization methods like \ppo to optimize an estimated version of \arxiv{the KL-regularized objective}: %
\arxiv{\begin{align*}
\pihat=\argmax_{\pi\in\Pi}\E_{\pi} \left[ \rhat(x,a) - \beta
                                        \log\frac{\pi(a \mid
                                        x)}{\piref(a \mid x)}
                                        \right].
       \end{align*}
       }
     The regularization term in \cref{eq:kl_reward} is intended to
     encourage $\pihat$ to stay close to $\piref$, with the hope of
     preventing the policy from overfitting to the potentially
     inaccurate reward model $\rhat$.

     \paragraph{Direct preference optimization (\dpo)}
\algshort is based on an alternative offline alignment approach, Direct Preference 
Optimization (\dpo; \citet{rafailov2024direct}). \dpo uses the
closed-form solution of the optimal KL-regularized policy under the
objective \cref{eq:kl_reward}---which can be viewed as implicitly
modeling rewards---to define a single policy optimization objective
that removes the need for direct reward function estimation.
Given a user specified policy class $\Pi$, \dpo solves
\begin{align}
  \label{eq:dpo}
  \pidpo=\argmax_{\pi\in\Pi}
\sum_{(x,\ap,\am) \in \Dcal_\pref} \log\left[\sigma\left(
  \beta\log\frac{\pi(\ap\mid{}x)}{\piref(\ap\mid x)} -
  \beta\log\frac{\pi(\am\mid x)}{\piref(\am \mid x)} \right) \right],
\end{align}
with the convention that the value of the objective is $-\infty$ if
  $\pi$ does not satisfy $\pi\ll{}\piref$. %

\subsection{Overoptimization and Insufficiency of KL-Regularization}

Empirically, both classical RLHF and direct alignment methods like \dpo have been observed
to suffer from overoptimization
\citep{gao2023scaling,guo2024direct,rafailov2024scaling,song2024understanding}, wherein
model quality degrades during the optimization process as the learned
policy drifts away from $\piref$. 
\arxiv{
The degree of degradation is affected by a number of factors, such as the objective used, the optimization landscape it induces, 
and the statistical properties of the algorithm. 
In this paper, we focus on mitigating the \emph{statistical problems} 
\akreplace{behind the empirical phenomena of}{underlying the} overoptimization phenomenon. 
As we will see, \akreplace{these phenomena are}{this phenomenon is} an issue of sample-inefficiency 
when offline data coverage is inadequate, 
which can be understood through the lens of
  coverage coefficients developed in the theory of offline
  reinforcement learning~\citep{liu2020provably,jin2021pessimism,rashidinejad2021bridging}.
}
\loose

\paragraph{Coverage coefficients}

  In offline reinforcement learning theory, the sample efficiency of an algorithm refers to the number of samples required to guarantee that $J(\pihat) \approx J(\pistar)$. 
  It is typically quantified by a \emph{coverage coefficient} (or
  concentrability coefficient)
  that measures the quality of the data collected by the reference $\piref$
  \citep{farahmand2010error,xie2020q,zanette2021provable}. 
  We will utilize the $L_1$ coverage coefficient, defined for a policy $\pi$ as $\Cone[\pi] \ldef{} \En_\pi\brk*{\frac{\pi(a\mid{}x)}{\piref(a\mid{}x)}}$.
  \emph{Single policy concentrability} is the gold standard for sample efficiency, and is obtained by an algorithm if, \emph{for any comparator policy $\pistar$}, the sample size required to learn $J(\pihat) \approx J(\pistar)$ scales with $\Cone[\pistar]$, the coverage coefficient of $\pistar$. 
  This guarantees that $\pihat$ is competitive with the best policy that is sufficiently covered by offline data, and, importantly, also guarantees that $\pihat$ is never much worse than $\piref$ itself.
Single policy concentrability is typically achieved by pessimistic algorithms that penalize the evaluations of candidate policies according to their uncertainty under the offline data, 
which prevents the learner from overfitting to inaccurate offline reward models.  

  In contrast, the performance of non-pessimistic algorithms typically scales with \emph{all-policy concentrability}---meaning that sample complexity scales with $\max_{\pi \in \Pi} \Cone[\pi]$ 
  \citep{liu2020provably,jin2021pessimism,rashidinejad2021bridging}--- 
  which is a guarantee achieved by even greedy algorithms
  that directly optimize the offline reward model without regularization.
  All-policy concentrability describes algorithms that cannot adapt to 
  the quality of the data, 
  and are thereby prone to overoptimization (of the offline reward model) unless 
  the data is rich enough to cover all candidate policies sufficiently well. 
  In contrast, single policy concentrability serves as a theoretical certification that an algorithm is robust to poor data coverage and will not overfit.

\loose

\paragraph{Pessimism in offline alignment}
\citet{zhu2023principled} show that the performance of \ppo and \dpo scales with all-policy concentrability, $\max_\pi \Cinf[\pi]$, 
for the stylized case of alignment with linearly parameterized policies where
$\pi_{\theta}(a\mid{}x)\propto{}\exp(\tri*{\phi(x,a),\theta})$ for a
known feature embedding $\phi(x,a)\in\bbR^{d}$ 
(see also
\citet{zhu2024iterative,song2024understanding}). 
They also propose a pessimistic algorithm that achieves %
\arxiv{
\[
J(\pistar) - J(\pihat)\approxleq{} \sqrt{\frac{\poly(\Cinf[\pistar],d)}{n}},
\] simultaneously for all $\pistar$.}\arxiv{\footnote{\citet{zhu2023principled} achieve guarantees based on a
  \emph{feature coverage} coefficient, which improves on concentrability coefficients, but is specialized to linear function approximation.}}
While encouraging, these results are restricted to
linearly parameterized policies, and cannot be directly applied to
large language models. Most existing theoretical algorithms for
offline alignment are similar in nature, and either place restrictive
assumptions on the policy class $\Pi$ \citep{zhu2023principled,zhan2023provable,li2023reinforcement,xiong2023gibbs}
or are not feasible to implement in a way that is faithful to theory
\citep{ye2024theoretical,ji2024selfplay}.

Most relevant to our work, a series of recent papers
\citep{liu2024provably,cen2024value,fisch2024robust} propose
implementing pessimism for general policy classes $\Pi$ by solving the
\arxiv{so-called }``\texttt{DPO}+\sft'' objective\loose
\begin{align}
  \label{eq:rpo}
\argmax_{\pi\in\Pi}\crl*{
  \alpha\cdot\En_{\piref}\brk*{\beta\log\pi(a\mid{}x)}
  + 
\frac{1}{n}\sum_{(x,\ap,\am) \in \Dcal_\pref} \log\left[\sigma\left(
  \beta\log\frac{\pi(\ap\mid x)}{\piref(\ap\mid x)} -
  \beta\log\frac{\pi(\am\mid x)}{\piref(\am\mid x)} \right) \right]
  },
\end{align}%
which augments the \dpo objective (the second term) with an additional supervised fine-tuning-like (\sft) loss (the first term). While this objective is simple to apply to general policy
classes, the existing single-policy concentrability guarantees for this method assume that $\Pi$ satisfies restrictive \emph{convexity} conditions which do not hold in practice for large language models. 
Perhaps surprisingly, we show
(\cref{sec:rpo_lower}) that without
convexity, \emph{the objective in \cref{eq:rpo} fails to achieve
  a single-policy concentrability guarantee}.\footnote{This finding is\arxiv{ rather}
  surprising because \citet{xie2024exploratory} show that an
  \emph{optimistic online} counterpart to \cref{eq:rpo}, which negates the SFT term, enjoys online RLHF guarantees\arxiv{ with
  general policy classes} without requiring analogous convexity conditions.\loose}
In other words, \texttt{DPO}+\sft is 
insufficient to mitigate overoptimization.

\section{\alglongb}
\label{sec:main}

This section presents our main algorithm, \algshort. We begin by
introducing \chis-regularization as a general framework for mitigating
overoptimization in offline alignment (\cref{sec:framework}), then derive
the \algshort algorithm (\cref{sec:algorithm}) and finally present our main theoretical
guarantee (\cref{sec:theoretical}).

\subsection{Framework: \chisb-Regularized Reward Optimization}
\label{sec:framework}
The central algorithm design
principle for our work is to (implicitly or explicitly) optimize a
variant of the classical RLHF objective (\cref{eq:kl_reward}) that
replaces KL-regularization with regularization
via \chis-divergence, defined for a pair of probability measures
$\bbP$ and $\bbQ$ with $\bbP\ll \bbQ$ via %
\arxiv{
\begin{align*}
    \Dchis{\bbP}{\bbQ}\ldef{}\frac{1}{2}\int\prn*{\frac{\mathrm{d}\bbP}{\mathrm{d}\bbQ}-1}^2\mathrm{d}\bbQ.  
\end{align*}}
\chis-divergence is a more aggressive form of regularization than
KL-divergence; we have $\Dkl{\bbP}{\bbQ}\leq{}2\Dchis{\bbP}{\bbQ}$,
but the converse is not true in general. We consider the following \chis-regularized RL objective:\footnote{
  Note the definition of $\Dchis{\pi}{\piref}$ differs from $\En[\Dchis{\pi(\cdot\mid{}x)}{\piref(\cdot\mid{}x)}]$ only by a constant scaling and shift, both of which are inconsequential when used as regularization in an optimization objective.%
  }\loose
  \begin{align}
    \label{eq:chis_reward}
    \Jchi(\pi)\ldef{}\En_{\pi}\brk*{\rstar(x,a)} - \beta\cdot\Dchis{\pi}{\piref}, \qquad \Dchis{\pi}{\piref} := \En_{\pi}\brk*{\frac{\pi(a\mid{}x)}{\piref(a \mid{} x)}}.
  \end{align}
Moving to a form of regularization that penalizes deviations from
$\piref$ more forcefully than KL-regularization is a natural approach to mitigating
overoptimization, but an immediate concern is that this may lead to
overly conservative algorithms. As we will show, however, \chis-divergence is
better suited to the geometry of offline alignment, as it has the unique
property (not shared by KL-divergence) that its value quantifies the extent to which the
accuracy of a
reward model $\rhat$ trained under $\piref$
will transfer to a downstream policy $\pi$ of interest
(\cref{lem:clip-dpo-estimation}). This implies that the
\chis-regularized RL objective in \cref{eq:chis_reward} meaningfully
implements a form of pessimism in the face of uncertainty, and by tuning the regularization
parameter $\beta>0$, we can keep the learned policy
$\pihat$ close to $\piref$ in the ``right'' (uncertainty-aware) way. 
As such, we view optimizing \chis-regularized rewards, i.e., $\argmax_{\pi \in \Pi}\Jchi(\pi)$
as a general principle to guide algorithm
design for offline alignment (as well as offline RL more
broadly), which we expect to find broader use.\loose

We now turn our attention to the matter of how to optimize
this objective. One natural approach, in the vein of classical
RLHF \arxiv{algorithms }\citep{christiano2017deep,ouyang2022training}, is
to estimate a reward model $\rhat$ using maximum likelihood (\cref{eq:mle}), and then
use PPO or other policy optimization methods to solve\loose
\begin{align}
\label{eq:rlhf}
\pihat=\argmax_{\pi\in\Pi}\E_{\pi} \left[ \rhat(x,a)
  \right]-\beta\cdot\Dchis{\pi}{\piref}
  = \argmax_{\pi\in\Pi}\E_{\pi} \left[ \rhat(x,a) -\beta\frac{\pi(a\mid{}x)}{\piref(a\mid{}x)}\right].
\end{align}
While this indeed leads to strong statistical guarantees
(cf. \cref{sec:rlhf}), we adopt a simpler and more direct approach
inspired by \dpo, which removes the need for a separate reward estimation
step.

  \subsection{The \algshortb Algorithm}
  \label{sec:algorithm}

\begin{algorithm}[tp]
\caption{\mainalg (\algshort)}
\label{alg:main}
\begin{adjustbox}{max width=\textwidth}
\begin{minipage}{\linewidth}
\begin{algorithmic}[1]
  \Statex[0] \mbox{{\bfseries input:}
    Reference policy $\piref$, preference dataset $\cD_\pref$,
    \chis-regularization coefficient $\beta>0$.}
  \State Define
  \begin{align}
    \label{eq:link}
    \link(z)\ldef{} z + \log{}z.
  \end{align}
    \State%
    Optimize \chis-regularized preference optimization objective:
    \begin{align}
      \label{eq:chi_dpo}
        \pihat \leftarrow \argmax_{\pi \in \Pi}
        \sum_{(x,\ap,\am) \in
        \Dcal_\pref}\log\left[\sigma\left(
        \clip_{2\Rmax}\brk*{\beta\link\prn*{\frac{\pi(\ap\mid x)}{\piref(\ap\mid x)}} -
        \beta\link\prn*{\frac{\pi(\am\mid x)}{\piref(\am\mid x)}}} \right) \right].
      \end{align}
     \label{line:chi_dpo}
    \State \textbf{return:}
    $\pihat$.
\end{algorithmic}
\end{minipage}
\end{adjustbox}
\end{algorithm}

Our main algorithm, \algshort, is described in \cref{alg:main}. Given
a preference dataset $\cD_\pref$ and\arxiv{ user-specified} policy class
$\Pi$, the algorithm learns a policy $\pihat$ by solving the \dpo-like
optimization objective \cref{eq:chi_dpo}, which replaces the usual
$\log\frac{\pi(a\mid{}x)}{\piref(a\mid{}x)}$ terms in the original \dpo objective
(\cref{eq:dpo}) with a new link function\arxiv{ given by}\loose
\[
  \link{}\prn*{\frac{\pi(a\mid{}x)}{\piref(a\mid{}x)}}
  =   \frac{\pi(a\mid{}x)}{\piref(a\mid{}x)}
  + \log\prn*{\frac{\pi(a\mid{}x)}{\piref(a\mid{}x)}}.
\]
A secondary modification is that we handle potentially unbounded
density ratios by clipping to the interval $\brk*{-2\Rmax,+2\Rmax}$ via the operator
$\clip_{R}(z)=\max\crl*{\min\crl*{R,z},-R}$. In what follows, we will show that this simple\arxiv{
and practical} modification to \dpo---that is, incorporating an additional density
ratio term outside the logarithm---implicitly
implements pessimism via \chis-regularization.

\paragraph{Algorithm derivation}
Recall that \dpo is derived \citep{rafailov2024direct} by observing that
the optimal KL-regularized policy
$\pistarkl\ldef\argmax_{\pi}\crl*{\En_{\pi}\brk*{\rstar(x,a)}-\beta\Dkl{\pi}{\piref}}$
 \arxiv{satisfies the following identity for all $x\in\cX$ and $a\in\cA$.
  \begin{align*}
    \rstar(x,a) = \beta\log\frac{\pistarkl(a\mid{}x)}{\piref(a\mid{}x)} + \Zklrstar(x),
  \end{align*}}
  where $\Zklrstar(x)$ is a normalization constant that depends on
  $x$ but not $a$. 
  This facilitates reparameterizing the reward model in the maximum
  likelihood estimation objective (\cref{eq:mle}) in terms of a
  learned policy, yielding the \dpo objective in \cref{eq:dpo}.\loose

  To apply a similar reparameterization trick for \chis-divergence,
  a natural starting point is an observation from
  \citet{wang2023beyond}, who show that an analogous characterization
  for the optimal regularized policy holds for a general class of
  \emph{$f$-divergences}. For a convex function
  $f:\bbR_{+}\to\bbR$, define the induced $f$-divergence by %
  \arxiv{
\[
  \Df{\bbP}{\bbQ}=\int{}f\prn*{\frac{\mathrm{d}\bbP}{\mathrm{d}\bbQ}}\mathrm{d}\bbQ
  =\En_{\bbQ}\brk*{f\prn*{\frac{\mathrm{d}\bbP}{\mathrm{d}\bbQ}}}.
\]}
\citet{wang2023beyond} show that for any differentiable $f$ that
satisfies the technical condition $0\notin\dom(f')$, the optimal
$f$-regularized policy
$\pistar_{\beta;f}=\argmax_{\pi}\crl*{\En_{\pi}\brk*{\rstar(x,a)}-\beta\Df{\pi}{\piref}}$
satisfies
  \begin{equation}
    \label{eq:f_opt}
    \rstar(x,a) = \beta{}f'\prn*{\frac{\pistar_{\beta;f}(a\mid{}x)}{\piref(a\mid{}x)}} + \Zfrstar(x)
  \end{equation}
  for a normalization constant $\Zfrstar(x)$, allowing for a similar
  reparameterization. Informally, the condition $0\notin\dom(f')$ means that
  $\Df{\cdot}{\piref}$ acts as a \emph{barrier} for the positive
  orthant, automatically forcing $\pistar_{\beta;f}$ to place positive
  probability mass on any action $a$ for which
  $\piref(a\mid{}x)>0$. 

  The \chis-divergence is an $f$-divergence
  corresponding to $f(z)=\frac{1}{2}(z-1)^2$, but unfortunately does
  not satisfy the condition $0\notin\dom(f')$, making \cref{eq:f_opt}
  inapplicable. Indeed, the optimal \chis-regularized policy can clip action
  probabilities to zero in a non-smooth fashion even when
  $\piref(a\mid{}x)>0$, which means that the identity \cref{eq:f_opt}
  does not apply. To address this issue, we augment
  \chis-regularization by considering the \emph{mixed \chis-divergence} given by
  $\fmix(z) \ldef{} \frac{1}{2}(z-1)^2 + z\log{}z$, which has
\[
\Dfmix{\bbP}{\bbQ} = \Dchis{\bbP}{\bbQ} + \Dkl{\bbP}{\bbQ}.
\]
In other words, we use \emph{both \chis-regularization and
  KL-regularization}; \chis-regularization enforces pessimism, while
KL-regularization enforces the barrier property and facilitates
reparameterization. Indeed, the link function
$\link$ (\cref{eq:link}) used in \algshort has
$\link(z)\ldef{}\fmix'(z) = z + \log{}z$, which satisfies
$0\notin{}\dom(\fmix')$, so \cref{eq:f_opt} yields the
reparameterization
$\rstar(x,a) = \beta\link\prn[\Big]{\frac{\pistar_{\beta;\fmix}(a\mid{}x)}{\piref(a\mid{}x)}} + \Zfmixr[\rstar](x)$.
Substituting this identity into the maximum likelihood estimation
objective (\cref{eq:mle}) yields the \algshort algorithm.\loose

Going forward, we define $\Jmixr[r](\pi) = \En_{\pi}\brk*{r(x,a)} -
\beta\cdot\Dchis{\pi}{\piref}-\beta\cdot\Dkl{\pi}{\piref}$ for a
reward function $r$. We use the shorthand
$\pistarb=\argmax_{\pi}\Jmixr[\rstar](\pi)$ as the optimal policy under
mixed \chis-regularization, and abbreviate $\Zr(x)\ldef{}\Zfmixr(x)$,
so that \loose
\begin{equation}
  \label{eq:chis_opt}
\rstar(x,a) = \beta\link\prn*{\frac{\pistarb(a\mid{}x)}{\piref(a\mid{}x)}} + \Zrstar(x).
\end{equation}

  \subsection{Theoretical Guarantees}
  \label{sec:theoretical}

To state our main sample complexity guarantee for \algshort, we begin
by making standard statistical assumptions. Let the regularization
parameter $\beta>0$ in \algshort be fixed. We first make a
\emph{realizability} assumption, which states that the policy class $\Pi$ used in
\algshort is sufficiently expressive to represent the optimal policy
under mixed \chis-regularization (\cref{eq:chis_opt}); recall that in
the context of language modeling, $\Pi$ represents a class of language
models with fixed architecture and varying weights.
\begin{assumption}[Policy realizability]
  \label{ass:realizability}
  The policy class $\Pi$ satisfies $\pistarb\in\Pi$, where $\pistarb$
  is the optimal policy under mixed \chis-regularization (\cref{eq:chis_opt}).
\end{assumption}
Policy realizability is a standard assumption for sample-efficient
reinforcement learning
\citep{agarwal2019reinforcement,lattimore2020bandit,foster2023foundations},
and is equivalent to reward model realizability in our setting via reparameterization.
\arxiv{

}
\arxiv{Our }second assumption asserts that the implicit reward models
induced by the policy class $\Pi$ in \algshort have bounded range.
\begin{assumption}[Bounded implicit rewards]
\label{ass:vmax}
For a parameter $\Vmax\geq{}\Rmax$, it holds that for all $\pi\in\Pi$, $x\in\cX$, and $a,b\in\cA$, %
\arxiv{
\begin{equation*}
\abs*{\beta\link\prn*{\frac{\pi(a\mid{}x)}{\piref(a\mid{}x)}}-\beta\link\prn*{\frac{\pi(b\mid{}x)}{\piref(b\mid{}x)}}} \leq \Vmax.
\end{equation*}}
\end{assumption}
  \cref{ass:vmax} generalizes analogous assumptions made in the
  analysis of \dpo-like algorithms in prior work
  \citep{rosset2024direct,xie2024exploratory}, and our guarantees scale polynomially with this parameter; see \cref{sec:vmax_appendix}
  for a detailed comparison. We emphasize that in practice, $\Vmax$
  can be measured and directly controlled (e.g., via clipping).
  \begin{example}[Policy classes induced by reward models]
    \label{ex:reward_model}
    A natural setting in which both \cref{ass:realizability} and \cref{ass:vmax} hold  is when
    the policy class $\Pi$ is induced by a class of bounded reward function $\cR \subset (\cX\times\cA \to [0,\Rmax])$
    through the mixed-\chis{} parameterization, for $\beta >0$:
    \begin{equation}
      \label{eq:reward_policy}
      \Pi_{\cR,\beta}\ldef{}
      \crl*{\pi(a\mid{}x)=\piref(a\mid{}x)\cdot\link^{-1}(\beta^{-1}(r(x,a)-\Zr[r](x)))\mid{}r\in\cR}.
    \end{equation}
    Here, \cref{ass:realizability} holds whenever $\rstar\in\cR$, and \cref{ass:vmax} \arxiv{is
    satisfied} with $\Vmax\leq{}2\Rmax$. \loose %
\end{example}

Finally, 
recall the definition of the $L_1$ concentrability coefficient, $\Cone \ldef{}\En_{\pi}\brk*{\frac{\pi(a\mid x)}{\piref(a \mid x)}}$, which %
is equivalent to the \chis-divergence up to a constant shift,
i.e., $\Cone=1+2\Dchis{\pi}{\piref}$.
We  use $L_1$ concentrability to quantify 
\ahreplace{the coverage of
a policy $\pi$ by the offline preference dataset
$\cD_{\pref}$ generated by $\piref$.}{
  how well the offline preference dataset $\cD_{\pref}$, 
  generated by $\piref$, covers a policy $\pi$, 
  and the following result is our main sample complexity guarantee for \algshort. 
}
\loose
\begin{theorem}[Sample complexity bound for \algshort]
  \label{thm:main}
  Suppose \cref{ass:realizability,ass:vmax} hold for some $\beta>0$. 
  With probability at
  least $1-\delta$, \algshort (\cref{alg:main}) produces a policy $\wh\pi$ such
  that for all policies $\pistar$ simultaneously, we have
  \begin{align}
    \label{eq:main1}
    J(\pistar) - J(\pihat) \approxleq \Vmax e^{2\Rmax}\cdot\sqrt{\frac{\Cone[\pistar]\log(|\Pi|/\delta)}{n}} + \beta\cdot\Cone[\pistar] + \beta^{-1}\cdot\frac{\Vmax^2 e^{4\Rmax}\log(|\Pi|/\delta)}{n} .  
  \end{align} 
\arxiv{  In particular, given} any comparator policy $\pistar$, we can choose
  the regularization parameter $\beta$ to achieve \loose
  \begin{align}
    \label{eq:main2}
    J(\pistar) - J(\pihat) \approxleq \Vmax e^{2\Rmax}\cdot\sqrt{\frac{\Cone[\pistar]\log(|\Pi|/\delta)}{n}}. 
  \end{align}
\end{theorem}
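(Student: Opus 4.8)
The plan is to exploit the fact that \algshort is exactly maximum-likelihood estimation over the reparameterized reward class, and then to convert the resulting statistical error into a regret bound through a decomposition in which the \chis-regularization supplies pessimism. Write $\rhat(x,a)\ldef\beta\link\prn*{\pihat(a\mid x)/\piref(a\mid x)}$ for the implicit reward induced by the learned policy. The two ingredients I would combine are (i) a statistical guarantee that $\rhat$ is close to $\rstar$ when measured under $\piref$, and (ii) a regret decomposition relating $J(\pistar)-J(\pihat)$ to this error. The key structural fact making (ii) work is that, by the reparameterization identity \cref{eq:chis_opt}, $\pihat$ is the mixed-\chis-optimal policy for its \emph{own} implicit reward, i.e. $\pihat=\argmax_{\pi}\Jmixr[\rhat](\pi)$ over \emph{all} policies (not merely over $\Pi$), since this is an exact identity of the reparameterization map.

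For the statistical step, I would first note that under \cref{ass:realizability} the true reward gaps $\rstar(x,a)-\rstar(x,b)\in[-\Rmax,\Rmax]$ lie inside the clipping window $[-2\Rmax,2\Rmax]$, so the realizable model $\pistarb\in\Pi$ is unaffected by $\clip_{2\Rmax}$ and \algshort is genuinely MLE with a realizable class. A standard log-loss / Hellinger generalization bound for finite classes then yields
\[
\En_{x\sim\rho,\, a,b\sim\piref}\brk*{\Dhels{\Ber(\sigma(\hat{\Delta}))}{\Ber(\sigma(\Delta^{\star}))}} \approxleq \frac{\log(|\Pi|/\delta)}{n},
\]
where $\hat{\Delta},\Delta^{\star}$ are the clipped implicit-reward gaps under $\pihat$ and $\rstar$. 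Converting the Bernoulli Hellinger distance back to squared reward-gap error costs a factor $e^{O(\Rmax)}$ (the sigmoid is $e^{-O(\Rmax)}$-flat on the clipped window), and \cref{lem:clip-dpo-estimation}---the transfer lemma, which is exactly where the geometry of \chis-divergence enters---packages this together with the clipping correction (range $\Vmax$) and the shift ambiguity of Bradley--Terry MLE into the single bound $\abs{\En_{\pi}\brk*{\rhat-\rstar}}\approxleq\sqrt{\Cone[\pi]}\,\vepsstat$, valid for every policy $\pi$, with $\vepsstat\approxleq\Vmax e^{2\Rmax}\sqrt{\log(|\Pi|/\delta)/n}$. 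The transfer succeeds precisely because $\Cone[\pi]=1+2\Dchis{\pi}{\piref}$, so $\chi^2$-closeness to $\piref$ controls how $\piref$-accuracy of the reward extrapolates to $\pi$.

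The heart of the argument is the regret decomposition. Writing $g\ldef\rstar-\rhat$ and using $J(\pi)-\Jmixr[\rstar](\pi)=\beta(\Dchis{\pi}{\piref}+\Dkl{\pi}{\piref})$, I would split
\[
J(\pistar)-J(\pihat) = \beta\prn*{\Dchis{\pistar}{\piref}+\Dkl{\pistar}{\piref}} + \brk*{\Jmixr[\rstar](\pistar)-\Jmixr[\rstar](\pihat)} - \beta\prn*{\Dchis{\pihat}{\piref}+\Dkl{\pihat}{\piref}}.
\]
For the middle bracket I would add and subtract $\Jmixr[\rhat]$: the regularizers cancel so $\Jmixr[\rstar](\pi)-\Jmixr[\rhat](\pi)=\En_\pi[g]$, and since $\pihat$ maximizes $\Jmixr[\rhat]$ the term $\Jmixr[\rhat](\pistar)-\Jmixr[\rhat](\pihat)\le0$, giving $\Jmixr[\rstar](\pistar)-\Jmixr[\rstar](\pihat)\le\En_{\pistar}[g]-\En_{\pihat}[g]$. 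The transfer bound gives $\En_{\pistar}[g]\approxleq\sqrt{\Cone[\pistar]}\vepsstat$, which is the leading term of \cref{eq:main1}. For $-\En_{\pihat}[g]\approxleq\sqrt{\Cone[\pihat]}\vepsstat$---where no a priori bound on $\Cone[\pihat]$ is available---I would use the self-bounding trick: by AM--GM $\sqrt{\Cone[\pihat]}\vepsstat\le\tfrac{\beta}{4}\Cone[\pihat]+\vepsstat^2/\beta$, and since $\beta\Dchis{\pihat}{\piref}=\tfrac{\beta}{2}(\Cone[\pihat]-1)$ the $\Cone[\pihat]$ contributions are absorbed by the third bracket, leaving only $\vepsstat^2/\beta$ (up to a harmless additive $\beta$, dominated by $\beta\Cone[\pistar]$). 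Bounding the first bracket by $\beta\Cone[\pistar]$ via $\Dkl\le2\Dchis$ and $\Dchis{\pistar}{\piref}\le\tfrac12\Cone[\pistar]$ then yields \cref{eq:main1}, and \cref{eq:main2} follows by tuning $\beta\asymp\vepsstat/\sqrt{\Cone[\pistar]}$ to balance the three terms.

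The main obstacle I anticipate is \emph{not} the decomposition---which is clean once the reparameterization fixed-point is in hand---but the transfer lemma \cref{lem:clip-dpo-estimation}. Obtaining a $\sqrt{\Cone[\pi]}$ (rather than worst-case) dependence requires simultaneously discharging the shift ambiguity (MLE identifies only reward \emph{differences}, so I must work with $\piref$-centered errors, whose mean is action-independent and therefore cancels between $\pistar$ and $\pihat$) and controlling the gap between the clipped quantity MLE actually estimates and the unclipped implicit reward in the regret, where the $\Vmax$ and $e^{O(\Rmax)}$ factors become unavoidable. Verifying that \chis-regularization supplies exactly the $\tfrac{\beta}{2}\Cone[\pihat]$ needed to absorb $\sqrt{\Cone[\pihat]}\vepsstat$---that is, that pessimism is preserved after reparameterization---is the conceptual crux, and is precisely why KL-regularization alone would be insufficient here.
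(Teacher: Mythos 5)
Your proposal is correct and follows essentially the same route as the paper's proof: the implicit reward model $\rhat$, an MLE/Hellinger bound under $\piref$ with the $e^{O(\Rmax)}$ sigmoid conversion and clipping/shift handling, the $\chi^2$ change-of-measure transfer lemma, the fixed-point fact that $\pihat$ maximizes $\Jmixr[\rhat]$ over all policies (which the paper proves via a KKT/Lagrangian argument in \cref{lem:general-reward-to-policy}, and which your sketch asserts but correctly identifies as the crux), the same pessimism decomposition with AM-GM absorption of $\sqrt{\Cone[\pihat]}\,\vepsstat$ into $-\beta\Dchis{\pihat}{\piref}$, and the same tuning $\beta\asymp\vepsstat/\sqrt{\Cone[\pistar]}$. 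Your bracketing (centering first at $\Jmixr[\rstar]$, then at $\Jmixr[\rhat]$) is algebraically identical to the paper's direct insertion of $\Jmixgr[\rhat]$.
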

\cref{thm:main} shows that \algshort achieves a sample complexity
guarantee that scales only with the single-policy concentrability
parameter $\cC^{\pistar}$ for the comparator policy $\pistar$, for all
policies $\pistar$ simultaneously. In particular, roughly
$n=\bigoh\prn*{\frac{\cC^{\pistar}\log(\abs{\Pi}/\delta)}{\veps^2}}$
examples are sufficient to learn a policy that is $\veps$-suboptimal
relative to $\pistar$. 
As a result, \algshort is robust to overoptimization
since the learned policy is as good as any $\pistar$ that is sufficiently covered by $\piref$ (in the sense that $\cC^{\pistar}=\bigoh(1)$),
which is effectively the best one can hope for in the purely offline
setting. In contrast, naive offline alignment methods like \dpo have sample
complexity that scales with
\emph{all-policy concentrability} (roughly, $\max_{\pi}\cC^{\pi}$),
even when the comparator policy $\pistar$ is sufficiently covered
\citep{zhu2023principled,song2024understanding}. \arxiv{

}
To highlight this, in
\cref{fig:regret} (see \cref{sec:understanding} for details) we give a concrete example in which \algshort allows the user to tune $\beta$ to achieve tight statistical
rates, yet no choice of $\beta$ for \dpo leads to comparable
performance. Effectively, any choice of $\beta$ for \dpo is
either susceptible to overoptimization, or \akreplace{has unacceptably high
bias}{is unacceptably conservative}. All prior
works that achieve similar sample complexity guarantees based on
single-policy concentrability are either impractical, or require more
restrictive statistical assumptions on the policy class
\citep{ye2024theoretical,liu2024provably,cen2024value,fisch2024robust,ji2024selfplay}.\footnote{A
  notable difference is that some
  of these works achieve guarantees based on tighter coverage parameters than single-policy $L_1$-concentrability, which reflect the
  structure of the policy or reward class. This appears to be
  out of reach for our techniques.\loose}\loose

Regarding the parameter $\Vmax$, we
observe that since the policy $\pistarb$ satisfies
$\abs*{\beta\link\prn*{\frac{\pistarb(a\mid{}x)}{\piref(a\mid{}x)}}-\beta\link\prn*{\frac{\pistarb(b\mid{}x)}{\piref(b\mid{}x)}}}
\leq 2\Rmax$, information-theoretically we can always achieve
$\Vmax=2\Rmax$ by pre-filtering the policy class $\Pi$ to remove all
policies for which this inequality does not hold. 
Since this may be non-trivial\arxiv{ in practice}, we incorporate clipping in \cref{eq:chi_dpo}, which, for precisely the above reason, we expect to improve performance\arxiv{ empirically}. 
In \arxiv{\cref{sec:understanding}}, we discuss the role of the $\Vmax$ parameter and
\cref{ass:vmax} in greater depth. See also the guarantees for the \rlhfalg
algorithm in \cref{sec:rlhf}, which avoid dependence on this parameter.

\paragraph{Tuning the regularization parameter}
To achieve optimal dependence on $\cC^{\pistar}$,
\cref{thm:main} requires tuning $\beta>0$ as a function of this
parameter, similar to other pessimistic schemes
\citep{liu2024provably}. With no prior knowledge, setting
$\beta\propto{}\sqrt{\frac{\Vmax^2 e^{4\Rmax}\log(|\Pi|/\delta)}{n}}$
suffices to ensure that, simultaneously for all comparator policies $\pistar$, we have %
 \arxiv{
 \begin{align*}
    J(\pistar) - J(\pihat) \approxleq \Vmax e^{2\Rmax}\cdot\sqrt{\frac{(\Cone[\pistar])^2\log(|\cR|/\delta)}{n}}, 
 \end{align*}}
This guarantee achieves a slightly worse
rate than~\cref{eq:main2} but holds simultaneously for all comparator
policies rather than the specific one that was used to tune $\beta$.
The following result, specializing to the setting in
\cref{ex:reward_model}, shows that there exists an optimal parameter
$\betastar>0$ that recovers the rate in~\cref{eq:main2} and holds simultaneously for all comparator policies. 
  \begin{corollary}[Sample complexity bound for \algshort with a
    reward model]
    \label{cor:reward_model}
    Consider the setting in \cref{ex:reward_model}, where
    the policy class $\Pi_{\cR,\beta}$ is the set of mixed \chis-regularized
    policies induced by a reward model class $\cR$ with $\rstar\in\cR$ and
    $\beta>0$. 
    For any $\delta \in (0,1)$, there exists a choice\footnote{It is
      unclear how to select $\betastar$ in a data-driven manner, as it
      depends on the\arxiv{ (unknown)} functionals $\pi \mapsto
      C^\pi$\arxiv{ and }$\pi \mapsto J(\pi)$.\loose} for
    $\betastar>0$
such that with probability at
  least $1-\delta$, \algshort (\cref{alg:main}), with class $\Pi_{\cR,\betastar}$, produces a policy $\wh\pi$ such
  that for all policies $\pistar$ simultaneously, we have \loose
\arxiv{  \begin{align*}
    J(\pistar) - J(\pihat) \approxleq \Rmax e^{2\Rmax}\cdot\sqrt{\frac{\Cone[\pistar]\log(|\cR|/\delta)}{n}}. 
         \end{align*}
         }
\end{corollary}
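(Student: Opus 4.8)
The plan is to reduce to \cref{thm:main} with a single, carefully chosen, \emph{data-independent} regularization parameter $\beta^\star$, and then to exploit the fact that the guarantee of \cref{thm:main} holds for all comparators simultaneously. First I would instantiate \cref{thm:main} in the reward-model setting: by \cref{ex:reward_model}, for every $\beta>0$ the class $\Pi_{\cR,\beta}$ satisfies \cref{ass:realizability} (since $\rstar\in\cR$) and \cref{ass:vmax} with $\Vmax\le 2\Rmax$, and has $\abs{\Pi_{\cR,\beta}}=\abs{\cR}$. Abbreviating $A\ldef{}\Rmax e^{2\Rmax}$, $B\ldef{}\Rmax^2 e^{4\Rmax}$ (so that $\sqrt{B}\asymp A$) and $L\ldef{}\log(\abs{\cR}/\delta)$, \cref{thm:main} then gives, on a single event of probability at least $1-\delta$, a bound of the form
\[
J(\pi)-J(\pihat)\;\lesssim\; A\sqrt{\tfrac{\Cone[\pi] L}{n}} + \beta\cdot\Cone[\pi] + \beta^{-1}\tfrac{BL}{n}
\]
simultaneously for \emph{all} policies $\pi$, for whatever $\beta$ we fix in advance.

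The crux is the choice of $\beta^\star$. A single $\beta$ cannot be per-comparator optimal: balancing the last two terms for a comparator with $\Cone[\pi]=c$ forces $\beta\asymp\sqrt{BL/(nc)}$, which is decreasing in $c$, so no fixed $\beta$ is simultaneously optimal across coverage levels. The resolution I would use is to tune $\beta^\star$ to the \emph{hardest} comparator on the target curve. Let $c_1$ denote the implicit constant in the display above and define the benchmark functional $\mathcal B(\pi)\ldef{} J(\pi) - 3c_1 A\sqrt{\Cone[\pi]L/n}$. Let $\pi^\dagger$ be a near-maximizer of $\mathcal B$ over all policies (with additive slack at most $A\sqrt{L/n}$); such a $\pi^\dagger$ exists because $J$ is bounded by $\Rmax$ while $\sqrt{\Cone[\pi]}\to\infty$ forces $\mathcal B\to-\infty$ at large coverage, so the supremum is approached at finite coverage. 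Set $c^\star\ldef{}\Cone[\pi^\dagger]\ge 1$ and choose $\beta^\star\ldef{}\sqrt{BL/(nc^\star)}$. Since $\mathcal B$ depends only on the population functionals $\pi\mapsto\Cone[\pi]$ and $\pi\mapsto J(\pi)$ (and not on the sample), this $\beta^\star$ is deterministic, so \cref{thm:main} applies verbatim with $\Pi_{\cR,\beta^\star}$ and its high-probability event is unaffected.

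I would then close the argument with an envelope step. By the balancing choice one checks $\beta^\star c^\star = (\beta^\star)^{-1}BL/n = A\sqrt{c^\star L/n}$, so evaluating the displayed bound at $\pi=\pi^\dagger$ gives $J(\pi^\dagger)-J(\pihat)\le 3c_1 A\sqrt{c^\star L/n}$, i.e.\ $J(\pihat)\ge\mathcal B(\pi^\dagger)$. Because $\pi^\dagger$ (near-)maximizes $\mathcal B$, we have $\mathcal B(\pi^\dagger)\ge\mathcal B(\pistar)-A\sqrt{L/n}$ for \emph{every} comparator $\pistar$, hence, using $\Cone[\pistar]\ge 1$ to absorb the slack,
\[
J(\pistar)-J(\pihat)\;\le\; 3c_1 A\sqrt{\tfrac{\Cone[\pistar]L}{n}} + A\sqrt{\tfrac{L}{n}} \;\lesssim\; \Rmax e^{2\Rmax}\sqrt{\tfrac{\Cone[\pistar]\log(\abs{\cR}/\delta)}{n}},
\]
simultaneously for all $\pistar$, which is the claimed bound.

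The main obstacle here is conceptual rather than computational: recognizing that no single $\beta$ is simultaneously optimal, and that the right move is to optimize $\beta^\star$ against the benchmark-maximizing comparator $\pi^\dagger$, after which the ``for all $\pistar$'' guarantee of \cref{thm:main} collapses the entire target curve onto this single coverage level $c^\star$. The remaining technical care is routine: verifying that the maximizer $\pi^\dagger$ (approximately) exists via boundedness of $J$, confirming that $\beta^\star$ is a valid data-independent positive number so the high-probability event of \cref{thm:main} is preserved, and carrying out the balancing arithmetic $\beta^\star c^\star = (\beta^\star)^{-1}BL/n = A\sqrt{c^\star L/n}$. This dependence of $\beta^\star$ on the unknown functionals $\pi\mapsto\Cone[\pi]$ and $\pi\mapsto J(\pi)$ is exactly what makes it unclear how to select $\beta^\star$ in a data-driven way.
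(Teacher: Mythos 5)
Your proposal is correct and is essentially the paper's own argument: the paper chooses $\betastar$ as the maximizer over $\beta>0$ of $\max_{\pistar}$ of the certificate $J(\pistar)$ minus the three error terms from \cref{eq:main1}, and since this max-max can be exchanged (the inner optimization over $\beta$ at fixed $\pistar$ is attained at the balanced value), the paper's $\betastar$ coincides, up to constants, with your $\beta^\star$ tuned to the coverage of the benchmark-maximizing comparator $\pi^\dagger$. Your envelope step plays the same role as the paper's chain of inequalities establishing that the bound then holds simultaneously for all $(\beta,\pistar)$ pairs, with your explicit treatment of near-maximizers being a minor (and slightly more careful) refinement of the paper's argmax notation.
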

\arxiv{
\paragraph{Additional remarks}
Specializing to the case of
multi-armed bandits, we believe that the sample complexity bound in
\cref{eq:main1} is optimal in general
\citep{rashidinejad2021bridging}. Note that while we consider finite
classes $\Pi$ in \cref{thm:main} for simplicity, extension to infinite
classes is trivial via standard uniform convergence arguments. We remark that the exponential dependence on
$\Rmax$ in \cref{thm:main} is an intrinsic feature of the
Bradley-Terry model, and can be found in all prior work
\citep{rosset2024direct,xie2024exploratory}.   Finally, we remark that the weight on the KL term in the
  mixed \chis-regularized objective is not important for our
  statistical guarantees. For any $\gamma\in(0,1]$, we can replace the
  link function $\phi(\cdot)$ in \algshort
with $\linkg(z) = z + \gamma\log{}z$,
\[
  \linkg(z) = z + \gamma\log{}z,
\]
which corresponds to the regularized objective $\Jmixg(\pi) = \En_{\pi}\brk*{\rstar(x,a)} -
\beta\cdot\Dchis{\pi}{\piref}-\gamma\beta\cdot\Dkl{\pi}{\piref}$.
This leads to identical guarantees for any $\gamma\in(0,1]$ 
(\cref{thm:main-mix} in \cref{sec:proofs_main}); essentially, we only require
that $\gamma$ is positive to ensure the reparameterization in
\cref{eq:f_opt} is admissible. }

\arxiv{
\section{Understanding \algshortb: The Bias-Overoptimization Tradeoff}
\label{sec:understanding}
Having derived \algshort from the mixed \chis-regularized RLHF
objective and analyzed its performance, we now take a moment to better understand the statistical properties of
the policies the algorithm learns. We focus on the tradeoff
between overoptimization and bias (i.e., underoptimization) achieved by the regularization parameter $\beta>0$, highlighting through examples how this leads to statistical benefits over naive alignment methods like \dpo.

\subsection{Properties of Optimal Policy under Mixed \chisb-Regularization}

We begin by deriving a (nearly) closed form solution for the optimal mixed
\chis-regularized policy in \cref{eq:chis_opt}; recall that we expect \algshort to converge to this policy in the limit of infinite data.

We first observe 
  that the link function $\link(\cdot)$ is strictly increasing over $\bbR_{+}$,
  and its inverse is given by $\link^{-1}(z)=W_0(\exp(z))$; here, $W_0(y)$
  denotes the Lambert W-function
  \citep{corless1996lambert}, defined for $y\geq{}-e^{-1}$ as the inverse of the
    function $x\mapsto{}xe^{x}$.
    Consequently, for any $x$, the
  optimal policy under mixed \chis-regularization satisfies
  \begin{align*}
    \pistarb(a\mid{}x)=\piref(a\mid{}x)\cdot{}W_0\prn*{\exp\prn*{\beta^{-1}(\rstar(x,a)-\Zrstar(x))}},
\end{align*}
where $\Zrstar(x)$ is chosen such that $\sum_{a}\pistarb(a\mid{}x)=1$. We can
better understand how this policy behaves using the following simple upper and lower bounds on the inverse link function $\link^{-1}(z)=W_0(\exp(z))$.\loose
\begin{proposition}
  \label{prop:link}
  The link function $\link(z)=z+\log{}z$ is strictly increasing over
  $(0,\infty)$, and its inverse
  $\link^{-1}(z)=W_0(\exp(z))$ is strictly increasing over $(-\infty,\infty)$. The inverse
  link function $\link^{-1}$ satisfies
  \begin{align*}
    \frac{z}{2}\leq \link^{-1}(z) \leq{} z\quad\forall{}z\in[1,\infty),\mathand\quad
    e^{z-e}\leq \link^{-1}(z) \leq{} e^{z}\quad\forall{}z\in(-\infty,1].
  \end{align*}
\end{proposition}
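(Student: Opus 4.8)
The plan is to reduce both assertions—strict monotonicity/invertibility and the two-sided bounds—to the single defining relation $\link(w)=z$, which I will exploit in its two equivalent forms $z = w + \log w$ and $e^{z} = w\,e^{w}$, choosing whichever form linearizes the inequality at hand. The only ingredients I expect to need are the value $\link(1)=1$ and the elementary facts $\log w \le w$ and $e^{w}\ge 1$.

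First I would handle monotonicity by differentiating: $\link'(z) = 1 + 1/z > 0$ on $(0,\infty)$, so $\link$ is strictly increasing there, and since $\link(z)\to-\infty$ as $z\to0^{+}$ and $\link(z)\to+\infty$ as $z\to\infty$, it is a bijection from $(0,\infty)$ onto $\bbR$. Hence $\link^{-1}\colon\bbR\to(0,\infty)$ exists and is strictly increasing. To identify it with $W_{0}\circ\exp$, I would invoke the defining property $W_{0}(y)\,e^{W_{0}(y)}=y$ of the principal branch: setting $y=e^{z}$ and taking logarithms gives $W_{0}(e^{z}) + \log W_{0}(e^{z}) = z$, i.e.\ $\link\bigl(W_{0}(e^{z})\bigr)=z$, so $\link^{-1}(z)=W_{0}(e^{z})$ by uniqueness of the inverse (note $W_{0}(e^{z})>0$ lies in the domain of $\link$, which is exactly why the principal branch is the relevant one).

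Next I would record the pivot $\link(1)=1+\log 1 = 1$, so that by monotonicity, writing $w\ldef\link^{-1}(z)$, the condition $z\ge 1$ is equivalent to $w\ge 1$ and $z\le 1$ to $w\le 1$; this cleanly separates the two regimes of the statement. For $z\in[1,\infty)$ (so $w\ge1$ and $\log w\ge0$) I use the form $z = w+\log w$: the upper bound $\link^{-1}(z)=w\le z$ is immediate from $\log w\ge0$, and the lower bound $z/2\le w$ follows from $z = w+\log w \le 2w$, i.e.\ from $\log w\le w$. For $z\in(-\infty,1]$ (so $w\le1$) I switch to the exponentiated form $e^{z}=w\,e^{w}$: the upper bound $w\le e^{z}=w\,e^{w}$ reduces to $e^{w}\ge1$, i.e.\ $w\ge0$, while for the lower bound I rearrange to $w = e^{z-w}$, whence $e^{z-e}\le w$ is equivalent to $w\le e$, which holds since $w\le 1<e$.

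I do not anticipate any genuine obstacle, as every step is either a one-line derivative computation or an elementary inequality. The only two points that warrant care are (i) checking the $W_{0}\circ\exp$ identity against the branch conventions of the Lambert $W$-function—the argument $e^{z}>0$ forces the principal branch and keeps $w>0$—and (ii) selecting the correct form of the defining equation in each regime, since $z=w+\log w$ is what makes the $z\ge1$ bounds transparent whereas $e^{z}=w\,e^{w}$ is what trivializes the $z\le1$ bounds.
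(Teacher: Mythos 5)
Your proof is correct, and it takes a somewhat different route from the paper's. The paper proves each of the four bounds by applying the monotonicity of $W_0$ directly in the variable $z$: e.g., $W_0(e^z)\geq z/2$ iff $e^z\geq (z/2)e^{z/2}$, and the lower bound $e^{z-e}$ is verified via the chain $e^{-e}e^{z}e^{e^{-e}e^{z}}\leq e^{-e}e^{z}e^{e^{z}}\leq e^{-e}e^{z}e^{e}=e^{z}$ using $z\leq 1$. You instead set $w\ldef\link^{-1}(z)$ and exploit the pivot $\link(1)=1$ to translate the hypothesis $z\geq 1$ (resp.\ $z\leq 1$) into $w\geq 1$ (resp.\ $w\leq 1$), after which each bound collapses to a one-line fact about $w$: the $z\geq 1$ bounds follow from $0\leq\log w\leq w$ applied to $z=w+\log w$, and the $z\leq 1$ bounds follow from $w>0$ and $w\leq 1<e$ applied to $e^{z}=we^{w}$. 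The two arguments are dual to one another (your checks in $w$ correspond to the paper's checks in $z$ under the change of variables), but yours is the more transparent organization: it avoids the paper's slightly fiddly exponential verifications, and it makes visible \emph{why} the constant $e^{-e}$ appears (it is just the worst case of $w\leq e$). One additional merit of your write-up is that you explicitly justify the identity $\link^{-1}(z)=W_0(\exp(z))$, including surjectivity of $\link$ onto $\bbR$ and the branch choice, which the paper's proof uses without comment.
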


Compared to KL-regularization, which leads to softmax policies that
satisfy $\pistarkl(a\mid{}x)=\piref(a\mid{}x)\cdot\exp\prn*{\beta^{-1}(\rstar(x,a)-\Zklr[\rstar](x))}$, we see
that the inverse link function $\link^{-1}(z)=W_0(\exp(z))$ for mixed
\chis-regularization satisfies $\link^{-1}(z)\approx{}z$
for $z\geq{}1$, leading to a more heavy-tailed action distribution for
$\pistarb$. On the other hand, for $z\leq{}1$ the inverse link behaves like the
exponential function (i.e., $\link^{-1}(z)\approx{}e^{z}$ for $z\leq{}1$); see
\cref{fig:link} for an illustration. Using these properties, we can
derive the following upper and lower bounds on the density ratio
between $\pistarb$ and $\piref$.
\arxiv{\begin{proposition}}
  \label{prop:conc_bounds}
  For all $x\in\cX$ and $a\in\cA$, the optimal policy $\pistarb$ under mixed \chis-regularization
  satisfies
  \begin{align}
    \label{eq:conc_bounds}
      \exp\prn*{-\frac{\Rmax}{\beta}}\approxleq \frac{\pistarb(a\mid{}x)}{\piref(a\mid{}x)}\approxleq 1 + \frac{\Rmax}{\beta}.
    \end{align}   
Both inequalities are tight in general (up to absolute constants).  
\end{proposition}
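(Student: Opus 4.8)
The plan is to start from the reparameterization identity \eqref{eq:chis_opt} and, since the link function is strictly increasing (\cref{prop:link}), invert it to write the density ratio directly as
\[
  \frac{\pistarb(a\mid{}x)}{\piref(a\mid{}x)} = \link^{-1}\prn*{\beta^{-1}\prn*{\rstar(x,a)-\Zrstar(x)}}.
\]
Because $\rstar(x,a)\in[0,\Rmax]$ is already controlled, the entire difficulty reduces to bounding the normalization constant $\Zrstar(x)$, for which there is no closed form. First I would pin it down using \emph{only} the normalization constraint $\sum_a\pistarb(a\mid{}x)=1$.

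The key observation is that, since $\sum_a\piref(a\mid{}x)=1$ and $\piref(a\mid{}x)>0$ for all $a$, the normalization constraint says precisely that the values $\link^{-1}(\beta^{-1}(\rstar(x,a)-\Zrstar(x)))$ form a convex combination (under $\piref(\cdot\mid{}x)$) equal to $1$. Hence at least one action attains a value $\geq 1$ and at least one attains a value $\leq 1$. Using $\link(1)=1$ and monotonicity of $\link^{-1}$, the former gives $\beta^{-1}(\rstar(x,a)-\Zrstar(x))\geq 1$ for some $a$, hence $\Zrstar(x)\leq \Rmax-\beta$; the latter gives $\Zrstar(x)\geq -\beta$. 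Consequently, for every action $a$ the argument of the inverse link lies in the band
\[
  \beta^{-1}\prn*{\rstar(x,a)-\Zrstar(x)} \in \brk*{1-\tfrac{\Rmax}{\beta},\ 1+\tfrac{\Rmax}{\beta}}.
\]

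The bounds then follow by feeding this band into \cref{prop:link} together with monotonicity of $\link^{-1}$. For the upper bound, $\link^{-1}(\beta^{-1}(\rstar-\Zrstar))\leq \link^{-1}(1+\Rmax/\beta)\leq 1+\Rmax/\beta$, using $\link^{-1}(z)\leq z$ for $z\geq 1$. For the lower bound, $\link^{-1}(\beta^{-1}(\rstar-\Zrstar))\geq \link^{-1}(1-\Rmax/\beta)\geq e^{(1-\Rmax/\beta)-e}=e^{1-e}\cdot e^{-\Rmax/\beta}$, using $\link^{-1}(z)\geq e^{z-e}$ for $z\leq 1$. Since $e^{1-e}$ is an absolute constant and the upper bound carries constant $1$, both inequalities hold in the stated $\approxleq$ sense.

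For tightness I would exhibit two simple examples. To saturate the lower bound, take a single prompt with two actions, $\piref=\Unif$, and rewards $\{0,\Rmax\}$; as $\Rmax/\beta\to\infty$ the normalization forces $\link^{-1}(\beta^{-1}(\Rmax-\Zrstar(x)))\to 2$, which pins $\Zrstar(x)=\Rmax-\Theta(\beta)$ and makes the low-reward density ratio $\link^{-1}(-\Zrstar(x)/\beta)=\Theta(e^{-\Rmax/\beta})$. To saturate the upper bound, take many zero-reward actions together with a single $\Rmax$-reward action carrying vanishing reference probability; as this probability tends to $0$ the normalization forces $\Zrstar(x)\to-\beta$, whence the high-reward density ratio tends to $\link^{-1}(1+\Rmax/\beta)\asymp 1+\Rmax/\beta$. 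I expect the main obstacle to be exactly these tightness constructions: because $\Zrstar(x)$ is defined only implicitly, through a sum of Lambert-$W$ values, verifying its limiting behavior in each example requires a short asymptotic analysis of the normalization equation rather than a closed-form computation. The bounds themselves, by contrast, sidestep this implicitness entirely via the convex-combination argument above.
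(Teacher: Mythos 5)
Your proof is correct, and it reaches the same two bounds as the paper but via a genuinely different argument for the central step of controlling the normalization constant $\Zrstar(x)$. The paper pins down $\Zrstar(x)\in\brk*{-\beta,\Rmax}$ by averaging the identity $\rstar(x,a)=\beta\link\prn*{\frac{\pistarb(a\mid{}x)}{\piref(a\mid{}x)}}+\Zrstar(x)$ under two distributions: averaging over $a\sim\pistarb(\cdot\mid{}x)$ makes the link term decompose into a nonnegative $\chi^2$-type term plus $\beta\Dkl{\pistarb}{\piref}\geq 0$ (giving $\Zrstar(x)\leq\Rmax$), while averaging over $a\sim\piref(\cdot\mid{}x)$ makes the ratio term integrate to exactly $1$ and the log term become $-\Dkl{\piref}{\pistarb}\leq 0$ (giving $\Zrstar(x)\geq-\beta$). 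You instead use a pigeonhole argument on the normalization constraint: since the $\piref$-weighted average of the density ratios equals $1$, some ratio is $\geq 1$ and some is $\leq 1$, and combining this with monotonicity and $\link(1)=1$ yields $\Zrstar(x)\in\brk*{-\beta,\Rmax-\beta}$, in fact slightly tighter than the paper's range. Your route is more elementary and more portable—it never exploits the additive $\chi^2$-plus-KL structure of $\link$, only that $\link$ is increasing with a known value at $1$—whereas the paper's averaging trick is tailored to the mixed divergence but generalizes naturally to expectations-based arguments used elsewhere in the analysis. Both proofs then conclude identically by feeding the resulting band into \cref{prop:link}. One further point in your favor: the paper's written proof silently omits the tightness claim, while your two constructions (two actions with rewards $\crl*{0,\Rmax}$ under a uniform reference for the lower bound; a single high-reward action with vanishing reference mass for the upper bound) are sound sketches of it—the limiting behavior of $\Zrstar(x)$ you describe ($\Zrstar(x)=\Rmax-\Theta(\beta)$ in the first case, $\Zrstar(x)\to-\beta$ in the second) is what the normalization equation forces, though as you note, making those limits rigorous requires a short asymptotic argument that you have only outlined.
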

The upper bound in \cref{eq:conc_bounds}, which arises from the
\chis{} term in the mixed-\chis{} objective, scales inversely with the regularization
parameter $\beta$, and reflects the heavy-tailed, pessimistic behavior this regularizer induces; in contrast, the optimal policy under pure
KL-regularization only satisfies
\begin{align}
  \label{eq:kl_conc}
\exp\prn*{-\frac{\Rmax}{\beta}}\approxleq\frac{\pistarkl(a\mid{}x)}{\piref(a\mid{}x)}\approxleq{}\exp\prn*{\frac{\Rmax}{\beta}}
\end{align}
in general. The lower bound in \cref{eq:conc_bounds} arises from the
KL term in the mixed-\chis{} objective, but is not important for our
analysis (outside of allowing for \dpo-like reparameterization).

\begin{figure}[tp]
  \centering
  \arxiv{\includegraphics[scale=0.6]{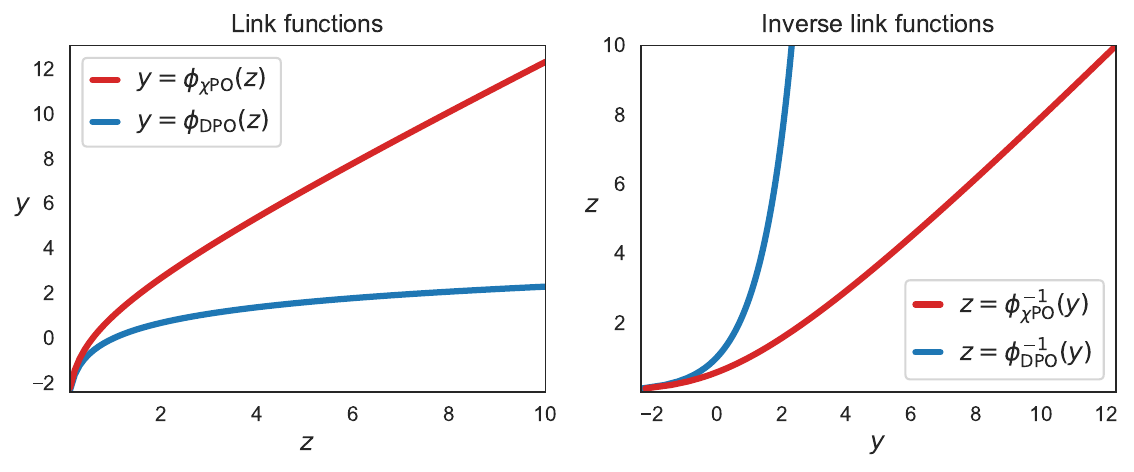}}
  \caption{
  Behavior of the mixed \chis-regularization link function
    $\link_{\algshort}(z)=z+\log{}z$ and inverse
    $\link_{\algshort}^{-1}(z)=W_0(\exp(z))$, compared to the KL-regularization
    link function $\link_{\dpo}(z)=\log{}z$ and inverse
    $\link_{\dpo}^{-1}(z)=\exp(z)$. $\link_{\algshort}^{-1}(z)\approx z$ for $z \geq 1$, leading to favorable heavy-tailed, pessimistic behavior.
  }
    \label{fig:link}
\end{figure}

\subsection{The Bias-Overoptimization Tradeoff}
\label{sec:tradeoff_appendix}

We are now well equipped to understand how \algshort modulates the
tradeoff between overoptimization and bias using the regularization
parameter $\beta$, and how this tradeoff compares to vanilla \dpo.
To showcase this, we take a reward modeling perspective, and consider
the setting in which the policy class $\Pi$ is induced by a given
reward model class $\cR$, similar to~\cref{ex:reward_model}.

Suppose
we start with a reward model class
$\cR\subset(\cX\times\cA\to\brk{0,\Rmax})$ such that $\rstar\in\cR$. If we
use the induced policy class
\begin{align}
  \label{eq:pidpo}
\Pidpo\ldef{}\crl*{\pi(a\mid{}x)=\piref(a\mid{}x)\cdot\exp(\beta^{-1}(r(x,a)-\Zklr[r](x)))\mid{}r\in\cR},
\end{align}
then \dpo can be interpreted as fitting a reward model $\rhat$ using
maximum likelihood (\cref{eq:mle})
and then outputting the policy
$\pihat_{\textsf{DPO}}(a\mid{}x)=\piref(a\mid{}x)\cdot\exp(\beta^{-1}(\rhat(x,a)-\Zklr[\rhat](x)))$. Meanwhile,
if we use the induced policy class
\begin{align}
  \label{eq:pichipo}
\Pichipo \ldef{}\crl*{\pi(a\mid{}x)=\piref(a\mid{}x)\cdot\link^{-1}(\beta^{-1}(r(x,a)-\Zr[r](x)))\mid{}r\in\cR},
\end{align}
then \algshort can be interpreted as fitting a reward model $\rhat$
with the exact same maximum likelihood objective, but instead
outputting the policy
$\pihat_{\textsf{\algshort}}(a\mid{}x)=\piref(a\mid{}x)\cdot\link^{-1}(\beta^{-1}(\rhat(x,a)-\Zr[\rhat](x)))$.  

The policies $\pihat_{\textsf{\algshort}}$ and
$\pihat_{\textsf{\dpo}}$ are induced by the same reward model
$\rhat$, and both use the parameter $\beta$ to balance bias and overoptimization. For both policies, large $\beta$ means the policy avoids
overfitting to errors in the reward model (the extreme case is
$\beta\to\infty$, in which case both policies become $\piref$), while
small $\beta$ means the policy has low \emph{bias}, i.e., low error in
the case where the model is correct in the sense that $\rhat=\rstar$ (the extreme case
is $\beta\to{}0$, in which case both policies become $x \mapsto \argmax_{a: \piref(a\mid{}x)>0} \rhat(x,a)$). Yet, for the same choice of $\beta$, $\pihat_{\textsf{\algshort}}$ is significantly more heavy-tailed than $\pihat_{\textsf{DPO}}$, a consequence of the pessimism induced by \chis-regularization; see \cref{fig:action_distribution}, which plots the
action distribution for both policies as a
function of $\beta$. 

\begin{figure}[tp]
  \centering
  \arxiv{\includegraphics[scale=.7]{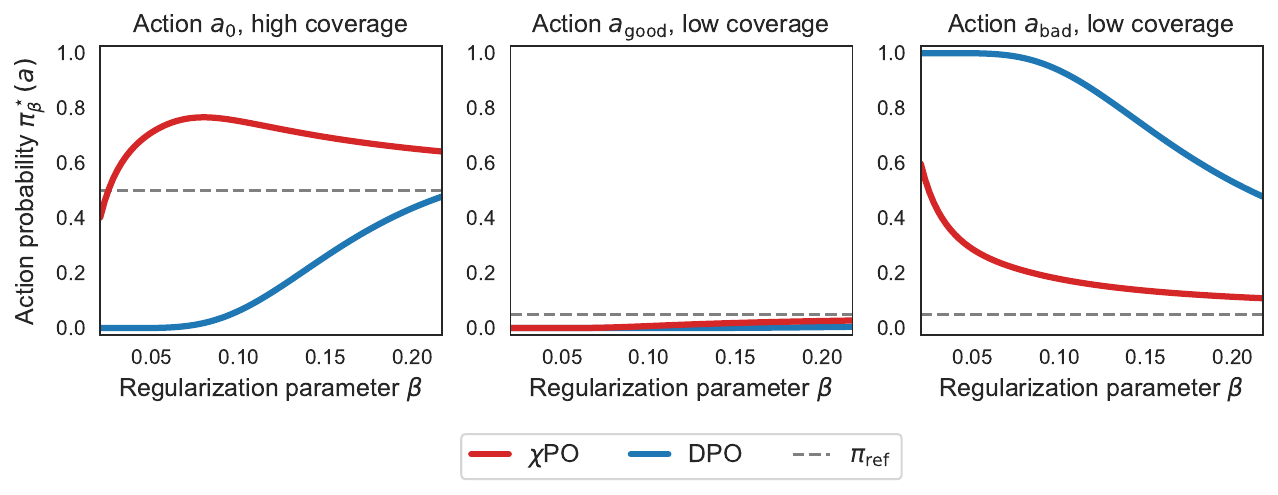}}
    \caption{
    Action probabilities for policies learned by $\algshort$ and $\dpo$ on the example from \cref{sec:illustrative}, under the ``bad'' event $\cE$ in which the true reward model is $\rstar=r_1$ but the estimated reward model is $\rhat = r_2$ ($n = 10$). 
      Here, $\rstar(a_{\mathsf{good}}) = 1$ and $\rstar(a_{\mathsf{bad}}) = 0$,  but $\rhat(a_{\mathsf{good}}) = 0$
      and $\rhat(a_{\mathsf{good}}) = 1$; both reward functions have $\rstar(a_0)=\rhat(a_0)=1/2$, and the goal is to compete with a comparator policy that deterministically plays $a_0$.\\
    \textbf{Overoptimization.} The \dpo policy is greedier with respect to the incorrect reward model and places much larger mass on the bad action $a_{\mathsf{bad}}$ for all $\beta \in (0, \frac{1}{2\log n}]$ (Right).  
   As a result, the \dpo policy places much smaller mass on the baseline action $a_0$, suffering significantly more  
    overoptimization error compared to \algshort (Left; see also \cref{fig:regret}).  \\
    \textbf{Bias.} Compared to \dpo, \algshort has a higher probability of taking both the optimal action $a_{\mathsf{good}}$ and the reference action $a_0$. %
    As a result, 
    it strikes a better bias-overoptimization tradeoff than \dpo,  
    and is competitive with respect to the comparator $a_0$ even when \dpo fails to converge.  
    }
  \label{fig:action_distribution}
\end{figure}

\subsection{An Illustrative Example}
\label{sec:illustrative}

We now give a concrete example in which \algshort allows the user to tune $\beta$ to achieve tight statistical
rates, yet no choice of $\beta$ for \dpo leads to comparable
performance (effectively, any choice of $\beta$ is
either susceptible to overoptimization, or has unacceptably high bias). This illustrates the favorable tradeoff between bias and overoptimization achieved by \algshort.\loose

  \label{ex:chipo_dpo}
      Let $n\in\bbN$ with $n\geq{}2$ be given. We consider a problem
    instance with $\cX=\crl{\emptyset}$ and $\cA=\crl*{a_0,a_1,a_2,a_3}$. 
    We define $\piref$ via 
    \begin{align*}
      \piref(a_0) = \tfrac{1}{2}, 
      \quad 
      \piref(a_1) = \piref(a_2) = \tfrac{1}{2n},
      \mathand 
      \piref(a_3) = \tfrac{n-2}{2n}. 
    \end{align*}
    We define a reward class with two
    reward functions $\cR:= \{r_1,r_2\}$ as follows. 
For $i\in\crl{1,2}$:
\begin{align*}
  &r_i(a_0) = 1/2, 
  \quad 
  r_i(a_i) = 1, 
  \quad 
  r_i(a_j) = 0, \;\; \forall j \ne i.
\end{align*}
\loose 

Let $\beta>0$ be fixed. To compare \algshort and \dpo, we consider their behavior when invoked with the induced policy classes $\Pichipo$ and $\Pidpo$ defined above.
Recall that with this choice, the two algorithms can
be interpreted as fitting a reward model $\rhat$ using maximum
likelihood (\cref{eq:mle}) and returning the policies
$\pihat_{\textsf{\algshort}}(a\mid{}x)=\piref(a\mid{}x)\cdot\link^{-1}(\beta^{-1}(\rhat(x,a)-\Zr[\rhat](x)))$
and
$\pihat_{\textsf{DPO}}(a\mid{}x)=\piref(a\mid{}x)\cdot\exp(\beta^{-1}(\rhat(x,a)-\Zklr[\rhat](x)))$,
respectively. \loose

Suppose that $r_1$ is the true reward function. It is hopeless
(information-theoretically) to compete with the
unconstrained optimal action $a_1$, as we are in a
sample-starved regime where $\cC^{a_1}=2n$ (in
the language of \cref{eq:main1}). Indeed, one can show
(see proof of \cref{prop:rpo_lower} in \cref{sec:related}) that with
constant probability, none of the examples in the offline dataset
$\cDpref$ contain actions $a_1$ or $a_2$. Under this event, which we denote by $\cE$, the value for
the maximum likelihood objective in \cref{eq:mle} is identical for $r_1$ and
$r_2$, so we may obtain $\rhat=r_2$ (due to adversarial tie-breaking). However, in spite of the fact that the policies $\pihat_{\textsf{\algshort}}$ and $\pihat_{\textsf{DPO}}$ are induced by the same (incorrect) reward function $\rhat=r_2$, they produce very different action distributions, as highlighted in \cref{fig:action_distribution}. %
\loose

\arxiv{
\begin{figure}[tp]
  \centering
  \arxiv{\includegraphics[scale=0.8]{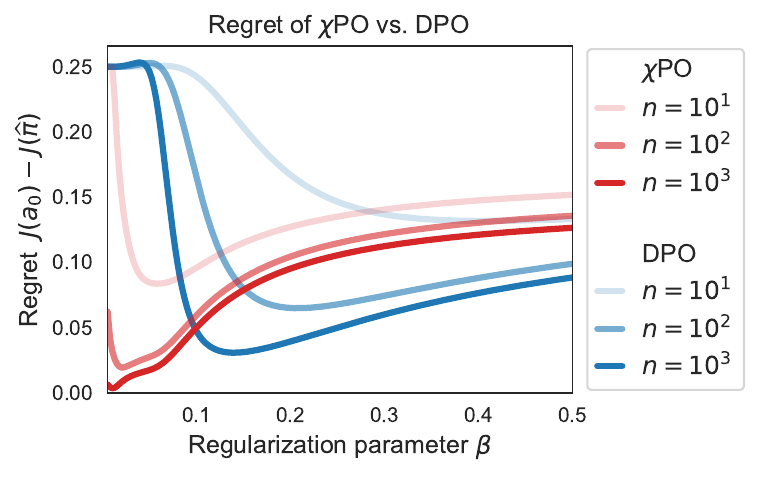}}
    \caption{
    The regret $J(a_0)-J(\pihat)$ 
    in the construction from \cref{prop:rpo_lower}
    for different values of $n$. 
    We again condition on the ``bad'' event $\cE$ 
    where $\rhat = r_2 \neq r^\star$. 
    For each $n$, 
    the error from overoptimization  
    dominates when $\beta \le (2\log n)^{-1}$ 
    (as discussed in \cref{sec:illustrative}), and
    the error from bias dominates 
    when $\beta > (2\log n)^{-1}$.  
    Taking the best choice of $\beta$ for each method, 
    \dpo converges at an exponentially slower rate than \algshort.
        }
  \label{fig:regret}
\end{figure}
}

To understand this, note that even in the sample-starved regime, we can still hope to compete with the ``baseline'' action
$a_0$; \cref{fig:regret} shows that \algshort has low regret against this action, while \dpo has high regret. In particular, since $\cC^{a_0}=2$, \cref{thm:main} (\cref{eq:main1}) implies
that \algshort achieves
\[
  J(a_0) - J(\pihat_{\textsf{\algshort}})
  \approxleq{} \sqrt{\frac{1}{n}} + \beta + \beta^{-1}\frac{1}{n},
\]
and setting $\beta\propto \sqrt{\frac{1}{n}}$ leads to $J(a_0) - J(\pihat_{\textsf{\algshort}})
\approxleq{} \sqrt{\frac{1}{n}}$. This is a consequence of the pessimistic, heavy-tailed nature
of $\pihat_{\textsf{\algshort}}$
(cf. \cref{prop:conc_bounds}), which places no more than
$\beta^{-1}/n$ probability mass on the (incorrect) greedy action $a_2$
for $\rhat=r_2$, thereby correctly capturing the inherent uncertainty
in the reward for this action.

On the other hand, it is straightforward to show that for all possible
values $\beta\leq{}(2\log{}n)^{-1}$, the \dpo policy
$\pihat_{\textsf{\dpo}}$ has regret
\[
  J(a_0) - J(\pihat_{\textsf{\dpo}})
  \geq{} 
  \frac{1}{2}\prn*{1 -
    \frac{1}{1+\frac{1}{n}e^{\frac{1}{2}} +
      (1-\frac{1}{n})e^{-\frac{1}{2\beta}}}}
    -\frac{1}{2n} \geq \bigom(1)
  \]
  whenever $n\geq{}2$. 
  This is because when $\beta\leq(2\log{}n)^{-1}$,
  $\pihat_{\textsf{\dpo}}$ assigns excessively high probability to the
  incorrect greedy action $a_2$, an instance of
  overoptimization. Meanwhile, larger choices for $\beta$ lead to
  excessively large bias in general (see \cref{sec:rpo_lower} for a more sophisticated construction which extends this lower bound to all possible $\beta$). In
  other words, as illustrated in \cref{fig:regret}, no choice of $\beta$ gives a favorable tradeoff between
  overoptimization and bias.

  To summarize, for \dpo, large values of $\beta$
are required to avoid overfitting to the reward function, incurring
high bias. Meanwhile, \algshort avoids overoptimization using comparatively small
values for $\beta$, yet has bias no worse than that of \dpo, thereby striking a better tradeoff.
  We mention that the ``\texttt{DPO}+\sft'' algorithm of
\citet{liu2024provably,cen2024value,fisch2024robust} also fails on the
construction above; see \cref{prop:rpo_lower} in \cref{sec:rpo_lower}
for details.

\begin{remark}[\dpo decreases probabilities of preferred and rejected responses]
  Various recent works have noted an empirical phenomenon in which \dpo decreases the probabilities for both preferred and rejected responses throughout training \citep{yuan2024advancing,pal2024smaug,rafailov2024r}. Interestingly, we observe that the example above exhibits this phenomenon. Notably, if $\beta < (2\log{}n)^{-1}$, then under the event $\cE$ in which the offline dataset $\cD_{\pref}$ does not contain the actions $a_1$ or $a_2$ (so that $\rhat=r_2$), we observe that
  $
    \pidpo(a_0)
    = \frac{\frac{1}{2}e^{\frac{1}{2\beta}}}{\frac{1}{2}e^{\frac{1}{2\beta}}
      + \frac{1}{2n}e^{\frac{1}{\beta}} + \frac{n-1}{2n}
      } < 
    \frac{1}{2} = \piref(a_0)$,
    and for all $i>2$,
    $
      \pidpo(a_i)
          = \frac{\frac{1}{2n}}{\frac{1}{2}e^{\frac{1}{2\beta}}
            + \frac{1}{2n}e^{\frac{1}{\beta}} + \frac{n-1}{2n}
            } < \frac{1}{2n} = \piref(a_i)$.  We conclude that for all $a\in\cD_{\pref}$,
          \[
            \pidpo(a) < \piref(a).
          \]
We emphasize that this behavior arises due to the use of function approximation. When the reward class $\cR$ (equivalently, the policy class $\Pidpo$) is restricted, the algorithm can aggressively (and incorrectly) extrapolate rewards for actions outside the dataset and, in doing so, inadvertently decrease the probabilities for preferred responses in the dataset. Meanwhile, in the same parameter range, \algshort satisfies (see \cref{fig:action_distribution})
          \[
            \pichipo(a_0) > \piref(a_0),
          \]
          highlighting that pessimism can mitigate this phenomenon.

\end{remark}

\subsection{Nontriviality and Role of $\Vmax$ Parameter}
\label{sec:vmax_appendix}

To close this section, we discuss the role of the $\Vmax$ parameter (\cref{ass:vmax}) used in the analysis of \algshort (\cref{thm:main}) in depth, motivating it from the perspective of the induced policy class $\Pichipo$ from \cref{sec:tradeoff_appendix}.\loose

\cref{ass:vmax} effectively implies that all policies $\pi\in\Pi$
satisfy
$\nrm[\big]{\frac{\pi}{\piref}}_{\infty}\approxleq\frac{\Vmax}{\beta}$;
in other words, the policy class we use in \algshort satisfies \emph{all-policy
  $L_{\infty}$-concentrability} with
$\max_{\pi\in\Pi}\cC^{\pi}_{\infty}\approxleq\frac{\Vmax}{\beta}$. At first glance, this
might seem to trivialize the offline alignment problem, since it would
suffice to prove a generalization guarantee based on all-policy
concentrability, and then plug this bound in. We will show that this is not the
case, and that this is actually an intrinsic feature of \chis-regularization.

In more detail, recall that for \algshort, we require the realizability
assumption that $\pistarb\in\Pi$ (\cref{ass:realizability}), where $\pistarb$ is the optimal
mixed \chis-regularized policy that satisfies $\rstar(x,a) =
\beta\phi\prn*{\frac{\pistarb(a\mid{}x)}{\piref(a\mid{}x)}}+\Zrstar(x)$. This
policy, via \cref{prop:conc_bounds}, satisfies
$\nrm[\big]{\frac{\pistarb}{\piref}}_{\infty}\approxleq\frac{\Rmax}{\beta}$,
so from a statistical perspective, we can take \cref{ass:vmax} to hold
without loss of generality by removing any policy that violates this
bound. In addition, as highlighted by \cref{ex:reward_model}, if we
begin from a class of bounded reward models $\cR$ with $\rstar\in\cR$,
\cref{ass:vmax} holds with $\Vmax\approxleq{}\Rmax$ for the induced class $\Pichipo$ defined in
\cref{eq:pichipo}, even though knowledge of such a reward model class
is a mild statistical assumption that clearly does not trivialize the
learning problem.

On the other hand, for \dpo, a minimal assumption is that
$\pistarkl\in\Pi$ \citep{xie2024exploratory}, where $\pistarkl$ is the optimal KL-regularized
policy that satisfies $\rstar(x,a) =
\beta\log\frac{\pistarkl(a\mid{}x)}{\piref(a\mid{}x)}+\Zklrstar(x)$. 
Unlike the
optimal mixed \chis-regularized policy, $\pistarkl$ has
$\frac{\pistarkl(a\mid{}x)}{\piref(a\mid{}x)}\approxgeq{}\exp\prn*{\frac{\Rmax}{\beta}}$. This
means that 
it is impossible to find a policy class that
simultaneously (1) realizes $\pistarkl$, and (2) satisfies all-policy
concentrability with
$\max_{\pi\in\Pi}\cC^{\pi}_{\infty}\ll{}\exp\prn*{\frac{\Rmax}{\beta}}$. As
the bias of \dpo is unacceptably large unless $\beta=\poly(1/n)$ (the
``small-$\beta$'' regime), this
leads to vacuous guarantees.\loose

In view of these observations, our analysis of \algshort can be interpreted as (implicitly) showing that for any
bounded reward class $\cR$, there exists a policy class $\Pi$ (precisely, the class $\Pichipo$ defined in \cref{eq:pichipo}) such that the following properties hold:
\begin{enumerate}[leftmargin=*]
\item \textbf{Bounded bias.} For every $r\in\cR$, there exists $\pi_{r}\in\Pi$ such that for
  all policies $\pistar$, $\Jr[r](\pistar) - \Jr[r](\pi_{r}) \approxleq{}\beta\cdot\cC^{\pistar}$.\loose
\item \textbf{Bounded overoptimization.} For all $\pi\in\Pi$, $\nrm[\big]{\frac{\pi}{\piref}}_{\infty}\approxleq\frac{\Rmax}{\beta}$.
\end{enumerate}
We view this as an interesting and non-trivial contribution in its own
right. We mention in passing that while it is indeed possible to
analyze \algshort by first proving a sample complexity guarantee based
on all-policy concentrability and then using that
$\max_{\pi\in\Pi}\cC^{\pi}_{\infty}\approxleq\frac{\Vmax}{\beta}$, this would lead
to a loose bound relative to \cref{thm:main}.\loose

}

\arxiv{
\section{Analysis of \algshortb: Proof Sketch for \creftitle{thm:main}}
\label{sec:proof_sketch}

In this section, we sketch the proof of the main guarantee for \algshort, \cref{thm:main}, with the full proof deferred to \cref{sec:proofs_main}. A central object in the
proof is the \emph{implicit} reward model induced by the \algshort policy
$\pihat$, which we define via
\begin{align}
  \label{eq:rhat}
\rhat(x,a) \ldef{} \beta\link{}\prn*{\frac{\pihat(a\mid{}x)}{\piref(a\mid{}x)}}.
\end{align}
As we will show, this reward model is a natural bridge between
\algshort and the corresponding mixed \chis-regularized RLHF objective in
\cref{sec:framework}, and allows us to view \algshort from a reward-based perspective. 
In particular, note that if we analogously define an induced reward
model class $\cR_{\Pi}\ldef{}\crl{r(x,a) =
  \beta\link{}\prn*{\frac{\pi(a\mid{}x)}{\piref(a\mid{}x)}} :
  \pi\in\Pi}$, then \cref{line:chi_dpo} of \algshort can be viewed as
performing maximum likelihood estimation over this class (in the sense
of \cref{eq:mle}) under the Bradley-Terry model. Under
\cref{ass:realizability}, $\cR_{\Pi}$ realizes the true reward
function $r$ up to an action-independent shift. As a result, if we
define $\rdiff[r](x,a,b) \ldef{} r(x,a) - r(x,b)$, then using a fairly
standard generalization bound for maximum likelihood estimation (e.g.,
\citet{wong1995probability,zhang2006from,Sara00}; see
\cref{lem:clip-dpo-reward}), we can show that
\begin{align}
  \label{eq:vepsstat}
  \vepsstat^2 \ldef{}
\En_{x\sim\rho,a\sim\piref,b\sim\piref}\brk*{\abs*{\rdiff[\rhat](x,a,b) - \rdiff[\rstar](x,a,b)}^2}
  \leq \bigoh\prn*{\Vmax e^{2\Rmax}\cdot\frac{\log(|\Pi|/\delta)}{n}}.
\end{align}
In other words, the estimated reward model $\rhat$ is accurate under
the action distribution induced by $\piref$. However, $\rhat$ may
still be inaccurate for policies that select different actions from
$\piref$, raising concerns of overoptimization. To address this issue,
we use the following lemma, which shows that \chis-divergence bounds the extent to which the
accuracy of a reward model $\rhat$ trained under $\piref$ will
transfer to a downstream policy $\pi$ of interest; this will motivate
our use of \chis-regularization.

\begin{lemma}[Informal version of \cref{lem:clip-dpo-estimation}]
\label{lem:informal-dpo-estimation}
For any policy $\pi:\cX\to\Delta(\cA)$, it holds that 
  \begin{align*}
    \En_{x\sim\rho,a\sim\pi(\cdot\mid{}x),b\sim\piref(\cdot\mid{}x)}\brk*{\abs*{\rdiff[\rhat](x,a,b) - \rdiff[\rstar](x,a,b)}}  
    \lesssim \sqrt{(1+\Dchis{\pi}{\piref})\cdot \vepsstat^2} .
  \end{align*}
\end{lemma}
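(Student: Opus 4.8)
The plan is to reduce the claim to the reward-estimation error $\vepsstat^2$ from \cref{eq:vepsstat} through a single change-of-measure step followed by Cauchy--Schwarz, exploiting the fact that the $\chi^2$-divergence is exactly the second moment of the density ratio $\pi/\piref$. First I would decompose the integrand. Writing $g(x,a)\ldef\rhat(x,a)-\rstar(x,a)$ for the pointwise reward error, the definition $\rdiff[r](x,a,b)=r(x,a)-r(x,b)$ gives
\[
  \rdiff[\rhat](x,a,b)-\rdiff[\rstar](x,a,b) = g(x,a)-g(x,b),
\]
so the quantity to be bounded is $\En_{x\sim\rho,a\sim\pi,b\sim\piref}\brk*{\abs*{g(x,a)-g(x,b)}}$.

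The key step passes from the target distribution $a\sim\pi$ to the data distribution $a\sim\piref$ by importance weighting. Since $\piref(a\mid x)>0$ for all $x,a$ (so that $\pi\ll\piref$), the weight $w(x,a)\ldef\frac{\pi(a\mid x)}{\piref(a\mid x)}$ is well-defined and
\[
  \En_{x,a\sim\pi,b\sim\piref}\brk*{\abs*{g(x,a)-g(x,b)}}
  = \En_{x,a\sim\piref,b\sim\piref}\brk*{w(x,a)\cdot\abs*{g(x,a)-g(x,b)}}.
\]
Applying Cauchy--Schwarz with respect to the product measure $\rho\otimes\piref\otimes\piref$ separates the weight from the error, giving the upper bound
\[
  \sqrt{\En_{x,a\sim\piref}\brk*{w(x,a)^2}}\cdot\sqrt{\En_{x,a\sim\piref,b\sim\piref}\brk*{\abs*{g(x,a)-g(x,b)}^2}},
\]
where in the first factor the $b$-coordinate integrates out since $w(x,a)$ does not depend on $b$.

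It remains to identify the two factors. For the first, a direct computation gives $\En_{x,a\sim\piref}\brk*{w(x,a)^2}=\En_{\pi}\brk*{\frac{\pi(a\mid x)}{\piref(a\mid x)}}=\Cone[\pi]$, and by the identity $\Cone[\pi]=1+2\Dchis{\pi}{\piref}$ this is $\approxleq 1+\Dchis{\pi}{\piref}$ up to an absolute constant. For the second, the decomposition above shows that it equals exactly $\vepsstat^2$ as defined in \cref{eq:vepsstat}. Combining the two factors yields $\sqrt{(1+\Dchis{\pi}{\piref})\cdot\vepsstat^2}$, as claimed.

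I expect no serious technical obstacle here: the entire content is the observation that the $\chi^2$-divergence is the natural Cauchy--Schwarz companion of an $L_2$ reward error, which is precisely why it---rather than KL---governs the transfer of reward accuracy off the support of $\piref$. The one point requiring care is that reweighting is applied only to the $a$-coordinate while the $b\sim\piref$ coordinate is left untouched; this is legitimate because the map $(a,b)\mapsto r(x,a)-r(x,b)$ is antisymmetric, and it is exactly what lets the squared error collapse back to the symmetric quantity $\vepsstat^2$. In the formal version (\cref{lem:clip-dpo-estimation}) the only additional bookkeeping is to carry the clipping operator from \cref{eq:chi_dpo} through the definition of $\rhat$ and to invoke the maximum-likelihood generalization bound that furnishes $\vepsstat^2$ (\cref{lem:clip-dpo-reward}) under \cref{ass:realizability,ass:vmax}; neither interacts with the change-of-measure argument above.
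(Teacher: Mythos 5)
Your proof of the informal statement is correct, and its core is exactly the paper's: importance-weight the $a$-coordinate onto $\piref$, apply Cauchy--Schwarz, and identify $\En_{x\sim\rho,a\sim\piref}\brk*{\prn*{\pi(a\mid{}x)/\piref(a\mid{}x)}^2}=\Cone[\pi]=1+2\Dchis{\pi}{\piref}$.

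Where you go astray is the closing claim that, in the formal version (\cref{lem:clip-dpo-estimation}), the clipping ``does not interact with the change-of-measure argument.'' It does. The MLE guarantee (\cref{lem:clip-dpo-reward}) controls only the \emph{clipped} off-policy error
\begin{align*}
\vepsstat^2 \ldef \En_{\piref,\piref}\brk*{\prn*{\clip_{2\Rmax}\brk*{\rdiff[\rhat](x,a,b)}-\clip_{2\Rmax}\brk*{\rdiff[\rstar](x,a,b)}}^2},
\end{align*}
because the algorithm's objective clips the implicit reward differences, whose raw range is $\Vmax$ rather than $\Rmax$. Meanwhile the left-hand side of the formal lemma must remain \emph{unclipped}, since the downstream pessimism argument needs the actual implicit reward $\rhat=\beta\link\prn*{\pihat/\piref}$ for which \cref{lem:general-reward-to-policy} applies. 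Your single Cauchy--Schwarz step would then require the unclipped second moment under $\piref\times\piref$, which \cref{lem:clip-dpo-reward} does not provide and which can be of order $\Vmax^2$ on the event where clipping is active. The paper bridges this mismatch with an extra decomposition---unclipped error $\le$ clipped error plus a bias term $\Vmax\cdot\bbP_{\pi,\piref}\prn*{\clip_{2\Rmax}\brk*{\rdiff[\rhat]}\neq\rdiff[\rhat]}$---and controls the clipping probability by change of measure followed by Markov's inequality, using that $\abs*{\rdiff[\rstar]}\le\Rmax$ forces the clipped discrepancy to be at least $\Rmax$ whenever clipping is active. This is precisely the origin of the $2\Vmax/\Rmax$ prefactor in \cref{lem:clip-dpo-estimation}; it is a genuine step of the argument, not bookkeeping that sits outside the change-of-measure step.
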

Going forward, let us abbreviate
$\En_{\pi,\piref}\brk*{\cdot}=\En_{x\sim\rho,a\sim\pi(\cdot\mid{}x),b\sim\piref(\cdot\mid{}x)}\brk*{\cdot}$. Let
$\pistar$ be an arbitrary policy. Noting
that $\Cone[\pi] = 1+2\Dchis{\pi}{\piref}$ and that  
\begin{align*}
J(\pistar)-J(\pihat)
\approxleq{}\En_{\pistar,\piref}\brk*{\abs*{\rdiff[\rhat](x,a,b) -
    \rdiff[\rstar](x,a,b)}}
+\En_{\pihat,\piref}\brk*{\abs*{\rdiff[\rhat](x,a,b) -
    \rdiff[\rstar](x,a,b)}},
\end{align*}
it follows immediately
from \cref{lem:informal-dpo-estimation} that \algshort obtains a crude
guarantee scaling
with all-policy concentrability,
i.e. $J(\pistar)-J(\pihat)\approxleq{}\sqrt{(\cC^{\pistar}+\cC^{\pihat})\vepsstat^2}
\leq \sqrt{(\cC^{\pistar}+\max_{\pi\in\Pi}\cC^{\pi})\vepsstat^2}$. 
This inequality is tight for
non-pessimistic algorithms like \dpo, which
reflects their sensitivity to overoptimization. To
obtain the improved guarantee for \algshort in \cref{thm:main}, which scales only with
\emph{single-policy concentrability} $\cC^{\pistar}$,
the crux of the remaining proof will be to show that \algshort
implicitly implements pessimism via mixed \chis-regularization. For
this, we appeal to the following central technical lemma, which we
expect to find broader use.
\begin{lemma}
[Informal version of \cref{lem:general-reward-to-policy}]
\label{lem:informal-general-reward-to-policy}
  Let $f$ be a convex function with $\dom(f) = \bbR_{+}$ 
  that is differentiable over its domain. 
  Given any parameter $\beta > 0$ 
  and policy $\bar\pi:\cX \rightarrow \Delta(\cA)$ 
  with $\bar\pi(a\mid{}x) \in \dom(f')$ for all $x,a$, 
  define the reward model $\bar r(x,a) = \beta f'\prn*{\frac{\pibar(a|x)}{\piref(a|x)}}$. Then  
  \begin{align*}
    \bar\pi \in \argmax_\pi{}
    \En_\pi\brk*{\bar r(x,a)} - \beta\cdot\Df{\pi}{\piref}. 
  \end{align*}   
\end{lemma}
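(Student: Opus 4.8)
The plan is to reduce the claim to a per-context, first-order optimality check, exploiting that both the reward term and the $f$-divergence regularizer decompose over contexts. First I would write the objective as $\En_\pi\brk*{\bar r(x,a)} - \beta\cdot\Df{\pi}{\piref} = \En_{x\sim\rho}\brk*{G_x(\pi(\cdot\mid x))}$, where for each fixed $x$ and each $p\in\Delta(\cA)$,
\[
G_x(p) = \sum_{a} p(a)\,\bar r(x,a) - \beta\sum_a \piref(a\mid x)\, f\!\left(\frac{p(a)}{\piref(a\mid x)}\right).
\]
Since $\rho(x)>0$ for all $x$ and each summand depends only on the conditional $\pi(\cdot\mid x)$, maximizing the expectation over policies $\pi:\cX\to\Delta(\cA)$ is equivalent to maximizing $G_x$ over the simplex $\Delta(\cA)$ separately for each $x$. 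It therefore suffices to show that $\bar\pi(\cdot\mid x)$ maximizes $G_x$ for every $x$.

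Next I would observe that $G_x$ is concave: the reward term is linear in $p$, and $p\mapsto \sum_a \piref(a\mid x)\, f(p(a)/\piref(a\mid x))$ is convex because $f$ is convex and each summand is a positively-scaled composition of $f$ with an affine map. I would extend $G_x$ to the nonnegative orthant $\bbR_{+}^{\cA}$ (legitimate since $\dom(f)=\bbR_{+}$), so that $\bar\pi(\cdot\mid x)$ is an interior point at which $G_x$ is differentiable; this is precisely where the hypothesis that the density ratio $\bar\pi(a\mid x)/\piref(a\mid x)$ lies in $\dom(f')$ is used. Computing the gradient, the $a$-th component is $\bar r(x,a) - \beta f'\!\big(p(a)/\piref(a\mid x)\big)$, and by the very definition $\bar r(x,a) = \beta f'\!\big(\bar\pi(a\mid x)/\piref(a\mid x)\big)$, this gradient vanishes identically at $p = \bar\pi(\cdot\mid x)$.

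Finally I would invoke the first-order characterization of concavity: for a concave $G_x$ differentiable at the interior point $\bar\pi(\cdot\mid x)$,
\[
G_x(p) \le G_x(\bar\pi(\cdot\mid x)) + \left\langle \nabla G_x(\bar\pi(\cdot\mid x)),\, p - \bar\pi(\cdot\mid x)\right\rangle
\]
for every $p$ in the domain. Since the gradient is the zero vector, this collapses to $G_x(p)\le G_x(\bar\pi(\cdot\mid x))$ for all $p\in\Delta(\cA)$ (indeed for all $p$ in the orthant), which establishes per-context optimality and hence the lemma. A convenient byproduct is that the normalization constant $Z(x)$ appearing in the characterization \cref{eq:f_opt} of \citet{wang2023beyond} is immaterial here: adding any $x$-dependent constant to $\bar r$ shifts $\En_\pi\brk*{\bar r(x,a)}$ by a $\pi$-independent amount and does not move the argmax, so the reward defined without a normalizer still certifies $\bar\pi$ as optimal.

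The calculation itself is routine; the only points requiring care are the boundary and domain bookkeeping. Specifically, I must ensure the gradient computation is valid at $\bar\pi(\cdot\mid x)$—which needs the density ratio to lie in the interior of $\dom(f')$ where $f$ is differentiable—and that the first-order inequality is applied to competing distributions $p$ that may place zero mass on some action, so that $f(0)$, and hence $\Df{\pi}{\piref}$, remains well defined (using $0\in\dom(f)=\bbR_{+}$). The main conceptual obstacle, such as it is, is confirming that a vanishing gradient genuinely certifies a \emph{global} maximum over the simplex rather than merely a stationary point; concavity of $G_x$ together with $\bar\pi(\cdot\mid x)$ lying in the interior of its domain resolves this, since a stationary point of a concave function is a global maximizer over any convex set containing it.
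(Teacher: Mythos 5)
Your proof is correct, and it reaches the lemma by a genuinely different (and more elementary) route than the paper. The paper proves \cref{lem:general-reward-to-policy} via Lagrangian duality: it rewrites the problem as constrained minimization, introduces dual variables $\lambda$ (normalization) and $\alpha$ (nonnegativity), checks Slater's condition, and then verifies that $(\bar\pi,\lambda^{\star},\alpha^{\star})=(\bar\pi,0,0)$ satisfies the KKT conditions, with stationarity holding precisely because $\bar r$ is defined through $f'$ with no normalizing constant. You instead decompose the objective per context, note that each $G_x$ is concave on the nonnegative orthant, observe that its gradient vanishes identically at $\bar\pi(\cdot\mid x)$ by the very definition of $\bar r$, and invoke the first-order characterization of concavity to conclude global optimality over the simplex (indeed over the whole orthant). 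Both arguments exploit the same underlying fact---since no normalization constant appears in $\bar r$, the point $\bar\pi$ is already an unconstrained stationary point, so the simplex constraints never bind (equivalently, the paper's multipliers are zero)---but yours dispenses with the duality machinery and Slater's condition entirely, while the paper's formulation has the advantage of mirroring the companion result \cref{lem:rlhf-mix-solution}, where the dual variable $\lambdag$ is genuinely nonzero and plays exactly the role of the normalizer that your closing remark shows to be immaterial here. One bookkeeping caveat: your interiority claim for $\bar\pi(\cdot\mid x)$ rests on positivity of the density ratio, which the paper gets from the ratio lying in $\dom(f')$ together with $0\notin\dom(f')$ (an assumption of the formal lemma that the informal statement omits); if one read the informal hypotheses as allowing $f'(0)$ to exist, you would need the one-sided version of the gradient inequality at the boundary, which still goes through for concave $G_x$, so this is a presentational rather than substantive gap.
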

Under \cref{ass:vmax} we have $\pihat \in \dom(\fmix')$. 
Then recalling that 
$\rhat(x,a) \ldef{}
\beta\link{}\prn*{\frac{\pihat(a\mid{}x)}{\piref(a\mid{}x)}}=\beta\fmix'\prn*{\frac{\pihat(a\mid{}x)}{\piref(a\mid{}x)}}$
and that $\fmix$ is convex,  
\cref{lem:informal-general-reward-to-policy} implies that the policy $\pihat$
produced by \algshort satisfies
\begin{align}
\label{eq:rlhfb}
  \pihat \in \argmax_{\pi\in\Pi} \Jmixr[\rhat](\pi) \ldef{} \En_\pi\brk*{\rhat} - \beta \Dchis{\pi}{\piref} - \beta \Dkl{\pi}{\piref}. 
\end{align}
In other words,
\begin{center}
  \mbox{\emph{The \algshort policy $\pihat$ optimizes the mixed \chis-regularized RLHF
      objective %
      under
      its own implicit reward model.}}
\end{center}
This formally justifies the claim that \algshort implicitly implements
pessimism via \chis-regularization. With this result in hand, we are
now ready to prove \cref{thm:main}.
Let $\pistar$ be an arbitrary policy. Since $\Jmixr[\rhat](\pihat) \ge
\Jmixr[\rhat](\pistar)$ by \cref{eq:rlhfb}, we can decompose the
regret $J(\pistar)-J(\pihat)$ as
\begin{align*}
  J(\pistar) - J(\pihat) 
  \le&~ 
  J(\pistar) - \Jmixr[\rhat](\pistar)
  + \Jmixr[\rhat](\pihat) - J(\pihat)
  \\
  =&~ 
  \underbrace{
  J(\pistar) - J(\piref)
  - \Jmixr[\rhat](\pistar) + \Jmixr[\rhat](\piref)
  }_{\I} 
  + \underbrace{
  \Jmixr[\rhat](\pihat) - \Jmixr[\rhat](\piref)
  - J(\pihat) + J(\piref)
  }_{\II}. 
\end{align*}
In the second line, 
we have added or subtracted 
the baselines 
$J(\piref)$ and $\Jmixr[\rhat](\piref)$ 
to center the objectives 
with the performance of the reference policy.  
Up to statistical errors, the first term $\I$ corresponds to error from how much
$\Jmixr[\rhat](\pistar)$ \emph{underestimates} the return of
$\pistar$ (bias), and the second term $\II$ corresponds to error from how much
$\Jmixr[\rhat](\pihat)$ \emph{overestimates} the return of
$\pihat$ (overoptimization). As we will see shortly, these two sources
of error are directly controlled (in opposing ways) by the strength of
the regularization parameter $\beta$ in \cref{eq:rlhfb}.\loose

First, expanding the definition of $\Jmixr[\rhat](\pistar)$ and centering
the returns using the reference policies, we have
\begin{align*}
  \I 
  &=   
  J(\pistar) - \Jmixr[\rhat](\pistar)
  - J(\piref) + \Jmixr[\rhat](\piref)
  \\
  &= \En_{\pistar}\brk*{\rstar(x,a)}
     - \En_{\pistar}\brk*{\rhat(x,a)}
  + \beta\Dchis{\pistar}{\piref} + \beta\Dkl{\pistar}{\piref}
  - \En_{\pihat}[\rstar(x,a)] + \En_{\piref}[\rhat(x,a)]
  \\
  &= \En_{\pistar,\piref}\brk{\rdiff[\rstar](x,a,b) - \rdiff[\rhat](x,a,b)} 
      + \beta\Dchis{\pistar}{\piref} + \beta\Dkl{\pistar}{\piref}
      \\
    &\leq \sqrt{(1+\Dchis{\pistar}{\piref})\cdot\vepsstat^2} 
  +{}\underbrace{\beta \cdot \Dchis{\pistar}{\piref}}_{\text{bias}} .
\end{align*}
Above, we have used that $\Dkl{\pi}{\piref} \le \Dchis{\pi}{\piref}$
for any policy $\pi$, along with the bound on reward estimation error
from \cref{lem:informal-dpo-estimation}. Next, expanding
$\Jmixr[\rhat](\pihat)$ and centering the returns in a similar
fashion,
\begin{align*}
  \II &=   \Jmixr[\rhat](\pihat) - J(\pihat)
  - \Jmixr[\rhat](\piref) + J(\piref)
  \\
  &= \En_{\pihat,\piref}\brk{\rdiff[\rhat](x,a,b) - \rdiff[\rstar](x,a,b)} 
    - \beta\Dchis{\pihat}{\piref} - \beta\Dkl{\pihat}{\piref}\\
      &\leq 
        \sqrt{(1+\Dchis{\pihat}{\piref})\cdot\vepsstat^2}-
        \beta\cdot\Dchis{\pihat}{\piref}\\
      &\approxleq{} \vepsstat +
        \underbrace{\beta^{-1}\vepsstat^2}_{\textrm{overoptimization error}}.
\end{align*}
Above, the first inequality uses $\Dkl{\pi}{\piref}\geq{}0$ and
\cref{lem:informal-dpo-estimation}, while the second inequality uses
AM-GM. Critically, by using \chis-regularization, we are able to
cancel the on-policy error term $\sqrt{(1+\Dchis{\pihat}{\piref})\cdot\vepsstat^2}$
that arises from change-of-measure, leading to a modest
$\beta^{-1}\vepsstat^2$ penalty for overoptimization.

Combining these results, and recalling that $\Cone[\pi] =
1+2\Dchis{\pi}{\piref}$, we conclude that
\[
  J(\pistar) - J(\pihat)
  \approxleq{} \sqrt{\cC^{\pistar}\cdot\vepsstat^2} +
  \underbrace{\beta\cdot\cC^{\pistar}}_{\textrm{bias}} +
  \underbrace{\beta^{-1}\cdot\vepsstat^2}_{\textrm{overoptimization error}}.
\]

The bias and overoptimization errors above arise from how well our
chosen uncertainty quantifier, $\beta\Dchis{\pi}{\piref}$, accounts
for the on-policy statistical error
$\sqrt{(1+\Dchis{\pi}{\piref})\cdot\vepsstat^2}$ arising from \cref{lem:informal-dpo-estimation}; this is controlled by the magnitude of the regularization parameter $\beta$. 
When $\beta$ is too large, the uncertainty quantifier is overly
pessimistic about the quality of the reward model $\rhat$ under
$\pistar$%
, which increases the \emph{bias} of \algshort. In contrast, the \emph{overoptimization error} increases when $\beta$ is too small. 
In this regime, $\pihat$ overfits to $\rhat$ because the regularizer under-evaluates the statistical error of the learned policy. 
In order to obtain tight statistical rates, the choice of
regularization parameter $\beta$ must carefully balance its opposing
effects on bias and overoptimization error. For a fixed $\pistar$, choosing $\beta \propto (\vepsstat^2/\Cone[\pistar])^{1/2}$ results in the second claim in \cref{thm:main}.

\renewcommand{\rstar}{r^{\star}}

}

\arxiv{
  \section{Experiments in Offline Language Model Alignment}
  \label{sec:experiments}
  
We perform preliminary evaluations of \algshort for offline language model alignment on the TL;DR dataset \citep{stiennon2020learning}, using \dpo as our comparison baseline. 
The reference policy $\piref$ is the Pythia-1b model \citep{biderman2023pythia} pre-trained on SFT data (\href{https://huggingface.co/cleanrl/EleutherAI_pythia-1b-deduped__sft__tldr}{\texttt{cleanrl/EleutherAI\_pythia-1b-deduped\_\_sft\_\_tldr}} from \cite{huang2022cleanrl}), and performance is measured via winrate against a baseline, as judged by GPT-4o. All parameters that are not algorithm-specific, such as the learning rate, are shared by both \algshort and \dpo in order to ensure a fair comparison (see \cref{sec:experiment-details} for details).

\begin{table}[h]
    \centering
    \caption{Winrate on TL;DR Summarization for models learned by \algshort and \dpo, for several choices of the number of training epochs and the regularization parameter $\beta$. Standard error over 3 seeds is also reported. }
    \label{tab:final-winrate}
    \begin{tabular}{cccc} %
        \toprule
        $\beta$ & Epochs & \algshort winrate (\%) & \dpo winrate (\%) \\ %
        \midrule
        \multirow{3}{*}{0.05} 
            & 1 & $\mathbf{56.5\pm 1.3}$ & $55.8 \pm 2.1$ \\ 
            & 2 & $\mathbf{56.1 \pm 0.6}$ & $50.3 \pm 0.8$ \\
            & 4 & $\mathbf{48.0 \pm 1.6}$ & $38.0 \pm 0.7$ \\
        \midrule
        \multirow{3}{*}{0.005} 
            & 1 & $\mathbf{50.6 \pm 1.6}$ & $14.7 \pm 3.9$ \\ 
            & 2 & $\mathbf{52.8 \pm 2.3}$ & $3.4 \pm 1.5$ \\
            & 4 & $\mathbf{51.6 \pm 0.8}$ & $0.5 \pm 0.2$ \\
        \bottomrule 
    \end{tabular}
\end{table}

In \cref{tab:final-winrate} we display the winrates of \algshort and \dpo over several choices of training epochs, as well as regularization parameter $\beta$. The winrate corresponds to the final checkpoint learned by each algorithm for each set of hyperparameters. We consider $\beta = 0.05$ and 1 epoch of training to be a standard setup for $\dpo$ \citep{gao2024rebel,guo2024direct,rafailov2024scaling}, and, 
as we are particularly concerned with regimes where overoptimization is of concern, we additionally analyze performance when epochs are increased, and/or $\beta$ is decreased (corresponding to less regularization).

\begin{figure}[t]
    \centering
    \label{fig:tldr-epochs-2}
    \includegraphics[width=0.47\textwidth]{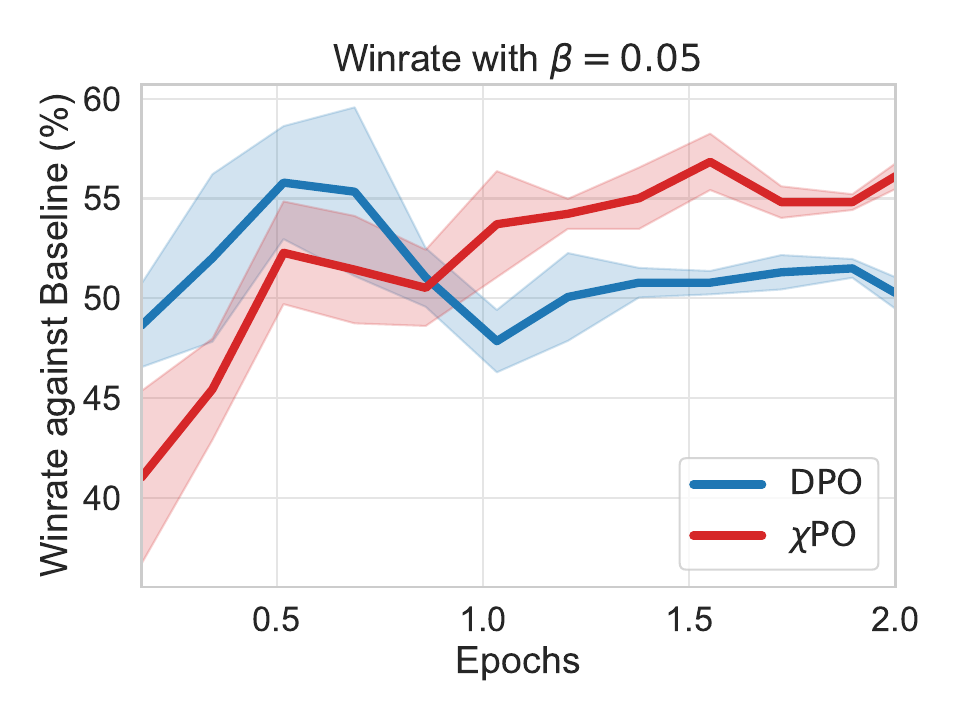}
    \includegraphics[width=0.47\textwidth]{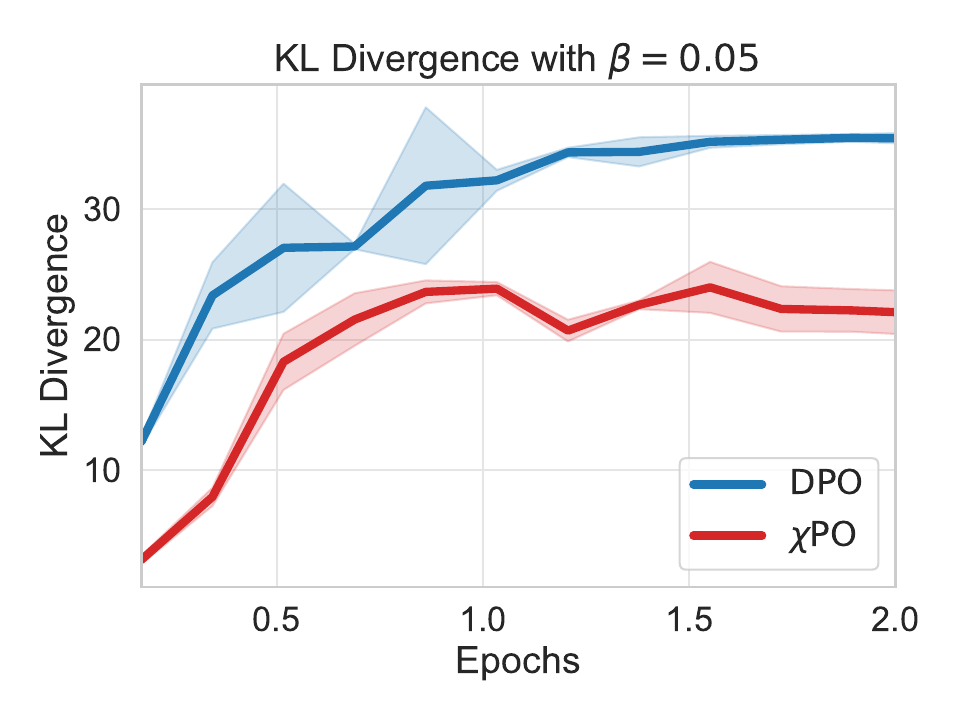}
    \caption{(Left) TL;DR Summarization winrate recorded longitudinally over 2 epochs of training every 250 steps. Shaded area displays $\pm 1$ standard error over 3 seeds. At 1 epoch \algshort already obtains better performance, and continues to improve over the course of training, while \dpo degrades over time. (Right) KL divergence $\Dkl{\pihat}{\piref}$ averaged over 2 of the seeds. For the same $\beta$, \algshort constrains the learned policy to be significantly closer to $\piref$, thereby striking a better bias-variance tradeoff. }
\end{figure}

Over all choices of $\beta$ and epochs, \algshort achieves a higher average winrate than $\dpo$. While the difference is not significant for $\beta = 0.05$ and 1 epoch, the performance gap grows significantly as the number of epochs increases, demonstrating the robustness of \algshort to overoptimization. Further, while \dpo degrades completely for $\beta = 0.005$, \algshort is robust over two orders of magnitude of $\beta$, reinforcing trends seen earlier in \cref{fig:regret} and the more favorable bias-overoptimization tradeoff from our theoretical analysis.

In addition, \algshort exhibits better performance and robustness longitudinally throughout training, as shown in \cref{fig:tldr-epochs-2}. While \dpo peaks early with high variance around 0.5 epochs and degrades thereafter, \algshort continues to improve smoothly then plateaus over the last epoch. Further, for the same regularization parameter $\beta$, the \algshort policy has significantly lower KL-divergence relative to $\piref$, demonstrating that the \chis-regularization is both a stronger regularizer and one that effectively mitigates overoptimization.

}

\arxiv{
\section{\algshortb for General Preference Models}
\label{sec:general_preference}

\newcommand{\Ac}{\mathcal{A}}
\newcommand{\Bc}{\mathcal{B}}
\newcommand{\Cc}{\mathcal{C}}
\newcommand{\Dc}{\mathcal{D}}
\newcommand{\Ec}{\mathcal{E}}
\newcommand{\Fc}{\mathcal{F}}
\newcommand{\Gc}{\mathcal{G}}
\newcommand{\Hc}{\mathcal{H}}
\newcommand{\Ic}{\mathcal{I}}
\newcommand{\Jc}{\mathcal{J}}
\newcommand{\Kc}{\mathcal{K}}
\newcommand{\Lc}{\mathcal{L}}
\newcommand{\Mc}{\mathcal{M}}
\newcommand{\Nc}{\mathcal{N}}
\newcommand{\Oc}{\mathcal{O}}
\newcommand{\Pc}{\mathcal{P}}
\newcommand{\Qc}{\mathcal{Q}}
\newcommand{\Rc}{\mathcal{R}}
\newcommand{\Sc}{\mathcal{S}}
\newcommand{\Tc}{\mathcal{T}}
\newcommand{\Uc}{\mathcal{U}}
\newcommand{\Vc}{\mathcal{V}}
\newcommand{\Wc}{\mathcal{W}}
\newcommand{\Xc}{\mathcal{X}}
\newcommand{\Yc}{\mathcal{Y}}
\newcommand{\Zc}{\mathcal{Z}}

\newcommand{\Ab}{\mathbb{A}}
\newcommand{\Bb}{\mathbb{B}}
\newcommand{\Cb}{\mathbb{C}}
\newcommand{\Db}{\mathbb{D}}
\newcommand{\Eb}{\mathbb{E}}
\newcommand{\Fb}{\mathbb{F}}
\newcommand{\Gb}{\mathbb{G}}
\newcommand{\Hb}{\mathbb{H}}
\newcommand{\Ib}{\mathbb{I}}
\newcommand{\Kb}{\mathbb{K}}
\newcommand{\Lb}{\mathbb{L}}
\newcommand{\Mb}{\mathbb{M}}
\newcommand{\Nb}{\mathbb{N}}
\newcommand{\Ob}{\mathbb{O}}
\newcommand{\Pb}{\mathbb{P}}
\newcommand{\Qb}{\mathbb{Q}}
\newcommand{\Rb}{\mathbb{R}}
\newcommand{\Sb}{\mathbb{S}}
\newcommand{\Tb}{\mathbb{T}}
\newcommand{\Ub}{\mathbb{U}}
\newcommand{\Vb}{\mathbb{V}}
\newcommand{\Wb}{\mathbb{W}}
\newcommand{\Xb}{\mathbb{X}}
\newcommand{\Yb}{\mathbb{Y}}
\newcommand{\Zb}{\mathbb{Z}}

\newcommand{\hpi}{\widehat{\pi}}
\newcommand{\DR}{\cDpref}
\newcommand{\hl}{\widehat{\ell}}
\newcommand{\ox}{\overline{x}}
\newcommand{\oa}{\overline{a}}
\newcommand{\ob}{\overline{b}}
\newcommand{\ovr}{\overline{r}}
\newcommand{\hr}{\widehat{r}}
\newcommand{\Clip}{\clip}
\newcommand{\bpi}{\overline{\pi}}
\newcommand{\suboptun}{\mathsf{subopt}}
\newcommand{\tpi}{\widetilde{\pi}}
\newcommand{\eop}{\eps_{\mathsf{md}}}
\newcommand{\esg}{\eps_{\mathsf{general}}}

\newcommand{\Ccon}{C_{\mathsf{con}}}
\addauthor{whz}{blue}
\newcommand{\whz}[1]{\whzcomment{#1}}
\renewcommand{\eps}{\varepsilon}
\newcommand{\lss}{\ell^{\star}}

\newcommand{\piu}{\pimw}
\newcommand{\CS}{\cC^{\piu}}
\newcommand{\cPstar}{\cP^\star}

All of our results so far concern the Bradley-Terry model
(\cref{eq:bt}), which, as highlighted in prior work, is somewhat restrictive. 
Thus, in this section, we turn our attention to offline alignment
under a \emph{general preference model} which does not assume transitivity  \citep{munos2023nash,wang2023rlhf,swamy2024minimaximalist,rosset2024direct,ye2024theoretical}. The setup is the same as \cref{sec:background},
but we assume that for a given context $x$ and pair of actions
$(a,b)$, the preference $y\in\crl{0,1}$ is generated via a Bernoulli Distribution
\begin{align}
\label{eq:general_pref}
  y\sim{}\Ber\prn*{\cPstar(a\psdgt b \mid{}x)},
\end{align}
where $\cPstar(a\psdgt b\mid{}x)\in\brk{0,1}$ is a general preference
distribution. For a pair of policies $\pi,\pi'$, let $\cPstar(\pi\psdgt
\pi')\ldef{}\En_{x\sim\rho}\brk*{\cPstar(\pi(x)\psdgt{}\pi'(x)\mid{}x)}$. Following
\citet{wang2023rlhf,munos2023nash,swamy2024minimaximalist}, we consider the \emph{minimax
  winner}
\citep{kreweras1965aggregation,simpson1969defining,kramer1973class,fishburn1984probabilistic}
or \emph{von Neumann winner} \citep{dudik2015contextual} as a solution concept:
\begin{align*}
  \pimw\ldef{}\argmax_{\pi\in\Pi}\min_{\pi'\in\Pi}\cPstar(\pi\psdgt{}\pi').
\end{align*}

It will be useful to slightly reparameterize this
formulation by introducing the preference function $\lss(x,a,b)\ldef{}2\cPstar(a\psdgt b\mid{}x)-1$.
  Note that for any well-defined preference model, we have $\cPstar(a\psdgt b \mid{}x) + \cPstar(b\psdgt a \mid{}x) = 1$ for all $x,a,b$, which indicates that $\lss$ satisfies skew symmetry:
\begin{align*}
\lss(x,a,a) = 0,\qquad \lss(x,a,b) + \lss(x,b,a) = 0, \qquad\forall x\in\Xc, a,b\in\Ac.
\end{align*}
Furthermore, the minimax winner above is equivalent to
\begin{align}
\label{eq:mw}
\pimw\ldef{}\argmax_{\pi\in\Pi}\min_{\pi'\in\Pi} \lss(\pi,\pi'),
\end{align}
where $\lss(\pi,\pi'):=\Eb_{x\sim\rho,a\sim\pi(x),b\sim\pi'(x)}[\lss(x,a,b)]$. Concretely, our goal is to use the logged preference data
$\cDpref=\crl*{(x,\ap,\am)}$ (with $(\ap,\am)$ labeled according to
\cref{eq:general_pref}) to compute a policy $\pihat$ that is an
$\veps$-approximate minimax winner, in the sense that
\begin{align}
  \label{eq:duality_gap}
  \DG(\hpi)
  \ldef{} \max_{\pi\in\Pi}\lss(\pi,\hpi)
  - \min_{\pi\in\Pi}\lss(\hpi,\pi) \leq \veps.
\end{align}

\subsection{Impossibility of Single-Policy Concentrability under
  General Preferences}

While the general preference framework above is more powerful than the
Bradley-Terry model, we now show that there is a statistical cost for
this generality. In particular, our first result in this
section shows that in contrast to the Bradley-Terry model, it is not
possible to achieve sample complexity guarantees that scale with
single-policy concentrability under general preferences, even when the
learner has access to a small class of preference models $\scrP$ that
contains the true preference model $\cP$ (i.e., $\cPstar\in\scrP$).

\begin{theorem}[Impossibility of single-policy concentrability under
  general preferences]
  \label{thm:general_lower}
There exists
two problem instances $\theta_1 = (\rho,\cPstar_1,\Pi)$ and $\theta_2 = (\rho, \cPstar_2,\Pi)$ differing only in their ground truth preference model, 
a data collection policy $\piref$, and a preference model class $\scrP = \{\cPstar_1,\cPstar_2\}$ with $|\scrP|=2$
  such that the following hold:\loose
  \begin{enumerate}
  \item For both instances, the single-policy $L_{\infty}$-concentrability coefficient for a minimax
    winner is bounded:
      $\min_{\pimw}\cC_{\infty}^{\pimw}\leq{}2$.\footnote{In general, the minimax winner may not
      be unique. We compete against the minimax winner with the
      \emph{best} possible single-policy concentrability coefficient.}
    \item For any $n \in \bbN$ and any algorithm $\mathsf{Alg}$ which derives a policy $\pihat$ from a dataset $\cD_\pref$ of $n$ samples, there exists an instance $\theta \in \{\theta_1,\theta_2\}$ such that $\piref$ incurs constant suboptimality:
    \begin{align*}
    \min_{\mathsf{Alg}} \max_{i \in \{1,2\}} \En_{\cD_\pref\sim\theta_i}\brk*{\DG(\mathsf{Alg}(\cD_\pref);\theta_i)} \geq{} \frac{1}{8},
    \end{align*}
    where $\DG(\pi;\theta)$ is the duality gap for policy $\pi$ on instance $\theta$. 
  \end{enumerate}
\end{theorem}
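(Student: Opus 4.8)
The plan is to prove the impossibility by a two-point (Le Cam--style) indistinguishability argument: I will exhibit two instances $\theta_1,\theta_2$ that induce \emph{identical data distributions} under a common $\piref$, so no algorithm can tell them apart, yet whose solution concepts are irreconcilable in that no single policy is an approximate minimax winner for both. The construction lives on a single context $\cX=\{\emptyset\}$ (so $\rho$ is a point mass) with three actions $\cA=\{a,b,h\}$, the deterministic reference $\piref(a)=\piref(b)=\tfrac12$, $\piref(h)=0$ (so $h$ is a ``hidden'' action never sampled into $\cDpref$), and $\Pi=\Delta(\cA)$. On the observed pair I set a tie, $\ell^\star_1(a,b)=\ell^\star_2(a,b)=0$, so both instances agree on every preference among actions in $\supp(\piref)$. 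The instances differ \emph{only} through $h$: in $\theta_1$ I let $h$ beat $b$ but lose to $a$, e.g.\ $\ell^\star_1(h,b)=1$ and $\ell^\star_1(h,a)=-\tfrac12$, and $\theta_2$ is the mirror image under swapping $a\leftrightarrow b$. The class $\scrP=\{\cPstar_1,\cPstar_2\}$ has cardinality two.

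Next I would verify requirement (1): each instance admits a well-covered minimax winner. Since the induced matrix game is skew-symmetric and zero-sum, its value is $0$, and in $\theta_1$ the pure policy $\delta_a$ ties $a,b$ and beats $h$ (as $\ell^\star_1(a,h)=\tfrac12>0$), so $\min_{\pi'}\ell^\star_1(\delta_a,\pi')=0$ and $\delta_a$ is a minimax winner with coverage $\Cinf[\delta_a]=1/\piref(a)=2$. By symmetry $\delta_b$ is a minimax winner of $\theta_2$ with $\Cinf[\delta_b]=2$, so $\min_{\pimw}\Cinf[\pimw]\le 2$ on both instances.

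The technical heart is to show the two solutions genuinely conflict: I claim that for \emph{every} $\pi=(p_a,p_b,p_h)\in\Delta(\cA)$,
\[
  \DG(\pi;\theta_1)+\DG(\pi;\theta_2)\ \ge\ \tfrac23 .
\]
Skew-symmetry gives $\DG(\pi;\theta_i)=2\max_{j}\ell^\star_i(j,\pi)$, i.e.\ twice the best pure-strategy exploitation of $\pi$, so I can lower-bound each gap by evaluating specific exploiters. Using $h$ as the exploiter gives $\DG(\pi;\theta_1)\ge 2\ell^\star_1(h,\pi)=2(p_b-\tfrac12 p_a)$ and $\DG(\pi;\theta_2)\ge 2(p_a-\tfrac12 p_b)$, whose sum is $p_a+p_b=1-p_h$; using the observed actions $a$ and $b$ as exploiters gives $\DG(\pi;\theta_i)\ge 2\cdot\tfrac12 p_h=p_h$ each, summing to $2p_h$. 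Combining, $\DG(\pi;\theta_1)+\DG(\pi;\theta_2)\ge\max(1-p_h,\,2p_h)\ge\tfrac23$, the minimum occurring at $p_h=\tfrac13$.

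Finally I would close the argument. Because $\piref$ places no mass on $h$ and the two instances agree on all preferences within $\{a,b\}$, the law of $\cDpref$ is identical under $\theta_1$ and $\theta_2$, so any (randomized) algorithm produces the same output law $\mu$ in both worlds; averaging the key inequality over $\mu$ yields
\[
  \min_{\mathsf{Alg}}\max_{i\in\{1,2\}}\En_{\cDpref\sim\theta_i}\big[\DG(\mathsf{Alg}(\cDpref);\theta_i)\big]\ \ge\ \tfrac12\,\En_{\pi\sim\mu}\big[\DG(\pi;\theta_1)+\DG(\pi;\theta_2)\big]\ \ge\ \tfrac13\ \ge\ \tfrac18 .
\]
The main obstacle is the design of the preference models, not the probabilistic reduction, because the most natural hidden-action constructions are defeated by a ``universal'' equilibrium. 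If the hidden threat is antisymmetric (beating one observed action and losing equally to the other), the uniform policy neutralizes it and becomes a minimax winner in \emph{both} instances; if $h$ merely ties the favored action while beating the other, then $\delta_h$ itself is a simultaneous equilibrium. The construction must therefore make $h$ a \emph{strict but beatable} threat---net-positive against the observed actions (so uniform is exploited) yet losing to the favored action (so $h$ is not itself an equilibrium)---which is exactly what the asymmetric magnitudes $\ell^\star_1(h,a)=-\tfrac12$, $\ell^\star_1(h,b)=1$ achieve, and what produces the final constant.
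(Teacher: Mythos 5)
Your proposal is correct, and it proves both parts of the theorem by the same high-level route as the paper: a two-point indistinguishability argument in which the two preference models agree on $\supp(\piref)\times\supp(\piref)$ and differ only through a hidden action that $\piref$ never samples, combined with a geometric lemma showing that the sum of the two duality gaps is bounded below by a constant for \emph{every} policy. The differences are in execution, and each buys something. First, the paper uses four actions $\{a,b,c,d\}$ with hidden action $d$, whereas you use three actions with asymmetric magnitudes $\ell^\star_1(h,b)=1$, $\ell^\star_1(h,a)=-\tfrac{1}{2}$; your closing discussion of why symmetric or tie-based hidden threats fail (a universal equilibrium appears) is precisely the design constraint both constructions are navigating, and your construction is the leaner of the two. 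Second, your lower bound on $\DG(\pi;\theta_1)+\DG(\pi;\theta_2)$ is case-free: combining the exploiter $h$ (giving $1-p_h$) with the exploiters $a$ and $b$ (giving $2p_h$) yields $\max(1-p_h,2p_h)\geq\tfrac{2}{3}$. The paper instead argues by cases on $\pi(a)+\pi(c)$, and, as written, its two cases ($\geq\tfrac{1}{2}$ and $<\tfrac{1}{4}$) do not cover the interval $[\tfrac{1}{4},\tfrac{1}{2})$ — a fixable but genuine omission that your argument sidesteps entirely. Third, since the two instances induce literally identical laws for $\cDpref$, you average the two risks directly and obtain the constant $\tfrac{1}{3}$, whereas the paper invokes the Bretagnolle--Huber/Le Cam inequality with zero KL divergence and obtains $\tfrac{1}{8}$; the paper's route is the one that would survive a construction with small-but-nonzero KL, but for this construction your direct averaging is more elementary and gives a strictly better constant, and both satisfy the claimed bound of $\tfrac{1}{8}$.
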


This lower bound is inspired by similar results in the literature on offline RL in
               two-player zero-sum Markov games
               \citep{cui2022offline}. However, the lower bound constructions in \citet{cui2022offline} cannot be directly applied
                as-is, because they do not satisfy the skew-symmetry
                property required by the general preference alignment
                framework. Our lower bound highlights that even under
                skew-symmetry, it is impossible to achieve
                single-policy concentrability for offline learning in two-player zero-sum
                games.

\subsection{Iterative \algshortb for General Preferences}

In spite of the hardness in the prequel, we now show that an iterative
variant of \algshort---based on self-play---can learn a near-optimal minimax winner under the general preference
model under a new local coverage condition---a condition that is
stronger than the single policy concentrability but much weaker than
global/all-policy concentrability and the notion of unilateral concentrability introduced by \cite{cui2022offline}.

Our algorithm, Iterative \algshort, is described in
\cref{alg:general}, and consists of two main steps. 

\paragraph{Preference model estimation via least squares regression on
  $\DR$} We first (\cref{line:genest}) learn a preference model from
the offline preference dataset $\DR$. We assume access to a preference function class $\Lc$ which is realizable in the sense that $\lss \in \Lc$ and where all $\ell \in \Lc$ satisfy skew-symmetryc, and we will estimate $\lss$ rather than $\cPstar$. 
 We perform least-squares regression on $\DR$ with $\Lc$ to learn $\lss$:
\begin{align*}
\hl= \argmin_{\ell\in \Lc} \sum_{(x,\ap,\am)\in\DR}\left( \ell(x, \ap, \am) - 1  \right)^2.
\end{align*}

\paragraph{Policy optimization with iterative \algshort update}
Given the estimated model $\hl$, we compute an approximate minimax
winner using an iterative regression scheme inspired by
\citet{gao2024rebel}. We proceed in $T$ iterations
(\cref{line:genloop}), where at each iteration $t$, we define an
iteration-dependent reward function $\ovr^t(x,a)$ based on the current
policy $\pi\ind{t}$ as
\begin{align*}
\ovr^t(x,a) = \Eb_{b\sim\pi^t(x)}[\hl(x,a,b)],\qquad\forall x\in\Xc, a\in\Ac.
\end{align*}
Then, for all $\pi,\pi'\in\Pi$, we define a policy-dependent predictor
$f^{\beta,\eta}_{\pi,\pi'}(x,a,b)$, whose motivation will be described
in detail momentarily, as follows:
\begin{align}
f^{\beta,\eta}_{\pi,\pi'}(x,a,b)&:= \left(1+\frac{1}{\eta}\right)\cdot\left(\beta\phi\left(\frac{\pi\left(a\mid x\right)}{\piref\left(a\mid x\right)}\right)-\beta\phi\left(\frac{\pi\left( b\mid x\right)}{\piref\left( b\mid x\right)}\right)\right) \notag\\
& ~~~~~~ -\frac{1}{\eta}\left(\beta\phi\left(\frac{\pi'\left( a\mid x\right)}{\piref\left( a\mid x\right)}\right)-\beta\phi\left(\frac{\pi'\left( b\mid x\right)}{\piref\left( b\mid x\right)}\right)\right)
\label{eq:predictor_general_pref}
\end{align}
Using $f^{\beta,\eta}_{\pi,\pi^t}(x,a,b)$ as a policy-parameterized
regression function, we (\cref{line:genreg}) compute the next policy
$\pi\ind{t+1}$ by solving a least-squares regression problem in which the Bayes optimal solution is the
\emph{relative reward} $\ovr^t(x,a) - \ovr^t(x,b)$ for iteration $t$.

\begin{algorithm}[tp]
	\caption{Iterative \algshort for General Preferences}
	\label{alg:general}
	\begin{algorithmic}[1]
		\State \textbf{Input}: labeled preference dataset $\DR$, preference model class $\Lc$, regularization coefficient $\beta$, stepsize $\eta$, total number of iterations $T$.
		\State \textbf{Initialize}: $\pi^1=\piref$. 
		\State Learn a preference model $\hl$ via least-squares regression: $$\hl= \argmin_{\ls\in \Lc}
                \sum_{(x,\ap,\am)\in\DR}\left( \ls(x, \ap, \am) - 1
                \right)^2.$$ \label{line:genest}
	  \State Collect $m$ samples $\cDnopref=\{(x,a,b)\}$ where each sample is drawn i.i.d. from $x\sim\rho,a\sim\piref(x), b\sim\piref(x)$.
		\For{$t=1,\cdots,T$} 	 \label{line:genloop}
        \State Sample $b_t\sim\pi^t(x)$ and let $\hr^t(x,a)=\hl(x,a,b_t)$ for all $x\in\Xc, a\in\Ac$. 
		\State Compute
		\begin{align}\pi^{t+1}=\argmin_{\pi\in\Pi}\sum_{(x,a,b)\in\cDnopref}\left(\Clip_{4}\left(f^{\beta,\eta}_{\pi,\pi^t}(x,a,b)\right)-(\hr^t(x,a)-\hr^t(x,b))\right)^2,\label{eq:regression}\end{align}
        $\qquad$ where $f^{\beta,\eta}_{\pi,\pi^t}(x,a,b)$ is defined in \cref{eq:predictor_general_pref}.\label{line:genreg}
		\EndFor
		\State \textbf{Output:} $\hpi=\unif(\{\pi^t\}_{t=1}^T)$.
              \end{algorithmic}
            \end{algorithm}

Let us now explain the intuition behind the the predictor
$f^{\beta,\eta}_{\pi,\pi'}(x,a,b)$. Suppose that the regression step
in \cref{line:genreg} learns a predictor that can perfectly model the relative reward, i.e., 
\begin{align*}
\forall x, a, b, \quad f^{\beta,\eta}_{\pi^{t+1},\pi^t}(x,a,b) = \ovr^t(x,a) - \ovr^t(x,b),
\end{align*}
In this case, we can show that the returned policy $\pi^{t+1}$ is
  the optimal policy for the following mixed \chis-regularized RL objective:
\begin{align}
\label{eq:regularized_rl_iterate}
\pi\ind{t+1}(x) =\argmax_{p\in\Delta(\Xc)}\crl*{\En_{a\sim{}p}\brk*{\ovr^t(x,a)} - \beta\Dfmix{p}{\piref(x)} - \frac{\beta}{\eta}B_x(p,\pi^t)},\qquad\forall x\in\Xc,
\end{align}  where $B_x(p,\pi^t)$ is the Bregman divergence induced by
the regularizer $p\mapsto{}\Dfmix{p}{\piref(x)}$, i.e., 
\begin{align*}
B_x(p,q) := \Dfmix{p}{\piref(x)} - \Dfmix{q}{\piref(x)} - \left\langle\nabla \Dfmix{q}{\piref(x)}, p-q \right\rangle, \qquad\forall x\in\Xc.
\end{align*}
Thus, the algorithm can be understood as running mirror descent on the
iteration-dependent loss function $-\ovr^t $, with $p\mapsto{}\Dfmix{p}{\piref(x)}$ as a
per-context regularizer. 
This technique draws inspiration from~\citet{chang2024dataset}, in which the authors apply a similar regularized mirror descent algorithm to learn the optimal policy for the \emph{reward-based} setting. 
The motivation for using mixed-\chis{} regularization is exactly the same as in \algshort: we
want to ensure that $\frac{\pi^{t+1}(a|x)}{\piref(a|x)} \leq 1
    +\frac{1}{\beta}$, thereby mitigating overoptimization.

\subsection{Theoretical Analysis of Iterative \algshortb}

We now present our main theoretical guarantees for Iterative
\algshortb. We begin by stating a number of statistical
assumptions. We first assume that the preference model class contains the ground truth preference function $\lss$.\loose
\begin{assumption}[Preference function realizability]
\label{ass:pre}
The model class $\Lc$ satisfies $\lss\in\Lc$ where $\lss$ is the ground truth preference function.\loose
\end{assumption}
In addition, since \cref{alg:general} iteratively applies an \algshort update, we require that a policy realizability assumption
analogous to \cref{ass:realizability} holds for each of the
sub-problems in \cref{eq:regularized_rl_iterate}. Concretely, we make
the following assumption.
\begin{assumption}[Policy realizability for general preferences]
\label{ass:md}
For any policy $\pi\in\Pi$ and $\ell\in\Lc$, the policy class $\Pi$ contains the minimizer of the following regularized RL objective:
\begin{align*}
\pibar(x;\ell,\pi)\ldef{} \argmax_{p\in\Delta(\Xc)}\crl*{\En_{a\sim{}p,b\sim\pi(x)}\brk*{\ell(x,a,b)}  - \beta\Dfmix{p}{\piref(x)} -\frac{\beta}{\eta}B_x(p,\pi)},\qquad\forall x\in\Xc.
\end{align*}
\end{assumption}
Finally, we require that the implicit reward functions in
\cref{eq:regression} are bounded, analogous to \cref{ass:vmax}.
\begin{assumption}[Bounded implicit rewards for general preferences]
\label{ass:vmax-g}
For a parameter $\Vmax\geq2$, it holds that for all $\pi,\pi'\in\Pi$, $x\in\Xc$, and $a,b\in\Ac$,
\begin{equation}
\left|f^{\beta,\eta}_{\pi,\pi'}(x,a,b)\right| \leq \Vmax.
\end{equation}
\end{assumption}

Our main guarantee for \cref{alg:general} is as follows.
\begin{theorem}
\label{thm:rdr}
Fix any $\delta\in(0,1]$. Suppose \cref{alg:general} is invoked with 
$T = \frac{mn}{n\Vmax^2 + m}$, $\beta = \frac{1}{\sqrt{T}}$, and $\eta = \frac{1}{T}$.
Then under \cref{ass:pre}, \cref{ass:md} and \cref{ass:vmax-g}, we
have that probability at least $1-\delta$,
\begin{align*}
	\DG(\hpi)\lesssim\min_{C\geq 1}\left\{\suboptun(\hpi,C)+ C\left(\frac{\Vmax\log(|\Pi|/\delta)}{\sqrt{m}} + \frac{\log(|\Pi||\Lc|/\delta)}{\sqrt{n}}\right)\right\},
\end{align*}
where $\suboptun(\hpi,C):=\max_{\pi\in\Pi}\lss(\pi,\hpi) -
\max_{\pi\in\Pi_{C}}\lss(\pi,\hpi)$ and $\Pi_C:=\{\pi:\max_{x\in\Xc}
\Dchis{\pi(x)}{\piref(x)}\leq C\}$. In particular, if we define the
\textit{unilateral concentrability coefficient} as %
\begin{align*}
C_{\mathsf{uni}}:=\max_{\pi\in\Pi,x\in\Xc,a,b\in\Ac}\frac{\pi(a \mid{} x)\pimw(b\mid{}x)}{\piref(a\mid{}x)\piref(b\mid{}x)},
\end{align*} then the bound above implies that
\begin{align*}
	\DG(\hpi)\lesssim  C_{\mathsf{uni}}\cdot\left(\frac{\Vmax\log(|\Pi|/\delta)}{\sqrt{m}} + \frac{\log(|\Pi||\Lc|/\delta)}{\sqrt{n}}\right).
\end{align*}
\end{theorem}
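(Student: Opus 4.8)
The plan is to reduce the duality gap to the time-averaged self-play regret of the iterates $\{\pi^t\}$, to analyze each regression update \cref{eq:regression} as an \emph{inexact} mirror-descent step for the mixed-$\chi^2$-regularized game, and to control the resulting errors through a change-of-measure governed by $\chi^2$-divergence. First I would exploit skew-symmetry: since $\hpi$ is the uniform mixture over $\{\pi^t\}_{t=1}^T$ and $\lss(\pi,\pi')=-\lss(\pi',\pi)$, linearity gives $\lss(\hpi,\pi)=-\frac1T\sum_t\lss(\pi,\pi^t)$, so that $\DG(\hpi)=2\max_{\pi\in\Pi}\frac1T\sum_{t=1}^T\lss(\pi,\pi^t)$. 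Writing the \emph{true} relative reward $g^t(x,a):=\Eb_{b\sim\pi^t(x)}[\lss(x,a,b)]$, the target becomes the self-play regret $\frac1T\sum_t\Eb_{a\sim\pi(x)}[g^t(x,a)]$ against an arbitrary comparator $\pi$.

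Next I would quantify the two sources of statistical error and connect the update to mirror descent. A least-squares generalization bound for \cref{line:genest} gives $\Eb_{x\sim\rho,a\sim\piref,b\sim\piref}[(\hl-\lss)^2]\lesssim \log(|\Lc|/\delta)/n$ under \cref{ass:pre}, which controls the gap between the estimated relative reward $\ovr^t$ and $g^t$. A second least-squares bound for \cref{eq:regression}---whose Bayes predictor, in expectation over $b_t\sim\pi^t$, is exactly $\ovr^t(x,a)-\ovr^t(x,b)$---gives $\Eb_{\piref\times\piref}[(f^{\beta,\eta}_{\pi^{t+1},\pi^t}-(\ovr^t(x,a)-\ovr^t(x,b)))^2]\lesssim \Vmax^2\log(|\Pi|/\delta)/m$ under \cref{ass:md,ass:vmax-g}. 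By \cref{lem:informal-general-reward-to-policy} applied to $\fmix$, exact minimization would make $\pi^{t+1}$ solve the idealized proximal objective \cref{eq:regularized_rl_iterate} exactly, i.e.\ a mirror-descent step with mirror map $p\mapsto\Dfmix{p}{\piref(x)}$; the two bounds above quantify the per-round slack in $L^2(\piref\times\piref)$.

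I would then run the mirror-descent regret argument. Because the mixed-$\chi^2$ regularizer is strongly convex, telescoping the Bregman divergences $B_x(\pi,\pi^t)$ yields, for each comparator, $\frac1T\sum_t\Eb_{a\sim\pi}[g^t(x,a)]\lesssim \frac{\beta}{\eta T}\max_x B_x(\pi,\pi^1)+\beta\max_x\Dchis{\pi(x)}{\piref(x)}+(\text{transferred per-round errors})$, where the middle term is the bias from the $\beta\Dfmix{\cdot}{\piref}$ regularization and, with $\pi^1=\piref$ and $\eta=1/T$, the first term is also $\lesssim\beta\max_x\Dchis{\pi(x)}{\piref(x)}$ since $B_x(\pi,\piref)\lesssim\Dfmix{\pi}{\piref}\lesssim\Dchis{\pi}{\piref}$. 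Restricting the comparator to $\Pi_C$ caps both by $\beta C$, and the iterate coverage bound $\pi^{t+1}(a\mid x)/\piref(a\mid x)\lesssim 1+\beta^{-1}$ (as in \cref{prop:conc_bounds}) keeps the iterates well-covered. Substituting $T=\frac{mn}{n\Vmax^2+m}$ and $\beta=1/\sqrt T$ makes $\beta\asymp \Vmax/\sqrt m + 1/\sqrt n$, so the bias $\beta C$ already produces the advertised $C\,(\Vmax\log(|\Pi|/\delta)/\sqrt m + \log(|\Pi||\Lc|/\delta)/\sqrt n)$ term; the statistical slack is arranged to be of the same order under this balance, and the gap from the restriction is exactly $\suboptun(\hpi,C)$. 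Optimizing over $C\ge 1$ gives the first bound.

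The step I expect to be the main obstacle is transferring the on-policy ($\piref\times\piref$) regression errors to the comparator-dependent measure in the regret, since $g^t$ and its estimate are evaluated at $a\sim\pi$, $b\sim\pi^t$. Here I would use a change-of-measure in the spirit of \cref{lem:informal-dpo-estimation}, bounding the error under $(a\sim\pi,b\sim\pi^t)$ by the $L^2(\piref\times\piref)$ error times the concentrability factor $\frac{\pi(a\mid x)\pi^t(b\mid x)}{\piref(a\mid x)\piref(b\mid x)}$, whose \emph{product} structure is precisely why the unilateral coefficient, rather than single-policy concentrability, is the governing quantity---consistent with the impossibility result of \cref{thm:general_lower}. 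For the final specialization I would note that $C_{\mathsf{uni}}$ dominates $\max_x\Dchis{\pi(x)}{\piref(x)}$ for every $\pi$ (bounding the $b$-marginal against $\pimw$), so taking $C=C_{\mathsf{uni}}$ simultaneously forces $\suboptun(\hpi,C_{\mathsf{uni}})\le 0$ and controls the joint change-of-measure factor uniformly, yielding $\DG(\hpi)\lesssim C_{\mathsf{uni}}(\Vmax\log(|\Pi|/\delta)/\sqrt m + \log(|\Pi||\Lc|/\delta)/\sqrt n)$.
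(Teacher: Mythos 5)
Your overall architecture matches the paper's proof---skew-symmetry reduction of $\DG(\hpi)$ to the averaged regret of the iterates against a best response, the split into $\suboptun(\hpi,C)$ plus error terms, two least-squares guarantees (for $\hl$ and for \cref{eq:regression}, whose Bayes solution is the exact proximal step), and an inexact mirror-descent analysis with the mixed-$\chi^2$ Bregman regularizer. However, there is a genuine gap at exactly the step you flag as the main obstacle: your control of the \emph{learned iterates} $\pi^t$ in the change of measure. You invoke the pointwise bound $\pi^{t+1}(a\mid x)/\piref(a\mid x)\lesssim 1+\beta^{-1}$ ``as in \cref{prop:conc_bounds}'', but that bound is a property of the \emph{exact} minimizer of \cref{eq:regularized_rl_iterate}; the algorithm's $\pi^{t+1}$ only solves \cref{eq:regression} approximately, with accuracy measured in $L^2(\piref\times\piref)$, and neither this guarantee nor \cref{ass:vmax-g} (which bounds differences of implicit rewards, not density ratios) gives any pointwise control of $\pi^{t+1}/\piref$. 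Moreover, even granting that bound, the argument would not yield the theorem: the change-of-measure factor becomes $\sqrt{C(1+\beta^{-1})}$, and under the mandated tuning $\beta=1/\sqrt{T}$, $T=\frac{mn}{n\Vmax^2+m}$, the per-round error is of order $\Vmax\sqrt{C\sqrt{T}\log(|\Pi|/\delta)/m}\le\sqrt{C\Vmax\log(|\Pi|/\delta)/\sqrt{m}}$, i.e.\ the \emph{square root} of the advertised term---an $m^{-1/4}$ rate rather than $m^{-1/2}$.

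The missing idea is that the paper never needs coverage of the iterates at all: the mirror-descent regret bound (\cref{lem:md-general}) is stated so as to retain the \emph{negative} regularization terms $-\frac{1}{T}\sum_{t}\En_{x\sim\rho}\brk*{g_x(\pi^t)}$, where $g_x(\pi)=\beta\Dfmix{\pi(x)}{\piref(x)}$, and every error term whose $\chi^2$ factor involves a learned iterate---both the transfer of the $\hl$-estimation error onto $(a\sim\tpi_C,\,b\sim\pi^t)$ and the transfer of the regression error onto $\pi^{t+1}$---is split by AM-GM as $\sqrt{(1+2\Dchis{\pi^t}{\piref})\,\veps^2}\le \beta^{-1}\veps^2+\tfrac12\En_{x\sim\rho}\brk*{g_x(\pi^t)}+\veps$, after which the divergence halves are absorbed by those negative terms. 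This is pessimism implemented \emph{inside} the mirror-descent telescope: only the comparator's coverage $C$ survives, and the leftover $\beta^{-1}\veps^2$ terms, with $\beta=1/\sqrt{T}$ and the stated $T$, produce exactly $C\prn*{\Vmax\log(|\Pi|/\delta)/\sqrt{m}+\log(|\Pi||\Lc|/\delta)/\sqrt{n}}$. A secondary omission in your proposal is the gap between the sampled reward $\hr^t(x,a)=\hl(x,a,b_t)$, $b_t\sim\pi^t(x)$, used by the algorithm and its mean $\ovr^t$; the paper handles this with Azuma--Hoeffding (terms (3) and (4) in its decomposition, contributing $\sqrt{\log(|\Pi|/\delta)/T}$), which is routine but must appear somewhere.
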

The first result gives a tradeoff between the statistical error and
the approximation error $\suboptun(\hpi,C)$, which is modulated by the
parameter $C$. This tradeoff is analogous to, but more subtle, than
the one for \algshort in the reward-based setting. In the reward-based
setting, \algshort has low regret to the best policy covered
$\piref$. In the general preference setting,~\cref{alg:general} has
small duality gap if, for any policy, there is an approximate best
response that is covered by $\piref$ (this implies that
$\suboptun(\hpi,C)$ is small for small
$C$). Crucially,~\cref{alg:general} does not require that all policies
are covered by $\piref$, which is a distinctive feature of mixed
\chis-regularization and reflects the algorithms robustness to
overoptimization.

The second result concerns the setting where all policies are covered by $\piref$ and is easier to interpret. Indeed, 
if all
$\pi\in\Pi$ satisfy $\Dchis{\pi}{\piref}\leq{}C^{\star}$, then
$\suboptun(\hpi,C^{\star})=0$, which implies that we can learn an
$\veps$-approximate minimizer using $\bigoht(C^{\star}/\veps^2)$
samples. 
Thus, we obtain a guarantee based on unilateral concentrability~\citep{cui2022offline}, which is a stronger condition, i.e., we always have $\max_\pi \Dchis{\pi}{\piref} \leq C_{\mathsf{uni}}$.
However, per the above discussion, the first part of~\cref{thm:rdr} is stronger than results based on unilateral concentrability and hints at a new notion of coverage for general preferences.
Lastly, we remark that the parameter $\Vmax$ only
affects $\sqrt{1/m}$ term in \cref{thm:rdr}, so dependence on this
parameter can be mitigated using unlabeled data.\loose

\cref{thm:rdr} is closely related to recent work of
\citet{ye2024theoretical}, which uses pessimism to learn a regularized minimax
winner, and achieves polynomial sample complexity with a
concentrability assumption similar to \cref{thm:rdr}. However, there
are two key differences. First, their learning objective is the
KL-regularized minimax winner, while we study the unregularized
objective and use \chis-regularization. More importantly, their theoretical algorithm
is computationally inefficient
as it constructs an explicit confidence set for the preference model and performs max-min-style policy optimization.  
In contrast, our
algorithm only requires solving standard supervised learning problems.

}

\section{Discussion}
\label{sec:discussion}
Our work gives the first \arxiv{practical, }general-purpose algorithm for offline alignment with provable robustness to overoptimization and sample complexity guarantees based on single-policy
concentrability. Conceptually, our results contribute to a growing
body of research that highlights the statistical benefits of \chis-divergence for
reinforcement learning \citep{wang2024oracle,gabbianelli2024importance,amortila2024scalable},
and offer an example of fruitful interplay between reinforcement
learning theory and language modeling.
\arxiv{From this perspective, we
expect that our analysis techniques and algorithm design ideas will find broader use.}

Natural technical directions raised by our paper include 
(i) developing a tight understanding of minimax sample complexity and instance-optimality for
offline alignment with general policy classes; 
(ii) understanding the tightest
possible problem-dependent sample complexity guarantees for offline alignment with
\emph{general preference models} (in light of lower bounds in
\cref{sec:general_preference}); and (iii) extending our techniques to
reinforcement learning settings beyond offline alignment (e.g., general MDPs).
We look forward to studying these questions in future work.\loose

\arxiv{
\subsection*{Acknowledgements}
We thank Qinghua Liu, Zhaolin Gao, and Yuda Song for several helpful discussions. 
WS acknowledges funding support from NSF IIS-2154711,
NSF CAREER 2339395, DARPA LANCER: LeArning Network CybERagents. 
}

\clearpage

\bibliography{refs}

\clearpage

\appendix

\renewcommand{\contentsname}{Contents of Appendix}
\addtocontents{toc}{\protect\setcounter{tocdepth}{2}}
{
  \hypersetup{hidelinks}
  \tableofcontents
}

\part{Additional Results}

\section{Additional Related Work}
\label{sec:related}

\paragraph{Theoretical algorithms for offline alignment}
Much of prior theoretical work on offline alignment considers algorithms
that are tailored to linearly parameterized policies
\citep{zhu2023principled,li2023reinforcement,xiong2023gibbs}, 
while others are not efficiently implementable, e.g., as they require solving min-max problems over a version space \citep{zhan2023provable}.
For general policy classes, \citet{ye2024theoretical} provide an
algorithm that achieves sample complexity guarantees based on
single-policy concentrability, but the algorithm requires computation
of an uncertainty bonus which cannot be implemented faithfully for
large language models. \citet{ji2024selfplay} provide an algorithm
that achieves single-policy concentrability using self-play, but their
approach requires the non-standard realizability assumption that for
all $\pi\in\Pi$, there exists $\pi'\in\Pi$ such that $r(x,a) =
\beta\log\frac{\pi(a\mid{}x)}{\pi'(a\mid{}x)}-Z_{\pi,\pi'}(x)$ for some
function $Z_{\pi,\pi'}(x)$ that depends on $x$, but not the
action $a$. In addition, their algorithm is iterative, and requires solving a
DPO-like objective many times (roughly $1/\veps^{2}$ iterations are
required to achieve accuracy $\veps$). Most
relevant to our work,
\citet{liu2024provably,cen2024value,fisch2024robust} propose solving
the appealingly simple DPO + SFT objective in \cref{eq:rpo}. As we
discuss in detail in \cref{sec:rpo_lower}, this objective fails to
achieve single-policy concentrability unless non-standard convexity assumptions
on the policy class or reward model class hold.

A number of other works consider the \emph{hybrid} setting for alignment
where---in addition to offline preference data from $\piref$, the
algorithm has access to online feedback
\citep{xiong2023gibbs,gao2024rebel,chang2024dataset,song2024understanding}. While it is
straightforward to achieve guarantees based on single-policy concentrability in this setting,
this is a stronger feedback model than what we consider, and is not
always realistic. Our work is
also complementary to fully online alignment, which dispenses with coverage
conditions entirely but requires active exploration
\citep{xu2020preference,novoseller2020dueling,pacchiano2021dueling,wu2023making,zhan2023query,chen2022human,wang2023rlhf,du2024exploration,das2024provably,ye2024theoretical,xie2024exploratory,cen2024value}.

\paragraph{Generalizations of \dpo}
\citet{wang2023beyond} provide a generalization of the \dpo reparameterization
trick which supports general $f$-divergences that satisfy certain
regularity conditions. Their work does not provide sample
complexity guarantees or theoretical guidance on which choices of
$f$-divergence are preferable, but our main algorithm \algshort, can be derived as
a special case of their technique with a novel choice of
$f$-divergence. \citet{tang2024generalized} also provide a general
framework for deriving \dpo variants with general loss functions, but
our algorithm does not appear to be a special case of their framework.

\paragraph{Offline reinforcement learning theory}
The theory of \emph{offline
  reinforcement learning} addresses challenges similar to
overoptimization, which is typically
describes through the language of \emph{distribution shift}. Many of
these works, using pessimism and related algorithmic techniques, provide guarantees
that are robust to partial coverage of the data collection policy
$\piref$, which is reflected in sample complexity guarantees based on
single-policy concentrability and similar coverage conditions. While
this line of work provides efficient algorithms for simple (e.g.,
tabular or linear) settings
\citep{liu2020provably,jin2021pessimism,rashidinejad2021bridging},
existing approaches that support general function approximation
\citep{xie2021bellman,uehara2021pessimistic,zhan2022offline,chen2022offline}
cannot be implemented efficiently for language models without
non-trivial modifications. See also closely related research on policy
optimization and evaluation in statistics and econometrics
\citep{athey2021policy,chernozhukov2019semi,kallus2020double}.

\paragraph{\chis-divergence in reinforcement learning}
Our work contributes to a growing body of research that uses
\chis-divergence to derive reinforcement learning algorithms with novel statistical
guarantees.\footnote{More classically, \chis-divergence is known to play a
fundamental role in asymptotic statistics
\citep{tsybakov2008introduction,duchi2019variance}.} Notably, our work is inspired by
\citet{wang2024oracle} (see also \citet{gabbianelli2024importance}),
who use a regularizer similar to \chis-divergence to derive
single-policy concentrability guarantees for contextual
bandits. Compared to the \chis-regularizer $\cC^{\pi} =
\En_{\pi}\brk*{\frac{\pi(a\mid{}x)}{\piref(a\mid{}x)}}$ we use, their
regularizer takes the form
$\En_{\pi}\brk*{\frac{1}{\piref(a\mid{}x)}}$, which is always larger. As a result of this
diference, their regularizer is not suitable for large action
spaces. By addressing this shortcoming, we expect our
\chis-regularization approach to find further use in offline RL.

Other related works include (i) \citet{duan2020minimax} show that
\chis-divergence plays a fundamental role in offline RL with linear
function approximation; (ii) \cite{zhan2022offline} use
\chis-regularization to provide guarantees based on single-policy
concentrability for an offline RL method based on weight function
learning; and (iii) \citet{amortila2024scalable} provide online RL
algorithms that explore by directly minimizing an exploration
objective based on \chis-divergence. We mention in passing that a
number of recent empirical works
apply \chis-regularization \citep{zhu2020off,lee2021optidice,
  ma2022offline,ma2022smodice,zhu2024provably} to
reinforcement learning in embodied domains. 
Lastly, \citet{cesa2017boltzmann} prove lower bounds against 
the softmax policy distribution, but in the context of online exploration
for online RL. While this is different problem setting than ours, 
their construction may be in similar in spirit to our lower bound against
KL-regularization in offline reinforcement learning (\cref{prop:rpo_lower}).

\paragraph{Empirical research on offline alignment}
Our work uses \dpo \citep{rafailov2024direct} as a starting
point. Many prior works have built upon \dpo with the aim
of addressing specific shortcomings, including
\citet{liu2023statistical,tang2024generalized,azar2024general,rosset2024direct,chen2024self,wu2024self,tajwar2024preference}. Closely
related, there is a large body of research that attempts to understand and
mitigate overoptimization in offline alignment from a purely empirical perspective \citep{michaud2020understanding,tien2022causal,coste2023reward,dong2023raft,eisenstein2023helping,gao2023scaling,moskovitz2023confronting,pal2024smaug,rita2024countering,rafailov2024scaling,zhang2024overcoming}.

\subsection{Detailed Comparison to DPO + SFT}
\label{sec:rpo_lower}

\newcommand{\piminmax}{\pihat_{\normalfont\textsf{min-max}}}
\newcommand{\rminmax}{\rhat_{\normalfont\textsf{min-max}}}
\newcommand{\pimaxmin}{\pihat_{\normalfont\textsf{max-min}}}
\newcommand{\pidposft}{\pihat_{\normalfont\textsf{DPO+SFT}}}

In this section, we give additional background on the suboptimality of
the DPO + SFT objective in
\cref{eq:rpo}. Let $\beta>0$ be the KL-regularization parameter and
$\alpha>0$ be an optimism parameter. Consider the setting in which
$\Pi=\crl*{\pi_r(a\mid{}x)=\piref(a \mid{} x)\exp(\beta^{-1}(r(x,a)-Z_r(x)))\mid{}r\in\cR}$ 
for a reward class
$\cR\subset(\cX\times\cA\to\bbR)$. \citet{liu2024provably,cen2024value,fisch2024robust} propose
solving (variants of) the objective
\begin{align}
  \label{eq:maxmin}
  \pimaxmin
  =
\argmax_{\pi}\min_{r\in\cR}\crl*{\alpha\prn*{\En_{x\sim\rho{},a\sim\pi(\cdot\mid{}x),b\sim\piref(\cdot\mid{}x)}\brk*{r(a)-r(b)
    }-\beta\Dkl{\pi}{\piref}}
  +\cL(r)},
\end{align}
where the max ranges over the space of all policies, and where $\cL(r)\ldef{}-\frac{1}{n}\sum_{(x,\ap,\am)\in\cD_\pref}\log\sigma\brk*{r(x,\ap)-r(x,\am)}$
is the negative log-likelihood under the Bradley-Terry
model. \citet{liu2024provably} show that for general policy classes,
this algorithm attains sample complexity guarantees scaling with
single-policy concentrability; \citet{cen2024value} provide similar
results for the special case of linearly parameterized policies.

The objective in \cref{eq:maxmin} is non-trivial to implement for
language models. To derive the DPO + SFT objective in \cref{eq:rpo},
\citet{liu2024provably} observe that if $\cR$ is convex, the minimax
theorem implies that the objective value in \cref{eq:maxmin} is
equivalent to the value for the min-max objective
\begin{align}
  \label{eq:minmax}
  \min_{r\in\cR}\max_{\pi}\crl*{\alpha\prn*{\En_{x\sim\rho{},a\sim{}\pi(\cdot\mid
  x),b\sim\piref(\cdot\mid x)}\brk*{r(a)-r(b)
    }-\beta\Dkl{\pi}{\piref}}
  +\cL(r)}.
\end{align}
This leads to a natural algorithmic strategy adopted by~\citep{liu2024provably,cen2024value,fisch2024robust}: Let $\rminmax$ be the minimizing reward function in~\cref{eq:minmax} and let $\pi_{\rminmax}$---the optimal policy in the KL-regularized MDP with reward function $\rminmax$---be the final policy returned by the algorithm. 
After standard manipulations, one can then show that $\pi_{\rminmax}$ is
equivalent to
\begin{align}
  \label{eq:rpo2}
  \argmax_{\pi\in\Pi}\crl*{
  \alpha\cdot\En_{\piref}\brk*{\beta\log\pi(a\mid x)}
  + 
\frac{1}{n}\sum_{(x,\ap,\am) \in \Dcal_\pref} \log\left[\sigma\left(
  \beta\log\frac{\pi(\ap\mid x)}{\piref(\ap \mid x)} -
  \beta\log\frac{\pi(\am \mid x)}{\piref(\am \mid x)} \right) \right]
  }.
\end{align}

We call this policy $\pihat_{\textsf{DPO+SFT}}$. The sample complexity
analyses for the $\pihat_{\textsf{DPO+SFT}}$ policy (\cref{eq:rpo2})
in \citep{liu2024provably,cen2024value} rely on
showing that the objective value in \cref{eq:minmax} is equivalent to
the value in \cref{eq:maxmin}, which is not guaranteed to hold if
$\cR$ is non-convex (e.g., if $\cR$ is a class of neural
networks).\footnote{Precisely,~\citet{liu2024provably} provide
guarantees for $\pimaxmin$ with general reward class $\cR$
and establish equivalence of $\pimaxmin$ and $\piminmax$ when $\cR$ is
convex, while~\citet{cen2024value} consider linear function approximation, which yields the required convexity.} Indeed, the following proposition shows that, for non-convex
reward classes $\cR$, the DPO + SFT objective in \cref{eq:rpo2} fails to achieve a statistical guarantee based on single-policy concentrability, even when
\cref{eq:maxmin} succeeds.
\begin{proposition}
  \label{prop:rpo_lower}
  Let $n \in \bbN$ with $n \geq 2$ be given. There exists a reward class
  $\cR$ with $|\cR| = 2$, a problem instance $(\rho,r)$ satisfying
  realizability ($r\in\cR$) and $r\in\brk{0,1}$, a data collection policy $\piref$,
  and universal constants $c_1 \in (0,1)$ and $c_2,c_3>0$ 
  such that the following hold:
  \begin{enumerate}
    \item There exists a policy $\widetilde{\pi}$ such that $\|
      \widetilde{\pi}/\piref \|_{\infty} \leq 2$; yet
    \item For any $\beta \leq (2\log(n))^{-1}$ and $\alpha\geq{}0$,
      the minimax policy $\piminmax$ (\cref{eq:minmax}) and DPO+SFT
      policy $\pidposft$ (\cref{eq:rpo2}) derived from a
      dataset $\cD_\pref$ of $n$ samples from $\piref$ incur suboptimality
      \begin{align*}
        J(\widetilde{\pi}) - J(\pidposft) =  J(\widetilde{\pi}) - J(\piminmax) \geq c_2,
      \end{align*}
      with probability at least $c_1$.
    \item For any $\beta \geq (2\log(n))^{-1}$ and $\alpha \geq 0$, the minimax policy $\piminmax$ (\cref{eq:minmax}) and DPO+SFT
      policy $\pidposft$ (\cref{eq:rpo2}) derived from a
      dataset $\cD_\pref$ of $n$ samples from $\piref$ incur suboptimality
      \begin{align*}
        J(\widetilde{\pi}) - J(\pidposft) =  J(\widetilde{\pi}) - J(\piminmax) \geq \frac{c_3}{\log(n)},
      \end{align*}
      with probability at least $c_1$. 
  \end{enumerate}
\end{proposition}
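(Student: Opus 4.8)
The plan is to exhibit a single hard instance, adapting the illustrative example of \cref{sec:illustrative}, on which---conditioned on a constant-probability ``uninformative data'' event---the DPO+SFT policy collapses onto the KL-regularized policy of an \emph{incorrect} reward model for \emph{every} choice of $(\alpha,\beta)$. Concretely, take $\cX=\{\emptyset\}$, $\cA=\{a_0,a_1,a_2,a_3\}$, and $\piref(a_0)=\tfrac12$, $\piref(a_1)=\piref(a_2)=\tfrac1{2n}$, $\piref(a_3)=\tfrac{n-2}{2n}$, exactly as before. Let $\cR=\{r_1,r_2\}$, where each $r_i$ assigns $r_i(a_0)=\tfrac12$, $r_i(a_i)=1$, and $0$ to the other of $a_1,a_2$; the one change relative to \cref{sec:illustrative} is that I set $r_1(a_3)=r_2(a_3)=\tfrac12-\delta$ with $\delta\asymp\tfrac1{\log n}$ (small enough that all rewards stay in $[0,1]$). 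The true reward is $r_1$ and the comparator $\widetilde{\pi}$ plays $a_0$ deterministically, which settles part 1 since $\|\widetilde{\pi}/\piref\|_\infty=1/\piref(a_0)=2$. The purpose of the modified $a_3$-reward is to make $\piref$ near-optimal against $\widetilde{\pi}$ up to an additive $\Theta(1/\log n)$ gap, which is exactly what pins the large-$\beta$ (bias-dominated) regret at $\Theta(1/\log n)$ rather than at a constant.

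I would then introduce the bad event $\cE$ that none of the $n$ logged pairs contains $a_1$ or $a_2$. Since $\piref(a_1)+\piref(a_2)=\tfrac1n$ and the two actions in each pair are drawn i.i.d. from $\piref$, $\Pr[\cE]=(1-\tfrac1n)^{2n}\ge\tfrac1{16}=:c_1$ for all $n\ge 2$. On $\cE$ every logged pair lies in $\{a_0,a_3\}$, where $r_1$ and $r_2$ coincide, so their induced implicit-reward differences agree on the data; combined with the $a_1\leftrightarrow a_2$ symmetry of the pair $(\piref,\cR)$, this forces the Bradley--Terry log-likelihood and the SFT/log-normalizer terms to take equal values at $r_1$ and $r_2$. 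Hence both the min-max objective \cref{eq:minmax} and the DPO+SFT objective \cref{eq:rpo2} are invariant under swapping $r_1,r_2$, and adversarial tie-breaking (or an infinitesimal perturbation) selects the incorrect reward, giving $\piminmax=\pidposft=\pi_{r_2}^\beta$---the KL-regularized policy induced by $r_2$---for every $\alpha\ge 0$ and $\beta>0$. This simultaneously proves the claimed equality $J(\widetilde{\pi})-J(\pidposft)=J(\widetilde{\pi})-J(\piminmax)$ and reduces both parts to lower-bounding $J(\widetilde{\pi})-J(\pi_{r_2}^\beta)$ on $\cE$.

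The remaining step is a closed-form regret computation. Writing $\pi\ldef\pi_{r_2}^\beta$ with $\pi(a)\propto\piref(a)e^{r_2(a)/\beta}$, and expanding using $J(\widetilde{\pi})=\tfrac12$, $r_1(a_1)=1$, $r_1(a_2)=0$, a direct calculation gives the identity $J(\widetilde{\pi})-J(\pi)=\tfrac12\big(\pi(a_2)-\pi(a_1)\big)+\delta\,\pi(a_3)$. For part 2 ($\beta\le(2\log n)^{-1}$, i.e. $e^{1/(2\beta)}\ge n$) I would bound the normalizer to show $\pi(a_2)\ge c$ and $\pi(a_1)=O(n^{-2})$, so the first term alone is $\ge c_2$: this is the overoptimization regime, where $\pi$ concentrates on the spurious greedy action $a_2$. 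For part 3 ($\beta\ge(2\log n)^{-1}$) the first term may vanish, so I would instead retain the second term and show $\pi(a_3)\ge c$ uniformly over this range; the key point is that $\delta/\beta\le 2\delta\log n=O(1)$ here, so the density ratio $\pi(a_3)/\piref(a_3)$ cannot be pushed to $0$ and the regret is $\ge\delta\cdot c=\Theta(1/\log n)=c_3/\log n$.

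The main obstacle is precisely part 3. In the crude illustrative construction $\piref$ is bounded away from $\widetilde{\pi}$, so every $\beta$ already yields constant regret and nothing need be said about large $\beta$; here I have deliberately engineered the bias to \emph{vanish} as $\beta\to\infty$, so I must verify it does not vanish faster than $1/\log n$ anywhere on $[(2\log n)^{-1},\infty)$. This rests on two quantitative facts: that $\delta$ is calibrated to $\Theta(1/\log n)$ so that $\piref$'s own suboptimality against $\widetilde{\pi}$ is exactly $\Theta(1/\log n)$, and that $\pi_{r_2}^\beta(a_3)$ is bounded below by a universal constant throughout the regime via a uniform normalizer estimate. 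The only delicate bookkeeping is making $c_1,c_2,c_3$ genuinely universal: the estimate $\pi_{r_2}^\beta(a_3)\ge c$ needs $\piref(a_3)=\tfrac{n-2}{2n}$ bounded away from $0$, which holds once $n$ is bounded away from $2$, so the small-$n$ endpoints (where $\log n=O(1)$ and a cruder constant bound suffices) are handled separately.
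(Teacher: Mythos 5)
Your overall strategy is the same as the paper's: the uninformative-data event $\cE$ (no $a_1$ or $a_2$ in $\cD_\pref$, probability at least a universal constant), the $a_1\leftrightarrow a_2$ symmetry of $(\piref,\cR)$ on $\cE$ forcing equal objective values at $r_1$ and $r_2$ so that adversarial tie-breaking makes both the min--max policy (\cref{eq:minmax}) and the DPO+SFT policy (\cref{eq:rpo2}) equal to the KL-regularized policy $\pi_{r_2}$ induced by the wrong reward, followed by a two-regime computation of $J(\widetilde{\pi})-J(\pi_{r_2})$. The one substantive difference is how the $\Theta(1/\log n)$ large-$\beta$ bias is engineered: the paper adds a second context $x_1$ on which $r_1=r_2$, with $a_0$ earning $\zeta\asymp 1/\log n$ and all other actions earning $0$, whereas you keep a single context and set $r_i(a_3)=\tfrac12-\delta$ with $\delta\asymp 1/\log n$. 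For $n\geq 3$ your variant checks out: the regret identity $J(\widetilde{\pi})-J(\pi)=\tfrac12(\pi(a_2)-\pi(a_1))+\delta\,\pi(a_3)$ is correct, the small-$\beta$ bounds ($\pi(a_2)\geq\tfrac14$, $\pi(a_1)\leq n^{-2}$) go through, and in the large-$\beta$ regime the normalizer estimate gives $\pi(a_3)\geq c$ because $\delta/\beta\leq 2\delta\log n=O(1)$ and $\piref(a_3)=\tfrac{n-2}{2n}\geq\tfrac16$.

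The genuine gap is at $n=2$, which the proposition must cover, and your patch---that ``a cruder constant bound suffices'' for small $n$---is false for your construction. When $n=2$ you have $\piref(a_3)=0$, so the $\delta$-perturbation is invisible: $\pi_{r_2}$ is supported on $\{a_0,a_1,a_2\}$ and the regret is exactly $\tfrac12\bigl(\pi_{r_2}(a_2)-\pi_{r_2}(a_1)\bigr)$. Moreover $J(\piref)=\tfrac12\cdot\tfrac12+\tfrac14\cdot 1=\tfrac12=J(\widetilde{\pi})$ exactly, and $\pi_{r_2}\to\piref$ as $\beta\to\infty$, so the regret tends to $0$; hence no constant $c_3$ can make part 3 hold uniformly over $\beta\geq(2\log 2)^{-1}$. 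This failure mode is precisely what the paper's two-context design avoids: there, the comparator's $\zeta$-advantage in context $x_1$ is against actions carrying total reference mass $\tfrac12$ (not against $a_3$ alone), so the bias remains of order $\zeta\asymp 1/\log n$ for every $\beta$ in the large regime and every $n\geq 2$. To complete your proof you would need to either import that second context, or relocate the $\delta$-gap onto actions whose $\piref$-mass is bounded below uniformly over all $n\geq 2$.
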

On the other hand, we observe that for the instance in
\cref{prop:rpo_lower}, \algshort (via \cref{thm:main}) with
$\beta\propto1/\sqrt{n}$ and the class $\Pi=\crl*{\pi(a\mid{}x)=\piref(a\mid{}x)\cdot\link^{-1}(\beta^{-1}(r(x,a)-Z_r(x)))\mid{}r\in\cR}$
achieves
\[
  J(\pitil) - J(\pihat) \approxleq \sqrt{\frac{(\cC^{\pitil})^2}{n}}\approxleq\sqrt{\frac{1}{n}},
\]
highlighting the fact that \algshort meaningfully adapts to
single-policy concentrability even when the technical conditions
required by DPO+SFT do not hold; see also \cref{sec:understanding}. We
find this conclusion to be somewhat surprising, as
\citet{xie2024exploratory} show that an \emph{optimistic} counterpart
to \cref{eq:rpo2}, which negates the SFT term, enjoys strong
guarantees for online alignment with general policy classes without
requiring convexity.

Although our construction does not establish inconsistency in the
$\beta\geq (2\log(n))^{-1}$ regime, in general, DPO+SFT will incur
$\bigoh(\beta)$ bias if one aims to compete with the optimal
policy. Due to restriction that $\beta$ must be rather large, this
results in an exponentially slower rate of convergence than
$\algshort$.

\begin{proof}[\pfref{prop:rpo_lower}]
    Let $n\in\bbN$ with $n\geq{}2$ be given. We consider a problem
    instance with $\cX=\crl{x_1,x_2}$ and $\cA=\crl*{a_0,a_1,a_2,a_3}$,
    so that $\abs*{\cA}=4$. We define a reward class with two reward
    functions $\cR:= \{r_1,r_2\}$ as follows. For $i\in\crl{1,2}$: 
    \begin{align*}
      &r_i(x_1,a_0) = \zeta , \mathand r_i(x_1,a_1) = r_i(x_1,a_2) = r_i(x_1,a_3) = 0\\
      &r_i(x_2,a_0) = 1/2, \quad r_i(x_2,a_i) = 1, \mathand r_i(x_2,a_j) = 0\;\; \forall j \ne i.
    \end{align*}
    Here $\zeta \in [0,1]$ will be chosen at the end of the proof.
    The context distribution is $\rho = \unif(\cX)$, and we define
    $\piref$ for each $x_i\in\crl{x_1,x_2}$ via
    \begin{align*}
      \piref(a_0 \mid x_i) = 1/2, \quad \piref(a_1 \mid x_i) = \piref(a_2 \mid x_i) = 1/(2n), \mathand \piref(a_3 \mid x_i) = (n-2)/(2n).
    \end{align*}
    Let $r_1$ be the true reward function. Recall that $\cD_{\pref}=\crl*{(x,\ap,\am)}$
consists of $n$ tuples $(x,\ap,\am)$ obtained by sampling $x \sim \rho$ and a pair of actions
$(a,b)\sim\piref$ and labeling them as $(\ap,\am)$ via the
Bradley-Terry model in \cref{eq:bt} with reward $r_1$. Define a
``bad'' event under this process:
\begin{align*}
  \cE \ldef{} \crl*{\text{No tuples in $\cD_\pref$ contain $a_1$ or $a_2$}}.
\end{align*}
We can lower bound the probability of $\cE$ as follows:
\begin{align*}
  \bbP[\cE^{\setcomp}] &\leq \bbP[\text{$a_1$ in $\cD_\pref$}] +
                         \bbP[\text{$a_2$ in $\cD_\pref$}] \\
                       &= 2(1 - (1-1/2n)^n)
  \leq{}  2(1 - e^{-1/2}(1 - 1/(4n))) \leq 2 ( 1 - 7e^{-1/2}/8) \leq 0.94,
\end{align*}
where the first inequality uses that $(1 - x/n)^n\geq e^{-x}(1 -
x^2/n)$ for $n\geq 1$ and $|x| < n$. We conclude that
\begin{align*}
  \bbP[\cE] \geq 0.06 =: c_1.
\end{align*}
Let $\cL(r; \cD_\pref)\ldef{}-\frac{1}{n}\sum_{(x,\ap,\am)\in\cD_\pref}\log\sigma\brk*{r(x,\ap)-r(x,\am)}$
denote the \dpo loss. Observe that conditioned on $\cE$, we have that $\cL(r_1;\cD_\pref) = \cL(r_2;\cD_\pref)$. Noting that
\begin{align*}
  \max_{\pi}\crl*{\bbE_{\pi}[r] - \bbE_{\piref}[r] - \beta \Dkl{\pi}{\piref}}
=   \bbE_{\pi_r}[r] - \bbE_{\piref}[r] - \beta \Dkl{\pi_r}{\piref},
\end{align*}
is the same for both $r \in \cR$, we see that both $r_{1}$ and $r_2$
optimize the minimax objective in \cref{eq:minmax}. Thus, breaking
ties adversarially, we can choose $\piminmax=\pi_{r_2}$ under $\cE$
for all values of $\beta>0$ and $\alpha\geq{}0$. By the equivalence
between the minimax objective in \cref{eq:minmax} and the DPO+SFT
objective
in~\cref{eq:rpo2}~\citep{liu2024provably,cen2024value,fisch2024robust},
for $\Pi=\crl*{\pi_{r_1},\pi_{r_2}}$, we can choose
$\pidposft=\pi_{r_2}$ in \cref{eq:rpo2} under $\cE$. Indeed, under
$\cE$, the DPO+SFT objective is equivalent to
$\argmax_{\pi\in\Pi}\En_{\piref}\brk*{\log\pi(a)}$, and $\pi_{r_1}$
and $\pi_{r_2}$ have the same value for this objective.

To conclude we choose $\pitil(\cdot)=a_0$, which has
$\nrm*{\pitil/\piref}_{\infty}=2$. It remains to calculate the
suboptimality gap. 
\[
J(\pitil) - J(\pidposft) =   J(\pitil) - J(\piminmax)
  =   J(\pitil) - J(\pi_{r_2})
\]
under $\cE$. Note that $J(\pitil) = \zeta/2 + 1/4$. We decompose the reward for $\pi_{r_2}$ on instance $r_1$ into two components, corresponding to the two contexts $x_1,x_2$:
\begin{align*}
J(\pi_{r_2})  &= \frac{1}{2}\prn*{\bbE_{a\sim\pi_{r_2}}[ r_1(x_1,a) ] + \bbE_{a\sim\pi_{r_2}}[ r_1(x_2,a) ] } \rdef \frac{1}{2}\prn*{ J_1(\beta) + J_2(\beta) }\\
J_1(\beta) & = \frac{r_1(x_1,a_0)\piref(a_0 \mid x_1) \exp( r_2(x_1,a_0)/\beta)}{Z(r_2,x_1)} = \frac{\zeta /2 \exp(\zeta /\beta)}{1/2 \exp(\zeta/\beta) + 1/2}\\
J_2(\beta) & =\frac{r_{1}(x_2, a_0) \piref(a_0\mid x_2) \exp( r_2(x_2, a_0)/\beta) + r_{1}(x_1, a_{1})\piref(a_{1}\mid x_2) \exp(r_2(x_2, a_{1})/\beta))}{Z(r_2,x_2)}\\
  & ~~~~~~~~~~ = \frac{1/4 e^{1/2\beta} + 1/(2n)}{ 1/2 e^{1/2\beta} + e^{1/\beta}/(2n) + (n-1)/(2n)},
\end{align*}
where $Z(r_2,x)\ldef{}\sum_{a\in\cA}\piref(a\mid x)\exp(r_2(x,a)/\beta)$. 

We first consider the small $\beta$ regime. Here we use the upper
bound $J_1(\beta) \leq \zeta$ and focus on $J_2(\beta)$. Note that
$J_2(\beta)$ is increasing with $\beta$ for $\beta \leq 1/(2\log(n))$. In particular, if we
consider $\beta = 1/(c\log(n))$ for $c\geq 2$, then the expression
above is equal to
\begin{align*}
  J_2(\beta) = \frac{n^{c/2}/4 + 1/(2n)}{n^{c/2}/2 + n^{c-1}/2 + (n-1)/(2n)} \leq
  \frac{n^{c/2}/4 +  1/(2n)}{n^{c/2} + (n-1)/(2n)} \leq 1/4 +
  \frac{1}{2n^{c/2+1}} \leq 3/8,
\end{align*}
where the last inequality holds when $c\geq{}2$ and $n \geq{}2$.  We
set $c=2$, so that as long as $n\geq{}2$, $J(\pi_{r_2})
\leq\frac{3}{8}$. Thus, the suboptimality is
\begin{align*}
J(\pitil)-J(\pi_{r_2}) \geq{} \frac{\zeta}{2} + \frac{1}{4} - \prn*{\frac{\zeta}{2} + \frac{3}{16}} \geq \frac{1}{16}\rdef{} c_2.
\end{align*}

Next consider the regime where $\beta \geq 1/(2\log(n))$. 
Analogously to before, note that $J_2(\beta) \leq 1/2$. On the other hand, $J_1(\beta)$ is monotonically decreasing with $\beta$, so using $\beta \geq 1/(2\log(n))$ we obtain the bound
\begin{align*}
  J_1(\beta) \leq \frac{\zeta\exp(2\zeta\log(n))}{\exp(2\zeta\log(n)) + 1 } = \zeta \cdot \frac{n^{2\zeta}}{n^{2\zeta} + 1}.
\end{align*}
So in this case, the suboptimality is
\begin{align*}
  J(\pitil)-J(\pi_{r_2}) \geq{} \frac{\zeta}{2}\cdot\prn*{ 1 - \frac{n^{2\zeta}}{n^{2\zeta}+1}} \geq \frac{\zeta}{4} \cdot \frac{1}{n^{2\zeta}} = \frac{\log(2)}{16\log(n)},
\end{align*}
if we set $\zeta = \log(2)/(2\log(n))$ which is in $[0,1]$ under the
assumption that $n \geq 2$. 
\end{proof}

\section{Sample Complexity Guarantees for \chisb-RLHF}
\label{sec:rlhf}

The \chis-regularization framework we consider (\cref{sec:framework}) can be used to derive
algorithms beyond just \algshort, and we expect it to find broader use. 
To highlight this, in this section we analyze the algorithm that
directly optimizes a variant of the \chis-regularized RLHF objective
in \cref{eq:chis_reward}; this can be accomplished via policy optimization methods
such as PPO, in the vein of classical RLHF approaches to offline alignment
\citep{christiano2017deep,bai2022training,ouyang2022training,vonwerra2022trl}. As we
will show, a benefit of directly optimizing the RLHF objective is that
it allows us to provide guarantees that avoid dependence
on the $\Vmax$ parameter in \cref{thm:main},  
which may lead to improvement when $\Pi$ includes policies with very
large or very small density ratios $\frac{\pi}{\piref}$.

\paragraph{Algorithm}
Our algorithm, \rlhfalg is displayed in \cref{alg:rlhf}. At the
population level, the algorithm aims to optimize a variant of
\cref{eq:rlhf} that incorporates a small but important modification
that allows us to avoid dependencies on $\frac{\pi}{\piref}$. 
Given \emph{smoothing parameter} $\eta > 0$, define the \emph{smoothed
  \chis-divergence} $\Dsmthchis{\pi}{\piref} \ldef{} \En_\pi\brk[\Big]{\frac{\pi(a\mid{}x)}{\piref(a\mid{}x) + \eta\pi(a\mid{}x)}}$. We aim to find 
\begin{align}
\label{eq:rlhf-smooth}
  \argmax_{\pi} \Jsmth(\pi) 
  \ldef&~
  \En_{\pi} \left[ \rstar(x,a)
  \right]-\beta\Dsmthchis{\pi}{\piref}
  \\
  =&~ \argmax_{\pi}\E_{\pi} \left[ \rstar(x,a) -\beta\frac{\pi(a\mid{}x)}{\piref(a\mid{}x) + \eta\pi(a\mid{}x)}\right].
  \nonumber  
\end{align} 
The smoothing parameter $\eta$ effectively clips the policy ratio in
$\Dsmthchis{\pi}{\piref}$ where $\piref(a|x) \ll \eta\pi(a|x)$;
$\Dchis{\cdot}{\cdot}$ corresponds to the special (non-clipped) case
where $\eta = 0$. In particular, clipping ensures a uniform bound of
the form $\Dsmthchis{\pi}{\piref} \le \eta^{-1}$, whereas the best
bound we can hope for with the unclipped \chis-divergence is
$\Dchis{\pi}{\piref} = \En_\pi\brk[\Big]{\frac{\pi(a|x)}{\piref(a|x)}}
\le \Cinf$. For this reason, smoothing will allow us to obtain guarantees that
avoid dependence on all-policy concentrability or parameters similar
to $\Vmax$.

\begin{algorithm}[tp]
\caption{\rlhfalg}
\label{alg:rlhf}
\begin{adjustbox}{max width=\textwidth}
\begin{minipage}{\linewidth}
\begin{algorithmic}[1]
  \Statex[0] \multiline{{\bfseries input:}
    Reference policy $\piref$, preference dataset $\cDpref$, unlabeled
    context
    dataset $\cDnopref$,
    \chis-regularization coefficient $\beta>0$,
    smoothing parameter $\eta \ge  0$.
    }
  \vspace{3pt}
    \State \textbf{Estimate reward model via maximum likelihood: }
    \begin{align}
      \label{eq:reward-estimation}
        \rhat \leftarrow \argmax_{r \in \Rcal}
      \sum_{(x,\ap,\am) \in
        \cDpref}\log\left[\sigma\left(
        r(x,\ap) - r(x,\am) \right) \right].
      \end{align}
    \label{line:reward-estimation}
    \State Define \chis-regularized RLHF objective:
    \[
      \Jhatsmth(\pi) \ldef{} \frac{1}{n_\nopref}\sum_{x \in
            \cDnopref}\prn*{\En_{a\sim\pi(\cdot|x)}[\wh r(x,a)] - \beta\sum_a \frac{\pi^2(a|x)}{\piref(a|x) + \eta\pi(a|x)}}.
          \]
     \State \textbf{Policy optimization:} Compute $\pihat \in \Pi$ such that 
    \begin{align*}
      \Jhatsmth(\pihat) \ge \max_{\pi\in\Pi}\Jhatsmth(\pi) - \vepsopt.
    \end{align*}
    \vspace{-15pt}
    \label{line:chis-rlhf}
    \State \textbf{return:}
    $\pihat$.
\end{algorithmic}
\end{minipage}
\end{adjustbox}
\end{algorithm}

To optimize \cref{eq:rlhf-smooth}, \cref{alg:rlhf} takes two datasets
as input, along with a user-specified reward model class $\cR$
  and policy class $\Pi$. The first dataset, $\cDpref$, is labeled with human preferences, and is used to learn a reward model $\rhat$ via maximum likelihood estimation in \cref{line:reward-estimation}.
The second, $\cDnopref$, contains \emph{only unlabeled contexts} sampled from $\rho$, 
and is utilized in \cref{line:chis-rlhf} to learn a policy that
approximately maximizes an empirical version of
\cref{eq:rlhf-smooth}. Importantly, because \cref{line:chis-rlhf}
involves an empirical expectation over only contexts, it is a purely
computational problem that we can solve using algorithms like PPO; we
allow for tolerance $\vepsopt$ in \cref{line:chis-rlhf} to accommodate
optimization error from such algorithms.
By using unlabeled contexts in \cref{line:chis-rlhf}, we can obtain
tighter guarantees when $\cDnopref$ is large. This is often the case
in practice, where unlabeled contexts are cheap to obtain, but
preferences can be expensive to query.

\paragraph{Theoretical guarantees}
To analyze \rlhfalg, we make similar assumptions to those utilized in \cref{thm:main} for \algshort. 
Since \rlhfalg utilizes separate reward and policy classes, we require realizability conditions for both. 
Namely, $\Rcal$ must be able to express the true reward function
$\rstar$, and $\Pi$ must include the optimal policy for the
regularized RLHF objective in \cref{eq:rlhf-smooth}.

\begin{assumption}
  \label{ass:reward-realizability} 
  The reward function class satisfies $\rstar \in \Rcal$, and is bounded so that $r(x,a) \in [0, \Rmax]$ for all $r \in \Rcal$ and $(x,a) \in \Xcal\times\Acal$.   
\end{assumption}

\begin{assumption}
\label{ass:rlhf-policy-realizability} 
The policy class $\Pi$ satisfies $\pistarsmth \in \Pi$, where $\pistarsmth$ is the optimal policy for \cref{eq:rlhf-smooth}.     
\end{assumption}

Below is our main sample complexity guarantee for \rlhfalg. While it
is stated for a fixed, $\beta$-dependent smoothing parameter for
compactness, the general version of this result (\cref{thm:rlhf-sample}) allows for general $\eta$. 
\begin{theorem}
  \label{thm:rlhf}
  Let $\beta>0$ be given, and suppose
  \cref{ass:reward-realizability,ass:rlhf-policy-realizability} hold
  any $\eta \in \brk*{0,\frac{\beta}{8\Rmax}}$. With probability at
least $1-\delta$, \rlhfalg (\cref{alg:rlhf}) produces a policy $\wh\pi$ such
that for all policies $\pistar$ simultaneously, we have
\begin{align*}
  &J(\pistar) - J(\pihat) \\
&  \approxleq 
  \Rmax e^{2\Rmax}\cdot\sqrt{\frac{\Cone[\pistar]\log(|\cR|/\delta)}{n}} 
  + \beta\cdot\Cone[\pistar] 
  + \beta^{-1}\cdot\frac{\Rmax^2 e^{4\Rmax}\log(|\cR|/\delta)}{n} 
  + \Rmax \sqrt{\frac{\log(|\Pi|/\delta)}{n_\nopref}}
  + \vepsopt.  
\end{align*} 
In particular, given any comparator policy $\pistar$, we can choose
the regularization parameter $\beta$ to achieve 
\begin{align}
  \label{eq:rlhf_tuned}
  J(\pistar) - J(\pihat) \approxleq \Rmax e^{2\Rmax}\cdot\sqrt{\frac{\Cone[\pistar]\log(|\Rcal|/\delta)}{n}} + \Rmax\sqrt{\frac{\log(|\Pi|/\delta)}{n_\nopref}} + \vepsopt. 
\end{align}
\end{theorem}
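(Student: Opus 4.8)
The plan is to adapt the proof of \cref{thm:main} (sketched in \cref{sec:proof_sketch}) to the setting of \cref{alg:rlhf}, which differs in two respects: it uses separate reward and policy classes, so reward estimation and policy optimization are decoupled, and it regularizes with the \emph{smoothed} \chis-divergence $\Dsmthchishort$ rather than the exact mixed-\chis{} divergence. Fix an arbitrary comparator $\pistar$ and write $\Jsmthr[\rhat](\pi) = \En_\pi[\rhat(x,a)] - \beta\Dsmthchis{\pi}{\piref}$ for the population smoothed objective under the MLE reward $\rhat$. Centering with the reference baselines $J(\piref)$ and $\Jsmthr[\rhat](\piref)$ as in the sketch, I would decompose
\begin{align*}
J(\pistar) - J(\pihat)
&= \underbrace{\big(J(\pistar) - J(\piref)\big) - \big(\Jsmthr[\rhat](\pistar) - \Jsmthr[\rhat](\piref)\big)}_{\mathrm{(I)}}
+ \underbrace{\Jsmthr[\rhat](\pistar) - \Jsmthr[\rhat](\pihat)}_{\mathrm{(II)}} \\
&\qquad + \underbrace{\big(\Jsmthr[\rhat](\pihat) - J(\pihat)\big) - \big(\Jsmthr[\rhat](\piref) - J(\piref)\big)}_{\mathrm{(III)}}.
\end{align*}
Term $\mathrm{(I)}$ is the bias of the regularized objective on $\pistar$, term $\mathrm{(III)}$ is the overoptimization incurred by $\pihat$, and the new term $\mathrm{(II)}$ accounts for the fact that \cref{alg:rlhf} only approximately maximizes an \emph{empirical} objective over $\Pi$, rather than exactly maximizing the population objective over all policies as \algshort implicitly does.

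For $\mathrm{(I)}$ and $\mathrm{(III)}$ I would follow the sketch. A standard Bradley--Terry MLE generalization bound for the estimator in \cref{eq:reward-estimation} (the analog of \cref{eq:vepsstat}, now over the reward class $\cR$) controls the on-reference reward error $\vepsstat^2 \ldef \En_{x\sim\rho,a,b\sim\piref}[|\rdiff[\rhat](x,a,b)-\rdiff[\rstar](x,a,b)|^2] \lesssim \Rmax^2 e^{4\Rmax}\log(|\cR|/\delta)/n$. The reference-centered differences in $\mathrm{(I)}$ and $\mathrm{(III)}$ reduce to relative-reward errors $\En_{\pi,\piref}[\rdiff[\rhat]-\rdiff[\rstar]]$, which the change-of-measure lemma (\cref{lem:informal-dpo-estimation}/\cref{lem:clip-dpo-estimation}) bounds by $\sqrt{(1+\Dchis{\pi}{\piref})\vepsstat^2}$. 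Using $\Dsmthchis{\pistar}{\piref}\le\Dchis{\pistar}{\piref}\lesssim\Cone[\pistar]$ gives $\mathrm{(I)} \lesssim \sqrt{\Cone[\pistar]\,\vepsstat^2} + \beta\,\Cone[\pistar]$, while in $\mathrm{(III)}$ the smoothed regularizer $-\beta\Dsmthchis{\pihat}{\piref}$ cancels the on-policy factor via AM--GM, yielding $\mathrm{(III)} \lesssim \vepsstat + \beta^{-1}\vepsstat^2 + \beta$; the stray additive $\beta$ is absorbed into the bias term since $\Cone[\pistar]\ge1$.

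For $\mathrm{(II)}$, let $\bar\pi$ denote the maximizer of $\Jsmthr[\rhat]$ over \emph{all} policies; by realizability (\cref{ass:rlhf-policy-realizability}, applied to the induced reward-parameterized class so that the global maximizer $\bar\pi$ lies in $\Pi$) together with global optimality, $\Jsmthr[\rhat](\pistar)\le\Jsmthr[\rhat](\bar\pi)$, so it suffices to bound $\Jsmthr[\rhat](\bar\pi)-\Jsmthr[\rhat](\pihat)$. Since $\pihat$ maximizes the empirical objective $\Jhatsmth$ over $\Pi$ up to $\vepsopt$, and $\Jhatsmth$ differs from $\Jsmthr[\rhat]$ only in replacing $\En_{x\sim\rho}$ by the average over the unlabeled dataset $\cDnopref$, a Hoeffding bound with a union bound over the finite class $\Pi$ gives $\sup_{\pi\in\Pi}|\Jhatsmth(\pi)-\Jsmthr[\rhat](\pi)| \lesssim R\sqrt{\log(|\Pi|/\delta)/n_\nopref}$, where $R$ is the range of the per-context integrand. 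Here the choice $\eta\asymp\beta/\Rmax$ is essential: it caps the smoothed penalty at $\beta\Dsmthchis{\pi}{\piref}\le\beta/\eta = O(\Rmax)$, so that $R = O(\Rmax)$ and $\mathrm{(II)} \lesssim \vepsopt + \Rmax\sqrt{\log(|\Pi|/\delta)/n_\nopref}$ with no dependence on $\Vmax$. Combining the three terms and tuning $\beta\propto(\vepsstat^2/\Cone[\pistar])^{1/2}$ for the fixed comparator yields \cref{eq:rlhf_tuned}.

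The main obstacle is the interaction between the smoothed regularizer and the reward-transfer step in $\mathrm{(III)}$. The change-of-measure lemma as stated produces the \emph{unsmoothed} factor $\sqrt{(1+\Dchis{\pihat}{\piref})\vepsstat^2}$, whereas the objective only supplies the weaker penalty $\beta\Dsmthchis{\pihat}{\piref}\le\beta\Dchis{\pihat}{\piref}$ to cancel it. I would resolve this by proving a smoothed analog of \cref{lem:clip-dpo-estimation}, bounding $\En_{\pihat,\piref}[\,|\rdiff[\rhat]-\rdiff[\rstar]|\,]$ by $\sqrt{(1+\Dsmthchis{\pihat}{\piref})\vepsstat^2}$ up to an $\eta$-dependent correction (arising from the Cauchy--Schwarz weight $w=\piref/(\piref+\eta\pihat)$), then showing this correction is controlled precisely because $\eta\le\beta/(8\Rmax)$. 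Verifying that the correction does not degrade the rate---so that smoothing simultaneously (i) preserves enough \chis{}-cancellation to bound overoptimization and (ii) bounds the objective by $O(\Rmax)$ to remove the $\Vmax$ dependence---is the delicate part; the general statement with arbitrary $\eta$ (\cref{thm:rlhf-sample}) makes this tradeoff explicit.
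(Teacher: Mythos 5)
Your proposal is correct in substance and follows the same core route as the paper's proof (\cref{thm:rlhf-sample}, instantiated as \cref{cor:rlhf-smooth}): a pessimism-based regret decomposition, the MLE reward bound (\cref{lem:rlhf-reward}), and a smoothed change-of-measure lemma whose role you correctly single out as the crux. The lemma you propose to prove is exactly the paper's \cref{lem:rlhf-estimation}, $\En_{\pi,\piref}\brk*{\abs{\rdiff[\rhat]-\rdiff[\rstar]}}\le 2\sqrt{\Csmth[\pi]\cdot\vepsstat^2}+2\eta\Rmax\Csmth[\pi]$, and the $\eta$-dependent correction is indeed killed by AM--GM precisely because $\eta\le\beta/(8\Rmax)$, as you anticipate. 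The differences are organizational, but they have consequences. First, you run pessimism at the population level and control the empirical-vs-population gap with a single Hoeffding bound over $\Pi$, whose range is $O(\Rmax)$ only because smoothing caps the per-context regularizer at $\beta/\eta$; this ties your argument to $\eta=\Theta(\beta/\Rmax)$ and cannot reach small $\eta$. The paper instead decomposes with the empirical objective and concentrates the return (Hoeffding) and the regularizer (Bernstein, \cref{lem:chismooth-bernstein}) separately, absorbing the multiplicative Bernstein slack into the regularization penalty; this is what yields the general-$\eta$ statement \cref{thm:rlhf-sample}, with $\min\crl{\Cinf,\eta^{-1}}$ error terms, and the $\eta=0$ corollary \cref{cor:rlhf-nosmooth}. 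Since the clean rate in \cref{thm:rlhf} is only achieved at $\eta\asymp\beta/\Rmax$ anyway, your restriction does not damage the stated result, though it loses the paper's more general conclusions. Second, your term (II) routes through the global maximizer $\bar\pi$ of $\Jsmthr[\rhat]$ and hence needs $\bar\pi\in\Pi$, i.e., realizability of the smoothed-optimal policy for the \emph{estimated} reward $\rhat$; this is strictly stronger than \cref{ass:rlhf-policy-realizability}, which concerns only $\rstar$. You flag this, and the paper is no cleaner on this point: its key step $\Jhatsmth(\pistar)\le\Jhatsmth(\pihat)$ silently requires $\pistar\in\Pi$ (and drops $\vepsopt$), so it too does not literally cover arbitrary comparators under the stated assumption. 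Your routing at least recovers the ``for all $\pistar$ simultaneously'' claim once the strengthened realizability is granted, which is a modest advantage of your organization.
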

Above, we see that \rlhfalg, like \algshort, has sample complexity
that scales only with the single-policy concentrability coefficient
$\Cone[\pistar]$, and holds for all comparator policies $\pistar$
simultaneously. Since the choice of $\beta$ induces a similar
bias-overoptimization tradeoff in the first statement of
\cref{thm:rlhf} as it did in \cref{thm:main} for \algshort, we focus
our discussion on the guarantee for a tuned choice of $\beta$
(\cref{eq:rlhf_tuned}). The first term in \cref{eq:rlhf_tuned} accounts for
the reward estimation error (\cref{line:reward-estimation}) and scales
with $\Cone[\pistar]$; as before, this accounts for how well rewards
estimated from $\piref$ transfer to other candidate policies. The
second term in \cref{eq:rlhf_tuned} accounts for the statistical error
from sampled contexts used in \cref{line:chis-rlhf} for policy
optimization.  In particular, it is possible to drive this term to be much smaller than the first by using a larger unlabeled context dataset, which is typically far cheaper to acquire.

\paragraph{Computationally efficiency}
\cref{thm:rlhf} bounds the sample complexity of \rlhfalg under the
assumption that we can solve \cref{line:chis-rlhf} up to
$\vepsopt$-accuracy. This is a purely computational problem, and in
practice it can be solved using policy gradient methods such
as PPO.

\paragraph{Comparison to \algshortb}
Unlike \algshort (\cref{thm:main}), \cref{thm:rlhf} has no dependence
on the parameter $\Vmax$ or quantities such as $\frac{\pi}{\piref} \le
\max_\pi \Cinf$. We primarily attribute this to the fact that \rlhfalg uses an explicit reward function class $\Rcal$, and normalizing or clipping it to the reward range $\Rmax$ is both natural and routinely done in practice \citep{shah2015estimation,christiano2017deep, ouyang2022training}.
In comparison, the implicit reward models induced by the policy class
$\Pi$ in \algshort can have larger range, and clipping the policy
class in \algshort directly, e.g., so that $|\beta\link(\frac{\pi}{\piref})|$ is bounded, is misguided, because the policy class may lose realizability (\cref{ass:realizability}). 
This is because $\rstar(x,a) = \beta\link\prn[\Big]{\frac{\pistarb(a|x)}{\piref(a|x)}} + \Zr[\rstar](x)$, and the normalization factor $\Zr[\rstar]$ cannot be reasonably accounted for when clipping $\Pi$.
While the $\Vmax$ (\cref{ass:vmax}) parameter involves pairs of action
probabilities, and thereby sidesteps the normalization constant issue,
it may not always be practical to modify $\Pi$ so that $\Vmax$ is
bounded, since this would require checking all pairs of each policy's
action probabilities.

However, using an explicit reward function class alone is not
enough. As discussed previously, when we move from implicit to
explicit \chis-regularization, incorporating the smoothing parameter $\eta$ in \cref{eq:rlhf-smooth}
is essential to avoid statistical errors due to policies with
large density ratios when we approximate the \chis-regularizer with
empirical data. A careful
choice of $\eta = \beta/\Rmax$ in \cref{thm:rlhf} balances the
benefits of clipping against the bias it introduces. Without smoothing
(i.e., $\eta = 0$), a guarantee that depends on $\max_\pi \Cinf[\pi]$
for \rlhfalg would be unavoidable, since the sample complexity must scale with the range
of the problem, which grows with the magnitude of the regularizer.
See \cref{cor:rlhf-nosmooth} in \cref{sec:proofs_rlhf} for a guarantee in the case where
$\eta=0$, which highlights this.

\arxiv{
  \section{Experiment details}
  \label{sec:experiment-details}
}

\paragraph{Dataset and models} For training, we use \href{https://huggingface.co/datasets/trl-internal-testing/tldr-preference-trl-style}{\texttt{trl-internal-testing/tldr-preference-trl-style}}, with 92.9K train samples and 83.8K validation samples. The reference policy $\piref$ is the Pythia-1b model \citep{biderman2023pythia} pre-trained on SFT data (\href{https://huggingface.co/cleanrl/EleutherAI_pythia-1b-deduped__sft__tldr}{\texttt{cleanrl/EleutherAI\_pythia-1b-deduped\_\_sft\_\_tldr}} from \cite{huang2022cleanrl}), and performance is measured via winrate against a baseline, as judged by GPT-4o. All parameters that are not algorithm-specific, such as the learning rate, are shared by both \algshort and \dpo in order to ensure a fair comparison.

\paragraph{Training details} Our implementation of \algshort is built upon the \dpo trainer from Transformer Reinforcement Learning (TRL) \citep{vonwerra2022trl}. \algshort comes with strong robustness and theoretical properties, but the policy ratios can sometimes introduce instability in training. In practice, we have observed that better stability and performance can be achieved by utilizing the (more general form) link function $\wt\phi(z) \ldef{} \exp\prn*{\clip_{[-88,20]}\prn*{\alpha \cdot \log z}} + \gamma \cdot \log z$ in \cref{alg:main}, and performing a small grid search over additional parameters $\alpha = \{\frac{1}{4}, 1\}$ and $\gamma = \{0.1, 1\}$ for a fixed $\beta$. 

We briefly discuss each parameter in turn. The mixing parameter $\gamma$ controls the relative ratios of KL- and \chis-regularization, our analysis in \cref{sec:main-mix} shows that \cref{thm:main} holds more generally for $\gamma \in (0, 1]$ (see \cref{thm:main-mix}). Next, ignoring clipping, $\alpha \in (0, 1]$ in $\wt\phi$ implements regularization with the $(1+\alpha)$-divergence (or Renyi divergence), which is an $f$-divergence that is stronger than KL-regularization but weaker than \chis-regularization \citep{van2014renyi}, and also carries single-policy concentrability guarantees (although with a slower-rate dependence on sample size $n$). 
For example, $\alpha = \frac{1}{4}$ corresponds to the link function 
$\phi(z) = \prn{z}^{1/4} + \gamma\log z$, 
which is easier to optimize than the link function 
$\phi(z) = z + \gamma \log z$ (corresponding to $\alpha = 1$) 
induced by \chis-regularization, 
given the potentially large magnitude of $z = \frac{\pi}{\piref}$. 
Though we do not write out the analysis here, the methods used to prove 
the sample complexity of \algshort (\cref{thm:main})
can be used to prove analogous guarantees for regularization with $\alpha$-divergences,
which will have slightly worse statistical rates. 

Lastly, we provide some additional explanation for the clipping operation. We observed that \texttt{torch.exp} is prone to underflow when $\log\frac{\pi}{\piref}$ is very negative, and clipping the upper range to 20 can help reduce numerical instabilities. Clipping in such a manner is supported by our analysis in \arxiv{\cref{prop:conc_bounds}}, which shows that $\frac{\pistar}{\piref} \le 1 + \frac{\Rmax}{\beta}$ (though technically we do not know $\Rmax$). 
The parameters for all experiments are displayed in \cref{tab:parameters}. 

\begin{table}[h]
    \centering
    \caption{Parameter settings in TL;DR summarizion}
    \begin{tabular}{@{}ll@{}}
        \toprule
        \textbf{Algorithm} & \textbf{Parameters} \\ \midrule
        \dpo 
            & batch size: 64 \\
            & learning rate: 1e-6 \\
            & scheduler: cosine \\
            & optimizer: adamw \\
        \midrule
        \algshort 
            & batch size: 64 \\
            & clip range: [-88, 20] \\
            & learning rate: 1e-6 \\
            & scheduler: cosine \\
            & optimizer: adamw \\
        $\beta=0.05$, 1 epoch & $\alpha: 1.25, \gamma: 1.0$ \\
        $\beta=0.05$, 2 epochs & $\alpha: 2.00, \gamma: 1.0$ \\
        $\beta=0.05$, 4 epochs & $\alpha: 1.25, \gamma: 0.1$ \\
        $\beta=0.005$, all epochs & $\alpha: 1.25, \gamma: 0.1$ \\
        \bottomrule
    \end{tabular}
    \label{tab:parameters}
\end{table}

\paragraph{Generation details} For winrate evaluation, we use greedy, temperature 0, decoding. For computation of the KL divergence, we sample from the model with temperature 1. The maximum prompt length is 512, and the maximum response length is 200. We use the standard generation prompt ``TL;DR:'' \citep{gao2024rebel}.

\paragraph{Evaluation of performance} The performance of each algorithm is measured via winrate against responses in the SFT dataset, as measured by GPT-4o (global standard). The winrate is computed on a subset of 512 prompts from the SFT validation set  (\href{https://huggingface.co/datasets/trl-internal-testing/tldr-preference-sft-trl-style}{\texttt{trl-internal-testing/tldr-preference-sft-trl-style}}), and the order of the model and reference responses are randomized each round.

\arxiv{

\section{Applying \algshortb to the Token-Level MDP}
\label{sec:token}

  We formalize the offline alignment problem as a (preference-based) contextual bandit
  problem. Other works \citep{rafailov2024r,xie2024exploratory}
  instead adopt a \emph{token-level MDP} formulation for alignment. In the
token-level MDP with horizon $H$, the
initial state $s_1\sim\rho$
represents a prompt, each action $a_h$ represents a token (with $\cA$
representing the vocabulary), and the state
$s_h=(s_1,a_1,\ldots,a_{h-1})$ is the prompt and sequence of tokens so
far. The language model policy $\pi$ maps
the current state $s_h=(s_1,a_1,\ldots,a_{h-1})$ to a distribution
over the next token $a_h\sim\pi(s_h)$, and the final trajectory
$\tau=(s_1,a_1),\ldots,(s_H,a_H)$ produced by this process represents the language model's response to the prompt $s_1$.

To apply \algshort to the token-level MDP, we assume access to a dataset
of labeled responses $\cD_\pref = \crl*{(s_1,\taup,\taum)}$ which is
labeled according to the Bradley-Terry model
\begin{align}
\label{eq:bt_traj}
\bbP(\tau\psdgt\tautil\mid{}s_1) = \frac{\exp\prn*{r(\tau\mid{}s_1)}}{\exp\prn*{r(\tau\mid{}s_1)} + \exp\prn*{r(\wt{\tau}\mid{}s_1)}}
\end{align}
for an unknown trajectory-level reward function
$r(\tau\mid{}s_1)$. Defining
$\pi(\tau\mid{}s_1)=\prod_{h=1}^{H}\pi(a_h\mid{}s_h)$, the \algshort
objective takes the form
    \begin{align}
      \label{eq:chi_dpo_token}
        \pihat \leftarrow \argmax_{\pi \in \Pi}
        \sum_{(x,\taup,\taum) \in
        \Dcal_\pref}\log\left[\sigma\left(
        \clip_{2\Rmax}\brk*{\beta\link\prn*{\frac{\pi(\taup\mid s_1)}{\piref(\taup\mid s_1)}} -
        \beta\link\prn*{\frac{\pi(\taum\mid s_1)}{\piref(\taum\mid s_1)}}} \right) \right],
    \end{align}
    which can be derived by 
    reparameterizing the objective
    \[
      \Jmix(\pi) =
      \En_{s_1\sim\rho,\tau\sim\pi\mid{}s_1}\brk*{r(\tau\mid{}s_1)}
      -\beta\cdot{}\Dchis{\pi}{\piref}-\beta\cdot\Dkl{\pi}{\piref},
    \]
    where
    $\Dchis{\pi}{\piref}=\frac{1}{2}\En_{s_1\sim\rho,\tau\sim\piref\mid{}s_1}\brk[\big]{\prn[\big]{\frac{\pi(\tau\mid{}s_1)}{\piref(\tau\mid{}s_1)}-1}^2}$
    and
    $\Dkl{\pi}{\piref}=\En_{s_1\sim\rho,\tau\sim\pi\mid{}s_1}\brk[\big]{\log\frac{\pi(\tau\mid{}s_1)}{\piref(\tau\mid{}s_1)}}$
    are the trajectory-level \chis- and KL-divergence.

From a statistical perspective, the token-level MDP formulation is
identical to the contextual bandit formulation, treating the
trajectory $\tau$ as a composite action, and \cref{eq:chi_dpo_token}
coincides with \cref{eq:chi_dpo} under this
interpretation. Consequently, \cref{thm:main} applies as-is to the
token-level \algshort objective in \cref{eq:chi_dpo_token}. In
particular, as long as $\pistarb\in\Pi$, where $\pistarb$ is the policy that
satisfies
\[
r(\tau\mid{}s_1) = \beta\link\prn*{\frac{\pistarb(\tau\mid{}s_1)}{\piref(\tau\mid{}s_1)}} + \Zklr[r](s_1),
\]
token-level \algshort ensures that with probability at least $1-\delta$, for all
$\pistar\in\Pi$,
    \begin{align}
    J(\pistar) - J(\pihat) \approxleq \Vmax e^{2\Rmax}\cdot\sqrt{\frac{\Cone[\pistar]\log(|\Pi|/\delta)}{n}} + \beta\cdot\Cone[\pistar] + \beta^{-1}\cdot\frac{\Vmax^2 e^{4\Rmax}\log(|\Pi|/\delta)}{n},
    \end{align}
    where $\Cone[\pi]\ldef{}\En_{s_1\sim\rho,\tau\sim{}\pi\mid{}s_1}\brk*{\frac{\pi(\tau\mid{}s_1)}{\piref(\tau\mid{}s_1)}}$.

  }

\part{Proofs}

\section{Preliminaries}
\label{sec:technical}

Recall that for a pair of probability measures $\bbP$ and $\bbQ$ with
a common dominating measure $\omega$, Hellinger distance is defined
via
\begin{align}
\Dhels{\bbP}{\bbQ}=\int\prn*{\sqrt{\frac{\mathrm{d}\bbP}{\mathrm{d}\omega}}-\sqrt{\frac{\mathrm{d}\bbQ}{\mathrm{d}\omega}}}^2\mathrm{d}\omega.
\end{align}

\begin{lemma}[MLE for conditional density estimation (e.g., \citet{wong1995probability,Sara00,zhang2006from,agarwal2020flambe})]
  \label{lem:mle}
  Consider a conditional density $p^\star : \cX \rightarrow \Delta(\cY)$, where $\cX$ is the instance space and $\cY$ is the target space. 
  Let $\cD = \{(x\ind{i},y\ind{i})\}_{i=1}^n$ be a dataset in which $(x\ind{i},y\ind{i})$ are drawn i.i.d. as $x\ind{i} \sim \rho \in \Delta(\cX)$ and $y\ind{i} \sim p^\star(y\mid{}x)$. Suppose we have a finite function class $\cP$ such that $p^\star \in \cP$, where $p(\cdot\mid{}x) \in \Delta(\cY)$ for all $p \in \cP$ and $x \in \cX$. Define the maximum likelihood estimator
  \begin{align*}
    \wh p \ldef \argmax_{p \in \cP} \sum_{(x,y) \in \cD} \log p(y\mid{}x).
  \end{align*}
  Then with probability at least $1-\delta$, 
  \begin{align*}
    \En_{x \sim \rho}\brk*{\Dhels{\wh p(\cdot\mid{}x)}{p^\star(\cdot\mid{}x)}} \le \frac{2\log(|\cP|\delta^{-1})}{n}.
  \end{align*}    
\end{lemma}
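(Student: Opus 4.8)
The plan is to establish this via the classical connection between maximum-likelihood optimality and the Hellinger affinity, following the standard exponential-moment argument (as in \citet{wong1995probability,Sara00,zhang2006from}). The starting point is the defining property of the MLE: since $\wh p$ maximizes $\sum_{(x,y)\in\cD}\log p(y\mid x)$ over $\cP\ni p^\star$, we have $\sum_{(x,y)\in\cD}\log\frac{\wh p(y\mid x)}{p^\star(y\mid x)}\ge 0$, which after halving and exponentiating becomes the multiplicative statement $\prod_{(x,y)\in\cD}\sqrt{\wh p(y\mid x)/p^\star(y\mid x)}\ge 1$. This reduces the problem to showing that, with high probability, every $p\in\cP$ whose population Hellinger distance from $p^\star$ is large makes this product strictly less than $1$, so that the MLE cannot be such a $p$.

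The key step is to compute the expected square-root likelihood ratio and tie it to the Hellinger affinity. For a fixed $p\in\cP$, conditioning on $x\sim\rho$ and using $y\sim p^\star(\cdot\mid x)$,
\[
\En_{y\sim p^\star(\cdot\mid x)}\brk*{\sqrt{\frac{p(y\mid x)}{p^\star(y\mid x)}}} = \int\sqrt{p(y\mid x)\,p^\star(y\mid x)}\,\mathrm{d}y = 1 - \tfrac12\Dhels{p(\cdot\mid x)}{p^\star(\cdot\mid x)}.
\]
Taking expectation over $x$ and invoking independence of the $n$ samples, the expected product factorizes, and using $1-u\le e^{-u}$ gives $\En\brk[\big]{\prod_{(x,y)\in\cD}\sqrt{p(y\mid x)/p^\star(y\mid x)}}\le \exp\prn[\big]{-\tfrac n2\,\En_{x\sim\rho}\brk*{\Dhels{p(\cdot\mid x)}{p^\star(\cdot\mid x)}}}$. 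A single application of Markov's inequality to this nonnegative product, with threshold $\exp\prn[\big]{-\tfrac n2\,\En_x[\Dhels{p}{p^\star}] + \log(|\cP|/\delta)}$, bounds the failure probability for each fixed $p$ by $\delta/|\cP|$; a union bound over the finite class $\cP$ then controls all $p$ simultaneously with probability at least $1-\delta$.

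Combining the two pieces finishes the argument: on the good event, instantiating the uniform bound at $p=\wh p$ and using the deterministic inequality $\prod_{(x,y)\in\cD}\sqrt{\wh p(y\mid x)/p^\star(y\mid x)}\ge 1$ forces $0\le -\tfrac n2\,\En_x[\Dhels{\wh p}{p^\star}] + \log(|\cP|/\delta)$, i.e. $\En_{x\sim\rho}[\Dhels{\wh p(\cdot\mid x)}{p^\star(\cdot\mid x)}]\le \frac{2\log(|\cP|/\delta)}{n}$, exactly as claimed. The only delicate points — and hence where I would be most careful — are (i) getting the affinity identity and the factor of $\tfrac12$ exactly right so that the final constant is $2$ rather than something larger, and (ii) handling the conditional structure correctly via the tower property, since each $y\ind{i}$ is drawn from $p^\star(\cdot\mid x\ind{i})$ with $x\ind{i}\sim\rho$ rather than from a single fixed distribution. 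There is no genuine analytic obstacle here; the entire content lies in packaging the exponential-moment bound and the union bound cleanly.
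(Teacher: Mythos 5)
Your proof is correct, and it is precisely the standard exponential-moment argument (Hellinger affinity plus Markov and a union bound) from the references the paper cites for this lemma; the paper itself states the result without proof, deferring to exactly this argument. All the delicate points you flag are handled correctly: the affinity identity $\int\sqrt{p\,p^\star} = 1 - \tfrac12 D^2_{\mathsf{H}}$ gives the right factor so the final constant is $2$, and the tower-property/factorization step over i.i.d.\ pairs $(x\ind{i},y\ind{i})$ is exactly how the conditional structure is meant to be treated.
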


\section{Proofs for \creftitle{sec:main}}
\label{sec:proofs_main}

This section is organized as follows. 
First, in \cref{sec:main-mix}, 
we analyze a more general version of \algshort 
that mixes KL-regularization with \chis-regularization using a mixing
parameter $\gamma \in (0,1]$, 
and present its sample complexity guarantee in \cref{thm:main-mix}.
\algshort is a special case with $\gamma = 1$, 
and \cref{sec:proof-main} shows (with a one-line proof) 
that \cref{thm:main} follows directly from \cref{thm:main-mix} 
with this parameter choice. 

\subsection{General Version of
  \creftitle{thm:main}}\label{sec:main-mix}

As previously described at the end of \cref{sec:theoretical}, 
\algshort can be applied in a more general form where the
KL-regularization is mixed with \chis-regularization using a weight
parameter $\gamma \in (0,1]$. In this section, we analyze the sample complexity for this form of the algorithm, of which \algshort is a special case with $\gamma = 1$, which directly leads to the guarantee in \cref{thm:main}.   

Concretely, given regularization parameter $\beta > 0$ and weight
parameter $\gamma \in (0,1]$, we aim to solve the mixed \chis-regularized objective 
\begin{align}
\label{eq:rlhf-mix}
  \argmax_{\pi : \cX\rightarrow\Delta(\cA)} 
  \Jmixg(\pi) 
  \ldef{} 
  \En_{\pi}\brk*{\rstar(x,a)} 
  - \beta\cdot\Dchis{\pi}{\piref}
  - \beta\gamma\cdot\Dkl{\pi}{\piref}.
\end{align}
The regularization term $\Dchis{\pi}{\piref}+\gamma\cdot\Dkl{\pi}{\piref} = \Dfmixg{\pi}{\piref}$ is an $f$-divergence induced by the function $\fmixg(z) \ldef{} \frac{1}{2}(z-1)^2 + \gamma z\log{}z$. 
Correspondingly, we replace the link function $\phi(\cdot)$ in \algshort with 
\[
  \linkg(z) \ldef{} z + \gamma\log(z), 
\]
and output the policy 
\begin{align}
\label{eq:dpo-mix}
    \pihat \leftarrow \argmax_{\pi \in \Pi}
    \sum_{(x,\ap,\am) \in
    \Dcal_\pref}\log\left[\sigma\left(
    \clip_{2\Rmax}\brk*{\beta\linkmix\prn*{\frac{\pi(\ap\mid x)}{\piref(\ap\mid x)}} -
    \beta\linkmix\prn*{\frac{\pi(\am\mid x)}{\piref(\am\mid x)}}} \right) \right].
\end{align}

To give a sample complexity guarantee for \cref{eq:dpo-mix}, we
require that $\Pi$ can express the optimal regularized policy for the
objective $\Jmixg$ in \cref{eq:rlhf-mix}. This generalizes
\cref{ass:realizability} for \algshort, which corresponds to the
special case where $\gamma=1$.
\begin{assumption}[Policy realizability]
  \label{ass:realizability-mix}
  The policy class $\Pi$ satisfies $\pistarg\in\Pi$, where $\pistarg$
  is the optimal policy under mixed \chis-regularization (\cref{eq:chis_opt}).
\end{assumption}

We also assert that, analogous to \cref{ass:vmax}, the ``implicit'' reward models
induced by the policy class $\Pi$ and the link function $\linkg$ have bounded range. 
\begin{assumption}[Bounded implicit rewards]
\label{ass:vmax-mix}
For a parameter $\Vmax\geq{}\Rmax$, it holds that for all $\pi\in\Pi$, $x\in\cX$, and $a,b\in\cA$,
\begin{equation}
\abs*{\beta\linkg\prn*{\frac{\pi(a\mid{}x)}{\piref(a\mid{}x)}}-\beta\linkg\prn*{\frac{\pi(b\mid{}x)}{\piref(b\mid{}x)}}} \leq \Vmax.
\end{equation}
\end{assumption}

We now state the sample complexity guarantee for the policy learned in
\cref{eq:dpo-mix}. The first bound applies to general $\beta > 0$ and
$\gamma \in (0, 1]$, while in the second we obtain a tight statistical
rate by choosing the parameter $\beta$ as a function of the comparator policy $\pistar$.

\begin{theorem}[General version of \cref{thm:main}]
\label{thm:main-mix}
  Suppose \cref{ass:realizability-mix,ass:vmax-mix} hold for some $\beta > 0$ and $\gamma \in (0, 1]$. 
  With probability at least $1-\delta$, 
  the variant of \algshort in \cref{eq:dpo-mix} produces a policy $\pihat$ such that for all policies $\pistar$ simultaneously, we have 
  \begin{align*}
    J(\pistar) - J(\pihat) 
    \leq 
    32\Vmax e^{2\Rmax}\cdot\sqrt{\frac{2\Cone[\pistar]\log(|\Pi|/\delta)}{n}} 
    +\beta(1+\gamma)\cdot\frac{\Cone[\pistar]}{2}
    + \beta^{-1} \cdot \frac{256\Vmax^2 e^{4\Rmax}\log(|\Pi|/\delta)}{n} . 
  \end{align*} 
  In particular, given any comparator policy $\pistar$, we can choose $\beta= 32\Vmax e^{2\Rmax} \sqrt{\frac{2\log(|\Pi|/\delta)}{n\Cone[\pistar]}}$ to achieve 
  \begin{align*}
    J(\pistar) - J(\wh\pi) 
    \leq 
    \prn*{64+4\gamma}\Vmax e^{2\Rmax} \cdot \sqrt{\frac{\Cone[\pistar]\log(|\Pi|/\delta)}{n}}. 
  \end{align*}
\end{theorem}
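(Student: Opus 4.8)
The plan is to run the same argument sketched for \cref{thm:main}, which is exactly the case $\gamma=1$, and to track the single place where the mixing weight $\gamma$ enters. Write $\rhat(x,a) \ldef \beta\linkg\prn*{\frac{\pihat(a\mid x)}{\piref(a\mid x)}} = \beta\fmixg'\prn*{\frac{\pihat(a\mid x)}{\piref(a\mid x)}}$ for the implicit reward induced by the learned policy. Since $\fmixg'(z)=\linkg(z)=z+\gamma\log z$ has $0\notin\dom(\fmixg')$ for every $\gamma\in(0,1]$, the update \cref{eq:dpo-mix} is precisely maximum-likelihood estimation under the Bradley--Terry model over the induced reward class $\cR_\Pi=\crl{\beta\linkg(\pi/\piref):\pi\in\Pi}$, which realizes $\rstar$ up to an action-independent shift by \cref{ass:realizability-mix}. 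First I would invoke the MLE guarantee \cref{lem:mle} and convert the resulting Hellinger bound into a mean-squared reward-difference bound $\vepsstat^2 \lesssim \Vmax^2 e^{4\Rmax}\,\frac{\log(|\Pi|/\delta)}{n}$ (cf.\ \cref{eq:vepsstat}); here clipping to $[-2\Rmax,2\Rmax]$ together with \cref{ass:vmax-mix} keeps the likelihood curvature bounded, producing the $e^{4\Rmax}$ factor without loss.

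The crux is showing that pessimism survives the \dpo reparameterization. Applying \cref{lem:informal-general-reward-to-policy} with the convex function $f=\fmixg$ (differentiable with $\dom(f)=\bbR_{+}$), and using $\pihat/\piref\in\dom(\fmixg')$ under \cref{ass:vmax-mix}, I would conclude
\[
  \pihat \in \argmax_{\pi\in\Pi}\Jmixgr[\rhat](\pi), \qquad \Jmixgr[\rhat](\pi)=\En_\pi[\rhat]-\beta\Dchis{\pi}{\piref}-\beta\gamma\Dkl{\pi}{\piref},
\]
i.e.\ $\pihat$ maximizes the mixed \chis-regularized objective under its \emph{own} implicit reward. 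Using $\Jmixgr[\rhat](\pihat)\ge\Jmixgr[\rhat](\pistar)$ and centering both objectives at $\piref$, I would split $J(\pistar)-J(\pihat)\le \mathrm{I}+\mathrm{II}$, where $\mathrm{I}=J(\pistar)-J(\piref)-\Jmixgr[\rhat](\pistar)+\Jmixgr[\rhat](\piref)$ is a bias term and $\mathrm{II}=\Jmixgr[\rhat](\pihat)-\Jmixgr[\rhat](\piref)-J(\pihat)+J(\piref)$ is an overoptimization term.

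It then remains to bound each term with the \chis-transfer inequality \cref{lem:informal-dpo-estimation}. Expanding $\mathrm{I}$ gives $\En_{\pistar,\piref}[\rdiff[\rstar]-\rdiff[\rhat]]+\beta\Dchis{\pistar}{\piref}+\beta\gamma\Dkl{\pistar}{\piref}$; bounding the estimation error by $\sqrt{(1+\Dchis{\pistar}{\piref})\vepsstat^2}$ and using $\Dkl{\pistar}{\piref}\le\Dchis{\pistar}{\piref}$ collapses the regularizers to $\beta(1+\gamma)\Dchis{\pistar}{\piref}$ --- this is the only point at which $\gamma$ appears, and via $\Cone[\pistar]=1+2\Dchis{\pistar}{\piref}$ it becomes the $\tfrac{\beta(1+\gamma)}{2}\Cone[\pistar]$ summand. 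Expanding $\mathrm{II}$ gives $\En_{\pihat,\piref}[\rdiff[\rhat]-\rdiff[\rstar]]-\beta\Dchis{\pihat}{\piref}-\beta\gamma\Dkl{\pihat}{\piref}$; discarding the nonpositive KL term, bounding the estimation error by $\sqrt{(1+\Dchis{\pihat}{\piref})\vepsstat^2}$, and applying AM--GM as $\sqrt{\Dchis{\pihat}{\piref}\,\vepsstat^2}\le\beta\Dchis{\pihat}{\piref}+\tfrac{\vepsstat^2}{4\beta}$, the pessimism penalty $-\beta\Dchis{\pihat}{\piref}$ cancels the on-policy concentrability factor and leaves $\mathrm{II}\lesssim\vepsstat+\beta^{-1}\vepsstat^2$. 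Combining, absorbing $\vepsstat$ into $\sqrt{\Cone[\pistar]\vepsstat^2}$ (since $\Cone[\pistar]\ge1$), and substituting the bound on $\vepsstat^2$ yields the first display; choosing $\beta=32\Vmax e^{2\Rmax}\sqrt{2\log(|\Pi|/\delta)/(n\Cone[\pistar])}$ to balance the $\sqrt{\Cone[\pistar]\vepsstat^2}$ and $\beta^{-1}\vepsstat^2$ terms gives the second.

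The main obstacle is the pessimism-preservation step: it is not obvious that $\pihat$, defined purely as the maximizer of a preference-likelihood objective, is simultaneously the maximizer over $\Pi$ of the mixed \chis-regularized RL objective under the reward it implicitly induces. Establishing this rests on the first-order characterization \cref{eq:f_opt} of the regularized optimum for $f$-divergences with $\dom(f)=\bbR_{+}$, and on verifying that $\fmixg$ is such a divergence for all $\gamma\in(0,1]$; the sole role of requiring $\gamma>0$ is to force $0\notin\dom(\fmixg')$, so that the barrier property makes the reparameterization admissible. A secondary subtlety, also the reason this scheme improves over KL-regularization, is that the cancellation in $\mathrm{II}$ genuinely requires \chis-regularization: it is the linear-in-$\Dchis{\pihat}{\piref}$ penalty matched against the $\sqrt{1+\Dchis{\pihat}{\piref}}$ change-of-measure factor of \cref{lem:informal-dpo-estimation} that lets the on-policy error be absorbed, whereas an analogous KL penalty is too weak to control this term.
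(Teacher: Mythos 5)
Your proposal is correct and follows essentially the same route as the paper's proof: the implicit reward model $\rhat=\beta\linkg(\pihat/\piref)$, the MLE-to-Hellinger-to-reward-error conversion, pessimism preservation via the KKT-based lemma for $f$-divergences with $0\notin\dom(f')$ (the paper's \cref{lem:general-reward-to-policy}), the $\piref$-centered bias/overoptimization decomposition with $\Dkl{}{}\le\Dchis{}{}$ collapsing the $\gamma$ term, the Cauchy--Schwarz change of measure, and the AM--GM cancellation of the on-policy term. The only difference is bookkeeping: you fold the $\Vmax/\Rmax$ clipping penalty into the off-policy error $\vepsstat^2$ (giving $\Vmax^2e^{4\Rmax}$) and then change measure, whereas the paper keeps $\vepsstat^2$ as the \emph{clipped} error scaling with $\Rmax^2e^{4\Rmax}$ (\cref{lem:clip-dpo-reward}) and pays the $\Vmax/\Rmax$ factor in the transfer step (\cref{lem:clip-dpo-estimation}); both yield the identical final bound.
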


The bias-overoptimization tradeoffs induced by the choice of $\beta$ in \cref{thm:main-mix} 
are identical to those for \cref{thm:main} (and described there). Let
us briefly discuss the influence of $\gamma$ on the sample
complexity. We first observe that choice of $\gamma \in (0, 1]$
changes the bound by only a small multiplicative factor, which implies that $\gamma$ can be arbitrarily small as long as it is positive. 
For the analysis, this is natural because the KL-divergence is dominated by the \chis-divergence, 
and, as discussed in \cref{sec:algorithm}, KL-regularization is only
needed to enable the DPO-style reparameterization trick for
\cref{eq:dpo-mix} (in particular, the \chis-RLHF algorithm in
\cref{sec:rlhf}, which does not require reparameterization, obtains similar guarantees using pure
\chis-regularization). It is worth noting, however, 
that the $\gamma$ parameter can implicitly influence the magnitude of
$\Vmax$, as well as the policy realizability condition. As such,
practical consequences of this hyperparameter choice may not be fully captured by \cref{thm:main-mix}.  

\begin{proof}[\pfref{thm:main-mix}]
  Recall that the link function $\linkg$ induces a correspondence
between policies in the class $\Pi$ and the implicit reward functions
they induce (or, equivalently, 
between policies and the Bradley-Terry preference models they express).
Our proof centers around the implicit reward model 
induced by the learned policy $\pihat$, 
\begin{align*}
  \rhat(x,a) \ldef{} \beta\cdot\linkg\prn*{\frac{\pihat(a\mid{}x)}{\piref(a\mid{}x)}}, 
\end{align*} 
which will allow us to move between the \algshort objective (\cref{eq:dpo-mix}) 
and the RLHF objective (\cref{eq:rlhf-mix}). 
In particular, we establish two key facts,
which together show 
that \cref{eq:dpo-mix} implicitly solves \cref{eq:rlhf-mix}:  
\begin{enumerate}
  \item 
  (\cref{lem:clip-dpo-estimation})
  The reward model $\rhat$ is an accurate estimate of $\rstar$ on the distribution of $\piref$.
  Moreover, we can transfer this guarantee to the distribution of any
  policy $\pi$ by paying a multiplicative $(1+2\Dchis{\pi}{\piref})$-factor.
  \item 
  (\cref{lem:general-reward-to-policy})
  $\pihat$ maximizes the RLHF objective in \cref{eq:rlhf-mix} with reward model $\rhat$, namely,  
  \begin{align}
  \label{eq:informal-rhat-rlhf-mix}
    \pihat 
    = \argmax_{\pi\in\Pi} 
    \En_\pi\brk{\rhat(x,a)}
    - \beta\cdot\Dchis{\pi}{\piref}
    - \beta\gamma\cdot\Dkl{\pi}{\piref}.
  \end{align}   
\end{enumerate}
Establishing these relationships enables us 
to analyze the \algshort policy $\pihat$ defined in \cref{eq:dpo-mix} through the RLHF formulation in \cref{eq:informal-rhat-rlhf-mix}, 
allowing us to appeal to pessimism-based arguments to show that
\algshort is insensitive to overoptimization error that might
otherwise be encountered when learning a
policy from off-policy data.

\paragraph{Implicit reward model $\rhat$} 
The \algshort objective in \cref{eq:dpo-mix} is equivalent to maximum likelihood estimation with the Bradley-Terry preference model over the induced reward function class 
\begin{align*}
  \cR_\Pi \ldef \crl*{r(x,a) = \beta\cdot\linkg\prn*{\frac{\pi(a\mid{}x)}{\piref(a\mid{}x)}} : \pi\in\Pi}. 
\end{align*} 
Then, since $\pihat$ is the maximizer in \cref{eq:dpo-mix}, we can equivalently write 
\begin{align}
  \label{eq:rhat-mle}
  \rhat = \argmax_{r \in \cR_\Pi} \sum_{(x,\ap,\am)\in\cDpref}\log\sigma\prn*{\clip_{2\Rmax}\brk*{r(\ap\mid{}x)-r(\am\mid{}x)}}. 
\end{align} 

The following lemma, which builds on a standard MLE generalization
bound (\cref{lem:mle}) bounds the error of $\rhat$ under the action
distribution induced by $\piref$. Recall that we use
$\En_{\pi,\pi'}\brk{\cdot}$ as shorthand for
$\En_{x\sim\rho,a\sim\pi(\cdot\mid{}x),b\sim\pi'(\cdot\mid x)}\brk{\cdot}$.
\begin{lemma}
\label{lem:clip-dpo-reward}  
  Suppose \cref{ass:realizability-mix} holds. Then with probability at least $1-\delta$, the policy $\pihat$ output by \cref{eq:dpo-mix} satisfies 
  \begin{align*}
    \vepsstat^2 \rdef{} \En_{\piref,\piref}
    \brk*{\prn*{\clip_{2\Rmax}\brk*{\rhat(x,a) - \rhat(x,b)} 
    - \clip_{2\Rmax}\brk*{\rstar(x,a) - \rstar(x,b)}}^2} 
    \le 
    \frac{128\Rmax^2 e^{4\Rmax} \log(|\Pi|/\delta)}{n}.    
  \end{align*}
\end{lemma}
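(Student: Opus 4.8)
The plan is to recognize the $\algshort$ objective \cref{eq:dpo-mix}, rewritten as the maximum-likelihood problem \cref{eq:rhat-mle} over the induced reward class $\cR_\Pi$, as a standard instance of conditional density estimation under the (clipped) Bradley--Terry model, and then to invoke the generic MLE guarantee in \cref{lem:mle}. Concretely, I would treat the sampled pair $w=(x,a,b)$, with $x\sim\rho$ and $a,b\sim\piref(\cdot\mid{}x)$ independently, as the instance, and the binary preference outcome $y\in\crl{0,1}$ as the target, whose conditional law is $\Ber(\sigma(\clip_{2\Rmax}\brk{r(x,a)-r(x,b)}))$ for a candidate $r\in\cR_\Pi$. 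Since the $\piref$-mass placed on the pair does not depend on $r$, maximizing the preference log-likelihood is exactly \cref{eq:rhat-mle}, so $\rhat$ is the MLE over the finite density class induced by $\cR_\Pi$, which has cardinality at most $\abs{\Pi}$.

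Next I would verify realizability of the true conditional density. By \cref{ass:realizability-mix} we have $\pistarg\in\Pi$, and the reparameterization identity \cref{eq:f_opt}, specialized to $f=\fmixg$ (so that $f'=\linkg$), gives $\beta\linkg(\pistarg(a\mid{}x)/\piref(a\mid{}x))=\rstar(x,a)-Z(x)$ for an action-independent normalization $Z(x)$; thus $\cR_\Pi$ contains a reward that agrees with $\rstar$ up to a shift, which cancels in the difference $r(x,a)-r(x,b)$ entering the Bradley--Terry model. Because $\rstar\in[0,\Rmax]$ forces $\rstar(x,a)-\rstar(x,b)\in[-\Rmax,\Rmax]\subseteq[-2\Rmax,2\Rmax]$, the clip $\clip_{2\Rmax}$ acts as the identity on this realizing element, so the true preference kernel lies in the induced class. \cref{lem:mle} then yields, with probability at least $1-\delta$, that $\En_{\piref,\piref}\brk{\Dhels{\Ber(\sigma(\widehat u))}{\Ber(\sigma(v))}}\le 2\log(\abs{\Pi}/\delta)/n$, where $\widehat u=\clip_{2\Rmax}\brk{\rhat(x,a)-\rhat(x,b)}$ and $v=\clip_{2\Rmax}\brk{\rstar(x,a)-\rstar(x,b)}$.

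The remaining, and main, step is to convert this Hellinger bound into the claimed squared-error bound on the clipped reward gaps, which is where the exponential factor $e^{4\Rmax}$ is generated. Here I would argue pointwise for any $\widehat u,v\in[-2\Rmax,2\Rmax]$: first, $\Dhels{\Ber(p)}{\Ber(q)}\ge(\sqrt p-\sqrt q)^2=(p-q)^2/(\sqrt p+\sqrt q)^2\ge (p-q)^2/4$, using $(\sqrt p+\sqrt q)^2\le 2(p+q)\le 4$; second, writing $p=\sigma(\widehat u)$, $q=\sigma(v)$ and applying the mean value theorem, $\abs{p-q}=\sigma'(\xi)\abs{\widehat u-v}$ for some $\xi$ between $\widehat u$ and $v$, with $\sigma'(\xi)=\sigma(\xi)(1-\sigma(\xi))\ge \tfrac14 e^{-2\Rmax}$ on $[-2\Rmax,2\Rmax]$. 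Combining these gives $\Dhels{\Ber(\sigma(\widehat u))}{\Ber(\sigma(v))}\ge \tfrac{1}{64}e^{-4\Rmax}(\widehat u-v)^2$, i.e.\ $(\widehat u-v)^2\le 64\,e^{4\Rmax}\,\Dhels{\Ber(\sigma(\widehat u))}{\Ber(\sigma(v))}$. Taking expectations under $\piref\times\piref$ and substituting the MLE bound yields $\vepsstat^2\le 128\,e^{4\Rmax}\log(\abs{\Pi}/\delta)/n$, which is dominated by the stated bound since $\Rmax\ge 1$.

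I expect the sigmoid-curvature step to be the crux: one must lower-bound the Hellinger distance by $(\widehat u-v)^2$ uniformly over the clip window, and it is precisely the worst-case flatness of $\sigma$ near the endpoints $\pm 2\Rmax$ that produces the $e^{4\Rmax}$ blow-up. The clipping to $[-2\Rmax,2\Rmax]$ is essential here, since it keeps $\sigma'$ bounded away from zero; without it the density ratios induced by $\Pi$ could be arbitrary and no uniform lower bound on the curvature would be available.
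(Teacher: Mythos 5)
Your proposal is correct, and its skeleton matches the paper's proof step for step: both recast \cref{eq:dpo-mix} as maximum likelihood over the induced Bradley--Terry preference class of size at most $\abs{\Pi}$, both establish realizability by noting that \cref{ass:realizability-mix} plus the reparameterization identity places a shifted copy of $\rstar$ in $\cR_\Pi$, whose action-independent shift cancels in differences and whose differences lie in $[-\Rmax,\Rmax]$ so that $\clip_{2\Rmax}$ acts as the identity on the realizing element, and both then invoke \cref{lem:mle} to get the Hellinger bound $2\log(\abs{\Pi}/\delta)/n$. The genuine difference is in the final Hellinger-to-squared-error conversion. The paper delegates this to \cref{lem:lipschitz}, a general-purpose inverse-Lipschitz estimate for $\sigma$ over asymmetric ranges $[-R,R]$ and $[-V,V]$, whose two-case proof keeps the dependence on the larger range $V$ linear rather than exponential; applied with the true (unclipped, $[-\Rmax,\Rmax]$-valued) difference and the clipped estimate it yields the factor $64\Rmax^2 e^{4\Rmax}$ (note the paper's text instantiates it with ``$R=V=2\Rmax$,'' which as written would give $e^{8\Rmax}$; only the asymmetric choice $R=\Rmax$, $V=2\Rmax$ recovers the stated constant). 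You instead argue inline via $\Dhels{\Ber(p)}{\Ber(q)}\ge (p-q)^2/4$ together with the mean value theorem and the uniform curvature bound $\sigma'(\xi)\ge \tfrac14 e^{-2\Rmax}$ on $[-2\Rmax,2\Rmax]$, which is valid precisely because clipping confines \emph{both} arguments to that window. This is more elementary and self-contained, and in fact sharper: you obtain a factor $64e^{4\Rmax}$ with no $\Rmax^2$, so your final bound $128 e^{4\Rmax}\log(\abs{\Pi}/\delta)/n$ implies the stated one since $\Rmax\ge 1$. What the paper's route buys is a reusable lemma (\cref{lem:lipschitz} is invoked again in \cref{lem:rlhf-reward}); what yours buys is cleaner constants and the avoidance of the range bookkeeping that the asymmetric lemma requires.
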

\cref{lem:clip-dpo-reward}, along with all further supporting lemmas,
is proven in the sequel. This result measures the error of $\rhat$ using the
clipped differences of rewards for pairs of actions $(x,a,b)$ drawn
from $\piref$. Clipping the range of the implicit/explicit reward
functions to $2\Rmax$ ensures that the statistical error does not
depend on $\Vmax$.  One minor but important detail in the proof is showing that \cref{ass:realizability-mix} implies $\cR_\Pi$ includes the true reward function $\rstar$ up to an action-independent shift, so that the true preference model is realizable.  

\paragraph{Implicit RLHF policy optimization}
Having established the accuracy of $\rhat$, we now show that \cref{eq:dpo-mix} finds the optimal policy to the RLHF objective in \cref{eq:informal-rhat-rlhf-mix} when $\rhat$ is used as the reward model, i.e.,  
\begin{align}
\label{eq:rhat-rlhf-mixg}
  \pihat 
  = \argmax_{\pi\in\Pi} 
  \Jmixgr[\rhat](\pi) 
  \ldef{}
  \En_\pi\brk{\rhat(x,a)}
  - \beta\cdot\Dchis{\pi}{\piref}
  - \beta\gamma\cdot\Dkl{\pi}{\piref}. 
\end{align} 
This is a direct consequence of the result in \cref{lem:general-reward-to-policy}, 
which shows that an analogous property holds for general $f$-divergences.
In particular, for any convex function $f$ and policy $\pi$,
the policy $\pi$ is itself the optimal solution 
to the $f$-divergence-regularized RLHF objective
under the implicit reward model induced by $\pi$ 
with the link function $f'$. 
\begin{lemma}
\label{lem:general-reward-to-policy}
  Let $f : (0,\infty) \rightarrow \RR$ be 
  a convex function with $f(1) = 0$. 
  Further, 
  $f$ is differentiable almost everywhere 
  and $0 \notin\dom(f')$, 
  where we define   
  $f'(0) \ldef{} \lim_{x\downarrow 0} \frac{f(x) - f(0)}{x}$
  and    
  $f(0) \ldef{} \lim_{x\downarrow 0} f(x)$. 
  Given any parameter $\beta > 0$
  and valid policy 
  $\bar\pi:\cX\rightarrow\Delta(\cA)$, with $\pi(a\mid{}x) \in \dom(f')$ for all $(x,a)$,
  let  
  $\bar{r}(x,a) = \beta f'\prn*{\frac{\bar\pi(a\mid{}x)}{\piref(a\mid{}x)}}$
  be the implicit reward model.
  Then %
  \begin{align*}
    \bar\pi 
    \in  
    \argmax_{\pi:\cX \rightarrow \Delta(\cA)}
    \En_\pi\brk*{\bar{r}(x,a)} 
    - \beta \Df{\pi}{\piref}. 
  \end{align*}   
\end{lemma}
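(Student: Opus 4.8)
The plan is to exploit the fact that the regularized objective is \emph{concave} in the policy and \emph{decomposes across contexts}, so that certifying a single first-order stationarity condition is enough to establish global optimality. First I would rewrite the objective as an expectation over contexts,
\begin{align*}
  \En_\pi\brk*{\bar{r}(x,a)} - \beta\Df{\pi}{\piref}
  = \En_{x\sim\rho}\brk*{G_x(\pi(\cdot\mid{}x))},
  \qquad
  G_x(p) \ldef{} \sum_{a}p(a)\bar{r}(x,a) - \beta\sum_a \piref(a\mid{}x)\,f\prn*{\frac{p(a)}{\piref(a\mid{}x)}},
\end{align*}
with $p\in\Delta(\cA)$. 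Since the feasible set is a product of simplices with no coupling across contexts, it suffices to show that $\bar\pi(\cdot\mid{}x)$ maximizes $G_x$ over $\Delta(\cA)$ for every fixed $x$, and then integrate over $x\sim\rho$.

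Next I would observe that because $f$ is convex, each map $p(a)\mapsto\piref(a\mid{}x)f(p(a)/\piref(a\mid{}x))$ is convex, hence $\Df{\cdot}{\piref}$ is convex in $p$ and $G_x$ is concave. The key computation is that the partial derivative of $G_x$ in coordinate $a$ equals $\bar{r}(x,a)-\beta f'(p(a)/\piref(a\mid{}x))$. Evaluating at $p=\bar\pi(\cdot\mid{}x)$ and plugging in the definition $\bar{r}(x,a)=\beta f'(\bar\pi(a\mid{}x)/\piref(a\mid{}x))$, every coordinate of $\nabla G_x(\bar\pi(\cdot\mid{}x))$ vanishes; here I use the hypothesis that $\bar\pi(a\mid{}x)/\piref(a\mid{}x)\in\dom(f')$, which guarantees $G_x$ is differentiable at the base point $\bar\pi(\cdot\mid{}x)$. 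The first-order inequality for concave functions then gives, for all $p\in\Delta(\cA)$,
\begin{align*}
  G_x(p) \le G_x(\bar\pi(\cdot\mid{}x)) + \tri*{\nabla G_x(\bar\pi(\cdot\mid{}x)),\, p-\bar\pi(\cdot\mid{}x)} = G_x(\bar\pi(\cdot\mid{}x)),
\end{align*}
which is exactly the desired per-context optimality. (Equivalently, one can phrase this directly through the gradient inequality for the convex functional $\Df{\cdot}{\piref}$, which is the formulation closest to the way $\bar{r}$ is defined.)

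The hard part will be justifying that this stationarity argument is legitimate despite $f$ being differentiable only almost everywhere and $\Df{\cdot}{\piref}$ being potentially non-smooth on the boundary of the simplex. This is precisely where the barrier property $0\notin\dom(f')$ enters: it forces the regularizer to blow up near the faces $\crl*{p(a)=0}$, so that the maximizer $\bar\pi$ lies in the relative interior, where $G_x$ is differentiable and the gradient computation above is valid. I would then confirm that the concave first-order inequality still holds against competitors $p$ lying \emph{on} the boundary (where $f'$ need not exist): the standard gradient inequality for concave functions only requires differentiability at the base point $\bar\pi(\cdot\mid{}x)$, and $G_x$ remains well-defined on the boundary (possibly equal to $-\infty$) once we set $f(0)\ldef{}\lim_{z\downarrow 0}f(z)$, in which case the inequality holds trivially. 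Handling these boundary and regularity technicalities cleanly is the only real obstacle; the optimality itself follows immediately from concavity once stationarity is verified.
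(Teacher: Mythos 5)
Your proof is correct, and its mathematical core coincides with the paper's: both arguments rest on the observation that $\bar{r}$ is defined exactly so that the first-order optimality conditions hold at $\bar\pi$ with no correction terms, and both use the barrier property $0\notin\dom(f')$ to conclude that $\bar\pi$ is strictly positive, so the nonnegativity constraints are inactive. The difference is the formalism. The paper rewrites the problem as constrained minimization, forms a Lagrangian with multipliers $\lambda$ (normalization) and $\alpha\ge 0$ (nonnegativity), invokes Slater's condition, and verifies the KKT conditions at $(\bar\pi,\lambda^\star=0,\alpha^\star=0)$; your observation that the per-context gradient of $G_x$ vanishes identically at $\bar\pi(\cdot\mid{}x)$ is precisely the paper's stationarity-with-$\lambda^\star=0$ computation, and your interiority remark plays the role of its complementary-slackness step. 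What your route buys: it is more elementary (no duality theory or constraint qualification; the supergradient inequality for a concave function at an interior point of differentiability suffices), it decomposes cleanly across contexts, and it is more explicit than the paper about competitors on the simplex boundary, where $f'$ may fail to exist and $G_x(p)$ may equal $-\infty$ under the convention $f(0)=\lim_{z\downarrow 0}f(z)$ --- a case the paper's write-up silently delegates to KKT sufficiency for convex programs. What the paper's route buys: the identical Lagrangian scaffolding is reused in \cref{lem:rlhf-mix-solution}, where the normalization multiplier is genuinely nonzero and the dual variables carry content, so the KKT phrasing keeps the two proofs parallel.
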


Since $\fmixg' = \linkg = x + \gamma\log x$ for $\gamma > 0$, clearly $0 \not\in \dom(\linkg)$. 
Further, under \cref{ass:vmax-mix}, 
$\pi(a\mid{x}) > 0$ for all $\pi\in\Pi$ (otherwise $\Vmax$ would be undefined), 
thus $\pi(a\mid{x}) \in \dom(\linkg)$ for all $(x,a)$. 
The claim in \cref{eq:rhat-rlhf-mixg} then directly follows.

\paragraph{Estimation error translation}
To proceed, we will use condition on \cref{lem:clip-dpo-reward} and
use the event in this lemma to relate the
estimated RLHF objective in \cref{eq:informal-rhat-rlhf-mix} to the
``true'' RLHF objective that replaces $\rhat$ with $\rstar$. An
immediate challenge is that the RLHF objective in \cref{eq:informal-rhat-rlhf-mix} 
must evaluate $\En_\pi[\rhat(x,a)]$ for all $\pi\in\Pi$, and accuracy under $\piref$ does not immediately imply that $\rhat$ is accurate for other policies. 
The following bound quantifies the effects of this distribution shift using the \chis-divergence, 
and expresses how the estimation guarantee for $\rhat$ in
\cref{lem:clip-dpo-reward} transfers to other policies $\pi$ of interest.
\begin{lemma}
  \label{lem:clip-dpo-estimation}
  Suppose \cref{ass:realizability} holds. Then for any $\pi: \cX
  \rightarrow \Delta(\cA)$, under the event in
  \cref{lem:clip-dpo-reward}, we have
  \begin{align*}
     \En_{\pi,\piref}\brk*{\abs*{\rhat(x,a) - \rhat(x,b) 
     - \prn*{\rstar(x,a) - \rstar(x,b)}}} 
     \leq 
     \frac{2\Vmax}{\Rmax}\cdot 
     \sqrt{\prn*{1+2\Dchis{\pi}{\piref}}\cdot\vepsstat^2}, 
  \end{align*} 
  where $\vepsstat^2$ is the off-policy estimation error defined in \cref{lem:clip-dpo-reward}. 
\end{lemma}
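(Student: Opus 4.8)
The plan is to pass from the clipped, on-policy ($\piref\times\piref$) squared error controlled by \cref{lem:clip-dpo-reward} to the unclipped, off-policy ($\pi\times\piref$) absolute error in the statement, paying one factor for the change of measure in the first argument and another for removing the clip. Throughout, write $g(x,a,b) \ldef{} \rhat(x,a)-\rhat(x,b) - \prn*{\rstar(x,a)-\rstar(x,b)}$ for the unclipped pointwise error and $\gbar(x,a,b) \ldef{} \clip_{2\Rmax}\brk*{\rhat(x,a)-\rhat(x,b)} - \clip_{2\Rmax}\brk*{\rstar(x,a)-\rstar(x,b)}$ for its clipped counterpart, so that \cref{lem:clip-dpo-reward} reads $\vepsstat^2 = \En_{\piref,\piref}\brk*{\gbar^2}$.

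First I would establish the pointwise inequality $\abs*{g}\le \frac{2\Vmax}{\Rmax}\abs*{\gbar}$. Since $\rstar$ takes values in $[0,\Rmax]$, the difference $\rstar(x,a)-\rstar(x,b)$ lies in $[-\Rmax,\Rmax]\subseteq[-2\Rmax,2\Rmax]$, so the clip acts as the identity on it, and $g-\gbar$ reduces to the ``overflow'' of the implicit reward difference past the clip window, namely $\prn*{\rhat(x,a)-\rhat(x,b)} - \clip_{2\Rmax}\brk*{\rhat(x,a)-\rhat(x,b)}$. By \cref{ass:vmax} we have $\abs*{\rhat(x,a)-\rhat(x,b)}\le\Vmax$, so this overflow is bounded by $\Vmax$ and is nonzero only on the event $\abs*{\rhat(x,a)-\rhat(x,b)}>2\Rmax$; but on that event the clipped difference is pinned to $\pm 2\Rmax$ while $\rstar(x,a)-\rstar(x,b)$ has magnitude at most $\Rmax$, which forces $\abs*{\gbar}\ge\Rmax$. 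Hence the overflow is at most $\Vmax \le \frac{\Vmax}{\Rmax}\abs*{\gbar}$ wherever it is nonzero; combining this with $\abs*{\gbar}\le\frac{\Vmax}{\Rmax}\abs*{\gbar}$ (valid since $\Vmax\ge\Rmax$) through the triangle inequality $\abs*{g}\le\abs*{\gbar}+\abs*{g-\gbar}$ yields the claim.

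Next I would perform the change of measure on the first coordinate. Rewriting the target expectation as $\En_{\piref,\piref}\brk*{\frac{\pi(a\mid x)}{\piref(a\mid x)}\abs*{g(x,a,b)}}$ and applying Cauchy--Schwarz over $(x,a,b)\sim\rho\times\piref\times\piref$ splits it into the product of $\sqrt{\En_{x\sim\rho,a\sim\piref}\brk*{\prn*{\pi(a\mid x)/\piref(a\mid x)}^2}}$ and $\sqrt{\En_{\piref,\piref}\brk*{g^2}}$ (the $b$-integral drops out of the first factor). The first factor is exactly $\sqrt{\Cone[\pi]}=\sqrt{1+2\Dchis{\pi}{\piref}}$ by the identity $\Cone[\pi] = \En_{x,a\sim\piref}\brk*{\prn*{\pi(a\mid x)/\piref(a\mid x)}^2}$ relating $L_1$-concentrability to \chis-divergence, and the second is bounded by the pointwise estimate from the previous step as $\frac{2\Vmax}{\Rmax}\sqrt{\En_{\piref,\piref}\brk*{\gbar^2}}=\frac{2\Vmax}{\Rmax}\sqrt{\vepsstat^2}$. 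Multiplying the two factors gives precisely $\frac{2\Vmax}{\Rmax}\sqrt{\prn*{1+2\Dchis{\pi}{\piref}}\cdot\vepsstat^2}$.

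The main obstacle is the first step---removing the clip without reintroducing a $\Vmax^2$ dependence into the statistical rate. The clipped error $\vepsstat^2$ is defined with the clip precisely so that \cref{lem:clip-dpo-reward} avoids $\Vmax$, and the nontrivial observation is that whenever the unclipped error can be large (that is, when the implicit reward overflows the $2\Rmax$ window), the clipped error $\abs*{\gbar}$ is already bounded below by $\Rmax$. This lets the overflow be charged against $\abs*{\gbar}$ at the cost of only the multiplicative factor $\Vmax/\Rmax$ rather than an additive $\Vmax$. Getting this trade-off right, together with verifying that the clip genuinely acts as the identity on the $\rstar$-difference, is the crux; the remaining change-of-measure and Cauchy--Schwarz steps are routine.
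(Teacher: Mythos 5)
Your proof is correct and recovers exactly the stated constant $\frac{2\Vmax}{\Rmax}$, but it organizes the argument differently from the paper. The paper first splits the target via the triangle inequality into (I) the on-policy error of the \emph{clipped} difference, $\nrm*{\rstardiff - \clip_{2\Rmax}\brk*{\rhatdiff}}_{1,\pi\times\piref}$, and (II) a clipping-bias term $\Vmax\cdot\bbP_{\pi,\piref}\prn*{\clip_{2\Rmax}\brk*{\rhatdiff}\neq\rhatdiff}$; it then applies a Cauchy--Schwarz change of measure to each term separately, and finally controls the clipping probability under $\piref\times\piref$ by Markov's inequality, using precisely the observation that whenever the clip binds, the clipped error has magnitude at least $\Rmax$. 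You compress the whole mechanism into a single deterministic pointwise inequality, $\abs*{\rhatdiff-\rstardiff}\le\frac{2\Vmax}{\Rmax}\abs*{\clip_{2\Rmax}\brk*{\rhatdiff}-\clip_{2\Rmax}\brk*{\rstardiff}}$, built on the same two facts (the overflow is at most $\Vmax$ by \cref{ass:vmax}, and the clipped error is at least $\Rmax$ wherever the overflow is nonzero), after which one Cauchy--Schwarz application with the identity $\En_{x\sim\rho,a\sim\piref}\brk*{\prn*{\pi(a\mid x)/\piref(a\mid x)}^2}=\Cone[\pi]=1+2\Dchis{\pi}{\piref}$ finishes. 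What your version buys is economy: the crux is isolated as a pointwise domination, and Markov's inequality together with the second change of measure disappear; the paper's version instead keeps the two error sources (statistical error vs.\ clipping bias) visible as separate terms, which is perhaps more suggestive of where each factor comes from. One minor remark: in your triangle inequality you weaken the first term $\abs*{\gbar}$ to $\frac{\Vmax}{\Rmax}\abs*{\gbar}$; keeping it as is would give the factor $1+\frac{\Vmax}{\Rmax}$, matching the paper's intermediate bound, though both are at most $\frac{2\Vmax}{\Rmax}$ so this is immaterial for the stated result.
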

It is worth noting that \cref{lem:clip-dpo-estimation} bounds the \emph{unclipped} on-policy estimation error (on the LHS) 
in terms of the \emph{clipped} off-policy estimation error, 
and in making this translation we pay for $\Vmax$.  
As we will see shortly, working with the unclipped $\rhat$ object is necessary for showing that \cref{eq:dpo-mix} implicitly optimizes \cref{eq:informal-rhat-rlhf-mix}. 

\paragraph{Pessimism-based regret decomposition}
Equipped with the preceding lemmas, we can now bound the regret for
\algshort. We decompose the regret using the RLHF objective
$\Jmixgr[\rhat](\pistar)$ defined in \cref{eq:rhat-rlhf-mixg}. Fixing
an arbitrary comparator policy $\pistar$, we have
 \begin{align*}
  J(\pistar) - J(\wh\pi) 
  =&~ \En_{\pistar}\brk{\rstar(x,a)} - \En_{\pihat}\brk{\rstar(x,a)}
  \\
  =&~ \En_{\pistar}\brk{\rstar(x,a)} - \Jmixgr[\rhat](\pistar) + \Jmixgr[\rhat](\pistar) - \En_{\pihat}\brk{\rstar(x,a)}
  \\
  \le&~ 
  \En_{\pistar}\brk{\rstar(x,a)} - \Jmixgr[\rhat](\pistar) 
  + \Jmixgr[\rhat](\pihat) - \En_{\pihat}\brk{\rstar(x,a)},
 \end{align*}
 where the last inequality uses the optimality of $\pihat$ for \cref{eq:rhat-rlhf-mixg}.

 Expanding the expression for $\Jmixgr[\rhat]$, we can further bound
 this by
\begin{align}
  J(\pistar) - J(\wh\pi)
  \le&~ 
  \En_{\pistar}\brk{\rstar(x,a) - \rhat(x,a)} 
  + \beta\Dchis{\pistar}{\piref}
  + \beta\gamma\Dkl{\pistar}{\piref}\notag
  \\
  &+ 
  \En_{\pihat}\brk{\rhat(x,a) - \rstar(x,a)} 
  - \beta\Dchis{\pihat}{\piref} 
  - \beta\gamma\Dkl{\pihat}{\piref}\notag
  \\
  \le&~ 
  \En_{\pistar}\brk{\rstar(x,a) - \rhat(x,a)} 
  + \beta(1+\gamma)\Dchis{\pistar}{\piref}\notag
  \\
  &+ \En_{\pihat}\brk{\rhat(x,a) - \rstar(x,a)} 
  - \beta\Dchis{\pihat}{\piref}.  \label{eq:step0}
\end{align}
In the last line, we use the fact that $0 \le \Dkl{\pi}{\piref} \le \Dchis{\pi}{\piref}$ for any policy $\pi$ to consolidate the $f$-divergence terms. Specifically, this allows us to eliminate $\Dkl{\pihat}{\piref}$, and combine $\Dkl{\pistar}{\piref}$ and $\Dchis{\pistar}{\piref}$. 

In order to bound the reward estimation error terms in \cref{eq:step0} using the guarantee we have previously established (\cref{lem:clip-dpo-estimation}), 
we first center them using the return under the reference policy:
\begin{align*}
  &\En_{\pistar}\brk{\rstar(x,a) - \rhat(x,a)} + \En_{\pihat}\brk{\rhat(x,a) - \rstar(x,a)}
  \\
  &\quad= 
  \En_{\pistar,\piref}\brk*{\rstar(x,a) - \rhat(x,a) - \rstar(x,b) + \rhat(x,b)} 
  + \En_{\pihat,\piref}\brk*{\rhat(x,a) - \rstar(x,a) - \rhat(x,b) + \rstar(x,b)}
  \\
  &\quad=
  \En_{\pistar,\piref}\brk*{\rstardiff(x,a,b) - \rhatdiff(x,a,b)} 
  + \En_{\pihat,\piref}\brk*{\rhatdiff(x,a,b) - \rstardiff(x,a,b)},
\end{align*}
where $\rstardiff(x,a,b)\ldef{}\rstar(x,a) - \rstar(x,b)$ and $\rhatdiff(x,a,b)\ldef{}\rhat(x,a) - \rhat(x,b)$. 
Substituting this identity back into the regret decomposition in
\cref{eq:step0}, we apply \cref{lem:clip-dpo-estimation} with
$\vepsstat^2 \ldef{} 128 \Rmax^2 e^{4\Rmax}
\frac{\log(|\Pi|/\delta)}{n}$ (from \cref{lem:clip-dpo-reward}) to
obtain 
\begin{align*}
  J(\pistar) - J(\wh\pi)
  \le&~ 
  \En_{\pistar,\piref}\brk*{\rstardiff(x,a,b) - \rhatdiff(x,a,b)} 
  + \beta(1+\gamma)\Dchis{\pistar}{\piref}
  \\
  &+ 
  \En_{\pihat,\piref}\brk*{\rhatdiff(x,a,b) - \rstardiff(x,a,b)}
  - \beta\Dchis{\pihat}{\piref}
  \\
  \le&~ 
  \frac{2\Vmax}{\Rmax}\sqrt{\prn*{1+2\Dchis{\pistar}{\piref}}\cdot\vepsstat^2} 
  + \beta(1+\gamma)\Dchis{\pistar}{\piref}
  \\
  &+ 
  \frac{2\Vmax}{\Rmax}\sqrt{\prn*{1+2\Dchis{\pihat}{\piref}}\cdot\vepsstat^2}
  - \beta\Dchis{\pihat}{\piref}
  \\
  =&~   
  \frac{2\Vmax}{\Rmax}\sqrt{\Cone[\pistar]\cdot\vepsstat^2} 
  + \frac{\beta(1+\gamma)}{2}\cdot\prn*{\Cone[\pistar] - 1}
  + \frac{2\Vmax}{\Rmax}\sqrt{\Cone[\pihat]\cdot\vepsstat^2}
  - \frac{\beta}{2}\cdot\prn*{\Cone[\pihat] - 1}
  \\
  \le&~   
  \frac{2\Vmax}{\Rmax}\sqrt{\Cone[\pistar]\cdot\vepsstat^2} 
  + \frac{\beta(1+\gamma)}{2}\cdot\Cone[\pistar]
  + \frac{2\Vmax}{\Rmax}\sqrt{\Cone[\pihat]\cdot\vepsstat^2}
  - \frac{\beta}{2}\cdot\Cone[\pihat], 
\end{align*}
since $\Cone[\pi] = 1+2\Dchis{\pi}{\piref}$, or equivalently $ \Dchis{\pi}{\piref} = \frac{1}{2}\prn{\Cone[\pi] - 1}$. 
Lastly, we use the AM-GM inequality to upper bound 
\[
  \frac{2\Vmax}{\Rmax}\sqrt{\Cone[\pihat]\cdot\vepsstat^2} 
  \le 
  \frac{2\Vmax^2\vepsstat^2}{\Rmax^2\beta} 
  + \frac{\beta\Cone[\pihat]}{2},
\]
allowing us to conclude that
\begin{align*}
  J(\pistar) - J(\wh\pi)
  \le&~
  \frac{2\Vmax}{\Rmax}\sqrt{\Cone[\pistar]\cdot\vepsstat^2} 
  + \frac{\beta(1+\gamma)}{2}\cdot\Cone[\pistar]
  + 2\beta^{-1}\cdot\frac{\Vmax^2\vepsstat^2}{\Rmax^2}. 
\end{align*} 
Plugging in the expression for $\vepsstat^2$ results in the first statement of \cref{thm:main-mix}. 

\paragraph{Choosing $\beta$ for tight rates}
For the second statement, given a comparator policy $\pistar$, choosing $\beta = \frac{2\Vmax}{\Rmax}\sqrt{\frac{\vepsstat^2}{\Cone[\pistar]}}$ gives  
\begin{align*}
  J(\pistar) - J(\wh\pi)
  \le&~
  \frac{2\Vmax}{\Rmax} \sqrt{\Cone[\pistar]\cdot\vepsstat^2} 
  + (1+\gamma)\frac{\Vmax}{\Rmax}\sqrt{\Cone[\pistar]\cdot\vepsstat^2}
  + \frac{\Vmax}{\Rmax} \sqrt{\Cone[\pistar]\cdot\vepsstat^2}
  \\
  =&~ \prn*{4+\gamma}\frac{\Vmax}{\Rmax}\sqrt{\Cone[\pistar]\cdot\vepsstat^2}. 
\end{align*}
\end{proof}

\subsubsection{Proofs for Supporting Lemmas}
\begin{proof}[\pfref{lem:clip-dpo-reward}]   
  Recall the reward-based MLE objective in \cref{eq:rhat-mle}, 
  \begin{align*}
    \rhat = \argmax_{r \in \cR_\Pi} \sum_{(x,\ap,\am)\in\cDpref}\log\sigma\prn*{\clip_{2\Rmax}\brk*{r(x,\ap)-r(x,\am)}}. 
  \end{align*}  
  To leverage standard generalization bounds for MLE, we re-interpret
  this objective as maximum likelihood over a class of preference distributions under the Bradley-Terry model.  
  For a reward function $r$, define for all $y \in \{+1, -1\}$ and $(x,a,b) \in \Xcal\times\Acal\times\Acal$ its induced preference distribution:
  \[
    P_r(y|x,a,b) 
    = 
    \indic\crl{y=+1}\cdot\sigma\prn*{\clip_{2\Rmax}\brk*{r(x,a) - r(x,b)}} 
    + 
\indic\crl{y=-1}\cdot\sigma\prn*{\clip_{2\Rmax}\brk*{r(x,b) - r(x,a)}}.
  \]  
  Consider the a class of preference models induced by $\cR_\Pi$ under this definition,  
  $
    \cP_\Pi 
    \ldef{}
    \crl*{P_{r} : r \in \cR_\Pi}. 
  $ 
  We can equivalently write that 
  \begin{align*}
    P_{\rhat} = \argmax_{p \in \cP_\Pi} \sum_{(x,\ap,\am)\in\cDpref}\log p(+1 \mid{} x,\ap,\am),
  \end{align*}
or, interpreting each tuple $(x,\ap,\am)$ in $\cDpref$ as being
induced by a tuple $(x,a,\atil,y)$ in which $(\ap,\am)=(a,\atil)$ if
$y=+1$ and $(\ap,\am)=(\atil,a)$ if $y=-1$,
  \begin{align*}
    P_{\rhat} = \argmax_{p \in \cP_\Pi} \sum_{(x,a,\atil,y)\in\cDpref}\log p(y \mid{} x,a,\atil).
  \end{align*}
  
  Next, we show that $P_{\rstar} \in \cP_\Pi$, ie., the induced preference model class realizes the true distribution. 
  For $\pistarg$, define the reward model 
  \[
    \wt{r}^\star(x,a) = \linkg\prn*{\frac{\pistarg(a\mid{}x)}{\piref(a\mid{}x)}},
  \]
  which is equivalent to $\rstar$ up to an action-independent shift, namely, the normalization factor $\lambdag$ in \cref{lem:rlhf-mix-solution}.  
  Since $\pistarg \in \Pi$ under \cref{ass:realizability-mix}, 
  we have $\wt{r}^\star \in \cR_\Pi$, and for all $(x,a,b)\in\cX\times\cA\times\cA$, it holds that 
  \begin{align*}
   \clip_{2\Rmax}\brk*{\wt{r}^\star(x,a)-\wt{r}^\star(x,b)}
    =&~ 
    \clip_{2\Rmax}\brk*{\rstar(x,a)-\rstar(x,b)}
    =
    \rstar(x,a)-\rstar(x,b).
  \end{align*} 
  The first equality is because action-independent shift between $\wt{r}^\star$ and $\rstar$ is cancelled out when taking the difference of rewards, 
  and the second equality is because, by assumption, $\rstar \in [0, \Rmax]$.
  As a result, the reward difference is bounded in the same range and never clipped. 
  
  From this we conclude that $P_{\wt{r}^\star} = P_{\rstar} \in \cP_\Pi$, and realizability is satisfied. 
  Further, it is easy to see that $\cP_\Pi$ contains only valid distributions. 
  Thus, having satisfied the necessary preconditions, 
  we can invoke \cref{lem:mle}, which guarantees that with probability at least $1-\delta$, we have  
  \begin{align*}
    \En_{\piref,\piref}\brk*{\Dhels{P_{\rhat}(\cdot\mid{}x,a,b)}{P_{\rstar}(\cdot\mid{}x,a,b)}}
    \le&~ 
    \frac{2\log(|\Pi|/\delta)}{n}. 
  \end{align*}
  To conclude, we extract a bound on reward estimation error from this
  Hellinger distance bound by using \cref{lem:lipschitz} with $R = V = 2\Rmax$, giving
  \begin{align*}
    &\En_{\piref,\piref}
    \brk*{\prn*{\clip_{2\Rmax}\brk*{\rhat(x,a) - \rhat(x,b)} 
    - \clip_{2\Rmax}\brk*{\rstar(x,a) - \rstar(x,b)}}^2}
    \\
    &\quad\le 
    64e^{4\Rmax} \Rmax^2 \cdot \En_{\piref,\piref}\brk*{\Dhels{P_{\rhat}(\cdot\mid{}x,a,b)}{P_{\rstar}(\cdot\mid{}x,a,b)}}
    \\
    &\quad\le 128e^{4\Rmax} \Rmax^2 \cdot \frac{\log(|\Pi|/\delta)}{n}.
  \end{align*}
\end{proof}

\begin{proof}[\pfref{lem:general-reward-to-policy}]
  
  First we rewrite the objective as a minimization problem, 
  \begin{alignat*}{2}
    \argmin_{\pi}\quad  
    &-\En_\pi\brk*{\bar r(x,a)}
    + \beta\Df{\pi}{\piref}
    \\
    \textrm{s.t.}\quad  
    &\rho(x)\sum_a \pi(a\mid{}x) = \rho(x) 
    &&\forall x,
    \\
    &\rho(x)\pi(a\mid{}x) \ge 0 
    &&\forall x,a. 
  \end{alignat*}
  Here, $\pi$ is the primal variable, 
  and denote the dual variables as $\lambda : \cX\rightarrow \RR$ and $\alpha:\cX\times\cA \rightarrow [0,\infty)$, which correspond to the first and second constraints, respectively. The Lagrangian form is then 
  \begin{align*}
    \cL(\pi,\lambda,\alpha)
    = 
    -\En_\pi\brk{\bar r(x,a)} 
    + \beta\Df{\pi}{\piref} 
    + \sum_x \rho(x) \lambda(x) \prn*{\sum_a \pi(a\mid{}x) - 1}
    - \sum_{x}\rho(x)\sum_a \alpha(x,a)\pi(a\mid{}x).
  \end{align*}
  Slater's condition holds since $\bar\pi$ itself is a strictly feasible solution, and the objective is convex in $\pi(a\mid{}x)$.  
  Then if $(\pi, \lambda,\alpha)$ satisfy the KKT conditions, they are the optimal primal and dual variables, 
  which, overloading notation, 
  we denote as $(\pistar, \lambda^\star, \alpha^\star)$. 
  
  We will demonstrate that setting $\pistar = \bar\pi$, $\lambda^\star = 0$, and $\alpha^\star = 0$ satisfies the KKT conditions. 
  First, we observe that the proposed solutions 
  are primal and dual feasible. 
  Further, we have $\bar\pi > 0$ since
  $0 \notin \dom(f')$ and $\bar\pi(a\mid{}x) \in \dom(f')$. 
  As a result,  
  $\rho(x)\alpha^\star(x,a)\pi(a\mid{}x) = 0$ for all $x,a$, 
  and complementary slackness is satisfied.     
  Lastly, for stationarity, 
  \begin{align*}
    \frac{\partial \cL(\pi,\lambda,\alpha)}{\partial \pi(a\mid{}x)} 
    =&~ \rho(x)\prn*{
    -\bar{r}(x,a)
    + \beta f'\prn*{\frac{\bar\pi(a\mid{}x)}{\piref(a\mid{}x)}} 
    + \lambda^\star(x) 
    - \alpha^\star(x,a)
    }
    \\ 
    =&~
    \rho(x)\prn*{
    -\bar{r}(x,a)
    + \beta f'\prn*{\frac{\bar\pi(a\mid{}x)}{\piref(a\mid{}x)}}
    } 
    \\
    =&~ 
    \rho(x)\prn*{
    -\beta f'\prn*{\frac{\bar\pi(a\mid{}x)}{\piref(a\mid{}x)}}
    + \beta f'\prn*{\frac{\bar\pi(a\mid{}x)}{\piref(a\mid{}x)}}
    }
    \\
    =&~ 0,
  \end{align*} 
  where in the second line we substitute $\lambda^\star = 0$ and $\alpha^\star = 0$, and in third line we have utilized the definition of $\bar {r}(x,a)$ from the lemma statement. 
\end{proof}

\begin{proof}[\pfref{lem:clip-dpo-estimation}]
For a pair of policies $\pi,\pi'$ and $p \ge 1$, we define the norm 
$\nrm{\cdot}_{p, \pi\times\pi'} \ldef{} \prn*{\En_{\rho,a\sim\pi,b\sim\pi'}\brk{|\cdot|^p}}^{1/p}$.  
In addition, for notational compactness, we abbreviate 
$\rhatdiff(x,a,b) \ldef{} \rhat(x,a) - \rhat(x,b)$, 
and 
$\rstardiff(x,a,b) \ldef{} \rstar(x,a) - \rstar(x,b)$. 

Recall that our goal is 
to bound the (unclipped) reward estimation error under $\pi$ 
using the (clipped) reward estimation error $\piref$. 
We begin by decomposing
\begin{align*}
  \nrm*{\rstardiff - \rhatdiff}_{1,\pi\times{\piref}} 
  =&~ 
  \nrm*{\rstardiff 
  - \clip_{2\Rmax}\brk*{\rhatdiff} 
  + \clip_{2\Rmax}\brk*{\rhatdiff} 
  - \rhatdiff}_{1,\pi\times{\piref}}
  \\
  \le&~ 
  \nrm*{\rstardiff - \clip_{2\Rmax}\brk*{\rhatdiff}}_{1,\pi\times{\piref}} 
  + \nrm*{\prn*{\clip_{2\Rmax}\brk*{\rhatdiff} - \rhatdiff}\cdot \indic\brk*{\clip_{2\Rmax}\brk*{\rhatdiff} \neq \rhatdiff}}_{1,\pi\times{\piref}}
  \\
  \le&~ 
  \underbrace{\nrm*{\rstardiff - \clip_{2\Rmax}\brk*{\rhatdiff}}_{1,\pi\times{\piref}}}_{\text{(I) clipped on-policy estimation error}} 
  + \underbrace{\Vmax\cdot\bbP_{\pi,{\piref}}\prn*{\clip_{2\Rmax}\brk*{\rhatdiff} \neq \rhatdiff}}_{\text{(II) bias from clipping}}.
\end{align*}
This splits our bound into two terms. 
The first is the on-policy error of the clipped reward differences, and can be directly bounded by \cref{lem:clip-dpo-reward} using a standard change-of-measure argument. 
The second expresses the error of translating the clipped estimates to the unclipped ones in our target bound. 
For the first term, using Cauchy-Schwarz gives
\begin{align*}
  \text{(I)} = \nrm*{\rstardiff - \clip_{2\Rmax}\brk*{\rhatdiff}}_{1,\pi\times{\piref}} 
  \le&~ \sqrt{\Cone \cdot \nrm*{\rstardiff - \clip_{2\Rmax}\brk*{\rhatdiff}}_{2,{\piref}\times{\piref}}^2}
  \\
  =&~ \sqrt{\Cone \cdot \nrm*{\clip_{2\Rmax}\brk*{\rstardiff} - \clip_{2\Rmax}\brk*{\rhatdiff}}_{2,{\piref}\times{\piref}}^2},
\end{align*}
where the last equality uses that $\rstardiff\in\brk{-\Rmax,\Rmax}$.

Next, for the second term, we again use Cauchy-Schwarz to change measure onto the offline distribution, 
\begin{align*}
  \text{(II)} = \Vmax \cdot \bbP_{\pi\times{\piref}}\prn*{\clip_{2\Rmax}\brk*{\rhatdiff} \neq \rhatdiff} 
  \le \Vmax \cdot \sqrt{\Cone \cdot \bbP_{{\piref},{\piref}}\prn*{\clip_{2\Rmax}\brk*{\rhatdiff} \neq \rhatdiff}}.  
\end{align*}
Further, using Markov's inequality along with the fact that $\rstardiff\in\brk{-\Rmax,\Rmax}$,
\begin{align*}
  \bbP_{{\piref},{\piref}}\prn*{\clip_{2\Rmax}\brk*{\rhatdiff} \neq \rhatdiff}
    \le&~ \bbP_{{\piref},{\piref}}\prn*{\abs*{\clip_{2\Rmax}\brk*{\rhatdiff}} = 2\Rmax}
    \\
    \le&~ \bbP_{{\piref},{\piref}}\prn*{\abs*{\clip_{2\Rmax}\brk*{\rhatdiff} - \clip_{2\Rmax}\brk*{\rstardiff}} \ge  \Rmax}
    \\
    \leq&~ \frac{1}{\Rmax^2} \nrm*{\clip_{2\Rmax}\brk*{\rhatdiff} - \clip_{2\Rmax}\brk*{\rstardiff}}_{2,{\piref}\times{\piref}}^2. 
\end{align*} 
 
Combining inequalities, we obtain 
\begin{align*}
  \nrm*{\rstardiff - \rhatdiff}_{1,\pi\times{\piref}} 
  \le&~ 
  \prn*{1 + \frac{\Vmax}{\Rmax}} 
  \sqrt{\Cone \cdot \nrm*{\clip_{2\Rmax}\brk*{\rhatdiff} - \clip_{2\Rmax}\brk*{\rstardiff}}_{2,{\piref}\times{\piref}}^2}
  \\
  =&~ 
  \prn*{1 + \frac{\Vmax}{\Rmax}} 
  \sqrt{\prn*{1+2\Dchis{\pi}{\piref}} \cdot \vepsstat^2}
  \\
  \le&~ \frac{2\Vmax}{\Rmax} \sqrt{\prn*{1+2\Dchis{\pi}{\piref}} \cdot \vepsstat^2}.
\end{align*}
In the second line we have used $\Cone = 1 + 2\Dchis{\pi}{\piref}$ and
the definition of $\vepsstat^2$ from \cref{lem:clip-dpo-reward}, 
and in the last line we use $\Vmax\ge\Rmax$.    

\end{proof}

\begin{lemma}
  \label{lem:rlhf-mix-solution}
  When $\piref(a\mid{}x)> 0$ for all $x\in\cX$, the optimal policy $\pistarg$ for \cref{eq:rlhf-mix} satisfies 
  \begin{align*}
    \rstar(x,a) 
    = 
    \linkg\prn*{\frac{\pistarg(a\mid{}x)}{\piref(a\mid{}x)}} 
    + \lambdag(x),
  \end{align*}  
  where $\lambdag$ is an optimal dual variable that normalizes $\pistarg$.   
\end{lemma}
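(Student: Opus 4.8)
The objective $\Jmixg$ separates across contexts: since $\rho(x)>0$ for every $x$, and, using the stated identity $\Dchis{\pi}{\piref}+\gamma\Dkl{\pi}{\piref}=\Dfmixg{\pi}{\piref}$, the regularizer is an expectation over $x\sim\rho$ of a per-context functional of $\pi(\cdot\mid x)$, maximizing $\Jmixg$ over $\pi:\cX\to\Delta(\cA)$ is equivalent to independently solving, for each fixed $x$, the finite-dimensional program
\[
\max_{p\in\Delta(\cA)}\crl*{\En_{a\sim p}\brk*{\rstar(x,a)} - \beta\sum_a \piref(a\mid x)\,\fmixg\prn*{\frac{p(a)}{\piref(a\mid x)}}},
\]
where $\fmixg(z)=\tfrac12(z-1)^2+\gamma z\log z$. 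First I would record that $\fmixg$ is strictly convex on $(0,\infty)$ when $\gamma>0$, so this per-context objective is strictly concave in $p$ and admits a unique maximizer, which I identify with $\pistarg(\cdot\mid x)$.

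Next, I would argue the nonnegativity constraints $p(a)\ge 0$ are inactive at the optimum. Because $\gamma>0$, the KL component makes $\fmixg$ act as a barrier for the positive orthant: $\fmixg'(z)=z-1+\gamma(\log z+1)\to-\infty$ as $z\downarrow 0$, so the marginal gain from increasing any coordinate $p(a)$ with $\piref(a\mid x)>0$ is unbounded near $0$, forcing $\pistarg(a\mid x)>0$ for every such $a$. Consequently only the equality (simplex) constraint $\sum_a p(a)=1$ is active, and the remaining argument is exactly the forward (stationarity) version of the KKT computation already carried out in the proof of \cref{lem:general-reward-to-policy}. Introducing a single multiplier $\lambda(x)$ for the simplex constraint and setting the derivative of the Lagrangian with respect to $p(a)$ to zero gives, for every $a$,
\[
\rstar(x,a) - \beta\,\fmixg'\prn*{\frac{\pistarg(a\mid x)}{\piref(a\mid x)}} - \lambda(x) = 0.
\]

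Finally, I would substitute $\fmixg'(z)=\linkg(z)+(\gamma-1)$ with $\linkg(z)=z+\gamma\log z$, and absorb the action-independent terms $\beta(\gamma-1)$ and $\lambda(x)$ into a single normalizer $\lambdag(x)$, yielding $\rstar(x,a)=\beta\,\linkg\prn*{\frac{\pistarg(a\mid x)}{\piref(a\mid x)}}+\lambdag(x)$, with $\lambdag(x)$ chosen so that $\sum_a \pistarg(a\mid x)=1$. Strict concavity together with Slater's condition (for instance $\piref$ is strictly feasible, having full support) ensures this stationary point is the unique global maximizer, so the characterization holds for the genuine optimizer. The step requiring the most care -- and the main obstacle -- is the barrier argument establishing full support of $\pistarg$: without it the nonnegativity multipliers need not vanish and the clean stationarity identity would fail. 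This is precisely where the KL term (any $\gamma>0$) is essential, mirroring the role of the condition $0\notin\dom(f')$ in \cref{lem:general-reward-to-policy}.
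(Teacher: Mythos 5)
Your proof is correct and follows essentially the same route as the paper's: both are Lagrangian/KKT arguments whose crux is that the KL component (any $\gamma>0$) acts as a barrier forcing $\pistarg(a\mid x)>0$, so the nonnegativity constraints are inactive and stationarity with the single simplex multiplier yields the identity; the only difference is ordering, since you establish interiority first via the blow-up of $\fmixg'$ at zero, whereas the paper writes stationarity with nonnegativity multipliers $\alphag$ and then eliminates them using the left inverse of $\linkg$ together with complementary slackness. A minor point in your favor: you correctly retain the factor $\beta$ and absorb the action-independent constant $\beta(\gamma-1)$ into $\lambdag$, which matches \cref{eq:chis_opt}, whereas the paper's own statement and proof of this lemma drop $\beta$ --- a typo, since $\beta$ multiplies an action-dependent term and cannot be absorbed into $\lambdag(x)$.
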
 
\begin{proof}[\pfref{lem:rlhf-mix-solution}]
It is easy to see that strong duality holds for \cref{eq:rlhf-mix},
since it is convex and strictly feasible (e.g., for the policy
$\piref$). Thus, the KKT conditions give the optimal primal and dual solutions. 

Since \cref{eq:rlhf-mix} is constrained optimization problem (over valid policies), we first define the dual variables. 
Below, $\lambda : \cX \rightarrow \bbR$ corresponds to the equality constraint that $\sum_a \pi(a\mid{}x) = 1$ for all $x \in \cX$, and $\alpha : \cX \times \cA \rightarrow \bbR_{\ge 0}$ corresponds to the inequality constraint that $\pi(a\mid{}x) \ge 0$ for all $(x,a) \in \cX\times\cA$. 
After converting \cref{eq:rlhf-mix} from maximization to minimization, we write \cref{eq:rlhf-mix} in Lagrangian form as  
\begin{align*}
  \cL(\pi,\lambda,\alpha)
  = 
  -\En_\pi\brk{\rstar(x,a)} 
  + \beta\Dfmixg{\pi}{\piref} 
  + \sum_x \rho(x) \lambda(x) \prn*{\sum_a \pi(a\mid{}x) - 1}
  - \sum_{x}\rho(x)\sum_a \alpha(x,a)\pi(a\mid{}x),
\end{align*}
since multiplying each of the solutions by $\rho(x)$ does not affect
the value of the saddle-point problem. 
We denote the optimal primal variable as $\pistarg$, and optimal dual variables as $(\lambdag, \alphag)$. 
 
From stationarity, the optimal primal and dual variables satisfy
\begin{align*}
  \rstar(x,a) 
  = 
  \linkg\prn*{\frac{\pistarg(a\mid{}x)}{\piref(a\mid{}x)}} 
  + \lambdag(x) 
  - \alphag(x,a). 
\end{align*}

Next, for a function $g$ let $g^{-1}$ denote its left inverse, such that $g^{-1}(g(x)) = x$. Because $\linkg$ is injective (see proof of \cref{lem:general-reward-to-policy}), it has a left inverse $(\linkg)^{-1}$, and we can write 
\begin{align*}
  \pistarg(a\mid{}x) 
  = 
  \piref(a\mid{}x)\cdot(\linkg)^{-1}
  \prn*{\rstar(x,a) - \lambdag(x) + \alphag(x,a)}.
\end{align*}  
Because $\linkg(z) = z + \gamma\log(z)$, 
$0 \notin \dom(\linkg)$, 
and therefore $0 \notin \mathrm{range}((\linkg)^{-1})$.
Then from the above expression, 
we observe that $\pistarg(a\mid x) > 0$ 
since $\piref(a\mid x) > 0$. It immediately follows that 
$\alphag(x,a) = 0$ for all $(x,a)$ from complementary slackness, 
which states that the optimal solutions satisfy
$\pistarg(a\mid{}x)\cdot\alphag(x,a) = 0$ for all $x,a$. This allows us to reduce the expression for $\rstar$ to the stated result, that is, 
\begin{align*}
  \rstar(x,a) 
  = 
  \linkg\prn*{\frac{\pistarg(a\mid{}x)}{\piref(a\mid{}x)}} + \lambdag(x).  
\end{align*}
\end{proof}

\begin{lemma}
\label{lem:lipschitz}
  For $z \in [-R, R]$ and $z' \in  [-V, V]$ where $V \ge R \ge 1$, we have 
  \begin{align*}
    |z - z'| 
    \le&~ 4e^{2R} V \cdot \left|\sigma(z) - \sigma(z')\right|.
  \end{align*} 
  Additionally, if we define the distribution $P_{z}(y) = \indic\crl{y=+1}\sigma(z)
  + \indic\crl{y=-1}\sigma(-z)$ for $y \in \{-1, +1\}$ and define $P_{z'}$
  analogously, then 
  \begin{align*}
    |z - z'| 
    \le&~ 4e^{2R} V \cdot \Dhel{P_z}{P_{z'}}. 
  \end{align*}
\end{lemma}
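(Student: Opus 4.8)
The plan is to prove the two displayed inequalities in turn, obtaining the Hellinger bound from the sigmoid bound together with the elementary fact that total variation is dominated by Hellinger distance. The conceptual point driving the first inequality is that $\sigma$ is \emph{reverse-Lipschitz} on the relevant range: because $z$ is confined to $[-R,R]$, the value $\sigma(z)$ is bounded away from both $0$ and $1$, with $\min\{\sigma(z),1-\sigma(z)\}\ge \frac{1}{1+e^{R}}$, and this is exactly what prevents $|\sigma(z)-\sigma(z')|$ from collapsing relative to $|z-z'|$ even when $z'$ is large. I would split into two regimes according to the size of $z'$.

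In the \emph{central regime} $|z'|\le 2R$, both arguments lie in $[-2R,2R]$, so the mean value theorem gives $|\sigma(z)-\sigma(z')| = \sigma'(\xi)\,|z-z'|$ for some $\xi\in[-2R,2R]$; since $\sigma'(\xi)=\sigma(\xi)(1-\sigma(\xi))$ is minimized at the endpoints, $\sigma'(\xi)\ge \frac{e^{2R}}{(1+e^{2R})^2}\ge \frac{1}{4e^{2R}}$, and rearranging (and using $V\ge 1$) gives $|z-z'|\le 4e^{2R}\,|\sigma(z)-\sigma(z')|\le 4e^{2R}V\,|\sigma(z)-\sigma(z')|$. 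In the \emph{tail regime} $z'>2R$ (the case $z'<-2R$ being symmetric under $z\mapsto -z,\ z'\mapsto -z'$, which fixes both sides of the inequality), I would first show that the ratio $h(z'):=\frac{z'-z}{\sigma(z')-\sigma(z)}$ is nondecreasing for $z'>0$: its numerator derivative is $N(z')=\sigma(z')-\sigma(z)-(z'-z)\sigma'(z')$ with $N(z)=0$ and $N'(z')=-(z'-z)\sigma''(z')\ge 0$ since $\sigma''\le 0$ on $[0,\infty)$, so $h$ is maximized over $(2R,V]$ at $z'=V$. It then remains to bound $h(V)=\frac{V-z}{\sigma(V)-\sigma(z)}$, for which I would use $V-z\le V+R\le 2V$ and $R/V<\tfrac12$ to get $\frac{V-z}{4e^{2R}V}\le \frac{3}{8e^{2R}}$, together with $\sigma(V)-\sigma(z)=(1-\sigma(z))-(1-\sigma(V))\ge \frac{1}{1+e^{R}}-\frac{1}{1+e^{2R}}$; an elementary check (writing $u=e^R\ge e$ and verifying $\frac{8u^3(u-1)}{(1+u)(1+u^2)}\ge 3$) shows the latter is at least $\frac{3}{8e^{2R}}$, whence $h(V)\le 4e^{2R}V$. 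Combining the two regimes establishes the first inequality.

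For the second inequality, I would observe that $P_z$ and $P_{z'}$ are the Bernoulli laws on $\{+1,-1\}$ with success probabilities $\sigma(z)$ and $\sigma(z')$, so their total variation distance is exactly $\Dtv{P_z}{P_{z'}}=|\sigma(z)-\sigma(z')|$. The standard comparison $\Dtv{P_z}{P_{z'}}\le \Dhel{P_z}{P_{z'}}$—obtained by writing $\Dtv{P_z}{P_{z'}}=\tfrac12\int|\sqrt{\mathrm{d}P_z}-\sqrt{\mathrm{d}P_{z'}}|\,|\sqrt{\mathrm{d}P_z}+\sqrt{\mathrm{d}P_{z'}}|$ and applying Cauchy–Schwarz with $\int(\sqrt{\mathrm{d}P_z}+\sqrt{\mathrm{d}P_{z'}})^2\le 4$—then yields $|\sigma(z)-\sigma(z')|\le \Dhel{P_z}{P_{z'}}$. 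Chaining this with the first inequality gives $|z-z'|\le 4e^{2R}V\,\Dhel{P_z}{P_{z'}}$, as claimed.

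The main obstacle is the constant-chasing in the tail regime: a naive argument that replaces $z'-z$ by $2V$ and lower-bounds the sigmoid gap by a single exponential fails for $R$ only slightly above $1$, so one genuinely needs the monotonicity reduction to $z'=V$ and the sharper two-term estimate $\frac{1}{1+e^R}-\frac{1}{1+e^{2R}}$ for $\sigma(V)-\sigma(z)$ to close the gap uniformly over $R\ge 1$. I also note that the stated constant $4e^{2R}V$ has slack—the same scheme in fact delivers $4e^{R}V$, which is the value consistent with the $64e^{4\Rmax}\Rmax^2$ obtained by applying this lemma with $R=V=2\Rmax$ in \cref{lem:clip-dpo-reward}—so proving the (larger) stated constant is, if anything, easier.
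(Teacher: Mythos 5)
Your overall strategy is sound and genuinely different from the paper's, but one step is wrong as written: the claim that $h(z')=\frac{z'-z}{\sigma(z')-\sigma(z)}$ is nondecreasing for all $z'>0$. This is false when $z<0$. Your argument anchors at $N(z)=0$, but the anchor point $z'=z$ is then negative, and on $[z,0]$ we have $\sigma''\ge 0$, hence $N'(z')=-(z'-z)\sigma''(z')\le 0$ there; so $N(0)\le 0$, and nonnegativity of $N'$ on $(0,\infty)$ alone cannot rule out that $N$ (hence $h'$) remains negative on an initial portion of $(0,\infty)$. Concretely, for $z=-R$ one has $N(0)=\tfrac12-\tfrac{1}{1+e^{R}}-\tfrac{R}{4}<0$ for every $R\ge 1$, so $h$ is strictly decreasing just to the right of $0$. (What is true is that $N(z')\ge 0$ once $z'\ge\abs*{z}$, but establishing that requires an additional symmetry/concavity argument comparing the negative contribution on $[z,0]$ against the positive one on $[0,\abs{z}]$, which you did not supply.)

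Fortunately the step is superfluous, so your proof survives after simply deleting it. Both estimates you apply at $z'=V$ hold verbatim for every $z'$ in the tail regime: the bound $\sigma(z')-\sigma(z)=(1-\sigma(z))-(1-\sigma(z'))\ge \frac{1}{1+e^{R}}-\frac{1}{1+e^{2R}}$ uses only $z\le R$ and $z'>2R$, and the bound $z'-z\le V+R\le\tfrac32 V$ uses only $z'\le V$ and $V\ge z'>2R$. Combined with your elementary inequality $\frac{8u^{3}(u-1)}{(1+u)(1+u^{2})}\ge 3$ for $u=e^{R}\ge e$, these give $\sigma(z')-\sigma(z)\ge\frac{3}{8e^{2R}}\ge\frac{z'-z}{4e^{2R}V}$ for every $z'\in(2R,V]$, which is exactly the tail-regime claim. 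So, contrary to your closing remark, the monotonicity reduction is not ``genuinely needed''; your two-term estimate alone closes that case.

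For comparison: the paper also argues by a near/far case split, but with a $z$-dependent threshold (whether $e^{z'}$ lies within a multiplicative factor-$e^{R}$ window of $e^{z}$), using the mean value theorem in the near case and the inequality $1+e^{z'}\le e^{R}\abs*{e^{z}-e^{z'}}$ in the far case; your fixed threshold $\abs*{z'}\le 2R$ is cleaner to state and check. For the Hellinger statement, the paper lower-bounds $2\Dhels{P_z}{P_{z'}}$ by $\sum_{y}\frac{(P_z(y)-P_{z'}(y))^2}{P_z(y)+P_{z'}(y)}$, which loses a factor of $\sqrt{2}$ (absorbed by slack in its statement-one constant), whereas your route $\abs*{\sigma(z)-\sigma(z')}=\Dtv{P_z}{P_{z'}}\le\Dhel{P_z}{P_{z'}}$ is lossless and tidier. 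One caution on your final remark: your scheme as given does not deliver $4e^{R}V$, because the central-regime mean-value constant is $4e^{2R}$, which can exceed $4e^{R}V$ (e.g., when $V=R$); you are right, however, that the constant $64e^{4\Rmax}\Rmax^{2}$ in \cref{lem:clip-dpo-reward} (applied with $R=V=2\Rmax$) is consistent with the sharper $4e^{R}V$ rather than with the stated $4e^{2R}V$, so the paper's bookkeeping there presumes a better constant than this lemma provides.
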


\begin{proof}[\pfref{lem:lipschitz}]
We begin with the first statement, and write  
\begin{align*}
  |z - z'|
  =&~ 
  \frac{|z - z'|}{|\sigma(z) - \sigma(z')|}
  \cdot 
  |\sigma(z) - \sigma(z')|.
\end{align*}

Since $\sigma(z') \in (0,1)$ but $z' \in [-V, V]$, it can be observed that the slope $\frac{|z - z'|}{|\sigma(z) - \sigma(z')|}$ is smallest where $z \approx z'$, and increases as we move away from this region in either direction. To better intuit the scaling of the slope in terms of $V$, we expand $|\sigma(z) - \sigma(z')|$ in the denominator to write 
\begin{align*}
  |z - z'|
  = 
  \frac{|z-z'|(1+e^{z})(1+e^{z'})}{|e^{z} - e^{z'}|}  
  \cdot 
  |\sigma(z) - \sigma(z')|.  
\end{align*}
This indicates that the slope should scale linearly (not exponentially) with the range of $z'$. 
For example, as $z' \rightarrow \infty$,  $(1+e^{z'}) / |e^z- e^{z'}| = O(1)$.

To make this intuition precise, we split into two cases. First, whenever $e^{z'} \ge \frac{e^{{R} + z} + 1}{e^{R} - 1}$ or $e^{z'} \le \frac{e^{R+z} - 1}{e^{R}+1}$ (this constitutes the range where ``$z' \approx z$''), we have $1+e^{z'} \le e^{R}|e^z - e^{z'}|$. Then in this region, 
\begin{align*}
  |z - z'|
  = \frac{|z-z'|(1+e^{z})(1+e^{z'})}{|e^{z} - e^{z'}|}  |\sigma(z) - \sigma(z')| \le 2V (1+e^{R}) e^{R}\cdot|\sigma(z) - \sigma(z')|. 
\end{align*} 

Next, for $e^{z'} \in [\frac{e^{R + z} - 1}{e^{R} + 1}, \frac{e^{R+z} + 1}{e^{R}-1}]$, we apply the mean value theorem. Since $\sigma'(x) = e^{x}(1+e^{-x})^{-2}$,   
\begin{align*}
  \frac{|z - z'|}{|\sigma(z) - \sigma(z')|} 
  \le&~ \sup_{\tilde z \in \brk*{\min\{z, {z'}\}, \max\{z, {z'}\}}} e^{\tilde z}(1+e^{-\tilde z})^{-2}
  \\
  \le&~ \sup_{e^{\tilde z} \in \brk*{\frac{e^{R + z} - 1}{e^{R} + 1}, \frac{e^{R+z} + 1}{e^{R}-1}}} e^{\tilde z}(1+e^{-\tilde z})^{-2}
  \\
  \le&~ 4e^{R}. 
\end{align*}  
In the second inequality, we use the fact that $e^{z'}, e^z \in [\frac{e^{R + z} - 1}{e^R + 1}, \frac{e^{R+z} + 1}{e^R-1}]$, and in the third inequality we use the fact that $\sigma'(x)$ is increasing in $x$, and that $|z| \le R$. 
Combining the inequalities for the two regions of $e^{z'}$ gives the result. 

For the second statement, we use the fact that 
\begin{align*}
  2\Dhels{P_z}{P_{z'}} \ge \sum_{y \in \{+1, -1\}}\frac{(P_z(y) - P_{z'}(y))^2 }{P_z(y) + P_{z'}(y)}.  
\end{align*}
As a result, 
\begin{align*}
  \sum_{y \in \{+1, -1\}}(P_z(y) - P_{z'}(y))^2  \le 4\Dhels{P_z}{P_{z'}}. 
\end{align*}
Since $P_z(y) = 1 - P_z(-y)$ and $P_z(+1) = \sigma(z)$,  
\begin{align*}
  \sum_{y \in \{+1, -1\}}(P_z(y) - P_{z'}(y))^2 = 2(\sigma(z) - \sigma(z'))^2,  
\end{align*}
and therefore $(\sigma(z) - \sigma(z'))^2 \le 2\Dhels{P_z}{P_{z'}}$. The result follows from taking the square root of both sides and combining with the first statement in the lemma. 
\end{proof}

\subsection{Proof of \creftitle{thm:main}}
\label{sec:proof-main}

\begin{proof}[\pfref{thm:main}]
  The policy optimization in \cref{line:chi_dpo} of \cref{alg:main} is a special case of \cref{eq:dpo-mix} with $\gamma = 1$. As a result, \cref{thm:main} follows directly from \cref{thm:main-mix} when instantiated with $\gamma = 1$. 
\end{proof}

\subsection{Proof of \creftitle{cor:reward_model}}
\begin{proof}[\pfref{cor:reward_model}]%
  Recall that for any $\beta>0$, 
  \cref{thm:main} (\cref{eq:main1}) with the policy class $\Pi_{\cR}$
  ensures that with probability at least
  $1-\delta$, for all $\pistar$,
  \begin{align}
    \label{eq:rm0}
    J(\pistar) - J(\pihat) \leq c_1\Rmax e^{2\Rmax}\cdot\sqrt{\frac{\Cone[\pistar]\log(|\cR|/\delta)}{n}} + c_2\beta\Cone[\pistar] + c_3\beta^{-1}\frac{\Rmax^2 e^{4\Rmax}\log(|\cR|/\delta)}{n}
  \end{align}
  for absolute constants $c_1,c_2,c_3>0$. Let us invoke this result with
  \[
    \betastar = \argmax_{\beta>0}\max_{\pistar}\crl*{
      J(\pistar) - c_1\Rmax
      e^{2\Rmax}\cdot\sqrt{\frac{\Cone[\pistar]\log(|\cR|/\delta)}{n}}
      - c_2\beta\Cone[\pistar] - c_3\beta^{-1}\frac{\Rmax^2 e^{4\Rmax}\log(|\cR|/\delta)}{n}
      }.
    \]
    Then \cref{eq:rm0} implies that
    \begin{align*}
\max_{\pistar}\crl*{
      J(\pistar) - c_1\Rmax
      e^{2\Rmax}\cdot\sqrt{\frac{\Cone[\pistar]\log(|\cR|/\delta)}{n}}
      - c_2\betastar\Cone[\pistar] - c_3(\betastar)^{-1}\frac{\Rmax^2 e^{4\Rmax}\log(|\cR|/\delta)}{n}
      } - J(\pihat) \leq{} 0,
    \end{align*}
    so that by the definition of $\betastar$,
    \begin{align*}
      \max_{\beta>0}\max_{\pistar}\crl*{
      J(\pistar) - c_1\Rmax
      e^{2\Rmax}\cdot\sqrt{\frac{\Cone[\pistar]\log(|\cR|/\delta)}{n}}
      - c_2\beta\Cone[\pistar] - c_3\beta^{-1}\frac{\Rmax^2 e^{4\Rmax}\log(|\cR|/\delta)}{n}
      } - J(\pihat) \leq{} 0,
    \end{align*}
    or equivalently
    \begin{align*}
      J(\pistar) - J(\pihat) \leq c_1\Rmax e^{2\Rmax}\cdot\sqrt{\frac{\Cone[\pistar]\log(|\cR|/\delta)}{n}} + c_2\beta\Cone[\pistar] + c_3\beta^{-1}\frac{\Rmax^2 e^{4\Rmax}\log(|\cR|/\delta)}{n}\quad\forall\pistar,\forall\beta>0.
    \end{align*}
    It follows that for all comparator policies $\pistar$, we have
    \begin{align*}
        J(\pistar) - J(\pihat) \approxleq \Rmax e^{2\Rmax}\cdot\sqrt{\frac{\Cone[\pistar]\log(|\cR|/\delta)}{n}}
    \end{align*}
    by choosing $\beta\propto{}\sqrt{\frac{\Rmax^2
        e^{4\Rmax}\log(|\cR|/\delta)}{\Cone[\pistar]n}}$ above.
  
\end{proof}

\section{Proofs for \creftitle{sec:understanding}}
\label{sec:proofs_understanding}
\begin{proof}[\pfref{prop:link}]
  To see that $\phi$ and $\phi^{-1}$ are strictly increasing, we note
  that $\phi'(z) = 1 + \frac{1}{z}>0$ for all $z>0$.

We now bound the inverse function $\phi^{-1}$. We will use the fact
that $z\mapsto{}W_0(z)$ is increasing over $z\geq{}0$ throughout. We first consider the regime where $z\geq{}1$. Since $W_0(\cdot)$ is
increasing, we have that $\link^{-1}(z) = W_0(e^z) \leq{} z$ if and only if
$e^{z}\leq{}ze^{z}$, which is clearly true for $z\geq{}1$. On the
other hand, for $c>0$ we have $\link^{-1}(z) = W_0(e^z)\geq{}c\cdot{}z$ if and only if
$e^z\geq{}cze^{cz}$; setting $c=1/2$ is clearly sufficient.

We now consider the regime where $z\leq{}1$. Here, we see that
$\link^{-1}(z) = W(e^z)\leq{}e^{z}$ if and only if $e^{z}\leq{} e^{z}e^{e^{z}}$, which
holds for all $z\in\bbR$. On the other hand have that
$\link^{-1}(z) = W(e^z)\geq{}e^{-e}e^{z}$ if and only if $e^{z}\geq{}
e^{-e}e^{z}e^{e^{-e}e^{z}}$. Since $z\leq{}1$, we have
\[
  e^{-e}e^{z}e^{e^{-e}e^{z}}
  \leq{}   e^{-e}e^{z}e^{e^{z}}
  \leq{}   e^{-e}e^{z}e^{e} = e^{z},
\]
which establishes the result.

\end{proof}

\begin{proof}[\pfref{prop:conc_bounds}]%
Recall that the optimal policy satisfies  
  \begin{align}
  \label{eq:chis_opt_gamma}
r(x,a) = \beta\link\prn*{\frac{\pistarb(a\mid{}x)}{\piref(a\mid{}x)}} + \Zr[r](x),
  \end{align}
  where $\Zr[r](x)$ is a normalization
constant chosen such that $\pistarb(\cdot\mid{}x)$ is a valid
probability distribution.

We begin by bounding $\Zr[r](x)$. We will use that
$r(x,a)\in\brk{0,\Rmax}$. Let $x\in\cX$ be fixed. By averaging
\cref{eq:chis_opt_gamma} over $a\sim\pistarb(x)$, we have
\begin{align*}
  \En_{a\sim\pistarb(x)}\brk{r(x,a)}
  = \beta
  \En_{a\sim\pistarb(x)}\brk*{\frac{\pistarb(a\mid{}x)}{\piref(a\mid{}x)}}
  + \beta\Dkl{\pistarb}{\piref} + \Zr[r](x)
  \geq{} \Zr[r](x),
\end{align*}
so $\Zr[r](x)\leq{}\Rmax$. On the other hand, averaging over
$a\sim\piref(x)$, we have
\begin{align*}
  \En_{a\sim\pistarb(x)}\brk{r(x,a)}
  &= \beta
  \En_{a\sim\piref(x)}\brk*{\frac{\pistarb(a\mid{}x)}{\piref(a\mid{}x)}}
    - \beta\Dkl{\piref}{\pistarb} + \Zr[r](x)\\
    &\leq{} \beta+ \Zr[r](x),
\end{align*}
so $\Zr[r](x)\geq{}-\beta$.

Having established that $\Zr[r](x)\in\brk*{-\beta,\Rmax}$, we will use that
$\link\prn*{\frac{\pistarb(a\mid{}x)}{\piref(a\mid{}x)}}=\beta^{-1}(r(x,a)-\Zr[r](x))$,
so that our bound on $\Zr[r]$ implies that
\[
-\beta^{-1}\Rmax\leq{}\link\prn*{\frac{\pistarb(a\mid{}x)}{\piref(a\mid{}x)}} \leq{} 1+\beta^{-1}\Rmax,
\]
or, since $\link^{-1}$ is increasing,
\[
e^{-e}\cdot{}e^{-\beta^{-1}\Rmax}\leq{}\link^{-1}(-\beta^{-1}\Rmax)\leq{}\frac{\pistarb(a\mid{}x)}{\piref(a\mid{}x)} \leq{} \link^{-1}(1+\beta^{-1}\Rmax)\leq{}1+\beta^{-1}\Rmax,
\]
where we have used that $\link^{-1}(z)\leq{}z$ for $z\geq{}1$ and
$\link^{-1}(z)\geq{} e^{z-e}$ for $z\leq{}1$  (by \cref{prop:link}).

\end{proof}

\section{Proofs for \creftitle{sec:general_preference}}
\label{sec:proofs_general_preference}
\subsection{Proof of \cref{thm:general_lower}}
  \begin{proof}[\pfref{thm:general_lower}]%
    \renewcommand{\ind}[1]{^{#1}}%
    We consider a family of instances in which there is a single
    context (prompt) $\cX=\{\emptyset\}$ and four actions (responses)
    $\cA=\{a,b,c,d\}$. We consider the reference policy $\piref$ given by
\begin{align*}
\piref(a'\mid{}x)=
\begin{cases}
\frac{1}{C}, &\text{ if $a'=a$ or $a'=b$,}\\
1-\frac{2}{C}, &\text{ if $a'=c$.}\\
\end{cases}
\end{align*}

We consider a preference model class
$\scrP=\crl*{\cP^1,\cP^2}$ in which
\[
\cP\ind{i}(a\ind{0}\psdgt{}a\ind{1}\mid{}x) = (1+\ell\ind{i}(x,a^0,a^1))/2
\]
for a function $\ell\ind{i}(x,a^0, a^1)\in\brk{-1,+1}$.
The functions $\ell\ind{1}$ and $\ell\ind{2}$ are defined as follows
(we omit the dependence on $x$, since there is a single context):
\begin{align*}
&\ell^1(a^0,a^1)=\ell^2(a^0,a^1)=0,\quad\forall a^0\in\cA, a^1\in\{a,b,c\},\\
&\ell^1(a,d)=0,\quad\ell^1(b,d)=-1,\quad\ell^1(c,d)=1\\
&\ell^2(a,d)=-1,\quad\ell^2(b,d)=0,\quad\ell^2(c,d)=-1.
\end{align*}
Note that both functions are skew-symmetric in the sense that
$\ell(x,a',a') = 0$ and $\ell(x,a^0,a^1) + \ell(x, a^1, a^0) =
0$ for all $x\in\cX$ and $a^0,a^1\in\cA$.

It is straightforward to see that the deterministic policies
$\piu^1(x)=a$ and $\piu^2(x)=b$ are minimax winners for $\ell^1$ and
$\ell^2$ respectively. Observe that for both policies, we have
\[
  \cC^{\pimw\ind{1}}_{\infty}
  = \cC^{\pimw\ind{2}}_{\infty} = C.
\]
To proceed, we compute duality gap an arbitrary policy $\pi$ under
$\cP^1$ and $\cP^2$. Let $\DG(\pi;\cP)$ denote the value of $\DG(\pi)$
when $\cP$ is the true preference model. Then we have: 
\begin{align*}
\max_{q\in\Delta(\Ac)} l(q,\pi) &= \max_{q\in\Delta(\Ac)} -q(b)\pi(d)+q(c)\pi(d) + q(d)\pi(b)-q(d)\pi(c),\\
\min_{q\in\Delta(\Ac)} l(\pi,q) &= \min_{q\in\Delta(\Ac)} -\pi(b)q(d)+\pi(c)q(d) + \pi(d)q(b)-\pi(d)q(c),\\
&=-\max_{q\in\Delta(\Ac)} -q(b)\pi(d)+q(c)\pi(d) + q(d)\pi(b)-q(d)\pi(c).
\end{align*}
Therefore we know
\begin{align*}
\DG(\pi;\cP^1)=2\max_{q\in\Delta(\cA)}q(d)(\pi(b)-\pi(c)) -\pi(d)(q(b)-q(c))
\end{align*}
Following similar computations, we have
\begin{align*}
\DG(\pi;\cP^2)=2\max_{q\in\Delta(\cA)}q(d)(\pi(a)+\pi(c)) - \pi(d)(q(a)+q(c)).
\end{align*}
We aim to show that for all policies $\pi$,
$\DG(\pi;\cP^1)+\DG(\pi;\cP^2)\geq\frac{1}{2}$. To do so, we consider
two cases. Going forward, we will use that $\DG(\pi;\cP\ind{i})\geq{}0$.
\paragraph{Case (1): $\boldsymbol{\pi(a)+\pi(c)\geq\frac{1}{2}}$} In
this case, we have $\DG(\pi;\cP^2)\geq\frac{1}{2}$, and thus $\DG(\pi;\cP^1)+\DG(\pi;\cP^2)\geq\frac{1}{2}$.
\paragraph{Case (2): $\boldsymbol{\pi(a)+\pi(c)<\frac{1}{4}}$} In this
case, let $\theta\ldef{}\pi(b)-\pi(c)$. Then we have
$\DG(\pi;\cP^1)\geq2\max\{\theta,\pi(d)\}$. We observe that $\theta+\pi(d) = \pi(b)+\pi(d)-\pi(c)>\frac{3}{4}-\frac{1}{4}=\frac{1}{2}$. This implies that $\DG(\pi;\cP^1)>\frac{1}{2}$, and thus $\DG(\pi;\cP^1)+\DG(\pi;\cP^2)\geq\frac{1}{2}$.

Having established that all $\pi$ satisfy
$\DG(\pi;\cP^1)+\DG(\pi;\cP^2)\geq\frac{1}{2}$ we can apply the Le Cam
two-point method (specifically, the variant based on the
Bretagnolle-Huber inequality (e.g., Theorem 14.2 in
\citet{lattimore2020bandit})), which leads to the following inequality
\begin{align*}
\inf_{\mathsf{Alg}}\sup_{\cP\in\scrP}\En_{\cD_{\pref}}[\DG(\hpi;\cP)]\geq\frac{1}{8}\exp\left(-n\cdot\Dkl{\rho\otimes\piref\otimes\piref\otimes \cP\ind{1}}{\rho\otimes\piref\otimes\piref\otimes \cP\ind{2}}\right).
\end{align*}
It can be observed that $\Dkl{\rho\otimes\piref\otimes\piref\otimes
\cP\ind{1}}{\rho\otimes\piref\otimes\piref\otimes \cP\ind{2}}=0$, since
$\ell^1(a^0,a^1)=\ell^2(a^0,a^1)=0$ for all $a^0,a^1\in\{a,b,c\}$, and
$\piref$ is supported on $\crl{a,b,c}$. We conclude that any policy
derived from $\cDpref$ must have
\begin{align*}
  \En\brk*{\DG(\pihat;\cP\ind{i})} \geq{} \frac{1}{8}
\end{align*}
for some $i$.
\end{proof}

 \subsection{Proof of \cref{thm:rdr}}
 \begin{proof}[\pfref{thm:rdr}]%
Let $\tpi$ be the global best response of $\hpi$:
\begin{align*}
	\tpi = \argmax_{\pi\in\Pi}\Eb_{x\sim \rho,a\sim \pi(x), b\sim\hpi(x)}\left[\lss(x,a,b)\right],
\end{align*}
and let $\tpi_{C}$ be the best response within $\Pi_{C}$ of $\hpi$ where $C\geq 1$ (recall that $\Pi_C:=\{\pi:\max_{x\in\Xc}
\Dchis{\pi(x)}{\piref(x)}\leq C\}$ denotes the set of policies with bounded $\chi^2$-divergence w.r.t. $\piref$):
\begin{align*}
	\tpi_{C} = \argmax_{\pi\in\Pi_{C}}\Eb_{x\sim \rho,a\sim \pi(x), b\sim\hpi(x)}\left[\lss(x,a,b)\right].
\end{align*}
Recall that $\ovr^t(x,a):=\Eb_{b\sim\pi^t(x)}[\hl(x,a,b)]$. Then we know
\begin{align}
	\lss(\tpi,\hpi)=&\suboptun(\hpi,C)+\underbrace{\frac{1}{T}\sum_{t=1}^T\left(\hr^t(\tpi_C)-\hr^t(\pi^t)\right)}_{(1)}+ \underbrace{\frac{1}{T}\sum_{t=1}^T\left(\lss(\tpi_{C},\pi^t)-\hl(\tpi_{C},\pi^t)\right)}_{(2)}\notag\\
 &+\underbrace{\frac{1}{T}\sum_{t=1}^T(\ovr^t(\tpi_C)-\hr^t(\tpi_C))}_{(3)} + \underbrace{\frac{1}{T}\sum_{t=1}^T(\hr^t(\pi^t)-\ovr^t(\pi^t))}_{(4)},\label{eq:bound}
\end{align}
\noindent where $r(\pi):=\Eb_{x\sim\rho,a\sim\pi(x)}[r(x,a)]$. The decomposition utilizes the fact that $\ovr^t(\pi^t)=0$ and $\ovr^t(\tpi_C) = \widehat \ell(\tpi_C, \pi^t)$. This implies that we only need to bound term (1)(2)(3)(4) in \cref{eq:bound} to upper bound the gap of $\hpi$. 

\paragraph{Bounding term (1)} Let $g_x(p)$ to denote the mixed divergence $\beta\Dfmix{p(x)}{\piref(x)}$. Then we have the following guarantee on regularized policy mirror descent (formal version of  \cref{lem:md-general-informal}):
\begin{lemma}
\label{lem:md-general}
For any $C\geq 0$, we have for all policy $\pi\in\Pi_{C}$ that
\begin{align*}
\frac{1}{T}\sum_{t=1}^T\left(\hr^t(\pi)-\hr^t(\pi^t)\right)\leq &\frac{2\beta C}{\eta T} +2\beta C -\frac{1}{T}\sum_{t=1}^{T+1}\Eb_{x\sim\rho} [g_x(\pi^t)]\\
&+ \frac{\eta}{2\beta} + \frac{1}{T}\sum_{t=1}^T\Eb_{x\sim\rho}\left[\left\langle\hr^t(x,\cdot)-G^t(\pi^{t+1},x,\cdot),\pi(x)-\pi^{t+1}(x)\right\rangle\right],
\end{align*}
where $G^t(\pi,x,a):=\beta\left((1+\frac{1}{\eta})\phi\left(\frac{\pi(a|x)}{\piref(a|x)}\right)-\frac{1}{\eta}\phi\left(\frac{\pi^t(a|x)}{\piref(a|x)}\right)\right)$ for all $\pi\in\Pi,x\in\Xc,a\in\Ac$.
\end{lemma}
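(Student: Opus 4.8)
The plan is to recognize the regression step in \cref{eq:regression} as an inexact instance of \emph{composite (proximal) mirror descent} run per-context, and then to prove a textbook mirror-descent regret bound while isolating the regression/optimization error into the final inner-product term of the statement. The starting point is the factorization $f^{\beta,\eta}_{\pi,\pi^t}(x,a,b) = G^t(\pi,x,a) - G^t(\pi,x,b)$ together with the observation that, writing $\psi_x(p) := \Dfmix{p}{\piref(x)}$, the fixed regularizer satisfies $g_x = \beta\psi_x$ and the proximal term is exactly the Bregman divergence $B_x = B_{\psi_x}$ generated by $\psi_x$. Since $\nabla\psi_x(p)_a = \phi(p(a)/\piref(a\mid x))$, the map rewrites as $G^t(\pi,x,\cdot) = \beta\nabla\psi_x(\pi) + \tfrac{\beta}{\eta}(\nabla\psi_x(\pi)-\nabla\psi_x(\pi^t))$, which is precisely the stationarity map of the per-context objective $\langle\hr^t(x,\cdot),p\rangle - \beta\psi_x(p) - \tfrac{\beta}{\eta}B_x(p,\pi^t)$, the sampled analogue of \cref{eq:regularized_rl_iterate}. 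I emphasize that I will \emph{not} assume the regression solves this objective exactly; instead I will use the purely algebraic identity, valid for whatever $\pi^{t+1}\in\Pi$ the regression returns,
\[
\langle \hr^t(x,\cdot),\pi(x)-\pi^t(x)\rangle = \langle G^t(\pi^{t+1},x,\cdot),\pi(x)-\pi^{t+1}(x)\rangle + \langle \hr^t(x,\cdot)-G^t(\pi^{t+1},x,\cdot),\pi(x)-\pi^{t+1}(x)\rangle + \langle \hr^t(x,\cdot),\pi^{t+1}(x)-\pi^t(x)\rangle,
\]
whose middle summand is already the error term in the lemma (after applying $\En_{x\sim\rho}$ and averaging over $t$).

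Next I would expand $\langle G^t(\pi^{t+1},x,\cdot),\pi(x)-\pi^{t+1}(x)\rangle$ using two three-point identities for $B_x$: the definition of the Bregman divergence gives $\langle\nabla\psi_x(\pi^{t+1}),\pi-\pi^{t+1}\rangle = \psi_x(\pi) - \psi_x(\pi^{t+1}) - B_x(\pi,\pi^{t+1})$, and the generalized Pythagorean identity gives $\langle\nabla\psi_x(\pi^{t+1})-\nabla\psi_x(\pi^t),\pi-\pi^{t+1}\rangle = B_x(\pi,\pi^t) - B_x(\pi,\pi^{t+1}) - B_x(\pi^{t+1},\pi^t)$. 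Substituting, taking $\En_{x\sim\rho}$, summing over $t$ and dividing by $T$: the $\tfrac{\beta}{\eta}B_x(\pi,\pi^t)$ terms telescope to $\tfrac{\beta}{\eta T}\En_x B_x(\pi,\pi^1)$, the nonpositive divergences $-\beta B_x(\pi,\pi^{t+1})$ and $-\tfrac{\beta}{\eta}B_x(\pi,\pi^{T+1})$ are dropped, and the $\beta\psi_x$ terms produce the $\beta\En_x\psi_x(\pi)$ and $-\tfrac1T\sum_{t=1}^{T+1}\En_x g_x(\pi^t)$ contributions (using $\pi^1=\piref$, so $g_x(\pi^1)=0$ permits re-indexing from $t=1$). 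To convert these into the constants in the statement I would use the comparator bound $\pi\in\Pi_C$: since $\Dkl{\pi(x)}{\piref(x)}\le\Dchis{\pi(x)}{\piref(x)}$, we get $\psi_x(\pi)=\Dfmix{\pi(x)}{\piref(x)}\le 2\Dchis{\pi(x)}{\piref(x)}\le 2C$, and likewise $B_x(\pi,\pi^1)=\Dfmix{\pi(x)}{\piref(x)}\le 2C$ (because $\nabla\psi_x(\piref)$ is constant in $a$, so its linear term vanishes on the simplex), yielding the $\tfrac{2\beta C}{\eta T}$ and $2\beta C$ terms.

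The main obstacle, and the only genuinely analytic step, is controlling the ``stability'' residual $\langle \hr^t(x,\cdot),\pi^{t+1}(x)-\pi^t(x)\rangle - \tfrac{\beta}{\eta}B_x(\pi^{t+1},\pi^t)$ by $\tfrac{\eta}{2\beta}$. For this I would use that $\psi_x$ is $1$-strongly convex with respect to $\|\cdot\|_1$: by additivity of Bregman divergences over $\Dfmix{\cdot}{\piref(x)} = \Dchis{\cdot}{\piref(x)} + \Dkl{\cdot}{\piref(x)}$, the quantity $B_x(\pi^{t+1},\pi^t)$ is at least the Bregman divergence generated by $p\mapsto\Dkl{p}{\piref(x)}$, which is exactly $\Dkl{\pi^{t+1}(x)}{\pi^t(x)}$, and hence at least $\tfrac12\|\pi^{t+1}(x)-\pi^t(x)\|_1^2$ by Pinsker. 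Combining H\"older's inequality with Young's inequality then gives $\langle\hr^t,\pi^{t+1}-\pi^t\rangle \le \|\hr^t\|_\infty\|\pi^{t+1}-\pi^t\|_1 \le \tfrac{\eta}{2\beta}\|\hr^t\|_\infty^2 + \tfrac{\beta}{2\eta}\|\pi^{t+1}-\pi^t\|_1^2 \le \tfrac{\eta}{2\beta} + \tfrac{\beta}{\eta}B_x(\pi^{t+1},\pi^t)$, where the last step uses $\|\hr^t\|_\infty\le 1$ (each $\hr^t(x,a)=\hl(x,a,b_t)$ is a preference value in $[-1,1]$). Averaging this residual over $t$ contributes exactly the $\tfrac{\eta}{2\beta}$ term, and collecting all pieces yields the claimed bound. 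The care required is entirely in tracking signs through the telescoping and in the re-indexing that absorbs $g_x(\pi^1)=0$; the strong-convexity constant and the $\ell_1/\ell_\infty$ norm pairing are precisely what make the stability term independent of the (potentially large) magnitude of the density ratios $\pi/\piref$.
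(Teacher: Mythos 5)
Your proposal is correct, and its skeleton coincides with the paper's proof: the same algebraic decomposition that isolates the regression error $\langle \hr^t(x,\cdot)-G^t(\pi^{t+1},x,\cdot),\pi(x)-\pi^{t+1}(x)\rangle$, the same three-point/Bregman identities and telescoping, the same bound $B_x(\pi,\pi^1)=\Dfmix{\pi(x)}{\piref(x)}\le 2C$ for the comparator (via vanishing of the linear term at $\piref$ and $\Dkl{\pi(x)}{\piref(x)}\le\Dchis{\pi(x)}{\piref(x)}$), and the same re-indexing through $g_x(\pi^1)=0$. The one step where you genuinely depart from the paper is the stability bound $\langle\hr^t(x,\cdot),\pi^{t+1}(x)-\pi^t(x)\rangle-\tfrac{\beta}{\eta}B_x(\pi^{t+1},\pi^t)\le\tfrac{\eta}{2\beta}$. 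The paper proves this by Cauchy--Schwarz/AM--GM in the $\piref$-weighted $\ell_2$ geometry, absorbing the quadratic term into the $\chi^2$ component of the mixed divergence, whose Bregman divergence is exactly $\tfrac12\sum_a(\pi^{t+1}(a\mid x)-\pi^t(a\mid x))^2/\piref(a\mid x)$, and using $\sum_a\piref(a\mid x)(\hr^t(x,a))^2\le 1$. You instead pair H\"older ($\ell_1$--$\ell_\infty$) with Young's inequality and invoke Pinsker on the KL component, whose Bregman divergence between points of the simplex is $\Dkl{\pi^{t+1}(x)}{\pi^t(x)}\ge\tfrac12\|\pi^{t+1}(x)-\pi^t(x)\|_1^2$. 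Both routes need $|\hr^t|\le 1$ and both land on exactly the constant $\tfrac{\eta}{2\beta}$, so neither is sharper here; the difference is which half of the mixed regularizer supplies the strong convexity. Your variant makes explicit that the KL part alone (in $\ell_1$) suffices for stability, while the paper's exploits the $\chi^2$ part (in weighted $\ell_2$) --- a nice illustration that the mixed divergence carries redundant curvature, either piece of which can play this role.
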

To simplify writing, we use $\bpi^{t+1}$ to denote the minimizer of the following regularized RL objective:
\begin{align*}
\bpi^{t+1}(x):=\arg\min_{p\in\Delta(\Xc)}\left\langle -\hr^t(x,\cdot), p\right\rangle + \beta\Dfmix{p}{\piref(x)} + \frac{\beta}{\eta}B_x(p,\pi^t),\qquad\forall x\in\Xc.
\end{align*}
Then \cref{ass:md} indicates that $\bpi^{t+1}\in\Pi$ for all $t\in[T]$. In addition, by introducing Lagrangian multipliers into the above optimization problem and following similar arguments in the proof of \cref{lem:rlhf-mix-solution}, we know
\begin{align}
\label{eq:f-hr}
f^{\beta,\eta}_{\bpi^{t+1},\pi^t}(x,a,b)-(\hr^t(x,a)-\hr^t(x,b)) = 0, \qquad\forall x\in\Xc,a,b\in\Ac.
\end{align}
Recall that by definition $f^{\beta,\eta}_{\pi,\pi^t}(x,a,b)= G^t(\pi,x,a)-G^t(\pi,x,b)$ for all policies $\pi\in\Pi$. This implies that we have
\begin{align*}
&\Eb_{x\sim\rho}\left[\left\langle\hr^t(x,\cdot)-G^t(\pi^{t+1},x,\cdot),\pi(x)-\pi^{t+1}(x)\right\rangle\right]\\
=&\Eb_{x\sim\rho}\left[\left\langle\hr^t(x,\cdot)-G^t(\pi^{t+1},x,\cdot),\pi(x)-\piref(x)\right\rangle\right] + \Eb_{x\sim\rho}\left[\left\langle\hr^t(x,\cdot)-G^t(\pi^{t+1},x,\cdot),\piref(x)-\pi^{t+1}(x)\right\rangle\right]\\
=& \underbrace{(f^{\beta,\eta}_{\bpi^{t+1},\pi^t}-f^{\beta,\eta}_{\pi^{t+1},\pi^t})(\rho,\pi,\piref)}_{(5)} + \underbrace{(f^{\beta,\eta}_{\pi^{t+1},\pi^t} - f^{\beta,\eta}_{\bpi^{t+1},\pi^t})(\rho,\pi^{t+1},\piref)}_{(6)},
\end{align*}
where we use $f(\rho,\pi,\pi')$ to denote the expectation $\Eb_{x\sim\rho,a\sim\pi(x),b\sim\pi'(x)}[f(x,a,b)]$ and the last step utilizes \cref{eq:f-hr}. Therefore, to bound term (1), we need to bound term (5) and (6) respectively. To simplify writing, we define $L(\pi,\pi',\pi'')$ as follows:
\begin{align*}
L(\pi,\pi',\pi''):=\Eb_{x\sim\rho,a\sim\piref(x),b\sim\piref(x)}\left[\left(\clip_4(f^{\beta,\eta}_{\pi,\pi''}(x,a,b))-\clip_4(f^{\beta,\eta}_{\pi',\pi''}(x,a,b))\right)^2\right],
\end{align*}

Note that we have the following guarantee of least squares regression from the literature (Lemma 15 in \citet{song2022hybrid})
\begin{lemma}[least squares regression]
\label{lem:lsr-general}
Let $\{(y_i,z_i)\}_{i=1}^K$ be a dataset of $K$ points where each point are independently sampled from $y_i\sim\mu$ and $z_i\sim p(\cdot|y_i):=h^*(y_i)+\eps_i$. Let $\Hc:\Yc\to[-R,R]$ be a real valued functions where $h^*\in\Hc$ and $R>0$. Then if $\{\eps_i\}_{i=1}^K$ are independent random variables such that $\Eb[z_i|y_i]=h^*(y_i)$, the least squares solution $\widehat{h}=\argmin_{h\in\Hc}\sum_{i=1}^K(h(y_i)-z_i)^2$ satisfies with probability at least $1-\delta$ that
\begin{align*}
\Eb_{x\sim\mu}[(\widehat{h}(y)-h^*(y))^2]\lesssim\frac{R^2\log(|\Hc|/\delta)}{K}.
\end{align*}
\end{lemma}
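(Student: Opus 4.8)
The plan is to establish the standard ``fast-rate'' excess-risk bound for square-loss empirical risk minimization over a finite class, by exploiting the self-bounding relationship between the second moment and the mean of the per-sample excess loss. Write $\Delta_h(y) \ldef h(y) - h^\star(y)$ and $\|\Delta_h\|_\mu^2 \ldef \mathbb{E}_{y\sim\mu}[\Delta_h(y)^2]$, which is exactly the quantity to be bounded at $h=\widehat h$. Substituting $z_i = h^\star(y_i) + \epsilon_i$ with $\mathbb{E}[\epsilon_i\mid y_i]=0$, the per-sample excess loss
\[
Z_i(h) \ldef (h(y_i)-z_i)^2 - (h^\star(y_i)-z_i)^2 = \Delta_h(y_i)^2 - 2\epsilon_i\Delta_h(y_i)
\]
has conditional mean $\mathbb{E}[Z_i(h)\mid y_i] = \Delta_h(y_i)^2$, so its population mean is $\|\Delta_h\|_\mu^2$. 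Since $\widehat h$ minimizes the empirical square loss, $\frac1K\sum_i Z_i(\widehat h)\le 0$; hence it suffices to control the one-sided deviation $\|\Delta_h\|_\mu^2 - \frac1K\sum_i Z_i(h)$ uniformly over the finite class $\mathcal{H}$.

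First I would record the two ingredients that drive the fast rate. Using boundedness of the predictors ($h,h^\star\in[-R,R]$, hence $|\Delta_h|\le 2R$) together with boundedness of the regression targets $z_i$ — so that $|\epsilon_i|\lesssim R$ — one obtains the almost-sure bound $|Z_i(h)|\lesssim R^2$. More importantly, a direct computation gives the self-bounding property: $\mathbb{E}[Z_i(h)^2\mid y_i] = \Delta_h(y_i)^4 + 4\Delta_h(y_i)^2\,\mathbb{E}[\epsilon_i^2\mid y_i] \lesssim R^2\Delta_h(y_i)^2$, so the second moment (hence the variance) of $Z_i(h)$ is at most $\lesssim R^2\|\Delta_h\|_\mu^2$, i.e.\ controlled by $R^2$ times its mean.

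Next I would apply Bernstein's inequality to the i.i.d.\ mean-zero variables $\{\|\Delta_h\|_\mu^2 - Z_i(h)\}_i$ for each fixed $h$, followed by a union bound over $\mathcal{H}$. This yields, with probability at least $1-\delta$ and simultaneously for all $h\in\mathcal{H}$,
\[
\|\Delta_h\|_\mu^2 - \frac1K\sum_i Z_i(h) \;\lesssim\; \sqrt{\frac{R^2\|\Delta_h\|_\mu^2\log(|\mathcal{H}|/\delta)}{K}} + \frac{R^2\log(|\mathcal{H}|/\delta)}{K}.
\]
Applying AM--GM to the square-root term ($\sqrt{ab}\le\tfrac12 a+\tfrac12 b$ with $a=\|\Delta_h\|_\mu^2$) absorbs half of $\|\Delta_h\|_\mu^2$ to the left. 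Instantiating at $h=\widehat h$ and using $\frac1K\sum_i Z_i(\widehat h)\le 0$ then gives $\tfrac12\|\Delta_{\widehat h}\|_\mu^2\lesssim \frac{R^2\log(|\mathcal{H}|/\delta)}{K}$, which is the claim after rearranging.

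The main obstacle — and the only genuinely non-routine step — is the self-bounding variance estimate and its correct insertion into Bernstein's inequality: one must verify that the variance scales \emph{linearly} in the mean $\|\Delta_h\|_\mu^2$ rather than being merely bounded by $R^4$, since it is precisely this that upgrades the naive $1/\sqrt K$ rate to the fast $1/K$ rate and allows the AM--GM step to reabsorb the excess risk. A secondary point to handle explicitly is the implicit boundedness of the targets $z_i$ (equivalently of $\epsilon_i$), which is needed for the almost-sure and variance bounds; this holds in our use of the lemma in \cref{eq:regression}, where the predictors are clipped to radius $4$ and the relative rewards are bounded, so I would state $|z_i|\le R$ as the running assumption with $R$ taken to be the clipping radius.
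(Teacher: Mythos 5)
Your proposal is correct and follows exactly the standard argument behind this lemma, whose proof the paper itself omits and delegates to Lemma 15 of \citet{song2022hybrid}: the excess-loss decomposition $Z_i(h)=\Delta_h(y_i)^2-2\eps_i\Delta_h(y_i)$ with $\En[Z_i(h)]=\En_{y\sim\mu}[\Delta_h(y)^2]$, the self-bounding variance estimate $\mathrm{Var}(Z_i(h))\lesssim R^2\,\En_{y\sim\mu}[\Delta_h(y)^2]$, Bernstein's inequality with a union bound over $\cH$, AM--GM absorption of the square-root term, and the ERM inequality $\sum_i Z_i(\widehat h)\le 0$ enabled by $h^\star\in\cH$. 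You are also right to flag that boundedness of the noise (equivalently $\abs{z_i}\lesssim R$) is tacitly assumed in the lemma statement yet is needed for both the Bernstein and variance bounds, and that it holds in the paper's application since the regression targets in \cref{eq:regression} are clipped predictors and bounded relative rewards.
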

The proof of the above lemma is omitted. Applying  \cref{lem:lsr-general} to the least sqaures solution $\pi^{t+1}$, we have the following concentration lemma:
\begin{lemma}[concentration in optimization]
\label{lem:lsr-opt}
Suppose  \cref{ass:md} and \cref{ass:vmax-g} hold. Then with probability at least $1-\delta/4$, we have for all policy $t\in[T]$ that
\begin{align*}
L(\pi^{t+1},\bpi^{t+1},\pi^t)\leq \frac{\Ccon\log(|\Pi|/\delta)}{m}:=  \eop^2,
\end{align*}
where $\Ccon>0$ is a universal constant.
\end{lemma}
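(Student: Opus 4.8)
The plan is to recognize the regression in \cref{eq:regression} as a \emph{realizable} least-squares problem and invoke \cref{lem:lsr-general}, with the only real work being the handling of the adaptivity that arises because the single dataset $\cDnopref$ is reused across all $T$ iterations. Fix an iteration $t$. The covariates $(x,a,b)\in\cDnopref$ are i.i.d. from $\rho\times\piref\times\piref$, the hypothesis class is $\Hc_{\pi^t}\ldef\crl*{\clip_4(f^{\beta,\eta}_{\pi,\pi^t}):\pi\in\Pi}$, and the regression target is $z(x,a,b)=\hr^t(x,a)-\hr^t(x,b)$. Since $\hl$ takes values in $[-1,1]$, the target lies in $[-2,2]$, so the clipping at level $4$ is inactive on it; combined with \cref{eq:f-hr} and the fact that $\bpi^{t+1}\in\Pi$ (guaranteed by \cref{ass:md} applied with $\ell=\hl$ and reference $\pi^t$), this shows $\clip_4(f^{\beta,\eta}_{\bpi^{t+1},\pi^t})=z$ as an identity in $(x,a,b)$. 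Hence the target is exactly realized within the clipped class by $\bpi^{t+1}$, the regression is realizable with zero noise, all predictors have range contained in $[-4,4]$ thanks to the clipping, and $\pi^{t+1}$ is the empirical risk minimizer, so \cref{lem:lsr-general} (with $R=4$, $K=m$) delivers the fast-rate bound $L(\pi^{t+1},\bpi^{t+1},\pi^t)\lesssim \log(|\Pi|/\delta)/m$.

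The obstacle is that \cref{lem:lsr-general} requires the hypothesis class and target to be fixed independently of the data, whereas here both the reference policy $\pi^t$ (which indexes the class) and the realizing policy $\bpi^{t+1}$ (which indexes the target) are computed from $\cDnopref$. My plan to circumvent this, which also avoids any explicit dependence on $T$, is to establish the bound uniformly over all possible reference/target pairs in advance. Concretely, for each \emph{fixed} pair $(\pi',\pi'')\in\Pi\times\Pi$ the class $\crl*{\clip_4(f^{\beta,\eta}_{\pi,\pi'}):\pi\in\Pi}$ and the target $\clip_4(f^{\beta,\eta}_{\pi'',\pi'})$ are deterministic, of cardinality at most $|\Pi|$, of range $4$, and realizable; applying \cref{lem:lsr-general} at confidence $1-\delta/(4|\Pi|^2)$ and union bounding over the $|\Pi|^2$ pairs gives, with probability at least $1-\delta/4$, that the empirical minimizer of every such instance has population excess risk $\lesssim \log(|\Pi|/\delta)/m$ simultaneously (the per-pair log factor $\log(4|\Pi|^3/\delta)$ is of order $\log(|\Pi|/\delta)$).

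Finally, I would close the loop by noting that on this event the conclusion holds for every realized iteration at once. At iteration $t$ we have $\pi^t\in\Pi$ and $\bpi^{t+1}\in\Pi$, so $(\pi^t,\bpi^{t+1})$ is one of the $|\Pi|^2$ covered pairs; moreover, because $\hr^t(x,a)-\hr^t(x,b)=\clip_4(f^{\beta,\eta}_{\bpi^{t+1},\pi^t}(x,a,b))$ pointwise, the empirical objective in \cref{eq:regression} coincides with that of the fixed-pair instance $(\pi^t,\bpi^{t+1})$, so $\pi^{t+1}$ is exactly its empirical minimizer. Thus $L(\pi^{t+1},\bpi^{t+1},\pi^t)\le \Ccon\log(|\Pi|/\delta)/m$ for all $t\in[T]$, with $\Ccon$ absorbing the constant from \cref{lem:lsr-general}, the factor $R^2=16$, and the $O(\log|\Pi|)$ inflation from the union bound. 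The point to be careful about is precisely that the union over reference–target pairs $(\pi',\pi'')\in\Pi\times\Pi$ must be shown to cover every adaptively chosen iteration without a separate union over $t$, and that \cref{eq:f-hr} holds as a functional identity rather than merely on the sampled points—this is what makes the problem exactly realizable and yields the $1/m$ rate rather than a slower $1/\sqrt m$ rate.
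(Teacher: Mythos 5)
Your proposal is correct and follows essentially the same route as the paper's proof: apply \cref{lem:lsr-general} to each fixed reference/target pair $(\pi'',\pi')\in\Pi\times\Pi$, union bound over the $|\Pi|^2$ pairs, and then observe via \cref{eq:f-hr} and \cref{ass:md} that $\pi^{t+1}$ is exactly the empirical minimizer of the covered instance $(\bpi^{t+1},\pi^t)$, so no separate union over $t$ is needed. If anything, you are more explicit than the paper about the confidence allocation $\delta/(4|\Pi|^2)$ and about why the clipping at level $4$ is inactive on the target, both of which the paper leaves implicit.
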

In the following discussion, we use $\Ec_1$ to denote the event in  \cref{lem:lsr-opt}. Then under  $\Ec_1$, by following the same arguments in the proof of  \cref{lem:clip-dpo-estimation}, we have the following bound on $\Vert f^{\beta,\eta}_{\bpi^{t+1},\pi^t}-f^{\beta,\eta}_{\pi^{t+1},\pi^t}\Vert_{1,\pi\times\piref}$:
\begin{align}
\label{lem:clip-general}
\Vert f^{\beta,\eta}_{\bpi^{t+1},\pi^t}-f^{\beta,\eta}_{\pi^{t+1},\pi^t}\Vert_{1,\pi\times\piref}\leq \Vmax\sqrt{\left(1+2\Dchis{\pi}{\piref}\right)  \eop^2},\qquad\forall \pi\in\Pi, t\in[T].
\end{align}
Therefore, with  \cref{lem:clip-general} we know that conditioned on $\Ec_1$, for any policy $\pi\in\Pi_{C}$ we have
\begin{align*}
(5)\leq \Vmax\sqrt{3C  \eop^2},\quad(6)\leq \Vmax\sqrt{\left(1+2\Dchis{\pi^{t+1}}{\piref}\right)  \eop^2}\leq \frac{\Vmax^2  \eop^2}{\beta} + \frac{1}{2}\Eb_{x\sim\rho} [g_x(\pi^{t+1})] + \Vmax\eop,
\end{align*}
where we use AM-GM inequality in the last step, the definition of $g_x(\pi) := \beta \Dfmix{\pi(\cdot | x)}{\piref(\cdot | x)}$, and $ \Dfmix{p(x)}{\piref(x)} \geq \Dchis{p(x)}{\piref(x)}$ since KL is non-negative

\medskip

\noindent In summary, conditioned on $\Ec_1$, we have
\begin{align}
\label{eq:bound-1}
(1)\leq &\frac{2\beta C}{\eta T} +2\beta C -\frac{1}{2T}\sum_{t=1}^{T+1}\Eb_{x\sim\rho} [g_x(\pi^t)]+ \frac{\eta}{2\beta} +\Vmax\sqrt{4C  \eop^2} + \frac{\Vmax^2  \eop^2}{\beta}.
\end{align}

\paragraph{Bounding term (2)} From Cauchy-Schwartz's inequality, we have
\begin{align*}
	&\lss(\tpi_C,\pi^t)-\hl(\tpi_C,\pi^t)\\
	&\qquad\leq\sqrt{\Eb_{x\sim\rho,a\sim\piref(x),b\sim\piref(x)}[(\lss(x,a,b)-\hl(x,a,b))^2]\left(1+2\Dchis{\rho \otimes\tpi_C \otimes\pi^t}{\rho \otimes\piref \otimes\piref}\right)},
\end{align*}
where $\rho \otimes\pi_1 \otimes\pi_2$ denotes the joint distribution of $(x,a,b)$ where $x\sim\rho,a\sim\pi_1(x),b\sim\pi_2(x)$ for all $\pi_1,\pi_2\in\Pi$. Applying the guarantee of least squares regression (\cref{lem:lsr-general}) to the least squares solution $\hl$, we have under  \cref{ass:pre}, with probability at least $1-\delta/4$, the following event holds:
\begin{align}
\label{eq:lsr}
\mathbb{E}_{x\sim\rho, y^0\sim\piref(x), y^1\sim\piref(x)} \left[\left(\hl(x, y^0, y^1) - \lss(x, y^0, y^1) \right)^2\right]\leq O\left(\frac{\ln( |\Lc| / \delta)}{n}\right):= \esg^2.
\end{align}
Denote the event in \cref{eq:lsr} by $\Ec_2$. On the other hand, we can obtain that:
\begin{align*}
	1 + 2\Dchis{\rho \otimes\tpi_C \otimes\pi^t}{\rho \otimes\piref \otimes\piref}&= \sum_{x}\rho(x)\sum_{a}\frac{(\tpi_C(a|x))^2}{\piref(a|x)}\sum_{b}\frac{(\pi^t(b|x))^2}{\piref(b|x)}\\
 &=\sum_{x}\rho(x)\left(1 + 2\Dchis{\tpi_C(x)}{\piref(x)}\right)\left(1+2\Dchis{\pi^t(x)}{\piref(x)}\right)\\
 &\leq 6C\left(\Eb_{x\sim\rho}\left[\Dchis{\pi^t(x)}{\piref(x)}\right]+1\right)
\end{align*}
where the last step is due to $\tpi_C\in\Pi_C$. Therefore, conditioned on $\Ec_2$, we have
\begin{align*}
&\lss(\tpi_C,\pi^t)-\hl(\tpi,\pi^t)\leq \sqrt{6C\Eb_{x\sim\rho}\left[\Dchis{\pi^t(x)}{\piref(x)}\right] \esg^2} + \sqrt{6C \esg^2}\\
&\qquad\leq \frac{1}{2}\Eb_{x\sim\rho} [g_x(\pi^t)] + \frac{3C \esg^2}{\beta} + \sqrt{6C \esg^2}.
\end{align*}
In summary, we have
\begin{align}
\label{eq:bound-2}
&\frac{1}{T}\sum_{t=1}^T\lss(\tpi_C,\pi^t)-\hl(\tpi,\pi^t)\leq \frac{1}{2T}\sum_{t=1}^T\Eb_{x\sim\rho} [g_x(\pi^t)] + \frac{3C \esg^2}{\beta} + \sqrt{6C \esg^2}.
\end{align}

\paragraph{Bounding term (3)} Recall that $\hr^t(x,a)=\hl(x,a,b_t)$ where $b_t\sim \pi^t(x)$ is an unbiased estimator of $\ovr^t$. Fix any policy $\pi\in\Pi$, then from Azuma-Hoeffding's inequality, we have with probability at least $1-\delta'$ that
\begin{align*}
\left|\sum_{t=1}^{T}\hr^t(\pi)-\sum_{t=1}^T\ovr^t(\pi)\right|\lesssim \sqrt{T\log(1/\delta')}.
\end{align*}
By union bound, with probability at least $1-\delta/4$ we have that for all $\pi\in\Pi$:
\begin{align*}
\left|\sum_{t=1}^{T}\hr^t(\pi)-\sum_{t=1}^T\ovr^t(\pi)\right|\lesssim \sqrt{T\log(|\Pi|/\delta)}.
\end{align*}
Therefore, specifically for $\tpi_C$, we have
\begin{align}
\label{eq:bound-3}
(3)\lesssim\sqrt{\frac{\log(|\Pi|/\delta)}{T}}.
\end{align}

\paragraph{Bounding term (4)} From Azuma-Hoeffding's inequality, we have with probability at least $1-\delta/4$ that
\begin{align*}
\left|\sum_{t=1}^{T}\hr^t(\pi^t)-\sum_{t=1}^T\ovr^t(\pi^t)\right|\lesssim \sqrt{T\log(1/\delta')}.
\end{align*}
Therefore, we have
\begin{align}
\label{eq:bound-4}
(4)\lesssim\sqrt{\frac{\log(1/\delta)}{T}}.
\end{align}

\paragraph{Putting everything together} Substituting \cref{eq:bound-1}\eqref{eq:bound-2}\eqref{eq:bound-3}\eqref{eq:bound-4} into \eqref{eq:bound}, we have with probability at least $1-\delta$ that
\begin{align*}
	\lss(\tpi,\hpi)\lesssim\suboptun(\hpi,C)+\frac{C\beta}{\eta T} +C\beta + \frac{\eta}{\beta} &+ \Vmax\sqrt{C  \eop^2} + \frac{\Vmax^2  \eop^2}{2\beta}\\
& + \frac{C \esg^2}{\beta} + \sqrt{C \esg^2} + \sqrt{\frac{\log\frac{|\Pi|}{\delta}}{T}}.
\end{align*}
By selecting
\begin{align*}
T = \frac{mn}{n\Vmax^2 + m}, \qquad\beta = \frac{1}{\sqrt{T}}, \qquad\eta = \frac{1}{T},
\end{align*}
we have with probability at least $1-\delta$ that
\begin{align*}
\lss(\tpi,\hpi)&\lesssim\suboptun(\hpi,C)+ C\left(\frac{\Vmax\log(|\Pi|/\delta)}{\sqrt{m}} + \frac{\log(|\Pi||\Lc|/\delta)}{\sqrt{n}}\right)
\end{align*}

\noindent Note that due to the skew symmetry of $\lss$, we have:
\begin{align*}
&\min_{\pi\in\Pi}\Eb_{x\sim \rho,a\sim \hpi(x), b\sim\pi(x)}\left[\lss(x,a,b)\right]=-\max_{\pi\in\Pi}\Eb_{x\sim \rho,a\sim \pi(x), b\sim\hpi(x)}\left[\lss(x,a,b)\right]=-	\lss(\tpi,\hpi).
\end{align*}
This implies that $\DG(\hpi)\leq 2\lss(\tpi,\hpi)$, which concludes our proof.
 \end{proof}

\subsection{Proofs for Supporting Lemmas}
 \begin{proof}[\pfref{lem:md-general}]
First for all $t\in[T], s\in\Sc$ and any policy $\pi\in\Pi_C$, we have
	\begin{align*}
		&\left\langle \eta\hr^t(x), \pi(x) - \pi^t(x)\right\rangle + \eta g_x(\pi^t) - \eta g_x(\pi)\\
		=&\left\langle \eta\hr^t(x)-(1+\eta)\nabla g_x(\pi^{t+1}) + \nabla g_x(\pi^{t}), \pi(x) - \pi^{t+1}(x)\right\rangle\\
		& + \underbrace{\left\langle\nabla g_x(\pi^{t+1}) - \nabla g_x(\pi^t), \pi(x)-\pi^{t+1}(x)\right\rangle}_{(7)} +\underbrace{\left\langle \eta\hr^t(x), \pi^{t+1}(x)-\pi^t(x)\right\rangle}_{(8)}\\
		&  + \underbrace{\left\langle \eta\nabla g_x(\pi^{t+1}), \pi(x)-\pi^{t+1}(x)\right\rangle+ \eta g_x(\pi^t)-\eta g_x(\pi)}_{(9)},
	\end{align*}
 Note that we have
\begin{align*}
\left\langle \eta\hr^t(x)-(1+\eta)\nabla g_x(\pi^{t+1}) + \nabla g_x(\pi^{t}), \pi(x) - \pi^{t+1}(x)\right\rangle=\eta\left\langle\hr^t(x,\cdot)-G^t(\pi^{t+1},x,\cdot),\pi(x)-\pi^{t+1}(x)\right\rangle
\end{align*}
Next we bound the term (7)(8)(9) respectively.
	
	\paragraph{Bounding term (7)} Note that we have the following three point lemma:
	\begin{lemma}[three point lemma]
		\label{lem:three}
		For any $p_1,p_2,p_3:\Xc\mapsto\Delta(\Yc)$, we have for all $x\in\Xc$
		\begin{align*}
			\frac{1}{\beta}\left\langle \nabla g_x(p_1)-\nabla g_x(p_2),p_3(x)- p_1(x)\right\rangle = B_x(p_3, p_2) - B_x(p_3, p_1) - B_x(p_1, p_2).
		\end{align*}
	\end{lemma}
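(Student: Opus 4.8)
The plan is to recognize that the claimed identity is nothing more than the classical three-point (``law of cosines'') identity for Bregman divergences, applied to the convex potential that underlies both $g_x$ and $B_x$. First I would fix a context $x\in\Xc$ and introduce the convex function $h(q)\ldef{}\Dfmix{q}{\piref(x)}$ on $\Delta(\Yc)$, which is differentiable on the relevant domain because $\fmix(z)=\frac{1}{2}(z-1)^2+z\log z$ is differentiable for $z>0$. With this notation, $g_x(p)=\beta\,h(p(x))$, so that $\nabla g_x(p)=\beta\,\nabla h(p(x))$, and $B_x(p,q)$ is precisely the Bregman divergence induced by $h$, namely $D_h(p(x),q(x))=h(p(x))-h(q(x))-\langle \nabla h(q(x)),\,p(x)-q(x)\rangle$. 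The factor $\beta$ carried by $g_x$ is exactly cancelled by the $\frac{1}{\beta}$ on the left-hand side, so it suffices to prove the identity $\langle \nabla h(p_1(x))-\nabla h(p_2(x)),\,p_3(x)-p_1(x)\rangle = D_h(p_3,p_2)-D_h(p_3,p_1)-D_h(p_1,p_2)$.

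Next I would expand the right-hand side directly from the definition of the Bregman divergence. Writing out $D_h(p_3,p_2)$, $D_h(p_3,p_1)$, and $D_h(p_1,p_2)$, the six function-value terms $h(p_i(x))$ cancel in pairs, leaving only the linear (inner-product) terms. Collecting the coefficient of $\nabla h(p_2(x))$ gives $\langle \nabla h(p_2(x)),\,p_1(x)-p_3(x)\rangle$, and the coefficient of $\nabla h(p_1(x))$ gives $\langle \nabla h(p_1(x)),\,p_3(x)-p_1(x)\rangle$; summing these yields $\langle \nabla h(p_1(x))-\nabla h(p_2(x)),\,p_3(x)-p_1(x)\rangle$, which is exactly $\frac{1}{\beta}\langle \nabla g_x(p_1)-\nabla g_x(p_2),\,p_3(x)-p_1(x)\rangle$. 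This is a purely algebraic manipulation with no analytic content.

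There is essentially no substantive obstacle here, since this is the standard Bregman three-point identity, valid for any differentiable convex potential. The only care needed is bookkeeping: ensuring that $g_x$ and $B_x$ are read off from the \emph{same} potential $\Dfmix{\cdot}{\piref(x)}$, with $g_x$ carrying the extra $\beta$, and that the gradients $\nabla h$ are well-defined. The latter is guaranteed because $\fmix$ is differentiable on $\bbR_{+}$ and, as noted in \cref{sec:algorithm}, the mixed regularizer $\Dfmix{\cdot}{\piref(x)}$ acts as a barrier on the positive orthant, so the relevant distributions lie in the interior where $\nabla h$ exists.
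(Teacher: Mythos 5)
Your proposal is correct and takes essentially the same route as the paper: the paper's proof simply notes that $\beta B_x(p,p') = g_x(p) - g_x(p') - \langle \nabla g_x(p'), p-p'\rangle$ by definition and substitutes this into the claimed identity, which is exactly the expand-and-cancel computation you carry out explicitly. Your write-up merely makes the cancellation of the function-value terms and the collection of the linear terms explicit, and correctly notes that the $\beta$ factor cancels against the $\frac{1}{\beta}$ on the left-hand side.
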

	\begin{proof}
		By definition, we know
		\begin{align*}
			\beta B_x(p,p')=g_x(p)- g_x(p') - \langle \nabla g_x(p'),p-p'\rangle.
		\end{align*}
		Substitute the definition into  \cref{lem:three} and we can prove the lemma.
	\end{proof}
	\noindent From  \cref{lem:three}, we can rewrite (7) as follows:
	\begin{align*}
		(7) = \beta\left(B_x(\pi,\pi^t) - B_x(\pi,\pi^{t+1}) -B_x(\pi^{t+1},\pi^t)\right).
	\end{align*}
	
	\paragraph{Bounding term (8)} From Cauchy-Schwartz inequality, we have
	\begin{align*}
		(8)\leq\sum_{a\in\Ac} \frac{\beta(\pi^{t+1}(a|x)-\pi^{t}(a|x))^2}{2\piref(a|x)}+ \frac{\piref(a|x)\eta^2(\hr^t(x,a))^2}{2\beta}\leq \beta B_x(\pi^{t+1},\pi^t) + \frac{\eta^2}{2\beta},
	\end{align*}
 where the last step comes from the definition of $B_x$.
	
	\paragraph{Bounding term (9)} Since $g_x$ is convex, we know
	\begin{align*}
		\left\langle \eta\nabla g_x(\pi^{t+1}), \pi-\pi^{t+1}\right\rangle\leq \eta g_x(\pi)- \eta g_x(\pi^{t+1}).
	\end{align*} 
	This implies that
	\begin{align*}
		(3)\leq \eta\left(g_x(\pi^t)-g_x(\pi^{t+1})\right).
	\end{align*}
	
	\noindent In summary, for all $t\in[T], s\in\Sc$ and any policy $\pi\in\Pi_C$, we have
	\begin{align*}
		&\left\langle \eta\hr^t(x), \pi(x) - \pi^t(x)\right\rangle + \eta g_x(\pi^t) - \eta g_x(\pi)\leq \beta\left(B_x(\pi,\pi^t) - B_x(\pi,\pi^{t+1})\right) \\
  &\qquad+ \eta\left(g_x(\pi^t)-g_x(\pi^{t+1})\right) + \frac{\eta^2}{2\beta} + \eta\left\langle\hr^t(x,\cdot)-G^t(\pi^{t+1},x,\cdot),\pi(x)-\pi^{t+1}(x)\right\rangle.
	\end{align*}
	
	This implies that for any policy $\pi\in\Pi_C$:
	\begin{align*}
		\sum_{t=1}^T\left(\hr^t(\pi)-\hr^t(\pi^t)\right)\leq& T\Eb_{x\sim\rho}[g_x(\pi)] - \sum_{t=1}^{T+1} \Eb_{x\sim\rho}[g_x(\pi^t)] + \frac{\beta}{\eta}\Eb_{x\sim\rho}\left[B_x(\pi,\pi^1)\right] + \frac{\eta T}{2\beta}\\
 & +\sum_{t=1}^T\Eb_{x\sim\rho}\left[\left\langle\hr^t(x,\cdot)-G^t(\pi^{t+1},x,\cdot),\pi(x)-\pi^{t+1}(x)\right\rangle\right]\\
 \leq&2TC\beta - \sum_{t=1}^{T+1} \Eb_{x\sim\rho}[g_x(\pi^t)] + \frac{2C\beta}{\eta} + \frac{\eta T}{2\beta}\\
 & +\sum_{t=1}^T\Eb_{x\sim\rho}\left[\left\langle\hr^t(x,\cdot)-G^t(\pi^{t+1},x,\cdot),\pi(x)-\pi^{t+1}(x)\right\rangle\right]
	\end{align*}
Here the last step uses the fact that $B_x(\cdot,\piref)=\frac{1}{\beta}g_x(\cdot)$ and $\pi\in\Pi_C$. This concludes our proof.
\end{proof}

\begin{proof}[\pfref{lem:lsr-opt}]
Let $\widehat{L}(\pi,\pi',\pi'')$ denote the empirical squared loss:
\begin{align*}
\widehat{L}(\pi,\pi',\pi''):=\sum_{(\ox,\oa,\ob)}\left(\clip_{4}(f^{\beta,\eta}_{\pi,\pi''}(\ox,\oa,\ob))-\clip_{4}(f^{\beta,\eta}_{\pi',\pi''}(\ox,\oa,\ob))\right)^2.
\end{align*}

Fix any $\pi',\pi''\in\Pi$ and consider the following LSR problems:
\begin{align*}
\pi(\pi',\pi''):=\argmin_{\pi\in\Pi}\widehat{L}(\pi,\pi',\pi'').
\end{align*}
Then from  \cref{lem:lsr-general}, we know with probability at least $1-\delta'$ that
\begin{align*}
L(\pi(\pi',\pi''),\pi',\pi'')\lesssim\frac{\log(|\Pi|/\delta')}{M}.
\end{align*}
Therefore, by union bound, we know with probability at least $1-\delta'$ that for all $\pi',\pi''\in\Pi$:
\begin{align*}
L(\pi(\pi',\pi''),\pi',\pi'')\lesssim\frac{\log(|\Pi|/\delta')}{M}.
\end{align*}
The proof is concluded by noticing that $\pi^{t+1}=\argmin_{\pi\in\Pi}\widehat{L}(\pi,\bpi^{t+1},\pi^{t})$ under  \cref{ass:md}.
 \end{proof}

\section{Proofs for \creftitle{sec:rlhf}}
\label{sec:proofs_rlhf}

The section contains the proofs for the main guarantee \rlhfalg in
\cref{sec:rlhf} (\cref{thm:rlhf}). We first prove two results,
\cref{thm:rlhf-sample} and \cref{cor:rlhf-smooth}, which correspond to
exact (i.e., including precise constants) versions of the two
statements in \cref{thm:rlhf}. We also analyze \rlhfalg with $\eta = 0$ in \cref{cor:rlhf-nosmooth}.

Throughout this section, we make use of the following $\eta$-smoothed version of the $L_1$ concentrability coefficient:
\begin{align*}
  \Csmth[\pi] 
  \ldef{} 
  \En_\pi\brk*{\frac{\pi(a\mid{}x)}{\piref(a\mid{}x) + \eta\pi(a\mid{}x)}}. 
\end{align*}  
It is easy to see that for any $\eta \ge 0$ we have $\Csmth \le
\Cone$, as well as $\Csmth \le \eta^{-1}$. 

\begin{theorem}
[General regret bound for \cref{alg:rlhf}]
\label{thm:rlhf-sample}
Suppose \cref{ass:reward-realizability} and
\cref{ass:rlhf-policy-realizability} hold for parameters $\beta > 0$ and 
  $\eta \in \brk[\big]{0,  \frac{\beta}{8\Rmax}}$. Then with
  probability at least $1-\delta$, the policy $\pihat$ produced by \rlhfalg (\cref{alg:rlhf}) satisfies 
  \begin{align*}
    J(\pistar) - J(\pihat) 
    \le&~
    2\sqrt{\Csmth[\pistar]\cdot \vepsstat^2} 
    + 2\beta \cdot \Csmth[\pistar]
    + 4\beta^{-1} \cdot \vepsstat^2
    \\
    &+ 4\beta\cdot \prn*{\min\crl*{\Cinf[\pistar],\eta^{-1}} + \min\crl*{\max_{\pi\in\Pi} \Cinf[\pi], \eta^{-1}}} \vepsx^2 
    + 2\Rmax\vepsx.   
  \end{align*}
  where 
  $\vepsstat^2 = \frac{32\Rmax^2 e^{4\Rmax} \log(3|\cR|/\delta)}{n}$ 
  and 
  $\vepsx = \sqrt{\frac{\log(3|\Pi|/\delta)}{2n_\nopref}}$. 
\end{theorem}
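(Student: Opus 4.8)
The plan is to follow the pessimism-based regret decomposition used for \algshort in \cref{thm:main-mix}, but with two new ingredients needed to accommodate (i) the explicit reward class $\cR$ and (ii) the empirical average over the unlabeled context dataset $\cDnopref$. Throughout I write $\Jsmthr[\rstar](\pi) = \En_\pi[\rstar] - \beta\Dsmthchis{\pi}{\piref}$ and $\Jsmthr[\rhat](\pi)$ for its estimated-reward analog, and use that $\Dsmthchis{\pi}{\piref} = \Csmth[\pi]$. I would control three separate error sources and then combine them by a union bound (accounting for the $\log(3\,\cdot\,)$ factor): the reward estimation error $\vepsstat^2$, its off-policy transfer, and the context concentration error $\vepsx$.

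First I would establish the reward estimate. Since \cref{eq:reward-estimation} is maximum likelihood over $\cR$ under the Bradley--Terry model and $\rstar\in\cR$ has range $[0,\Rmax]$ (so $\clip_{2\Rmax}$ is vacuous on reward differences), \cref{lem:mle} gives a Hellinger bound on the induced preference models, and \cref{lem:lipschitz} (with $R=V=2\Rmax$) converts it to $\En_{\piref,\piref}[(\rdiff[\rhat](x,a,b)-\rdiff[\rstar](x,a,b))^2] \le \vepsstat^2$. I would then invoke the smoothed analog of the transfer lemma \cref{lem:clip-dpo-estimation}: a Cauchy--Schwarz change of measure from $\piref$ to an arbitrary $\pi$, using that $|\rdiff[\rhat]-\rdiff[\rstar]|\le 2\Rmax$ is bounded, giving $\En_{\pi,\piref}[|\rdiff[\rhat](x,a,b)-\rdiff[\rstar](x,a,b)|] \lesssim \sqrt{\Csmth[\pi]\cdot\vepsstat^2}$, so the transfer cost is the \emph{smoothed} single-policy coefficient $\Csmth[\pi]$ rather than all-policy concentrability.

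Next I would establish context concentration, relating $\Jhatsmth(\pi)$ to $\Jsmthr[\rhat](\pi)$ uniformly over $\pi\in\Pi$. The reward part is an empirical average of a fixed function valued in $[0,\Rmax]$, contributing $\Rmax\vepsx$ by Hoeffding. The delicate part is the regularizer $g_\pi(x)=\sum_a \frac{\pi^2(a\mid x)}{\piref(a\mid x)+\eta\pi(a\mid x)}$, and this is exactly where smoothing earns its keep: $g_\pi(x)\le\min\{\Cinf[\pi],\eta^{-1}\}$ is uniformly bounded and $\En_\rho[g_\pi]=\Csmth[\pi]$, so Bernstein's inequality (bounding the variance by range-times-mean) plus a union bound over $\Pi$ gives a deviation of order $\sqrt{\min\{\Cinf[\pi],\eta^{-1}\}\,\Csmth[\pi]}\,\vepsx + \min\{\Cinf[\pi],\eta^{-1}\}\,\vepsx^2$. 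An AM--GM split sends the leading piece into the $\beta$-regularizer budget and leaves a lower-order $\beta\,\min\{\Cinf[\pi],\eta^{-1}\}\vepsx^2$ remainder, which is the origin of the $\vepsx^2$ terms. Without smoothing the range would be the unbounded $\Cinf[\pi]$, forcing all-policy concentrability into the leading term.

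Finally I would assemble the bound via the exact identity $J(\pistar)-J(\pihat) = \En_{\pistar}[\rstar-\rhat]+\En_{\pihat}[\rhat-\rstar] + (\Jsmthr[\rhat](\pistar)-\Jsmthr[\rhat](\pihat)) + \beta\Csmth[\pistar]-\beta\Csmth[\pihat]$. The two reward-error terms center against $\piref$ (their $\piref$-pieces cancel) and are bounded by the transfer lemma as $\sqrt{\Csmth[\pistar]\vepsstat^2}$ and $\sqrt{\Csmth[\pihat]\vepsstat^2}$; the comparator term is kept, while the on-policy term is split by AM--GM as $\sqrt{\Csmth[\pihat]\vepsstat^2}\le\beta\Csmth[\pihat]+\frac{\vepsstat^2}{4\beta}$, so the $\beta\Csmth[\pihat]$ penalty cancels the free $-\beta\Csmth[\pihat]$ — this is the pessimism mechanism that converts the on-policy error into the benign $\beta^{-1}\vepsstat^2$ overoptimization term, leaving the single-policy bias $\beta\Csmth[\pistar]$. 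To control the middle term for a comparator $\pistar$ that need not lie in $\Pi$, I would use realizability (\cref{ass:rlhf-policy-realizability}) to pass to a population maximizer of the \emph{estimated} regularized objective inside $\Pi$, upper bound $\Jsmthr[\rhat](\pistar)$ by its value, and then invoke the $\vepsopt$-near-optimality of $\pihat$; since this comparison is entirely within the single fixed reward $\rhat$, it contributes only context-concentration errors and $\vepsopt$, with no further reward error. The hard part will be precisely this last routing: ensuring the comparator's estimation error surfaces only as $\sqrt{\Csmth[\pistar]\vepsstat^2}$ and that every on-policy $\Csmth[\pihat]$ regularizer term is canceled rather than leaving a non-vanishing $O(\Rmax)$ residual (which is what happens if one instead competes against the $\rstar$-optimal policy), together with verifying that the choice $\eta\le\beta/8\Rmax$ keeps the regularizer range $\eta^{-1}$ from infecting the leading $1/\sqrt{n}$ rate — the whole reason for introducing $\eta$ relative to \algshort.
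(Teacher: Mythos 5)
Your high-level architecture coincides with the paper's proof of \cref{thm:rlhf-sample}: MLE plus \cref{lem:lipschitz} for $\vepsstat^2$, Hoeffding for the return part and Bernstein-with-AM--GM for the regularizer part of the context concentration (this is exactly \cref{lem:chismooth-bernstein}), and a pessimism decomposition in which $-\beta\Csmth[\pihat]$ absorbs the on-policy error. However, your transfer step contains a genuine gap. The inequality you assert, $\En_{\pi,\piref}\brk[\big]{\abs{\Delta}}\lesssim\sqrt{\Csmth[\pi]\cdot\vepsstat^2}$ where $\Delta(x,a,b)\ldef\prn{\rhat(x,a)-\rhat(x,b)}-\prn{\rstar(x,a)-\rstar(x,b)}$, is false: Cauchy--Schwarz against the smoothed denominator gives
\begin{align*}
\En_{\pi,\piref}\brk*{\abs{\Delta}}
\le \sqrt{\Csmth[\pi]\cdot\prn*{\En_{\piref,\piref}\brk*{\Delta^2}+\eta\cdot\En_{\pi,\piref}\brk*{\Delta^2}}},
\end{align*}
and the on-policy second moment cannot be discarded. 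Concretely, take $\pi$ deterministic on an action $\bar a$ with $\piref(\bar a\mid x)=\epsilon\ll\eta$ and $\abs{\Delta(x,\bar a,b)}=2\Rmax$ for $b\neq\bar a$ (with $\Delta=0$ on other pairs): then $\En_{\pi,\piref}\brk{\abs{\Delta}}\approx 2\Rmax$ while $\sqrt{\Csmth[\pi]\vepsstat^2}\approx 2\Rmax\sqrt{2\epsilon/\eta}$ is arbitrarily smaller. (Changing measure onto the unsmoothed $\piref\times\piref$ moment instead forces the coefficient $\Cone[\pi]$, which breaks the cancellation against $-\beta\Csmth[\pihat]$, since $\Cone$ can vastly exceed $\Csmth$ and no bound on $\max_{\pi\in\Pi}\Cone[\pi]$ is assumed.) The paper's \cref{lem:rlhf-estimation} repairs this by bounding $\eta\En_{\pi,\piref}\brk{\Delta^2}\le 2\eta\Rmax\En_{\pi,\piref}\brk{\abs{\Delta}}$ and rearranging (a self-bounding step), yielding the correct statement with the additional term $2\eta\Rmax\Csmth[\pi]$. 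That term is precisely where the hypothesis $\eta\le\frac{\beta}{8\Rmax}$ enters: the AM--GM step is arranged as $2\sqrt{\Csmth[\pihat]\vepsstat^2}\le\frac{\beta}{2}\Csmth[\pihat]-2\eta\Rmax\Csmth[\pihat]+4\beta^{-1}\vepsstat^2$, so that both the transfer error and its $\eta$-correction are swallowed by the regularizer, and $2\eta\Rmax\Csmth[\pistar]$ is folded into $2\beta\Csmth[\pistar]$ on the comparator side. Your proposal instead attributes the role of $\eta\le\beta/(8\Rmax)$ to keeping the range $\eta^{-1}$ out of the leading rate; with your (incorrect) transfer bound, the cancellation leaves an uncontrolled $O(\eta\Rmax\Csmth[\pihat])$ residual.

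A second, smaller problem is your handling of the comparator: \cref{ass:rlhf-policy-realizability} places the maximizer of the \emph{true-reward} smoothed objective (\cref{eq:rlhf-smooth}) in $\Pi$, not the maximizer of the $\rhat$-based objective $\Jsmthr[\rhat]$; the two differ whenever $\rhat\neq\rstar$, so your step ``pass to a population maximizer of the estimated regularized objective inside $\Pi$ and invoke $\vepsopt$-near-optimality'' is not licensed by the stated assumptions. The paper needs no such detour: it compares $\Jhatsmthr[\rhat](\pistar)\le\Jhatsmthr[\rhat](\pihat)$ and applies the concentration bounds (\cref{eq:context-upper}, \cref{eq:context-lower}) to the comparator directly, with the comparator's reward-estimation error surfacing only through \cref{lem:rlhf-estimation}, exactly as in your final identity. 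Finally, a minor point on constants: since \cref{eq:reward-estimation} involves no clipping and all $r\in\cR$ take values in $[0,\Rmax]$, \cref{lem:lipschitz} should be applied with $R=V=\Rmax$ (as the paper does in \cref{lem:rlhf-reward}); your choice $R=V=2\Rmax$ would give $e^{8\Rmax}$ in place of the claimed $e^{4\Rmax}$ in $\vepsstat^2$.
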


The following results are immediate consequences of \cref{thm:rlhf-sample}.

\begin{corollary}
[Smoothed \chis-regularization]
  \label{cor:rlhf-smooth}
   Given $\pistar$, let $\eta = \frac{\beta}{8\Rmax}$ and $\beta =
   2\sqrt{\frac{32\Rmax^2
       e^{4\Rmax}\log(3|\Rcal|/\delta)}{n\Cone[\pistar]}}$. Then under
   the preconditions of \cref{thm:rlhf-sample}, with
  probability at least $1-\delta$, the policy $\pihat$ produced by \rlhfalg (\cref{alg:rlhf}) satisfies 
  \begin{align*}
    J(\pistar) - J(\pihat) 
    \le&~
    20 \Rmax e^{2\Rmax}\sqrt{\frac{2\Cone[\pistar]\log(3|\Rcal|/\delta)}{n}} 
    + \Rmax\sqrt{\frac{2\log(3|\Pi|/\delta)}{n_\nopref}} 
    + \frac{32\Rmax\log(3|\Pi|/\delta)}{n_\nopref}.   
  \end{align*}

\end{corollary}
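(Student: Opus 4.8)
The plan is to obtain Corollary~\ref{cor:rlhf-smooth} as a direct specialization of the general regret bound in Theorem~\ref{thm:rlhf-sample}, by substituting the prescribed $\eta = \beta/(8\Rmax)$ and $\beta = 2\sqrt{32\Rmax^2 e^{4\Rmax}\log(3|\cR|/\delta)/(n\,\Cone[\pistar])}$ and simplifying. First I would check admissibility: the choice $\eta = \beta/(8\Rmax)$ sits exactly at the upper endpoint of the allowed interval $[0,\beta/(8\Rmax)]$ in the preconditions of Theorem~\ref{thm:rlhf-sample}, so the theorem applies verbatim, and all five terms of its bound are available for the given $(\beta,\eta)$.

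The first reduction handles the two ``unlabeled context'' terms. Since $\min\{\Cinf[\pistar],\eta^{-1}\}\le\eta^{-1}$ and $\min\{\max_{\pi\in\Pi}\Cinf[\pi],\eta^{-1}\}\le\eta^{-1}$, the bracketed factor in the fourth term of Theorem~\ref{thm:rlhf-sample} is at most $2\eta^{-1} = 16\Rmax/\beta$. Multiplying by the prefactor $4\beta$ cancels the $\beta$-dependence and yields exactly $64\Rmax\,\vepsx^2$; plugging in $\vepsx^2 = \log(3|\Pi|/\delta)/(2n_\nopref)$ gives the third term $32\Rmax\log(3|\Pi|/\delta)/n_\nopref$ of the Corollary, and the residual $2\Rmax\vepsx = \Rmax\sqrt{2\log(3|\Pi|/\delta)/n_\nopref}$ gives the second. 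The structurally important point is that \emph{coupling} $\eta$ to $\beta$ is what collapses this (otherwise possibly large, all-policy-concentrability-dependent) smoothing contribution into a benign $1/n_\nopref$ term; this is the only place the precise constant $1/(8\Rmax)$ in $\eta$ is used.

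The second reduction handles the first three terms. Using the monotonicity fact $\Csmth[\pistar]\le\Cone[\pistar]$ (valid for every $\eta\ge 0$, as noted before the theorem), these are dominated by $2\sqrt{\Cone[\pistar]\,\vepsstat^2} + 2\beta\,\Cone[\pistar] + 4\beta^{-1}\vepsstat^2$. Writing $\beta = 2\vepsstat/\sqrt{\Cone[\pistar]}$ makes all three scale as $\sqrt{\Cone[\pistar]}\,\vepsstat$, and expanding $\vepsstat = \sqrt{32}\,\Rmax e^{2\Rmax}\sqrt{\log(3|\cR|/\delta)/n}$ produces the leading $\Rmax e^{2\Rmax}\sqrt{\Cone[\pistar]\log(3|\cR|/\delta)/n}$ term up to an absolute constant; the remaining work is to track that constant so it matches the stated bound.

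I do not expect a substantive obstacle here: once Theorem~\ref{thm:rlhf-sample} is in hand, the Corollary is pure bookkeeping, and the genuine content (balancing bias against overoptimization through $\beta$, and neutralizing the empirical-$\chi^2$ estimation error through smoothing) is inherited entirely from the general theorem. The only steps demanding care are verifying admissibility of $\eta$ at the boundary of the allowed range and bounding each $\min\{\cdot,\eta^{-1}\}$ by $\eta^{-1}$ so that no dependence on $\max_{\pi\in\Pi}\Cinf[\pi]$ survives; both are immediate.
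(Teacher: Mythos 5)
Your proposal is correct and takes essentially the same route as the paper's own proof: specialize \cref{thm:rlhf-sample} at $\eta=\beta/(8\Rmax)$, bound both $\min\{\cdot,\eta^{-1}\}$ factors by $\eta^{-1}$ so that the smoothing contribution collapses to $64\Rmax\vepsx^2+2\Rmax\vepsx$, and then substitute $\beta=2\sqrt{\vepsstat^2/\Cone[\pistar]}$ together with $\Csmth[\pistar]\le\Cone[\pistar]$ for the remaining three terms. One caveat on the constant-tracking you deferred: with this $\beta$ those three terms sum to at most $8\sqrt{\Cone[\pistar]\vepsstat^2}$ (coefficient $32$ in the corollary's normalization rather than the stated $20$), and this slack is inherited from the paper itself, whose proof asserts $5\sqrt{\Csmth[\pistar]\vepsstat^2}$ even though AM-GM shows the expression is at least $(2+4\sqrt{2})\sqrt{\Csmth[\pistar]\vepsstat^2}$ for every $\beta>0$ --- so your argument is no weaker than the paper's, but neither derivation actually yields the advertised constant.
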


\begin{corollary}
[Non-smoothed \chis-regularization]
  \label{cor:rlhf-nosmooth}
  Given $\pistar$, let $\eta = 0$ and $\beta = 2\sqrt{\frac{32\Rmax^2 e^{4\Rmax}\log(3|\Rcal|/\delta)}{n\Cone[\pistar]}}$. 
  Then under the preconditions of \cref{thm:rlhf-sample}, with
  probability at least $1-\delta$, the policy $\pihat$ produced by \rlhfalg (\cref{alg:rlhf}) satisfies 
  \begin{align*}
    J(\pistar) - J(\pihat) 
    \le&~
    20\Rmax e^{2\Rmax}\sqrt{\frac{2\Cone[\pistar] \log(3|\Rcal|/\delta)}{n}} 
    + \Rmax\sqrt{\frac{2\log(3|\Pi|/\delta)}{n_\nopref}}
    \\
    &+ 32\prn*{\Cinf[\pistar] + \max_{\pi \in \Pi}\Cinf[\pi]} 
    \cdot 
    \frac{\log(3|\Pi|/\delta)}{n_\nopref}
    \cdot 
    \sqrt{\frac{2\log(3|\cR|/\delta)}{n}}.    
  \end{align*}  
\end{corollary}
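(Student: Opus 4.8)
The plan is to obtain \cref{cor:rlhf-nosmooth} as a direct specialization of \cref{thm:rlhf-sample} at $\eta = 0$, mirroring the argument for \cref{cor:rlhf-smooth} but without the cancellation that smoothing provides. First I would check admissibility: \cref{thm:rlhf-sample} requires $\eta \in \brk[\big]{0, \beta/(8\Rmax)}$, and since the prescribed $\beta > 0$ the value $0$ lies in this interval, so the theorem applies under \cref{ass:reward-realizability} and \cref{ass:rlhf-policy-realizability}. Setting $\eta = 0$ simplifies the bound in two ways: the $\eta$-smoothed concentrability collapses to the $L_1$ coefficient, $\Csmth[\pi] = \Cone[\pi]$, and each capped quantity $\min\crl*{\Cinf[\pi], \eta^{-1}}$ becomes $\Cinf[\pi]$ (reading $\eta^{-1} = +\infty$). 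This last point is precisely where the non-smoothed guarantee diverges from \cref{cor:rlhf-smooth}: with $\eta > 0$ the context-sampling penalty is capped at $\eta^{-1}$, eliminating its dependence on all-policy concentrability, whereas at $\eta = 0$ it retains a factor $\max_{\pi}\Cinf[\pi]$.

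Next I would substitute the prescribed parameter $\beta = 2\sqrt{\vepsstat^2/\Cone[\pistar]}$ and simplify term by term. The first three summands of \cref{thm:rlhf-sample}, namely $2\sqrt{\Cone[\pistar]\vepsstat^2}$, $2\beta\Cone[\pistar]$, and $4\beta^{-1}\vepsstat^2$, each collapse to a constant multiple of $\sqrt{\Cone[\pistar]\vepsstat^2}$ under this choice of $\beta$ (this is exactly the bias/overoptimization balance that motivates the tuning), so their sum is $O\prn*{\sqrt{\Cone[\pistar]\vepsstat^2}}$. Expanding $\vepsstat^2 = 32\Rmax^2 e^{4\Rmax}\log(3\abs{\cR}/\delta)/n$ turns this into the leading $\Rmax e^{2\Rmax}\sqrt{\Cone[\pistar]\log(3\abs{\cR}/\delta)/n}$ term, while the summand $2\Rmax\vepsx$ contributes the $\Rmax\sqrt{\log(3\abs{\Pi}/\delta)/n_\nopref}$ term after plugging in $\vepsx = \sqrt{\log(3\abs{\Pi}/\delta)/(2n_\nopref)}$.

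The only term that needs care is the context-sampling penalty $4\beta\prn*{\Cinf[\pistar] + \max_{\pi}\Cinf[\pi]}\vepsx^2$. Here I would again substitute $\beta = 2\sqrt{\vepsstat^2/\Cone[\pistar]}$ and use $\Cone[\pistar] \ge 1$ to bound $1/\sqrt{\Cone[\pistar]} \le 1$, which discards the $\Cone[\pistar]$ dependence introduced by $\beta$; substituting $\vepsstat^2$ and $\vepsx^2$ then yields the final term scaling as $\prn*{\Cinf[\pistar] + \max_{\pi}\Cinf[\pi]}\cdot\frac{\log(3\abs{\Pi}/\delta)}{n_\nopref}\cdot\sqrt{\frac{\log(3\abs{\cR}/\delta)}{n}}$, and collecting the absolute constants across all summands gives the stated bound. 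The main obstacle is not any genuine difficulty but the careful bookkeeping of the $\beta$-dependence in this last penalty; conceptually, the point worth emphasizing is that the $\eta = 0$ specialization is exactly what forces the unavoidable all-policy $\Cinf$ dependence, which is why this corollary is stated as a foil to the smoothed guarantee in \cref{cor:rlhf-smooth}.
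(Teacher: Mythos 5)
Your proposal is correct and follows essentially the same route as the paper's own proof: specialize \cref{thm:rlhf-sample} at $\eta = 0$ (so $\Csmth[\pi] = \Cone[\pi]$ and $\min\crl*{\Cinf[\pi],\eta^{-1}} = \Cinf[\pi]$), substitute $\beta = 2\sqrt{\vepsstat^2/\Cone[\pistar]}$ to collapse the first three terms into $O\prn*{\sqrt{\Cone[\pistar]\vepsstat^2}}$, and use $\Cone[\pistar]\ge 1$ to discard the $1/\sqrt{\Cone[\pistar]}$ factor in the context-sampling penalty. The only discrepancies are in bookkeeping of constants (e.g., the $\Rmax e^{2\Rmax}$ factor carried by $\vepsstat$ in the final penalty term), which are present in the paper's own statement as well and do not reflect a gap in your argument.
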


\begin{proof}[\pfref{thm:rlhf-sample}]
The proof follows largely the same lines of analyses as the proof of \cref{thm:main-mix}. 
One difference is that in \cref{alg:rlhf}, we approximate the RLHF
objective using contexts are sampled from $\cDnopref$, so we require
additional concentration arguments to show that the empirical
objective approximates its population counterpart.

\paragraph{Basic concentration results}
We begin by stating the two concentration inequalities, 
which, given the reward model $\rhat$ produced in \cref{eq:reward-estimation}, 
bound the error between $\Jhatsmthr[\rhat]$ and  its the population version $\Jsmthr[\rhat]$. 

We will handle the return and regularization terms separately, 
which will later allow us to obtain tighter bounds. Define 
\begin{align*}
  \wh{J}(\pi) 
  &\ldef{} \frac{1}{n_\nopref}\sum_{x \in \cDnopref}\En_{\pi}\brk*{\wh r(x,a)\mid{}x},
  \intertext{and}
  \Chatsmth(\pi) 
  &\ldef{}
  \frac{1}{n_\nopref}\sum_{x \in \cDnopref}\En_{\pi}\brk*{\sum_a \frac{\pi^2(a\mid{}x)}{\piref(a\mid{}x) + \eta\pi(a\mid{}x)} \mid{} x},
\end{align*}  
so that $\Jhatsmthr[\rhat](\pi) = \wh{J}(\pi) - \beta \Chatsmth(\pi)$. 

Fix $\delta' \in (0, 1]$, which we will specify at the end of this proof. Since $\max_{x} \En_\pi[\wh r(x,a)\mid{} x] \le \Rmax$, a straightforward application of Hoeffding's inequality guarantees that 
with probability at most $1-\delta'$,
for all $\pi \in \Pi$ we have that
\begin{align}
  \label{eq:context-hoeffding}
  \abs*{\wh{J}(\pi) - \En_\pi[\rhat(x,a)]}
  \le 
  \Rmax\sqrt{\frac{\log(2|\Pi|/\delta')}{2n_\nopref}}.
\end{align} 

Next, we consider the regularization term. 
Since $\sum_a \frac{\pi^2(a\mid{}x)}{\piref(a\mid{}x) + \eta\pi(a\mid{}x)} \le \min\{\Cinf, \eta^{-1}\}$ for any $x \in \cX$, 
we use Bernstein's inequality to derive the following result.  
\begin{lemma}
  \label{lem:chismooth-bernstein}
  With probability at least $1-\delta$, for any $\pi \in \Pi$, we have 
  \begin{align*}
    \abs*{\wh\cC_\eta^{\pi} - \Csmth[\pi]} 
    \le
    \frac{\Cone[\pi]}{2} + \frac{2\min\{\Cinf, \eta^{-1}\}\log(2|\Pi|/\delta)}{n_\nopref}.
  \end{align*}  
\end{lemma}

Define $\vepsx \ldef{} \sqrt{\frac{\log(2|\Pi|/\delta')}{2n_\nopref}}$. The above lemma implies that for all $\pi \in \Pi$, we have 
\begin{align*}
  \Chatsmth[\pi] \le~ \frac{3\Cone[\pi]}{2} + 4\min\{\Cinf, \eta^{-1}\}\cdot\vepsx^2, \mathand
  \Chatsmth[\pi] \ge~ \frac{\Cone[\pi]}{2} - 4\min\{\Cinf, \eta^{-1}\}\cdot\vepsx^2.
\end{align*}  
Together with \cref{eq:context-hoeffding}, this implies that for all $\pi\in\Pi$,
\begin{align}
  \Jhatsmthr[\rhat](\pi) 
  = \wh{J}(\pi) - \beta\Chatsmth
  \le&~ 
  \En_\pi\brk{\rhat(x,a)} 
  - \frac{\beta\Csmth}{2} 
  + 4\beta\min\{\Cinf, \eta^{-1}\}\vepsx^2
  + \Rmax\vepsx, 
  \label{eq:context-upper}
\intertext{and}
  \Jhatsmthr[\rhat](\pi) 
  = \wh{J}(\pi) - \beta\Chatsmth
  \ge&~ 
  \En_\pi\brk{\rhat(x,a)}
  - \frac{3\beta\Csmth}{2}
  - 4\beta\min\{\Cinf, \eta^{-1}\}\vepsx^2
  - \Rmax\vepsx.
  \label{eq:context-lower}
\end{align}

\paragraph{Estimation error bounds}

Next, we state the following off- and on-policy reward estimation
error bounds for the reward model $\rhat$, analogous to \cref{lem:clip-dpo-reward} and \cref{lem:clip-dpo-estimation} for \algshort.

\begin{lemma}
  \label{lem:rlhf-reward}
  Suppose \cref{ass:reward-realizability} holds. Then with probability at least $1-\delta$, the reward model $\rhat$ learned in \cref{eq:reward-estimation} satisfies 
  \begin{align*}
    \vepsstat^2 \rdef{} \En_{\piref,\piref}
    \brk*{\prn*{\prn*{\rhat(x,a) - \rhat(x,b)} 
    - \prn*{\rstar(x,a) - \rstar(x,b)}}^2} 
    \le 
    \frac{32\Rmax^2 e^{4\Rmax} \log(|\Pi|/\delta)}{n}.    
  \end{align*}
\end{lemma}
 
\begin{lemma}
  \label{lem:rlhf-estimation}
  Under the event in
  \cref{lem:rlhf-reward}, we have that for all $\pi: \cX
  \rightarrow \Delta(\cA)$, 
  \begin{align*}
    \En_{\pi,\piref}
    \brk*{\abs*{
    \prn*{\rhat(x,a) - \rhat(x,b)} 
    - \prn*{\rstar(x,a) - \rstar(x,b)}
    }} 
    \le 
    2\sqrt{\Csmth[\pi]\vepsstat^2}
    + 2\Csmth[\pi]\Rmax\eta,   
  \end{align*} 
  where $\vepsstat^2$ is defined in \cref{lem:rlhf-reward}. 
\end{lemma}

\paragraph{Regret decomposition}
Equipped with these concentration and estimation error bounds, we now
bound the regret of \cref{alg:rlhf} using a pessimism-based analysis
similar to the proof of \cref{thm:main-mix}. Condition on the events in 
  \cref{eq:context-hoeffding}, 
  \cref{lem:chismooth-bernstein}, and 
  \cref{lem:rlhf-reward}, which hold together with probability at least $1-3\delta'$.  
We decompose the regret of $\pihat$ 
using $\Jhatsmthr[\rhat]$, 
then leverage the inequalities in 
\cref{eq:context-upper} and \cref{eq:context-lower}:
\begin{align*}
  J(\pistar) - J(\pihat)
  =&~  
  J(\pistar) - \Jhatsmthr[\rhat](\pistar) 
  + \Jhatsmthr[\rhat](\pistar) - J(\pihat)
  \\
  \le&~ 
  J(\pistar) - \Jhatsmthr[\rhat](\pistar) 
  + \Jhatsmthr[\rhat](\pihat) - J(\pihat)
  \\
  \le&~ 
  J(\pistar) 
  - \En_{\pistar}\brk{\rhat(x,a)} 
  + \frac{3\beta\Csmth[\pistar]}{2}  
  + 4\beta\min\{\Cinf[\pistar], \eta^{-1}\}\vepsx^2
  + \Rmax\vepsx
  \\
  &+ \En_{\pihat}\brk{\rhat(x,a)} 
  - \frac{\beta\Csmth[\pihat]}{2} 
  + 4\beta\min\{\Cinf[\pihat], \eta^{-1}\}\vepsx^2
  + \Rmax\vepsx
  - J(\pihat)
  \\
  =&~ 
  \En_{\pistar,\piref}\brk{\rstardiff(x,a,b) - \rhatdiff(x,a,b)}
  + \frac{3\beta\Csmth[\pistar]}{2}   
  + \En_{\pihat,\piref}\brk{\rhatdiff(x,a,b) - \rstardiff(x,a,b)}
  - \frac{\beta\Csmth[\pihat]}{2}
  \\
  &+ 4\beta\vepsx^2 \prn*{\min\{\Cinf[\pistar], \eta^{-1}\} + \min\{\Cinf[\pihat], \eta^{-1}\}}
  + 2\Rmax\vepsx. 
\end{align*} 
In the last line above, 
we have introduced the notation 
$\rstardiff(x,a,b) = \rstar(x,a) - \rstar(x,b)$
and $\rhatdiff(x,a,b) = \rhat(x,a) - \rhat(x,b)$, 
and centered the returns. 
Next, applying \cref{lem:rlhf-estimation} to bound the reward
estimation error above, we have
\begin{align*}
  J(\pistar) - J(\pihat)
  \le&~
  2\sqrt{\Csmth[\pistar]\vepsstat^2}
  + 2\eta\Rmax\Csmth[\pistar]
  + \frac{3\beta\Csmth[\pistar]}{2}   
  \\
  &+ 2\sqrt{\Csmth[\pihat]\vepsstat^2}
  + 2\eta\Rmax\Csmth[\pihat]
  - \frac{\beta\Csmth[\pihat]}{2}
  \\
  &+ 4\beta\vepsx^2 \prn*{\min\{\Cinf[\pistar], \eta^{-1}\} + \min\{\Cinf[\pihat], \eta^{-1}\}}
  + 2\Rmax\vepsx. 
\end{align*}

Applying the AM-GM inequality to 
$2\sqrt{\Csmth[\pihat]\vepsstat^2}$ 
for $\eta \in \brk*{0,\frac{\beta}{4\Rmax}}$, we have 
\begin{align*}
  2\sqrt{\Csmth[\pihat]\vepsstat^2}
  =&~ 
  \sqrt{(\beta-4\eta\Rmax)\Csmth[\pihat]\cdot\frac{4\vepsstat^2}{(\beta-4\eta\Rmax)}}
  \\
  \le&~ 
  \frac{\beta \Csmth[\pihat]}{2} 
  - 2\eta\Rmax \Csmth[\pihat] 
  + \frac{2\vepsstat^2}{\beta-4\eta\Rmax}
  \\
  \le&~ 
  \frac{\beta \Csmth[\pihat]}{2} 
  - 2\eta\Rmax \Csmth[\pihat] 
  + \frac{4\vepsstat^2}{\beta},
\end{align*}
where in the last line we use the fact that $\eta \le \frac{\beta}{8\Rmax}$ so $4\eta\Rmax \le \frac{\beta}{2}$. Then plugging this back into our regret decomposition cancels out the $\Csmth[\pihat]$ terms to give
\begin{align*}
  J(\pistar) - J(\pihat) 
  \le&~
  2\sqrt{\Csmth[\pistar]\vepsstat^2}
  + 2\eta\Rmax\Csmth[\pistar]
  + \frac{3\beta\Csmth[\pistar]}{2} 
  + \frac{4\vepsstat^2}{\beta}
  \\
  &+ 4\beta\vepsx^2 \prn*{\min\{\Cinf[\pistar], \eta^{-1}\} + \min\{\Cinf[\pihat], \eta^{-1}\}} 
  + 2\Rmax\vepsx
  \\
  \le&~  
  2\sqrt{\Csmth[\pistar]\vepsstat^2}
  + 2\beta\Csmth[\pistar] 
  + \frac{4\vepsstat^2}{\beta}
  \\
  &+ 4\beta\vepsx^2 \prn*{\min\{\Cinf[\pistar], \eta^{-1}\} + \min\{\Cinf[\pihat], \eta^{-1}\}} 
  + 2\Rmax\vepsx,
\end{align*}
where in the last line we consolidate $\Csmth[\pistar]$ terms by again using $4\eta\Rmax \le \frac{\beta}{2}$.
Plugging in $\delta' = \delta/3$ and the values for $\vepsstat^2$ and $\vepsx$ results in the theorem statement.

\end{proof}

\begin{proof}[\pfref{cor:rlhf-smooth}]
  When $\eta = \frac{\beta}{8\Rmax}$, \cref{thm:rlhf-sample} states that
  \begin{align*}
    J(\pistar) - J(\pihat) 
    \le&~
    2\sqrt{\Csmth[\pistar]\vepsstat^2} 
    + 2 \beta \Csmth[\pistar]
    + \frac{4\vepsstat^2}{\beta}
    + 4\beta\vepsx^2 \cdot \prn*{ \min\crl*{\Cinf[\pistar],\eta^{-1}} +\min\crl*{\max_{\pi\in\Pi}\Cinf[\pi], \eta^{-1}}} 
    + 2\Rmax\vepsx
    \\
    \le&~
    2\sqrt{\Csmth[\pistar]\vepsstat^2} 
    + 2 \beta \Csmth[\pistar]
    + \frac{4\vepsstat^2}{\beta}
    + 8\beta\vepsx^2 \cdot \eta^{-1} 
    + 2\Rmax\vepsx
    \\
    =&~    
    2\sqrt{\Csmth[\pistar]\vepsstat^2} 
      + 2\beta \Csmth[\pistar]
      + \frac{4\vepsstat^2}{\beta}
      + 64\Rmax\vepsx^2 
      + 2\Rmax\vepsx.  
  \end{align*}
  Setting $\beta = 2\sqrt{\frac{\vepsstat^2}{\Cone[\pistar]}}$, we obtain 
  \begin{align*}
    J(\pistar) - J(\pihat) 
    \le&~ 
    5\sqrt{\Csmth[\pistar]\vepsstat^2} 
    + 64\Rmax\vepsx^2 
    + 2\Rmax\vepsx.
  \end{align*}
\end{proof}

\begin{proof}[\pfref{cor:rlhf-nosmooth}]
  When $\eta = 0$, \cref{thm:rlhf-sample} states that 
  \begin{align*}
    J(\pistar) - J(\pihat) 
    \le&~
    2\sqrt{\Cone[\pistar]\vepsstat^2} 
    + 2 \beta \Cone[\pistar]
    + \frac{4\vepsstat^2}{\beta}
    + 4\beta\vepsx^2 \cdot \prn*{\Cinf[\pistar] + \max_{\pi\in\Pi}\Cinf[\pi]} 
    + 2\Rmax\vepsx    
  \end{align*}  
  Setting $\beta = 2\sqrt{\frac{\vepsstat^2}{\Cone[\pistar]}}$, we obtain 
  \begin{align*}
    J(\pistar) - J(\pihat) 
    \le&~
    5\sqrt{\Cone[\pistar]\vepsstat^2} 
    + 8\vepsstat\vepsx^2 \cdot \prn*{\Cinf[\pistar] + \max_{\pi\in\Pi}\Cinf[\pi]} 
    + 2\Rmax\vepsx.   
  \end{align*}
\end{proof}

\begin{proof}[\pfref{lem:rlhf-reward}]
  We use similar reasoning and notation to the proof of \cref{lem:clip-dpo-reward}. 
  Since $\rstar\in\cR$ under \cref{ass:reward-realizability}, 
  \cref{lem:mle} guarantees that with probability at least $1-\delta$ we have 
  \begin{align*}
    \En_{\piref,\piref}\brk*{\Dhels{P_{\rhat}(\cdot\mid{}x,a,b)}{P_{\rstar}(\cdot\mid{}x,a,b)}}
    \le&~ 
    \frac{2\log(|\cR|/\delta)}{n}. 
  \end{align*}
  Since $|r(x,a) - r(x,b)| \le \Rmax$ for all $r \in \cR$ under \cref{ass:reward-realizability}, 
  we then apply \cref{lem:lipschitz} with $R = V = \Rmax$.
  \begin{align*}
    &\En_{\piref,\piref}
    \brk*{\prn*{\rhat(x,a) - \rhat(x,b)
    - \prn*{\rstar(x,a) - \rstar(x,b)}}^2}
    \\
    &\quad\le 
    16e^{4\Rmax} \Rmax^2 \cdot \En_{\piref,\piref}\brk*{\Dhels{P_{\rhat}(\cdot\mid{}x,a,b)}{P_{\rstar}(\cdot\mid{}x,a,b)}}
    \\
    &\quad\le 
    32e^{4\Rmax} \Rmax^2 \cdot \frac{\log(|\cR|/\delta)}{n}.
  \end{align*}
\end{proof}

\begin{proof}[\pfref{lem:rlhf-estimation}]
  Abbreviate $\rstardiff(x,a,b) = \rstar(x,a) - \rstar(x,b)$, and $\rhatdiff(x,a,b) = \rhat(x,a) - \rhat(x,b)$. For a pair of policies $\pi,\pi'$ and $p \ge 1$, we define the norm 
$\nrm{\cdot}_{p, \pi\times\pi'} \ldef{}
\prn*{\En_{\rho,a\sim\pi,b\sim\pi'}\brk{|\cdot|^p}}^{1/p}$, so that $    \En_{\pi,\piref}\brk*{\abs*{\rstardiff(x,a,b) -
      \rhatdiff(x,a,b)}} =    \nrm*{\rstardiff -
    \rhatdiff}_{1,\pi\times\piref}$. Then via Cauchy-Schwarz,
  \begin{align*}
    \begin{aligned}
\nrm*{\rstardiff - \rhatdiff}_{1,\pi\times\piref}      \le&~ 
      \sqrt{\En_\rho\brk*{\sum_{a,b} \frac{\pi^2(a\mid{}x)\piref^2(b\mid{}x)}{(\piref(a\mid{}x) + \eta\pi(a\mid{}x))\piref(b\mid{}x)}}}
      \\
      &\cdot    
      \sqrt{\En_\rho\brk*{\sum_{a,b} (\piref(a\mid{}x) + \eta\pi(a\mid{}x))\piref(b\mid{}x)\prn*{\rstardiff(x,a,b) - \rhatdiff(x,a,b)}^2}}
      \\
      =&~ 
      \sqrt{\Csmth[\pi] \cdot 
      \prn*{\nrm*{\rstardiff - \rhatdiff}_{2,\piref\times\piref}^2 
      + \eta\nrm*{\rstardiff - \rhatdiff}_{2,\pi\times\piref}^2}}
      \\
      \le&~ 
      \sqrt{\Csmth[\pi]\cdot\nrm*{\rstardiff - \rhatdiff}_{2,\piref\times\piref}^2} 
      +\sqrt{2\eta\Rmax\Csmth[\pi] \cdot \nrm*{\rstardiff - \rhatdiff}_{1,\pi\times\piref}}. 
    \end{aligned} 
  \end{align*} 
  Applying the AM-GM inequality to the second term, we obtain 
  \begin{align*}
    \nrm*{\rstardiff - \rhatdiff}_{1,\pi\times\piref} 
    \le&~ 
    \sqrt{\Csmth[\pi]\cdot\nrm*{\rstardiff - \rhatdiff}_{2,\piref\times\piref}^2} 
+ \eta\Rmax\Csmth[\pi] %
    + \frac{1}{2}\nrm*{\rstardiff - \rhatdiff}_{1,\pi\times\piref}. 
  \end{align*}
  Rearranging, 
  \begin{align*}
    \nrm*{\rstardiff - \rhatdiff}_{1,\pi\times\piref}
    \le&~ 
    2\sqrt{\Csmth[\pi] %
    \cdot\nrm*{\rstardiff - \rhatdiff}_{2,\piref\times\piref}^2}
    + 2\eta\Rmax\Csmth[\pi]. %
  \end{align*}
\end{proof}

\newpage

\end{document}